\documentclass{article}

\usepackage[preprint]{neurips_2026}

\usepackage[utf8]{inputenc}
\usepackage[T1]{fontenc}
\usepackage{hyperref}
\usepackage{url}
\usepackage{booktabs}
\usepackage{amsfonts}
\usepackage{nicefrac}
\usepackage{microtype}
\usepackage{xcolor}
\usepackage{amsmath}
\usepackage{amsthm}
\usepackage{amssymb}
\usepackage{natbib}
\usepackage{graphicx}
\usepackage{array}
\usepackage{tabularx}
\usepackage{tikz}

\usetikzlibrary{positioning, fit, arrows.meta}

\usepackage{xspace}
\usepackage{amsmath}
\usepackage{graphicx}
\usepackage{framed}
\usepackage{bbm}
\usepackage{algorithm}

\usepackage{natbib}
\usepackage{mathrsfs}
\usepackage{amssymb}
\usepackage{bm}
\usepackage{xcolor}
\usepackage{hyperref}
\hypersetup{
    colorlinks=true,
    linkcolor=blue,
    citecolor=blue,
    breaklinks=true
}

\usepackage{prettyref}
\usepackage{mathtools}
\usepackage{amsmath}
\usepackage{multirow}
\usepackage{nicefrac}
\usepackage{amssymb}
\usepackage{pifont}
\definecolor{Green}{rgb}{0.13, 0.65, 0.3}
\usepackage{colortbl}
\newcommand{\KL}{\textup{KL}}

\DeclareMathOperator*{\argmin}{argmin}
\DeclareMathOperator*{\argmax}{argmax}
\usepackage{enumitem}
\usepackage{graphicx}

\newcommand{\breg}{\textup{Breg}}

\newcommand{\piref}{\pi_{\rm ref}}
\usepackage{subcaption}
\newcommand{\algoff}{\textsc{E2D.OR}\xspace}

\newcommand{\reff}{\textup{ref}}

\newcommand{\odecoff}{\mathsf{Ordec}\text{-}\mathsf{O}}
\newcommand{\odecratio}{\mathsf{Ordec}\text{-}\mathsf{R}}
\newcommand{\odec}{\mathsf{Ordec}}

\newcommand{\pesratio}{\mathsf{Gdec}}
\newcommand{\algpes}{\textsc{GDE}\xspace}
\newcommand{\av}{\mathsf{av}}

\newcommand{\bc}{\mathsf{bc}}
\newcommand{\dr}{\mathsf{wr}}
\newcommand{\brr}{\mathsf{br}}
\newcommand{\Dav}{D_{\mathsf{av}}}

\newcommand{\gapcomp}{\mathsf{ER}}

\newcommand{\hatcalT}{\hat{\calT}}
\newcommand{\err}{\varepsilon}

\newcommand{\rhohat}{\hat{\rho}}

\newcommand{\calT}{\mathcal{T}}
\newcommand{\calG}{\mathcal{G}}

\newcommand{\calA}{\mathcal{A}}

\newcommand{\calS}{\mathcal{S}}
\newcommand{\calM}{\mathcal{M}}
\newcommand{\calW}{\mathcal{W}}

\newcommand{\calD}{\mathcal{D}}
\newcommand{\calF}{\mathcal{F}}

\DeclareMathOperator*{\poly}{poly} 
\newcommand{\E}{\mathbb{E}}

\newcommand{\order}{O}

\newcommand{\inner}[1]{\left\langle#1\right\rangle}

\newcommand{\ind}{\mathbf{1}}

\newcommand{\term}{\textbf{term}}

\newcommand{\avg}{\mathsf{avg}}

\newtheorem*{theorem*}{Theorem}
\newtheorem{theorem}{Theorem}
\newtheorem{example}{Example}
\newtheorem{assumption}{Assumption}

\newtheorem{lemma}[theorem]{Lemma}

\newtheorem{definition}{Definition}

\usepackage{tikz}

\newcommand{\nonl}{\renewcommand{\nl}{\let\nl}}

\newcommand{\numberthis}{\refstepcounter{equation}\tag{\theequation}}

\usepackage{prettyref}
\newcommand{\pref}[1]{\prettyref{#1}}

\newcommand{\savehyperref}[2]{\texorpdfstring{\hyperref[#1]{#2}}{#2}}
\newrefformat{eq}{\savehyperref{#1}{\textup{(\ref*{#1})}}}
\newrefformat{eqn}{\savehyperref{#1}{Equation~\ref*{#1}}}
\newrefformat{lem}{\savehyperref{#1}{Lemma~\ref*{#1}}}
\newrefformat{lemma}{\savehyperref{#1}{Lemma~\ref*{#1}}}
\newrefformat{def}{\savehyperref{#1}{Definition~\ref*{#1}}}
\newrefformat{line}{\savehyperref{#1}{Line~\ref*{#1}}}
\newrefformat{thm}{\savehyperref{#1}{Theorem~\ref*{#1}}}
\newrefformat{corr}{\savehyperref{#1}{Corollary~\ref*{#1}}}
\newrefformat{cor}{\savehyperref{#1}{Corollary~\ref*{#1}}}
\newrefformat{sec}{\savehyperref{#1}{Section~\ref*{#1}}}
\newrefformat{subsec}{\savehyperref{#1}{Section~\ref*{#1}}}
\newrefformat{app}{\savehyperref{#1}{Appendix~\ref*{#1}}}
\newrefformat{assum}{\savehyperref{#1}{Assumption~\ref*{#1}}}
\newrefformat{ex}{\savehyperref{#1}{Example~\ref*{#1}}}
\newrefformat{fig}{\savehyperref{#1}{Figure~\ref*{#1}}}
\newrefformat{alg}{\savehyperref{#1}{Algorithm~\ref*{#1}}}
\newrefformat{rem}{\savehyperref{#1}{Remark~\ref*{#1}}}
\newrefformat{conj}{\savehyperref{#1}{Conjecture~\ref*{#1}}}
\newrefformat{prop}{\savehyperref{#1}{Proposition~\ref*{#1}}}
\newrefformat{proto}{\savehyperref{#1}{Protocol~\ref*{#1}}}
\newrefformat{prob}{\savehyperref{#1}{Problem~\ref*{#1}}}
\newrefformat{claim}{\savehyperref{#1}{Claim~\ref*{#1}}}
\newrefformat{que}{\savehyperref{#1}{Question~\ref*{#1}}}
\newrefformat{op}{\savehyperref{#1}{Open Problem~\ref*{#1}}}
\newrefformat{fn}{\savehyperref{#1}{Footnote~\ref*{#1}}}
\newrefformat{tab}{\savehyperref{#1}{Table~\ref*{#1}}}
\newrefformat{fig}{\savehyperref{#1}{Figure~\ref*{#1}}}
\newrefformat{proc}{\savehyperref{#1}{Procedure~\ref*{#1}}}

\DeclareMathOperator*{\unif}{\mathsf{Unif}}

\newcommand{\PP}{\mathbb{P}}
\newcommand{\Rep}{\mathsf{Rep}}
\newcommand{\sA}{s_{\mathsf{A}}}
\newcommand{\sB}{s_{\mathsf{B}}}
\newcommand{\Ber}{\mathsf{Ber}}
\newcommand{\WA}{W_{\mathsf{A}}}
\newcommand{\WB}{W_{\mathsf{B}}}

\newcommand{\Cbar}{\underline{C}}
\newcommand{\mubar}{{\underline{\mu}}}

\usepackage{graphicx}
\newcommand{\oo}{\circ}
\newcommand{\xx}{{\mathbin{\scalebox{0.65}{\ensuremath{\times}}}}}

\usepackage{titletoc}

\newcommand{\picomp}{\mathring{\pi}}
\newcommand{\ri}{\text{ri}}
\newcommand{\conf}{\mathrm{conf}}
\newcommand{\stat}{\mathrm{stat}}

\newcommand{\dom}{\mathrm{dom}}
\newcommand{\Breg}{\text{Breg}}
\newcommand{\df}{\mathsf{\Phi}}

\usepackage{titlesec}

\AtBeginDocument{\DeclareMathAlphabet{\mathsf}{OT1}{cmss}{m}{n}}

\title{ On the Complexity of Offline Reinforcement Learning with $Q^\star$-Approximation and Partial Coverage}

\author{%
  Haolin Liu$^*$ \qquad Braham Snyder$^*$ \qquad Chen-Yu Wei\thanks{Authors are listed in alphabetical order.} \\
  University of Virginia\\
  \texttt{\{srs8rh, dqr2ye, kfw6en\}@virginia.edu} \\
}

\begin{document}

\maketitle

\begin{abstract}
We study offline reinforcement learning under $Q^\star$-approximation and partial coverage, a setting that motivates practical algorithms such as Conservative $Q$-Learning (CQL) \citep{kumar2020conservative} but has received limited theoretical attention. Our work is inspired by the following open question: \emph{Are $Q^\star$-realizability and Bellman completeness sufficient for sample-efficient offline RL under partial coverage?}

We answer this question in the negative through an information-theoretic lower bound. To identify additional structure that enables sample-efficient offline RL under partial coverage, we introduce a general decision-estimation framework, inspired by model-free decision-estimation coefficients (DEC) for online RL \citep{foster2024model, liu2025improved}. Our framework decomposes the complexity of offline RL into two parts: the \emph{decision complexity} and the \emph{value estimation error}.  This decomposition allows us to study the two sub-problems in a modular way. Our result not only unifies existing results in the $Q^\star$-approximation and partial coverage regime \cite{chen2022offline, uehara2023offline}, but further improves and generalizes them. On the decision complexity side, our improvement includes: the first $\epsilon^{-2}$ sample complexity bound for soft $Q$-learning under partial coverage that improves \cite{uehara2023offline}'s $\epsilon^{-4}$ bound, the removal of the need for additional online interaction in the value-gap setting of \cite{chen2022offline}, and new learnable settings beyond the above two cases. On the value estimation side, we provide a new characterization of the role of Bellman completeness under partial coverage, and the first characterization of offline learnability for general low-Bellman-rank MDPs \cite{jiang2017contextual, du2021bilinear, jin2021bellman}. The latter is a canonical online RL setting that has remained unexplored in offline RL except for special cases. As a side contribution, our techniques give the first analysis of CQL in the function approximation setting.
\end{abstract}

\section{Introduction}
Offline Reinforcement Learning (RL) studies policy learning from a fixed dataset, without interaction with the environment. This paradigm is appealing because it enables learning from data that may not have been generated by the learner itself. In practice, however, it is rarely realistic to assume that the offline data covers the entire state-action space. In such \emph{partial coverage} settings, algorithms typically rely on \emph{pessimism} to ensure safety against worst possible environments. 

Among value-based methods for offline RL with pessimism, two classes of algorithms are widely used: $Q$-learning-based methods such as Conservative $Q$-Learning (CQL) \citep{kumar2020conservative}, and actor-critic methods such as Behavior-Regularized Actor-Critic (BRAC)  \citep{wu2019behavior}. From a theoretical perspective, these methods are commonly analyzed under the $Q^\star$-realizability and $Q^\pi$-realizability assumptions, respectively. Under $Q^\pi$-realizability, \cite{xie2021bellman} gives a relatively complete sample-complexity characterization. Under $Q^\star$-realizability, however, existing results remain less complete. On the lower bound side,  it is known that $Q^\star$-realizability and partial coverage alone are not sufficient for sample-efficient learning \cite{foster2022offline, jia2024offline}.  On the upper bound side, existing works \cite{chen2022offline, uehara2023offline} rely on estimating additional quantities such as density ratios or Lagrange multipliers, and require extra conditions such as value gaps or behavior regularization.
The fundamental roles of these additional assumptions remain unclear. More related works can be found in \pref{app:related works}.

In this work, we aim to provide a more comprehensive understanding of the potential and limitations of offline RL under 
$Q^\star$-approximation and partial coverage. Our contributions 
are twofold. First, we establish an information-theoretic lower 
bound showing that $Q^\star$-realizability, partial coverage, and Bellman completeness are \emph{not} sufficient for sample-efficient 
learning, indicating that the difficulty of offline RL under partial coverage does not merely come from the estimation of $Q^\star$. Motivated by this observation, we introduce a decomposition of the performance gap into \emph{decision complexity} and \emph{estimation error}, which together capture the overall difficulty of the problem. This decomposition yields a 
modular framework that recovers, improves, and extends prior 
results. A detailed overview of our results is given in 
\pref{sec:overview}.

\section{Results Overview}
\label{sec:overview}

\paragraph{Lower bound (\pref{sec: it lower bound}).}

We show that $Q^\star$-realizability, partial coverage, and Bellman completeness are \emph{not} sufficient for sample-efficient offline RL. This is somewhat surprising because these
assumptions allow the learner to construct a small confidence set containing the true $Q^\star$ function~\citep{jin2021bellman,xie2022role,xie2021bellman}. This indicates that even when the true value function is well localized, the learner may still be unable to decide which policy is safe to deploy. This motivates the need for an additional complexity measure that captures the difficulty of making robust decisions from a confidence set.

\paragraph{Upper bound: a decision-estimation decomposition (\pref{sec:unify}).}
To capture this missing decision difficulty, we introduce the offline
robust decision-estimation coefficient, $\odec$. The starting point is that, under $Q^\star$-realizability, the offline dataset induces a confidence set $\calF_\conf$ of plausible optimal $Q^\star$-functions. This in turn induces a set 
    $\calM_\conf = \{M : Q^\star_M \in \calF_\conf\}$
of plausible models, where $Q^\star_M$ denotes the optimal $Q$-function induced by model $M$. The learner must choose a policy that performs well in the true model, but the true model is only known to lie within the ambiguity set represented by $\calM_\conf$.

$\odec$ measures the value of a game between a policy player and a model player. The \emph{policy player} chooses a policy distribution $\rho$ to minimize the suboptimality gap $J_M(\pi_M) - \mathbb{E}_{\pi \sim \rho}[J_M(\pi)]$, where $J_M(\pi)$ denotes the expected payoff of $\pi$ under model $M$ and $\pi_M$ is the optimal policy under $M$. The \emph{model player} chooses a model $M \in \mathcal{M}_{\mathrm{conf}}$ that makes this gap large. However, the payoff of the adversary is offset by a discrepancy term that penalizes models that are statistically inconsistent with the functions in the confidence set.

Formally, for parameter $\gamma>0$ and discrepancy measure
$D^\pi(f\|M)$ between function $f$ and model $M$ under the state-action distribution induced by $\pi$, we define
\begin{align*}
\odecoff^{D}_{\gamma}(\mathcal F_{\mathrm{conf}})
:=
\min_{\rho\in\Delta(\Pi)}
\max_{M\in\mathcal M_{\mathrm{conf}}}
\mathbb E_{\pi\sim\rho}
\!\left[
    J_M(\pi_M)-J_M(\pi)
    -\gamma\max_{f\in\mathcal F_{\mathrm{conf}}}
    D^{\pi_M}(f\|M)
\right]   \numberthis \label{eq: minmax}
\end{align*}
All remaining notation is introduced in \pref{sec:pre}. Our algorithm outputs a policy distribution $\hat{\rho}$ that is guaranteed to have (see \pref{thm: main})
\begin{align*}
  \underbrace{\E_{\hat{\pi}\sim \hat{\rho}}[J_{M^\star}(\pi^\star) - J_{M^\star}(\hat{\pi})]}_{\textbf{suboptimality}}
\leq
\underbrace{
\odecoff^{D}_{\gamma}(\mathcal F_{\mathrm{conf}})
}_{\textbf{decision complexity}}
+
\gamma
\underbrace{
    \max_{f\in\mathcal F_{\mathrm{conf}}}
    D^{\pi^\star}(f\|M^\star)
}_{\textbf{estimation error}}  \numberthis \label{eq: decompose bound2}
\end{align*}
where $M^\star$ is the underlying true model and $\pi^\star$ is the true optimal policy. 

This is the main takeaway of the paper. Offline RL under partial
coverage has two challenges: the learner must estimate the value function well
enough to make the estimation error small, and the remaining ambiguity must
still allow for a robust decision to make the decision complexity small. Conditions controlling the decision complexity and
controlling the estimation error can be combined modularly. This modular
view is summarized in \pref{fig:overview}. For each term, we provide a list of conditions under which they can be bounded. This not only subsume all upper bound results we are aware of for offline RL with $Q^\star$-realizability and partial coverage under general function approximation, but also extend to new settings.

\begin{figure}[t]
\vspace{-3mm}
    \centering
    \includegraphics[width=\textwidth]{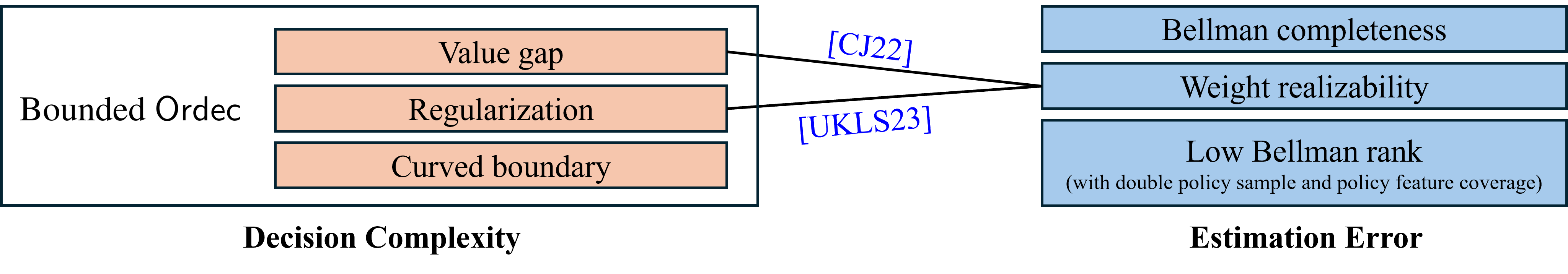}
    \vspace{-2mm}
    \caption{Overview of the decomposition under $Q^\star$ realizability and partial coverage. Assumptions in the decision complexity part can be combined with any assumptions in the estimation error part, leading to a learnable setting. Three decision complexity bounds are established in \pref{thm: gap bound}, \pref{thm: curved boundary}, \pref{thm:regular-dec}; three estimation error bounds are in \pref{lem: BC case}, \pref{lem: weight case}, \pref{lem: Bellman rank case}. Besides the two combinations identified by \cite{chen2022offline, uehara2023offline}, other seven combinations are new. }
    \label{fig:overview}
    \vspace{-2mm}
\end{figure}

\paragraph{Relation to online DEC.}
Our decomposition resembles the decision-estimation coefficient framework for online RL~\citep{foster2021statistical, foster2023tight, 
foster2024model, liu2025improved}. The key difference is \emph{under which state-action distribution} the
discrepancy is measured. In online DEC, the discrepancy is measured under the learner's policy, because the learner can collect data using that policy. In offline RL, the data distribution is fixed, so the learner's policy cannot help test whether a model is plausible. Instead, $\odec$ measures the discrepancy under the candidate model's own optimal policy $\pi_M$. This is what makes the
sub-optimality bound adaptive to partial coverage: models whose optimal policies are poorly supported by the offline data are penalized through the discrepancy term. We elaborate more in \pref{sec: offdec}.

\subsection{Comparison with prior work in offline RL with $Q^\star$-realizability and partial coverage}
Despite the natural appeal of this setting, positive results for $Q^\star$-realizability and partial coverage are scarce. To our knowledge, only \cite{chen2022offline} and \cite{uehara2023offline} have identified learnability conditions for this setting under general function approximation. As discussed above, bounded decision complexity and the ability to accurately estimate value functions are both key to establishing learnability. While not explicitly stated, the results of \cite{chen2022offline, uehara2023offline} each rely on a combination of conditions that ensure both terms are bounded, as indicated in \pref{fig:overview}. In \pref{fig:overview}, we list three sufficient conditions that allow us to control decision complexity and estimation error, respectively, resulting in nine combinations in total. Besides the two identified by \cite{chen2022offline, uehara2023offline}, the learnability of the remaining seven combinations has not been established before. 

\paragraph{Improvement over \cite{chen2022offline, uehara2023offline} (\pref{sec:decision-error}).} Even under the same set of assumptions, our algorithm and the analysis framework give improved guarantees over \cite{chen2022offline, uehara2023offline}. Specifically, in the unregularized value-gap setting (the first condition for decision complexity in \pref{fig:overview}), our suboptimality guarantee automatically adapts to the gap of the true $Q^\star$ function without knowing the gap value, while \cite{chen2022offline} requires either knowing the gap value or additional online interaction with the environment to achieve similar guarantee as ours. This improvement comes from our new algorithm design based on $\odec$. 
In the regularized setting (the second condition for decision complexity in \pref{fig:overview}), we obtain the first $\epsilon^{-2}$ sample complexity bound, improving the $\epsilon^{-4}$ bound of \cite{uehara2023offline}. This improvement comes from a novel performance difference lemma we develop for regularized MDP with partial coverage, which is a contribution independent of $\odec$. Besides, we relax the requirement of \cite{uehara2023offline} on the offline data distribution, and remove the polynomial dependence on the number of actions in their sample complexity bound. 

\paragraph{New settings with bounded decision complexity (\pref{sec:decision-error}).}
$\odec$ not only recovers the value-gap and the regularization settings studied in \cite{chen2022offline, uehara2023offline}, but also extends to new settings. We provide a third condition that leads to bounded $\odec$ (left side of \pref{fig:overview}): curved decision boundary. This addresses the concerns raised by \cite{song2022hybrid, uehara2023offline} on the practicality of the value-gap assumption in \cite{chen2022offline} for continuous action spaces. 

\paragraph{New settings with bounded estimation error (\pref{sec:estimation-error}).} On the estimation error side, \cite{chen2022offline, uehara2023offline} both rely on the weight realizability assumption to control the error, leaving other possibilities open. We provide two additional conditions: Bellman completeness and low Bellman rank (the right-hand side of \pref{fig:overview}). While both are extensively studied under online RL and Bellman completeness has been studied under offline RL with full coverage \cite{chen2019information}, they are unexplored in offline RL with partial coverage. Specifically, partial coverage with Bellman completeness is an ideal theoretical framework for the practical CQL algorithm \cite{kumar2020conservative}. To our knowledge, no prior work has provided a sample complexity analysis for CQL beyond the tabular case, and our work achieves it for the first time. 
On the other hand, low Bellman rank is a canonical setting for online RL \citep{jiang2017contextual,du2021bilinear,jin2021bellman} that remains unexplored in offline RL (even with full coverage) except for special cases. In fact, we show that under standard data assumptions and coverage definitions, low Bellman rank is  insufficient to control the estimation error, thus remaining not learnable even with bounded decision complexity. We propose to replace the standard data and coverage assumptions with \emph{double policy sampling} and \emph{policy feature coverage}, which allow for efficient estimation error control. We also show that relaxing either assumption back to the standard definition breaks the learnability. 

\paragraph{Paper structure.} The lower bound under $Q^\star$-realizability, partial coverage, and Bellman completeness is provided in \pref{sec: it lower bound}, marking the necessity of bounded decision complexity in learnability. $\odec$ and the algorithm to achieve \eqref{eq: decompose bound2} are introduced in \pref{sec:unify}. In \pref{sec:decision-error} and \pref{sec:estimation-error}, we establish conditions for bounded decision complexity and bounded estimation error, respectively. 

\section{Notation and Setting}
\label{sec:pre}
We deal with both unregularized and regularized MDPs. For simplicity, we formulate regularized MDPs, while unregularized MDPs are just special cases when the regularizer is zero.

\textbf{Regularized Markov Decision Processes. } We consider a finite-horizon \emph{regularized} MDP defined by $M = (\calS, \calA, P_M, R_M, s_1, H, \psi)$, where $\calS$ is the state space, $\calA$ is the action space, $P_M: \calS \times \calA \rightarrow \Delta(\calS)$ is the transition function, $R_M: \calS \times \calA \rightarrow [0,1]$ is the reward function, and $\psi:\Delta(\calA) \times \calS \to \mathbb{R}_{\geq 0}$ is a state-dependent convex regularizer for action distributions.   
  We assume a \emph{layered} state space $\calS$, i.e., $\mathcal{S} = \mathcal{S}_1 \cup \cdots \cup \mathcal{S}_H$ where
$\mathcal{S}_h \cap \mathcal{S}_{h'} = \emptyset$ for any $h \neq h'$. Transitions only occur between adjacent layers: $P_M(s' \mid s, a) > 0$ only if $s \in \mathcal{S}_h$ and $s' \in \mathcal{S}_{h+1}$.  
Without loss of generality, let  $\mathcal{S}_1 = \{s_1\}$.  A policy $\pi$: $\calS\to\Delta(\calA)$ maps states to action distributions. Let $\Pi$ be the set of policies. For any policy~$\pi$, the regularized state value function is $V_M^\pi(s) \triangleq 
\mathbb{E}_M^\pi\big[
\sum_{h'=h}^{H} \big(
R_M(s_{h'},a_{h'}) - \psi (\pi(\cdot | s_{h'}); s_{h'})
\big)\big|
s_h=s \big]$ for $s\in\calS_h$, where $\mathbb{E}_M^\pi[\cdot]$ denotes expectation over trajectories induced
by $\pi$ and $P_M$. Furthermore, denote $J_M(\pi)\triangleq V^\pi_M(s_1)$. By induction, the Bellman equations hold: 
\begin{align*}
    Q_M^\pi(s,a) &= R_M(s,a) + \mathbb{I}[s \notin \mathcal{S}_H]\cdot \mathbb{E}_{s' \sim P_M(\cdot \mid s,a)}\!\left[ V_M^\pi(s') \right], \\
    V_M^\pi(s) &=
\mathbb{E}_{a\sim \pi(\cdot\mid s)}\!\left[Q_M^\pi(s,a)\right]
-
\psi(\pi(\cdot | s); s).
\end{align*}

The optimal regularized value functions are $Q_{M}^\star(s,a)=\max_\pi Q_{M}^\pi(s,a)$ and $V_{M}^\star(s) = \max_\pi V_{M}^\pi(s)$, and the optimal policy is $\pi_{M} \in\argmax_\pi J_{M}(\pi)$.
We have  
\begin{align*}
     Q_{M}^\star(s,a) &=
R_M(s,a)+\mathbb{I}[s\notin \calS_H]\cdot \mathbb{E}_{s'\sim P_M(\cdot\mid s,a)}\!\left[V_M^{\star}(s')\right], \\
V_M^\star(s)
&=
\textstyle\max_{p \in \Delta(\calA)}
\big\{
\mathbb{E}_{a \sim p}\left[Q_M^\star(s,a)\right] - \psi(p; s)
\big\}, \\
\pi_M(\cdot\mid s) &\in \textstyle\argmax_{p\in\Delta(\calA)}\big\{\mathbb{E}_{a\sim p}\left[Q_M^\star(s,a)\right] - \psi(p; s)\big\}.
\end{align*}

For a policy $\pi$, denote the occupancy measure as $d_M^\pi(s) \triangleq \mathbb{E}_M^\pi[\mathbb{I}\{s_h = s\}]$ for $s \in\calS_h$, and $d_M^\pi(s,a) \triangleq d_M^\pi(s)\pi(a | s)$. Though $d^\pi_M$ is not a distribution as $\sum_{s\in\calS} d^\pi_M(s)=H$, we denote $\E_{s \sim d^\pi}[g(s)]\triangleq \sum_s d^\pi(s)g(s)$ for any function $g$. Assume the true environment follows MDP $M^\star = (\calS, \calA, P, R, s_1, H, \psi)$, and for simplicity, we write $V^\pi(s) \triangleq V^\pi_{M^\star}(s)$, $Q^{\pi}(s,a) \triangleq Q^\pi_{M^\star}(s,a), V^\star(s) \triangleq V^\star_{M^\star}(s)$, $Q^\star(s,a) \triangleq Q^\star_{M^\star}(s,a)$, $J(\pi)\triangleq J_{M^\star}(\pi)$, $\pi^\star \triangleq \pi_{M^\star}$, and $d^\pi \triangleq d^\pi_{M^\star}$.

\paragraph{Offline Reinforcement Learning. }  In offline RL, the learner only has access to a pre-collected dataset $\calD$ from MDP $M^\star$ and cannot interact with the environment. The dataset $\mathcal{D}$ consists of $n$ i.i.d. tuples $(s, a, r, s')$, where $(s, a) \sim \mu$ for some unknown distribution $\mu \in \Delta(\mathcal{S} \times \mathcal{A})$, $\mathbb{E}[r | s, a] = R(s, a)$, and $s' \sim P(\cdot | s, a)$.
The goal is to learn a policy $\hat{\pi}$ that minimizes the sub-optimality $J(\pi^\star) - J(\hat{\pi})$.
We seek \emph{coverage-adaptive} performance, where the sub-optimality adapts to the degree to which $\mu$ covers $\pi^\star$ (\pref{def: coverage}). 

\begin{definition}[Coverage]\label{def: coverage} For any policy $\pi$, define $C^{\pi}=\max_{(s,a)\in\calS\times \calA} \frac{d^{\pi}(s,a)}{H \mu(s,a)}$.\footnote{The division by $H$ is to correct the mismatch between $\sum_{s,a} d^\pi(s,a)=H$ and $\sum_{s,a}\mu(s,a)=1$. }
\end{definition}

In value-based offline RL, function approximation is used to approximate value functions. In this paper, we focus on the $Q^\star$-approximation scheme and assume $Q^\star$-realizability:
\begin{definition}[$Q^\star$-realizability]\label{def: Q realiability}
The learner has access to a function class $\calF\subset \{f: \calS\times\calA\to \mathbb{R}\}$ such that $Q^\star\in \calF$. For any $f\in\calF$, denote $f(s) = \max_{p\in\Delta(\calA)}
\big\{
\mathbb{E}_{a\sim p}\left[f(s,a)\right]-\psi(p;s)
\big\}$ and denote $\pi_f(\cdot\mid s)
=
\argmax_{p\in\Delta(\calA)}
\big\{
\mathbb{E}_{a\sim p}\left[f(s,a)\right]-\psi(p;s)
\big\}$ with arbitrary tie-breaking.
\end{definition}
\noindent\textbf{Bellman Operator.}\ \ \ \ 
Define the Bellman operator $\calT$ as $\calT f(s,a) = R(s,a) + \mathbb{E}_{s' \sim P(\cdot \mid s,a)}[f(s')]$.

\noindent\textbf{Regularizer.}\ \ \ 
For convex function $\df: \Delta(\calA) \rightarrow \mathbb{R}$, let $\text{Breg}_{\df}(x,y) \triangleq \df(x) - \df(y) - \left\langle \nabla \df(y), x-y \right\rangle$ be the Bregman divergence defined with $\df$. We write $\text{Breg}_{\psi}(x,y;s) \triangleq \text{Breg}_{\psi(\cdot; s)}(x,y)$ for any $x,y \in \Delta(\calA)$ and  $\text{Breg}_{\psi}(\pi ,\pi';s) \triangleq \text{Breg}_{\psi(\cdot; s)}(\pi(\cdot|s), \pi'(\cdot|s))$ for any policies $\pi, \pi'$. We focus on regularizer $\psi(p;s) = \alpha \text{Breg}_{\df}(p; \pi_{\rm ref}(\cdot|s))$ where $\df$ is a convex function, $\pi_{\rm ref}$ is a given reference policy, and $\alpha \geq 0$ is the weight of regularization. We also write $\psi(\pi(\cdot|s);s) \triangleq \psi(\pi; s)$. 

\section{An Information-Theoretic Lower Bound} \label{sec: it lower bound}

We begin with an information-theoretic lower bound showing that 
$Q^\star$-realizability, partial coverage, and Bellman completeness 
(defined below) are insufficient for sample-efficient offline RL.

\begin{assumption}[Bellman completeness]\label{assum: Bellman complete}
Along with the $\calF$ in \pref{def: Q realiability}, the learner has access to a function class $\mathcal{G}\subset \{g: \calS\times\calA\to \mathbb{R}\}$ such that $\mathcal{T}f \in \mathcal{G}$ for any $f \in \calF$. 
\label{assum:BC}
\end{assumption}

\begin{theorem}\label{thm: eps dep lower bound}
    There exists a family of MDPs $\calM$, a function class $\calF$ with $|\calF|=4$, and an offline data distribution~$\mu$ such that under any true model $M^\star\in \calM$, $Q^\star$-realizability and Bellman completeness hold (with $\calG=\calF$), and has coverage $C^{\pi^\star}=\Theta(1)$. However, to achieve \mbox{$\E[J(\pi^\star) - J(\hat{\pi})]\leq \epsilon$} with $M^\star\sim \unif(\calM)$, the learner must access $n\geq \Omega\big(\frac{1}{\epsilon} \min\big\{\sqrt{|\calS|}, \frac{1}{\Delta^2}\big\}\big)$ offline samples, where $\Delta=\min_{s} (V^\star(s) - \max_{a\neq \pi^\star(s)} Q^\star(s,a))$ is a parameter which can be chosen arbitrarily in $[0,\frac{1}{4}]$, and the number of states $|\calS|$ can be arbitrarily large. 
    
    Even if the learner can access trajectory data $(s_1, a_1, r_1, \cdots, s_H, a_H, r_H)$ from an offline policy $\pi_b$ such that $\max_{s,a}\frac{d^{\pi^\star}(s,a)}{d^{\pi_b}(s,a)} = \Theta(1)$, $n\geq \Omega\big(\frac{1}{\epsilon \poly(H)} \min\{2^H, \sqrt{|\calS|}, \frac{1}{\Delta^2}\}\big)$ trajectories are required.
\end{theorem}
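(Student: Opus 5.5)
We will prove the bound with a Le Cam / Assouad-style two-point (or few-point) construction in which the candidate $Q^\star$ is well localized, but the identity of the safe action is statistically hidden. We take $|\calF|=4$ functions $\{f_{1},f_{2},f_{3},f_{4}\}$ on a small ``skeleton'' of layers: an initial state $s_1$ with two actions, $a_{\mathsf A}$ leading to a block $\WA$ of states and $a_{\mathsf B}$ leading to a block $\WB$. The blocks $\WA,\WB$ each consist of many ($\approx|\calS|/2$) states with identical $Q$-values under every $f\in\calF$. Because of this, $\calF$ stays of size $4$ and Bellman completeness with $\calG=\calF$ can be checked layer by layer: $\calT f$ only depends on averages over a block, which we arrange to again equal some member of $\calF$.

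The model family $\calM$ is indexed by which branch is good and by a hidden ``sign'' pattern inside the blocks. In the good branch, rewards are $\Ber(\tfrac12+\Delta)$ versus $\Ber(\tfrac12)$ for the optimal versus suboptimal action, and this gap realizes the stated $\Delta$. In the bad branch, a small random subset of states (mass $\approx\epsilon$) carries a hidden trap: the greedy action there loses value. Meanwhile the aggregate Bellman backups, and therefore which $f$ is $Q^\star$, remain unchanged. The data distribution $\mu$ puts constant mass on $s_1$ and on the state–action pairs along $\pi^\star$'s trajectory, which gives $C^{\pi^\star}=\Theta(1)$. It puts no (or tiny) mass on the off-policy actions, which is what makes the coverage partial.

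The core of the argument is to show that the dataset distributions under the models where $\pi^\star$ goes to $\WA$ and those where it goes to $\WB$ are close in total variation unless $n$ is large. There are two routes a learner could use to tell them apart.

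First, the learner could distinguish rewards $\tfrac12+\Delta$ from $\tfrac12$. This costs $\Omega(1/\Delta^2)$ samples per informative observation. Since those observations occur with probability only $\epsilon$, the total cost is $\Omega(1/(\epsilon\Delta^2))$.

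Second, the learner could detect the hidden subset by observing collisions, i.e., repeated visits to the same state in a block. The samples spread uniformly over $|\calS|$ states, so by a birthday-paradox argument collisions require $\Omega(\sqrt{|\calS|})$ draws among the $\epsilon$-fraction of relevant samples. This gives $\Omega(\sqrt{|\calS|}/\epsilon)$.

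Taking the cheaper of the two routes yields $n\ge\Omega(\tfrac1\epsilon\min\{\sqrt{|\calS|},\Delta^{-2}\})$. Formally, we would bound the KL or $\chi^2$ divergence between the mixture over hidden patterns and a reference distribution, using Poissonization for the collision part. Any learner below this threshold then chooses the wrong branch with constant probability, which incurs suboptimality $\ge\epsilon$.

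For the trajectory version, we chain $H$ layers so that the ``trap'' states are reached only through a binary tree of depth $H$. A behavior policy $\pi_b$ covering $\pi^\star$ with ratio $\Theta(1)$ then hits a specific informative path with probability $2^{-H}$. This adds the $2^H$ term to the minimum, with $\poly(H)$ losses coming from per-trajectory KL scaling.

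The main obstacle is keeping Bellman completeness with just four functions while hiding the trap. The hidden perturbations must average out exactly under $P$ so that $\calT f\in\calF$ for every model simultaneously. We would first try making the block transitions uniform and placing the perturbation in symmetric $\pm$ pairs, so that block averages are model-independent.
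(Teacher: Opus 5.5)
\textbf{There is a genuine gap at the core of your construction.}

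\emph{The trap breaks realizability.} You place a hidden trap on a random $\epsilon$-mass subset of block states and argue that block-averaged Bellman backups are unchanged, hence so is ``which $f$ is $Q^\star$''. But realizability is pointwise. If the trap lowers the value of an action at those states, $Q^\star$ differs between trap and non-trap states of the same block. Every $f\in\calF$ is constant on blocks, so $Q^\star\notin\calF$, and symmetric $\pm$ pairs do not repair this.

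\emph{The key mechanism is missing.} The $Q^\star$-gap at $s_1$ is only $\Delta$, yet choosing the wrong first action must cost $\Theta(1)$. You never explain why a learner in the wrong branch cannot simply avoid the trap. The paper hides the information in transitions and in uncovered actions, never in $Q^\star$:
\begin{itemize}
\item Each middle state goes deterministically to $\sA$ or $\sB$ under a hidden balanced assignment.
\item $V^\star(\sA)=V^\star(\sB)=1$, so $Q^\star\equiv 1$ on all of $W$ in every model. This is what allows $|\calF|=4$ and makes Bellman completeness trivial.
\item At $\sA$ the value-$1$ action is one of $x,y$, while the other has reward $-2$. Neither appears in the data.
\item The only covered action $z$ gives $0$ at $\sA$ but $1$ at $\sB$.
\end{itemize}
Averaging over the $x/y$ labelling (Lemma~\ref{lem: z is better}) shows the learner may as well play $z$. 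After that, the wrong first action costs $1+\Delta$, and the suboptimality is at least $\tfrac12(1+\Delta)(1-D_{\mathrm{TV}}(\PP_u,\PP_v))$ (Lemma~\ref{lem: reduction to TV}). The birthday part concerns seeing the same middle state twice, e.g.\ as a next state from $s_1$ and as the current state of a layer-2 tuple; the $(s,a,r,s')$ format is what makes this necessary. The TV bound then follows from Pinsker for the $\Delta$-shift plus exact counting of consistent assignments.

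\emph{Your $\epsilon$-accounting conflicts with your own $\mu$.} If $\mu$ puts constant mass on $s_1$, rewards there arrive at a constant rate, and separating $\Ber(\tfrac12+\Delta)$ from $\Ber(\tfrac12)$ costs $O(\Delta^{-2})$ samples, not $\Delta^{-2}/\epsilon$. The paper first proves the $\epsilon$-free bound with $\Theta(1)$ loss. It then prepends a state $s_0$ that reaches $s_1$ with probability $4\epsilon$, which scales both the suboptimality and the number of informative samples by $\Theta(\epsilon)$. Note also that $\mu$ must be model-independent and cover both first actions; concentrating it on $\pi^\star$'s trajectory would reveal $\pi^\star(s_1)$.

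\emph{The trajectory construction contradicts the coverage assumption.} A binary tree in which $\pi_b$ reaches $\pi^\star$'s informative path with probability $2^{-H}$ forces $\max_{s,a} d^{\pi^\star}(s,a)/d^{\pi_b}(s,a)$ to be exponential, not $\Theta(1)$. The paper instead inserts $\Theta(H)$ middle layers with two actions:
\begin{itemize}
\item an optimal ``stay in the same group'' action, and
\item a reward-$(-2)$ ``resample over both groups'' action,
\end{itemize}
each taken by $\pi_b$ with probability $\tfrac12$. Marginal coverage stays at most $4$. However, only a $2^{-(H-4)}$ fraction of behavior trajectories preserve the link between the first action and the terminal state, which recreates the $(s,a,r,s')$ hardness.
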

\vspace{-2mm}

\begin{proof}[Proof sketch]
We first prove a lower bound of
\(\min\{\sqrt{|\calS|},1/\Delta^2\}\) without the \(\epsilon^{-1}\)
factor (\pref{lem: eps indep lower bound}), under the $(s,a,r,s')$ data format. The construction uses the
four MDP families in \pref{fig:main}, with \(M^\star\) drawn uniformly.
At the initial state \(s_1\), actions \(u\) and \(v\) yield rewards
\(\Ber(1/2)\) and \(\Ber(1/2+\Delta)\), with the higher-reward action
leading uniformly to the group \(\WB\), and the other to \(\WA\). From
\(\WA\) and \(\WB\), the unique action~\(a\) transitions to \(\sA\) and
\(\sB\), respectively, with zero reward. At \(\sA,\sB\), the three
terminal actions have deterministic rewards as shown in \pref{fig:main}.
Thus, \(\pi^\star\) chooses the \(\Ber(1/2+\Delta)\) action at \(s_1\)
and then chooses \(z\) on \(\sB\). The function class \(\calF\), with
\(|\calF|=4\), satisfies both \(Q^\star\)-realizability and Bellman
completeness as shown in \pref{app: function construction}.
The offline distribution \(\mu\) is induced by a behavior policy that
chooses \(u,v\) uniformly at \(s_1\) and chooses \(z\) at \(\sA,\sB\),
so \(C^{\pi^\star}\le 2\). 

Since the learner observes only transition
tuples \((s,a,r,s')\), not trajectories, the data consist of three types:
\textbf{Type~1} samples \((s_1,a_1,r_1,w_1)\), where
\(a_1\sim\unif(\{u,v\})\) and \(w_1\in\WA\cup\WB\);
\textbf{Type~2} samples \((w_2,a,0,s_2)\), where
\(w_2\sim\unif(\WA\cup\WB)\) and \(s_2\in\{\sA,\sB\}\); and
\textbf{Type~3} samples \((s_3,z,r_3)\), where
\(s_3\sim\unif(\{\sA,\sB\})\). To act optimally, the learner must decide
whether \(u\) or \(v\) leads to \(\WB\). Type~1 rewards can reveal this,
but distinguishing means \(1/2\) from \(1/2+\Delta\) requires
\(\Omega(1/\Delta^2)\) samples. The learner could instead try to infer
the group of \(w_1\), but \(\calF\) assigns the same value
\(f(w,a)=1\) to every \(w\in\WA\cup\WB\), so the function class itself
does not reveal the hidden partition. Type~2 samples do reveal whether
a sampled \(w_2\) lies in \(\WA\) or \(\WB\), but when the groups are
large, the learner is unlikely to see the same state in $\WA\cup \WB$ twice in Type~1 and
Type~2 before \(\Omega(\sqrt{|\calS|})\) samples. Finally, the learner should avoid \(x\) and \(y\) at \(\sA,\sB\), since
they are never observed offline and one of them has reward \(-2\). Overall, it is hard for the learner to determine between $u$ and $v$, and the learner should only choose $z$ on $\sA$ and $\sB$. A random guess at
\(s_1\) obtains value \(1/2+\Delta/2+1/2\), which is worse than
\(\pi^\star\) by \(1/2+\Delta/2=\Theta(1)\). In
\pref{thm: eps dep lower bound}, we extend this construction to obtain
the \(\epsilon\)-dependent and trajectory-feedback lower bounds; see
\pref{app: lower bound construct}.
\end{proof}

\vspace{-1mm}
\section{Making Robust Decisions from a Confidence Set}
\label{sec:unify}
\vspace{-1mm}

In this section, we isolate the decision-making part of offline RL, assuming that the offline data has already been used to construct a confidence set $\calF_\conf$ of plausible $Q^\star$ functions. The key question is how
the learner should choose a policy from this ambiguity. We show that the usual
value-centric pessimism used in $Q^\star$-based offline RL can be suboptimal,
and we derive a more robust decision rule using $\odec$. The construction of $\calF_\conf$ is an estimation problem and is
discussed in \pref{sec:estimation-error}.

\subsection{Two Pessimism Principles}\label{sec: tale}

We identify two distinct pessimism principles in prior work to derive a policy from the confidence set.  The first, which we refer to as \textbf{policy-centric pessimism}, has been adopted in model-approximation \citep{uehara2021pessimistic} and $Q^\pi$-approximation schemes \citep{xie2021bellman}. This approach aligns with the \emph{robust optimization} principle \citep{gorissen2015practical}, where the learner outputs a policy $\hat{\pi}$ that maximizes the worst-case performance over the uncertainty set of models $\calM_\conf$:
\begin{align}
 \textstyle \hat{\pi} = \argmax_\pi \min_{M\in \calM_\conf}  J_M(\pi)  \label{eq: policy-centric}
\end{align}

However, prior work on $Q^\star$-approximation often follows a different \textbf{value-centric pessimism} principle, as seen in \cite{chen2022offline}. Under this principle, the order between $\min_M$ and $\max_\pi$ are swapped: 
\begin{align}
 \textstyle \hat{M} = \argmin_{M\in \calM_\conf} \max_{\pi}  J_M(\pi), \qquad \text{then let } \hat{\pi} = \pi_{\hat{M}}.   \label{eq: model-centric}
\end{align}
In fact, this principle is more commonly written as: 
\begin{align*}
   \textstyle \hat{f} \in \argmin_{f \in \calF_{\mathrm{conf}}}
    f(s_1), \qquad \text{then let } \hat{\pi} = \pi_{\hat{f}}.    \numberthis \label{eq: value centr}
\end{align*}
The equivalence between \eqref{eq: model-centric} and \eqref{eq: value centr} can be seen as the following: 
with a confidence set $\calF_\conf$ for $Q^\star$, we can define a confidence set for $M^\star$:  $\calM_\conf = \{M:~ Q_M^\star \in \calF_\conf\}$. Thus, $\min_{f\in\calF_\conf} f(s_1)=$ $\min_{M\in\calM_\conf}V_M^\star(s_1) =$ $ \min_{M\in\calM_\conf}\max_\pi J_M(\pi)$.

This deviation from robust optimization is not accidental. In model- or $Q^\pi$-approximation, the learner can evaluate $J_M(\pi)$ for an arbitrary policy $\pi$. Under $Q^\star$-approximation, the confidence set only represents optimal value functions. Therefore, for a fixed policy $\pi$, $J_M(\pi)$ is generally not available from $\calF_\conf$ unless $\pi=\pi_M$. This forces standard methods to treat $J_M(\pi_M)=\max_\pi J_M(\pi)$ as the basic object. 
However, value-centric pessimism can produce less desirable decision, as in \pref{ex: bad}.

\begin{example}\label{ex: bad}
    Assume a single state $s_1$ and two actions $x,y$. Let $\calF_\conf=\{f_x, f_y\}$ with \mbox{$f_x(s_1, x)=1$,} $f_x(s_1, y)=0$, $f_y(s_1, x) = \frac{1}{2}-\Delta$, $f_y(s_1, y)=\frac{1}{2}+\Delta$ for some $\Delta\in(0, 0.01]$. Policy-centric pessimism \pref{eq: policy-centric} chooses action $\argmax_{a}\min_{f\in \{f_x,f_y\}}f(s_1,a) = x$. \mbox{Value-centric pessimism \pref{eq: model-centric} chooses} function $\argmin_{f\in\{f_x, f_y\}} \max_{a}f(s_1,a) = f_y$ and then chooses action $\argmax_{a}f_y(s_1,a) = y$. If the true environment is described by $f_x$, then the sub-optimality of policy-centric pessimism and value-centric pessimism is $0$ and $1$, respectively; if the true environment is described by $f_y$, then their sub-optimality is $2\Delta\leq 0.02$ and $0$, respectively. 
\end{example}
In \pref{ex: bad}, policy-centric pessimism sacrifices a negligible $2\Delta$ in one world to avoid a catastrophic error (gap of $1$) in the other---a robust behavior we usually favor in offline RL.

The question is therefore whether one can recover policy-centric robustness while still working only with a $Q^\star$ confidence set. In the next subsection, we answer this affirmatively. The framework we develop draws inspiration from the \emph{model-free decision-estimation coefficient} (DEC) \citep{foster2024model, liu2025decision, liu2025improved}.

\subsection{A Decision-Estimation Coefficient for Offline RL under Partial Coverage}\label{sec: offdec}

We first review ideas in the existing online DEC framework. In model-based DEC \citep{foster2021statistical, foster2023tight}, the learner seeks a policy that minimizes regret against an adversarially chosen model. However, this model must remain close---measured under the state-action distribution induced by the learner’s current policy---to the posterior of $M^\star$ estimated from collected data. To achieve this, every model is penalized according to its discrepancy from the model posterior. This corresponds to the first decision rule in \pref{fig:dec-comparison}. Model-free DEC \citep{foster2024model, liu2025improved} follows a similar principle, but instead of maintaining a posterior of $M^\star$, it maintains a posterior of $Q^\star$. The learner's policy is optimized against the \emph{induced model set} $\calM=\{Q_M^\star\in\calF\}$.
This gives the second decision rule in \pref{fig:dec-comparison}. 

\begin{figure}[H]
\vspace{-2.0mm}
\centering
\fbox{%
\begin{minipage}{0.98\linewidth}
     \vspace{-6pt}
     \begin{flalign*}
         &\ \ \  \text{Online model-based DEC:} &&\scalebox{0.95}{$\displaystyle\min_{\rho\in\Delta(\Pi)} \max_{M\in\calM} \E_{\pi\sim \rho}\Big[J_M(\pi_M) - J_M(\pi) - \gamma  \E_{\overline{M}\sim \mathsf{posterior}} [D^{\pi}(\overline{M}\| M)]\Big]$} &&& \hspace{-20pt} \\
         &\ \ \  \text{Online model-free DEC:}  &&\scalebox{0.95}{$\displaystyle\min_{\rho\in\Delta(\Pi)} \max_{M\in\calM} \E_{\pi\sim \rho} \Big[J_M(\pi_M) - J_M(\pi) - \gamma \E_{f\sim \mathsf{posterior}}[ D^{\pi}(f\| M)]\Big]$} &&& \hspace{-20pt}\\
         &\ \ \  \text{Offline model-free DEC:} &&\scalebox{0.95}{$\displaystyle\min_{\rho\in\Delta(\Pi)} \max_{M\in\calM_\conf} \E_{\pi\sim \rho} \Big[J_M(\pi_M) - J_M(\pi) - \gamma \max_{f\in\calF_\conf} D^{\pi_M}(f\| M)\Big]$} &&& \hspace{-20pt}
     \end{flalign*}
\end{minipage}
}
\vspace{-3pt}
\caption{Comparison of DEC objectives in online and offline settings for discrepancy measure $D$. }
\label{fig:dec-comparison}
\vspace{-4.3mm}
\end{figure}

In offline RL, the $Q^\star$ confidence set plays a similar role as the $Q^\star$ posterior in the online setting. Similarly, the learner seeks a policy that minimizes the sub-optimality against an adversarial model that remains in the confidence set. A key difference is that now this closeness is measured under the state-action distribution induced by each model’s \emph{own optimal policy}. First, unlike in online RL, the learner's output policy cannot generate new data and therefore cannot be used to test whether a candidate model is plausible. Thus, the discrepancy term should not be measured under the
learner's policy. Second, measuring discrepancy under $\pi_M$ penalizes candidate models whose own optimal policies are poorly supported by the offline data, which is exactly what is needed for coverage-adaptive guarantees under partial coverage.

To measure the closeness between a model $M$ and a confidence function set under policy~$\pi_M$, we use $\max_{f\in\calF_\conf} D^{\pi_M}(f \| M)$ with standard discrepancy functions $D$ such as the Bellman error (will be clarified in \pref{sec:decision-error}). The intuition is that for the true model $M^\star$, $\max_{f\in\mathcal{F}_{\text{conf}}} D^{\pi^\star}(f\|M^\star)$ is expected to be small (usually of order $C^{\pi^\star}/n$), as any function $f$ that deviates significantly from $M^\star$ should have already been excluded from the confidence set $\calF_\conf$. Taking the maximum over $f\in\calF_\conf$ gives the strongest test of whether the candidate model is inconsistent with the confidence set.
This gives to the third decision rule in \pref{fig:dec-comparison}.

This motivates the design of 
\algoff (\pref{alg:offset}).  We provide two versions of it, different in the way of trading the sub-optimality term and the discrepancy term. The offset version aligns with online DEC, and could give a better bound when the discrepancy term may approach zero. The ratio version, however, can obtain better dependence on $C^{\pi^\star}$ when it is unknown. We define the following complexity (Offline Robust DEC) that naturally arise from the algorithm: 
\begin{align}
   \odecoff_{\gamma}^D(\calF_\conf) &:= \min_{\rho\in\Delta(\Pi)} \max_{M\in\calM_\conf} \E_{\pi\sim \rho}\Big[J_M(\pi_M) - J_M(\pi) - \gamma \max_{f\in\calF_\conf} D^{\pi_M}(f\|M)\Big],    \label{eq: offdec 1}\\
   \odecratio^D(\calF_\conf) &:= \min_{\rho\in\Delta(\Pi)} \max_{M\in\calM_\conf}\frac{J_M(\pi_M) - \E_{\pi\sim \rho} [J_M(\pi)]}{\left(\max_{f\in\calF_\conf}D^{\pi_M}(f\|M)\right)^{1/2}}.   \label{eq: offdec 2}
\end{align}
It holds that $\odecoff_{\gamma}^D(\calF_\conf) \le \frac{4}{\gamma}\big(\odecratio^D(\calF_\conf)\big)^2$ (\pref{lem:offset-ratio conn}). With them, we have:

\begin{algorithm}[t]
    \caption{Offline Robust Estimation-to-Decision (\algoff)}
    \label{alg:offset}
    \textbf{Input:} Confidence set $\calF_{\mathrm{conf}}$, divergence measure $D^\pi(f\|M)$, parameter $\gamma$ for the offset version.   \\
    Define $\calM_\conf = \{M:~ Q_M^\star\in\calF_\conf\}$ and compute
    \begin{align*}
        \rhohat = 
        \begin{cases}
        \displaystyle\argmin_{\rho\in\Delta(\Pi)} \max_{M\in\calM_\conf} \E_{\pi\sim \rho}\left[J_M(\pi_M) - J_M(\pi) - \gamma \max_{f\in\calF_\conf} D^{\pi_M}(f\|M)\right]  &\text{(offset version)}\\[15pt]
        \displaystyle\argmin_{\rho\in\Delta(\Pi)} \max_{M\in\calM_\conf} \frac{J_M(\pi_M) - \E_{\pi\sim \rho} [J_M(\pi)]}{\left(\max_{f\in\calF_\conf}D^{\pi_M}(f\|M)\right)^{1/2}}.  &\text{(ratio version)} 
        \end{cases}
    \end{align*}
    \textbf{Output:} mixture policy $\hat{\pi}\sim \rhohat$. 
\end{algorithm}

\begin{theorem}\label{thm: main}
If $Q^\star\in\calF_\conf$, 
    then \algoff (\pref{alg:offset}) ensures 
    \begin{alignat*}{3}
\makebox[2.8cm][l]{\textup{(offset version)}}\;&
J(\pi^\star) - \E[J(\hat{\pi})]\;&\leq\;&  \textstyle 
\odecoff_\gamma^D(\calF_\conf)
+ \gamma \max_{f\in\calF_\conf} D^{\pi^\star}(f\|M^\star), \\
\makebox[2.8cm][l]{\textup{(ratio version)}}\;&
J(\pi^\star) - \E[J(\hat{\pi})]\;&\leq\;& \textstyle 
\odecratio^D(\calF_\conf)
\big(\max_{f\in\calF_\conf} D^{\pi^\star}(f\|M^\star)\big)^{1/2}. 
\end{alignat*}
\end{theorem}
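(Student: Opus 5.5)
The argument is a direct instantiation: since $Q^\star\in\calF_\conf$, the true model $M^\star$ belongs to $\calM_\conf=\{M: Q^\star_M\in\calF_\conf\}$. So $M^\star$ is one of the candidates the maximizing player may choose. We then evaluate the objective that $\rhohat$ minimizes at $M=M^\star$ and rearrange.

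\textbf{Offset version.} Let $\Phi(\rho,M)$ denote the bracketed objective in \eqref{eq: offdec 1}. By definition of $\rhohat$ as the minimizer, $\max_{M\in\calM_\conf}\Phi(\rhohat,M)=\odecoff^D_\gamma(\calF_\conf)$. Taking $M=M^\star\in\calM_\conf$ gives
\begin{align*}
\E_{\hat\pi\sim\rhohat}\big[J(\pi^\star)-J(\hat\pi)\big]-\gamma\max_{f\in\calF_\conf}D^{\pi^\star}(f\|M^\star)\le \odecoff^D_\gamma(\calF_\conf).
\end{align*}
Here we use $\pi_{M^\star}=\pi^\star$ and $J_{M^\star}=J$. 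Moving the discrepancy term to the right-hand side yields the claim.

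\textbf{Ratio version.} By the same reasoning, $\frac{J(\pi^\star)-\E_{\rhohat}[J(\hat\pi)]}{(\max_f D^{\pi^\star}(f\|M^\star))^{1/2}}\le\odecratio^D(\calF_\conf)$. Multiplying through by the square-root denominator gives the bound.

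The only subtle point is when that denominator is zero. I would adopt the convention that the ratio is $+\infty$ when the numerator is positive and $0$ when it is nonpositive. Under this convention, a finite $\odecratio^D$ forces the numerator at $M^\star$ to be nonpositive whenever the discrepancy vanishes, so the stated inequality still holds. If $\odecratio^D=\infty$, the bound is trivial.

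Beyond this bookkeeping, the minimizer may not be attained. In that case I would take $\rhohat$ to be an $\eta$-approximate minimizer and let $\eta\to0$. I expect no real obstacle: the key step is simply the membership $M^\star\in\calM_\conf$, which follows immediately from $Q^\star\in\calF_\conf$.
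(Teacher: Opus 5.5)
Your proof is correct and follows the paper's argument: you use $Q^\star\in\calF_\conf$ to get $M^\star\in\calM_\conf$, and then bound the objective at $M^\star$ by its maximum over $\calM_\conf$ at $\rhohat$, which equals $\odecoff^D_\gamma(\calF_\conf)$ (resp.\ $\odecratio^D(\calF_\conf)$) by the choice of $\rhohat$. Your handling of a vanishing denominator in the ratio version and of a non-attained minimizer goes slightly beyond the paper, which divides and multiplies by $\big(\max_{f\in\calF_\conf} D^{\pi^\star}(f\|M^\star)\big)^{1/2}$ without comment.
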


\paragraph{Comparison with value-centric pessimism.} Following a similar analysis as in \cite{chen2022offline}, we can also express the suboptimality bound achieved by 
value-centric pessimism (either \eqref{eq: model-centric} or \eqref{eq: value centr}) as \mbox{$J(\pi^\star) - J(\hat{\pi})\leq \pesratio^D(\calF_\conf) (\max_{f\in\calF_\conf}D^{\pi^\star}(f\|M^\star))^{1/2}$, with }
\begin{align}
   \pesratio^D(\calF_\conf) &:= \max_{M\in\calM_\conf}\frac{J_M(\pi_M) - J_M(\pi_{\hat{f}})}{\big(D^{\pi_M}(\hat{f}\|M)\big)^{1/2}}, \quad \text{where\ } \hat{f}=\argmin_{f\in\calF_\conf}f(s_1) \label{eq: pesdec 2}
\end{align}
See \pref{thm: pessimism} in \pref{app: section 4 proof} for a derivation. Comparing \eqref{eq: offdec 2} and \eqref{eq: pesdec 2}, it is clear that
$\odecratio^D\leq \pesratio^D$, and the key improvement is that $\odecratio^D$ goes through additional optimization over $\rho$ and $f$. 
The following is a concrete and intuitive example for the gap between $\odecratio^D $ and $\pesratio^D$:
\begin{example}\label{ex: naive2 }
Consider the case with a single state $s_1$ and three actions $x,y,z$. Let $\calF_\conf=\{f_x, f_y\}$ with $f_x(s_1,x)=1, f_x(s_1,y)=0, f_x(s_1, z) = 1-\Delta$ and $f_y(s_1,x)=0, f_y(s_1,y)=1, f_y(s_1, z) = 1-\Delta$ for some $0<\Delta\leq 0.01$. It can be verified that $\odecoff^D_\gamma(\calF_\conf)\leq \Delta-\gamma$, $\odecratio^D(\calF_\conf)\leq \Delta$, and $\pesratio^D(\calF_\conf)=1$ with $D^a(f\|M) = (f(s_1,a) - R_M(s_1,a))^2$.  
\end{example}
In this example, clearly, choosing $x$ or $y$ is risky;  choosing~$z$, while always being sub-optimal, is the safest choice, and that is exactly the decision that will be made by \algoff. 

\section{Bounding the Decision Complexity $\odec$}
\label{sec:decision-error}
In this section, we provide examples where $\odec$ is bounded. As the lower bound in \pref{sec: it lower bound} shows, $Q^\star$-approximation with partial coverage is a challenging or even impossible regime,  especially when the optimal value function has a very small or zero value gap. Therefore, even with the more refined decision principle developed in \pref{sec:unify}, some form of gap conditions or assumptions on the uniqueness of the optimal action is unavoidable. 
Throughout this section, we consider the average Bellman error: $ \Dav^{\pi}(f\| M) = \big(\E_{(s,a) \sim d_M^{\pi}}\left[f(s,a) - R_M(s,a) - \E_{s' \sim P_M(\cdot|s,a)}\left[f(s')\right]\right]\big)^2$.
We denote $\odecoff^{\av}_\gamma$ for $\odecoff^{\Dav}_\gamma$ (similar for others).
See \pref{app:decision} for proofs in this section.

\subsection{Unregularized Case ($\psi\equiv 0$)}
\label{sec:gap}

When $\psi\equiv 0$, we recover and improve the gap bound in \cite{chen2022offline}. Define 
\begin{definition}
    $\Delta_f \triangleq \min_{s} \big(f(s, \pi_f(s)) - \max_{a\neq \pi_f(s)} f(s, a)\big)$. 
\end{definition}
\begin{theorem}[Value gap]\label{thm: gap bound} Assume $Q^\star\in\calF_\conf$. We have 
\begin{itemize}[leftmargin=11pt, topsep=0pt]
   \setlength{\itemsep}{0pt}
  \setlength{\parskip}{0pt}
  \setlength{\parsep}{0pt}
  \item $ \odecoff_{\gamma}^\av(\calF_\conf) \le \frac{H^2}{2\gamma \Delta_{Q^\star}^2} + \gamma \max_{f\in\calF_\conf}\Dav^{\pi^\star}(f \| M^\star)$  
  \item $\odecratio^\av(\calF_\conf), \pesratio^\av(\calF_\conf)\le \frac{H}{\Delta_{\hat{f}}}$ where $\hat{f}=\argmin_{f\in\calF_\conf} f(s_1)$.
\end{itemize}
\vspace{-4pt}
\end{theorem}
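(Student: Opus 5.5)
We prove both bullets from one "gap-to-Bellman-error" inequality for greedy policies in the unregularized case. Fix any model $M$ and any $f$ with gap $\Delta_f>0$, so that $\pi_f$ is deterministic and unique. Couple the trajectories of $\pi_M$ and $\pi_f$ under $M$ until the first state at which they disagree. After that point the reward loss is at most $H$. This gives
\begin{align*}
J_M(\pi_M)-J_M(\pi_f)\le H\,\E_{s\sim d_M^{\pi_M}}\big[\mathbb{I}\{\pi_f(s)\neq\pi_M(s)\}\big]\le \frac{H}{\Delta_f}\,\E_{s\sim d_M^{\pi_M}}\big[f(s)-f(s,\pi_M(s))\big],
\end{align*}
where the second step uses that $f(s)-f(s,a)\ge\Delta_f$ whenever $a\neq\pi_f(s)$, and that this difference is always nonnegative.

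Next I telescope $f$ along $\pi_M$ in $M$:
\begin{align*}
f(s_1)-J_M(\pi_M)=\E_{(s,a)\sim d_M^{\pi_M}}\big[f(s,a)-R_M(s,a)-\E_{s'\sim P_M(\cdot|s,a)} f(s')\big]+\E_{s\sim d_M^{\pi_M}}\big[f(s)-f(s,\pi_M(s))\big].
\end{align*}
Substituting this into the previous display yields the key lemma:
\begin{align*}
J_M(\pi_M)-J_M(\pi_f)\le \frac{H}{\Delta_f}\Big(f(s_1)-J_M(\pi_M)+\Dav^{\pi_M}(f\|M)^{1/2}\Big).
\end{align*}

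\textbf{Second bullet.} Take $f=\hat f=\argmin_{f\in\calF_\conf}f(s_1)$. For every $M\in\calM_\conf$ we have $Q^\star_M\in\calF_\conf$, so $\hat f(s_1)\le Q^\star_M(s_1)=J_M(\pi_M)$. The first bracketed term is therefore nonpositive, and
\begin{align*}
J_M(\pi_M)-J_M(\pi_{\hat f})\le \frac{H}{\Delta_{\hat f}}\,\Dav^{\pi_M}(\hat f\|M)^{1/2}.
\end{align*}
This gives $\pesratio^\av\le H/\Delta_{\hat f}$. For $\odecratio^\av$, choose $\rho=\delta_{\pi_{\hat f}}$ and use $\Dav^{\pi_M}(\hat f\|M)\le\max_{f\in\calF_\conf}\Dav^{\pi_M}(f\|M)$, which only enlarges the denominator. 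Hence $\odecratio^\av\le\pesratio^\av\le H/\Delta_{\hat f}$.

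\textbf{First bullet.} Choose $\rho=\delta_{\pi^\star}$; this is admissible since $\odecoff$ minimizes over all $\rho$. Apply the lemma with $f=Q^\star$, writing $B:=\Dav^{\pi_M}(Q^\star\|M)\le\max_{f\in\calF_\conf}\Dav^{\pi_M}(f\|M)$. The new obstacle is that $Q^\star(s_1)-J_M(\pi_M)$ need not be nonpositive.

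To control it, I telescope $Q^\star_M\in\calF_\conf$ along $\pi^\star$ in the true model $M^\star$. Since $Q^\star_M(s)\ge Q^\star_M(s,\pi^\star(s))$, this gives
\begin{align*}
J_M(\pi_M)-J(\pi^\star)=Q^\star_M(s_1)-Q^\star(s_1)\ge \E_{d^{\pi^\star}}\big[Q^\star_M-\calT Q^\star_M\big]\ge -\sqrt{A},
\end{align*}
where $A:=\max_{f\in\calF_\conf}\Dav^{\pi^\star}(f\|M^\star)$. Hence $Q^\star(s_1)-J_M(\pi_M)\le\sqrt{A}$, and so
\begin{align*}
J_M(\pi_M)-J_M(\pi^\star)-\gamma B\le \frac{H}{\Delta_{Q^\star}}\big(\sqrt A+\sqrt B\big)-\gamma B.
\end{align*}

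Finally, apply AM–GM to each term separately: $\frac{H}{\Delta}\sqrt B-\gamma B\le\frac{H^2}{4\gamma\Delta^2}$ and $\frac{H}{\Delta}\sqrt A\le\frac{H^2}{4\gamma\Delta^2}+\gamma A$. Summing gives the stated bound $\frac{H^2}{2\gamma\Delta_{Q^\star}^2}+\gamma A$.

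The step I expect to need the most care is the coupling inequality. Its constant must be $H$ and not $H^2$, which relies on $d^{\pi_M}_M$ summing the per-layer deviation probabilities, with the first deviation bounding the rest. This step is also where uniqueness of the greedy action, implied by $\Delta_f>0$, is used.
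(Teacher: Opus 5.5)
Your proof is correct and is essentially the paper's argument. It uses the same ingredients: the telescoping identity, the gap-to-disagreement bound $J_M(\pi_M)-J_M(\pi_f)\le (H/\Delta_f)\,\E_{d^{\pi_M}_M}[f(s)-f(s,a)]$ (your coupling stands in for the paper's performance-difference step), pessimism of $\hat f$ with $\rho=\delta_{\pi_{\hat f}}$ for the ratio bounds, and $\rho=\delta_{\pi^\star}$ with the two-sided telescoping for the offset bound. The one difference is cosmetic: the paper sums and squares the two telescoping identities (its symmetric-discrepancy lemma) before a single AM--GM on the exploitability ratio, whereas you keep the one-sided bounds linear and apply AM--GM twice, reaching the same constant $H^2/(2\gamma\Delta_{Q^\star}^2)$.
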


A key advantage of $\odecoff_{\gamma}^\av(\calF_\conf)$ is that it adapts to the true value gap $\Delta_{Q^\star}$. For fixed $\gamma$, combining with \pref{thm: main} gives an algorithm whose suboptimality guarantee scales with $\Delta_{Q^\star}^{-2}$. In contrast, the suboptimality of algorithm in \cite{chen2022offline} (value-centric pessimism) scales with $\pesratio^\av(\calF_\conf)$, which further scales with $\Delta_{\hat{f}}^{-1}$. This factor is not directly controllable and may be much larger than $\Delta_{Q^\star}^{-2}$.

We notice that \cite{song2022hybrid} and \cite{uehara2023offline} raise concerns on the applicability of the gap assumption \citep{chen2022offline} in continuous action space. Our complexities poses no requirement for discrete actions or strict value gaps. In \pref{thm: curved boundary}, we provide a continuous-action example with bounded $\odec$. 

\begin{theorem}[Curved decision boundary]\label{thm: curved boundary}
    Consider the linear setting with given feature mapping $\phi: \calS\times \calA\to \mathbb{R}^d$. Assume 1) $\exists\theta^\star\in\mathbb{R}^d$ such that $Q^\star(s,a) = \phi(s,a)^\top \theta^\star$, 2) for any policy~$\pi$ and any action pair $a,b\in\calA$, $|Q^\pi(s,a)-Q^\pi(s,b)|\leq L\|\phi(s,a)-\phi(s,b)\|$, and 3) for all $s$, $\phi(s,\cdot)\subset\mathbb{R}^d$ is a set with the following property:  for some $\kappa>0, \beta\geq 1$ (denote $\pi^\star(s) = \argmax_a \phi(s,a)^\top \theta^\star$):  
    \begin{align}
        \forall a\qquad \quad (\phi(s,\pi^\star(s)) - \phi(s,a))^\top \theta^\star \geq \kappa \|\phi(s,\pi^\star(s))-\phi(s,a)\|^\beta.    \label{eq: curved} 
    \end{align}
    Then $\odecoff_{\gamma}^\av(\calF_\conf)\leq \order\big(\left(\gamma^{-1}\kappa^{-2}L^{2\beta}\right)^{\frac{1}{2\beta-1}}\big) + \gamma \max_{f\in\calF_\conf}\Dav^{\pi^\star}(f \| M^\star)$. 
\end{theorem}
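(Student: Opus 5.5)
The plan is to upper bound the min over $\rho$ in \eqref{eq: offdec 1} by the single choice $\rho=\delta_{\pi^\star}$. This is allowed because $\odecoff$ is a min over $\rho$, even though the algorithm itself never knows $\pi^\star$. It then suffices to show, for every $M\in\calM_\conf$,
\[
J_M(\pi_M)-J_M(\pi^\star)-\gamma\max_{f\in\calF_\conf}\Dav^{\pi_M}(f\|M)\le \order\big((\gamma^{-1}\kappa^{-2}L^{2\beta})^{\frac{1}{2\beta-1}}\big)+\gamma\max_{f\in\calF_\conf}\Dav^{\pi^\star}(f\|M^\star).
\]
Throughout I write $e_M$ for the signed average Bellman error of $Q^\star$ in $M$ under $\pi_M$, and $e_\star$ for that of $Q^\star_M$ in $M^\star$ under $\pi^\star$. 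Since both $Q^\star$ and $Q^\star_M$ lie in $\calF_\conf$, we have $e_M^2\le\max_f\Dav^{\pi_M}(f\|M)=:D_M$ and $e_\star^2\le\max_f\Dav^{\pi^\star}(f\|M^\star)=:D_\star$.

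\textbf{Step 1 (upper bound on the payoff).} Apply the performance difference lemma inside $M$ with comparator $\pi^\star$. Writing $\pi_M(s)$ and $\pi^\star(s)$ for the (deterministic) greedy actions, this gives
\[
J_M(\pi_M)-J_M(\pi^\star)=\E_{s\sim d_M^{\pi_M}}\big[Q_M^{\pi^\star}(s,\pi_M(s))-Q_M^{\pi^\star}(s,\pi^\star(s))\big]\le L\,X,
\]
where $X:=\E_{s\sim d_M^{\pi_M}}\|\phi(s,\pi^\star(s))-\phi(s,\pi_M(s))\|$. The inequality uses the Lipschitz assumption 2). I read that assumption as holding for the $Q^\pi$ of the models in $\calM_\conf$, not only of $M^\star$; this interpretation is the point I expect to be most delicate.

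\textbf{Step 2 (lower bound on $X$ via the curvature, the telescoping step).} I would telescope the average Bellman error of $f=Q^\star$ along $d_M^{\pi_M}$. Summing $Q^\star(s_h,a_h)-r_h-Q^\star(s_{h+1})$ over $h$ gives
\[
J_M(\pi_M)=Q^\star(s_1)-\E_{s\sim d_M^{\pi_M}}\big[Q^\star(s,\pi^\star(s))-Q^\star(s,\pi_M(s))\big]-e_M .
\]
Symmetrically, I telescope $f=Q^\star_M$ along $d^{\pi^\star}$ in $M^\star$. Because $\pi^\star$ is suboptimal for $Q^\star_M$, the gap terms there are nonnegative, which yields $J_M(\pi_M)=Q^\star_M(s_1)\ge J(\pi^\star)-|e_\star|=Q^\star(s_1)-|e_\star|$. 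Combining the two relations with the curvature condition \eqref{eq: curved}, and applying Jensen's inequality to the mass-$H$ measure $d_M^{\pi_M}$, I get
\[
\kappa H^{1-\beta}X^\beta\le \kappa\,\E_{d_M^{\pi_M}}\|\Delta\phi\|^\beta\le \E_{d_M^{\pi_M}}[\text{gap}_{Q^\star}]\le \sqrt{D_M}+\sqrt{D_\star}.
\]

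\textbf{Step 3 (optimization).} Combining Steps 1 and 2, the payoff is at most
\[
L\big(H^{\beta-1}\kappa^{-1}(\sqrt{D_M}+\sqrt{D_\star})\big)^{1/\beta}-\gamma D_M .
\]
I would bound $(a+b)^{1/\beta}\le a^{1/\beta}+b^{1/\beta}$. The $D_M$ part is maximized over $D_M\ge0$: setting $u=\sqrt{D_M}$, the maximum of $cu^{1/\beta}-\gamma u^2$ is of order $(c^{2\beta}/\gamma)^{1/(2\beta-1)}$ with $c=L(H^{\beta-1}/\kappa)^{1/\beta}$. This gives the stated rate $(\gamma^{-1}\kappa^{-2}L^{2\beta})^{1/(2\beta-1)}$, up to powers of $H$ that I would either track explicitly or absorb. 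The $D_\star$ part, $c D_\star^{1/(2\beta)}$, is handled by Young's inequality against $\gamma D_\star$, which costs the same order. This produces the $\gamma D_\star$ term; tightening the constant in front of $\gamma D_\star$ to $1$ may require splitting $\gamma$ as $\gamma/2$ in Step 3, which only changes the constants inside $\order(\cdot)$.

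The main obstacle is Step 1. One must ensure that a Lipschitz bound on $Q^{\pi^\star}_M$ is available for the adversarial $M$. If it is only given for $M^\star$, I would instead try to route the argument through $Q^\star$-based telescoping for both $J_M(\pi_M)$ and $J_M(\pi^\star)$, paying an additional average Bellman error term.
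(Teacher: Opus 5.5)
Your proposal is correct and follows the paper's proof in substance. The paper also plugs in $\rho=\delta_{\pi^\star}$ and bounds $J_M(\pi_M)-J_M(\pi^\star)$ by the performance difference lemma, applying the Lipschitz condition to $Q^{\pi^\star}_M$ for the adversarial $M$; this is the same reading of assumption 2) that you adopt. It then controls the $Q^\star$-gap under $d^{\pi_M}_M$ through the two-sided telescoping of \pref{lem: symmetric}, used there in squared form (giving $-\frac{\gamma}{2}(\text{gap})^2+\gamma\max_{f}\Dav^{\pi^\star}(f\|M^\star)$ directly) instead of your unsquared bound followed by Young's inequality. Your Jensen step on the mass-$H$ occupancy measure is more careful than the paper's: it applies $\E[X]\le(\E[X^\beta])^{1/\beta}$ as if $d^{\pi_M}_M$ were a probability measure, so it silently drops the $H^{1-1/\beta}$ factor you track. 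Finally, as you suspected, no splitting of $\gamma$ is needed, since the $D_\star$ part produces exactly $\gamma D_\star$.
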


\subsection{Regularized Case ($\psi\not\equiv 0$)}\label{sec: regularized MDP}
Besides value gaps (\pref{thm: gap bound}) and decision sets with curvature (\pref{thm: curved boundary}), another setting that guarantees a unique optimal action is regularization with a strictly convex function. In this subsection, we provide guarantees under a rich family of regularization with Legendre functions satisfying \pref{assum:psi_proper}, which includes the Bregman divergence induced by (negative) Shannon entropy, Tsallis entropy, and log-barrier as special cases (\pref{lem:psi-example}). This recovers the standard $\KL$ regularization of the form $\psi(p; s)=\alpha\KL(p, \pi_\reff(\cdot|s))$ studied in, e.g., \cite{uehara2023offline}.

\begin{assumption}
The regularizer $\psi$ satisfies the following conditions with constants $C^\psi_1, C^\psi_2>0$: 
\begin{enumerate}[leftmargin=15pt, topsep=0pt]
   \setlength{\itemsep}{0pt}
  \setlength{\parskip}{0pt}
  \setlength{\parsep}{0pt}
\item $\psi(\cdot; s)$ is a Legendre function (\pref{def:legendre}) for any $s$. 
\item $C^{\psi}_1\breg_{\psi}(\pi_{f_1}, \pi_{f_2}; s) \ge \breg_{\psi}(\pi_{f_2}, \pi_{f_1}; s)$ for any $f_1, f_2 \in \calF$ and $s\in\calS$.  
\item $C^{\psi}_2\breg_{\psi}(\pi_{f_1}, \pi_{f_2};s) \ge \KL\left(\pi_{f_1}(\cdot|s) \| \pi_{f_2}(\cdot|s)\right)$ for any $f_1, f_2\in\calF$ and $s\in\calS$. 
\end{enumerate}
\label{assum:psi_proper}
\end{assumption}

\begin{lemma}\label{lem:psi-example}
Let $\psi(p;s) = \alpha \breg_{\df}(p, \pi_{\rm ref}(\cdot|s))$ for some reference policy $\pi_\reff$. The following choices of $\df$ make \pref{assum:psi_proper} hold with respective $(C^\psi_1, C^\psi_2)$: 
\begin{itemize}[leftmargin=11pt, topsep=0pt]
   \setlength{\itemsep}{0pt}
  \setlength{\parskip}{0pt}
  \setlength{\parsep}{0pt}
  \item $\df(p) = \sum_{a \in \calA} p(a)\log(p(a))$ \emph{(Shannon entropy):} $(C^\psi_1, C^\psi_2)=(1 + \frac{4H}{\alpha}, \frac{1}{\alpha})$. 
  \item \mbox{$\df(p) = \frac{1}{1-q}\left(1-\sum_{a} p(a)^q\right)$ with $q \in (0,1)$ \emph{(Tsallis entropy):} $(C^{\psi}_1, C^\psi_2) = \big(\big( \scalebox{0.9}{$1 + \frac{2H(1-q)}{\alpha q}\big)^{\frac{2-q}{1-q}}, \frac{1}{\alpha q}$} \big)$.} 
  \item $\df(p) = -\sum_{a \in \calA} \log\left(p(a)\right)$ \emph{(log-barrier):} $(C^{\psi}_1, C^\psi_2) = (1+\frac{2H}{\alpha}, \frac{2}{\alpha})$. 
\end{itemize}
\end{lemma}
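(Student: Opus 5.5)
The plan is to reduce everything to a comparison of Hessians along the segment between two optimal policies, plus a pointwise density-ratio bound obtained from the first-order optimality conditions. Two preliminary observations are used throughout.

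\textbf{Linear part cancels.} Since $\psi(\cdot;s)=\alpha\breg_{\df}(\cdot,\pi_{\rm ref}(\cdot|s))$ differs from $\alpha\df$ only by an affine function, we have $\breg_\psi(p,q;s)=\alpha\breg_{\df}(p,q)$.

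\textbf{Integral form.} For twice-differentiable $\df$ on the relative interior of the simplex,
\begin{align*}
\breg_{\df}(x,y)=\int_0^1 (1-t)\,(x-y)^\top\nabla^2\df(y+t(x-y))\,(x-y)\,dt,
\end{align*}
and the reverse divergence $\breg_{\df}(y,x)$ is the same integral with weight $t$ in place of $1-t$. Moreover, $\KL(x\|y)$ on the simplex equals $\breg_{\mathrm{negent}}(x,y)$, with Hessian $\mathrm{diag}(1/x_a)$.

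\textbf{Legendre property.} I would check this directly for each of the three choices of $\df$. Each is strictly convex and differentiable on the interior of the simplex. For Shannon entropy, $\partial_a\df=\log p(a)+1$; for Tsallis entropy, $\partial_a\df=-\frac{q}{1-q}p(a)^{q-1}$; for the log-barrier, $\partial_a\df=-1/p(a)$. In every case the gradient norm blows up as some $p(a)\to 0$, which is the steepness condition.

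\textbf{$C^\psi_2$ by Hessian comparison.} Points $x$ on the segment lie in the simplex, so $x_a\le 1$.
\begin{itemize}[leftmargin=15pt, topsep=0pt]
\item Shannon: $\breg_\psi=\alpha\KL$ exactly, giving $C^\psi_2=1/\alpha$.
\item Tsallis: $\nabla^2\df=\mathrm{diag}(q\,x_a^{q-2})$, and $q\,x_a^{q-2}\ge q\,x_a^{-1}$. Integrating gives $\breg_\psi\ge\alpha q\KL$, so $C^\psi_2=1/(\alpha q)$.
\item Log-barrier: $x_a^{-2}\ge x_a^{-1}$ gives $\breg_\psi\ge \alpha\KL$, so $C^\psi_2=1/\alpha$ suffices; the stated $2/\alpha$ is weaker.
\end{itemize}

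\textbf{Ratio bound from first-order conditions.} For $C^\psi_1$ I first control $\pi_{f_1}(a|s)/\pi_{f_2}(a|s)$. Because $\df$ is Legendre, $\pi_f$ is interior, and the KKT condition reads
\begin{align*}
\nabla\df(\pi_f)=\nabla\df(\pi_{\rm ref})+\tfrac{1}{\alpha}\big(f(s,\cdot)+\lambda_f\big)
\end{align*}
for a normalizing scalar $\lambda_f$. I use that functions in $\calF$ take values in a range of width at most $H$ (after the usual clipping).
\begin{itemize}[leftmargin=15pt, topsep=0pt]
\item Shannon: $\pi_f\propto\pi_{\rm ref}e^{f/\alpha}$. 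Comparing two such softmaxes gives $\pi_{f_1}/\pi_{f_2}\in[e^{-2H/\alpha},e^{2H/\alpha}]$.
\item Log-barrier: $1/\pi_f=1/\pi_{\rm ref}-(f+\lambda_f)/\alpha$. Bounding $\lambda_f$ by normalization yields $\pi_{f_1}/\pi_{f_2}\le 1+2H/\alpha$ coordinatewise.
\item Tsallis: the same computation with $p^{q-1}$ in place of $1/p$ gives a ratio bounded by $1+\frac{2H(1-q)}{\alpha q}$, raised to the power $1/(1-q)$.
\end{itemize}

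\textbf{$C^\psi_1$ from the ratio bound.} Write $\breg(\pi_{f_2},\pi_{f_1})$ and $\breg(\pi_{f_1},\pi_{f_2})$ as integrals with weights $t$ and $1-t$. Along the segment every coordinate lies between $\pi_{f_1}(a)$ and $\pi_{f_2}(a)$. Hence the diagonal Hessian entry at any point is within a factor $\max(\text{ratio})^{k}$ of its value at any other point, where $k=1,2-q,2$ for Shannon, Tsallis and log-barrier respectively. Bounding the $t$-weighted integrand by this factor times the integrand at $t'$ and comparing the two integrals produces $C^\psi_1$. This gives $(1+2H/\alpha)^{(2-q)/(1-q)}$ for Tsallis and $1+2H/\alpha$ for the log-barrier, the latter requiring a direct two-term comparison rather than the crude factor squared.

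For Shannon, the crude factor would be $e^{2H/\alpha}$. I would instead use the sharper fact that, for a log-ratio bounded by $r$, $\KL(q\|p)\le(1+2r)\KL(p\|q)$. This follows by writing both divergences as $\sum_a p_a\,\phi(\log(q_a/p_a))$-type expressions and comparing $\phi(u)=e^u-1-u$ with $ue^u-e^u+1$ on $|u|\le r$. With $r=2H/\alpha$ it gives $1+4H/\alpha$.

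\textbf{Main obstacle.} I expect the hardest part to be obtaining the stated constants for $C^\psi_1$, rather than merely some function of $H/\alpha$. Two places need care: bounding the normalizer $\lambda_f$ tightly in the log-barrier and Tsallis cases, and the Shannon reverse-KL comparison, where a naive Hessian argument loses an exponential factor.
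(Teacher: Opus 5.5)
Most of your plan is sound and follows the paper. You prove the Legendre property via gradient blow-up. You use the interior KKT conditions plus normalization, which pins $\lambda_f\in[-H,0]$ when $\calF\subseteq[0,H]$, to get two-sided bounds on $\pi_{f_1}(a|s)/\pi_{f_2}(a|s)$. For Tsallis, your Hessian comparison is the integral form of the paper's mean-value bound on $\varphi_q$. It gives $C^\psi_1=R^{2-q}$ with $R=(1+\frac{2H(1-q)}{\alpha q})^{1/(1-q)}$, and $C^\psi_2=\frac{1}{\alpha q}$.

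Two of your sub-routes differ harmlessly from the paper:
\begin{itemize}
\item \textbf{Shannon $C^\psi_1$.} You use the sharper ratio bound $e^{2H/\alpha}$ with reversal factor $1+2r$; the paper uses $e^{4H/\alpha}$ with $1+\log R$. The comparison of $ue^u-e^u+1$ with $e^u-1-u$ that you describe actually gives $1+r$, so $1+\frac{4H}{\alpha}$ holds either way.
\item \textbf{Log-barrier $C^\psi_2$.} Your comparison $x_a^{-2}\ge x_a^{-1}$ yields $\frac1\alpha$ directly. This is shorter and sharper than the paper's route through $\chi^2$, which gives $\frac2\alpha$.
\end{itemize}
Fix one slip. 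Your final Tsallis constant is written as $(1+\frac{2H}{\alpha})^{(2-q)/(1-q)}$, but your own argument gives $R^{2-q}$ as above. For $q<\frac12$, the expression you wrote would be a strictly stronger claim that you have not proved.

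The genuine gap is the log-barrier constant $C^\psi_1=1+\frac{2H}{\alpha}$. The mechanism you actually describe, comparing diagonal Hessian entries along the segment, gives only $R^2=(1+\frac{2H}{\alpha})^2$. The phrase ``a direct two-term comparison'' does not say what inequality replaces it, so the stated constant is asserted rather than derived. What is needed is the closed form of both divergences. With $r_a=\pi_{f_1}(a|s)/\pi_{f_2}(a|s)\in[R^{-1},R]$ and $R=1+\frac{2H}{\alpha}$,
\begin{align*}
\breg_\psi(\pi_{f_1},\pi_{f_2};s)=\alpha\sum_a\big(r_a-1-\log r_a\big),\qquad
\breg_\psi(\pi_{f_2},\pi_{f_1};s)=\alpha\sum_a\Big(\frac{1}{r_a}-1+\log r_a\Big).
\end{align*}
You also need the scalar inequality $\frac1t-1+\log t\le\max\{1,\frac1t\}\,(t-1-\log t)$ for all $t>0$.
\begin{itemize}
\item For $t\ge 1$ it reads $2\log t\le t-\frac1t$.
\item For $t<1$, with $u=\frac1t$, it reads $(1+u)\log u\ge 2(u-1)$.
\end{itemize}
Both hold with equality at $1$, and the derivative of the difference is nonnegative beyond $1$: it equals $(1-\frac1t)^2$ and $\log u+\frac1u-1$, respectively. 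Applying this termwise, with $\max\{1,\frac{1}{r_a}\}\le R$, gives $\breg_\psi(\pi_{f_2},\pi_{f_1};s)\le R\,\breg_\psi(\pi_{f_1},\pi_{f_2};s)$. The paper obtains the same per-coordinate bound differently: it proves that $g(t)=\frac{t-1-\log t}{1/t-1+\log t}$ is increasing, so $g\ge g(1/R)\ge\frac1R$ on $[R^{-1},R]$.
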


\begin{theorem}[Regularization]\label{thm:regular-dec}
Let regularizer $\psi$ satisfy \pref{assum:psi_proper}.  Then $\odecoff_\gamma^\av(\calF_\conf) \le 36\gamma^{-1}(C_1^{\psi}(1 + H^3C^{\psi}_2))^2$ and  $\odecratio^\av(\calF_\conf), \pesratio^\av(\calF_\conf) \le  3C_1^{\psi}(1 + H^3C^{\psi}_2)$.
\end{theorem}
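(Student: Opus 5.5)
By \pref{lem:offset-ratio conn} it suffices to prove the ratio bound, and the bound on $\pesratio^\av$ implies it for $\odecratio^\av$. So I will analyze the deterministic choice $\rho=\delta_{\pi_{\hat f}}$ with $\hat f=\argmin_{f\in\calF_\conf}f(s_1)$ and $f=\hat f$ inside the max. The target is to show, for every $M\in\calM_\conf$ (write $g=Q^\star_M\in\calF_\conf$),
\[
J_M(\pi_g)-J_M(\pi_{\hat f})\;\le\; 3C_1^\psi(1+H^3C_2^\psi)\,\big|\E_{(s,a)\sim d_M^{\pi_g}}[\hat f(s,a)-R_M(s,a)-\E_{s'}\hat f(s')]\big| .
\]
The offset bound then follows as $4\gamma^{-1}(3C_1^\psi(1+H^3C_2^\psi))^2=36\gamma^{-1}(\cdot)^2$, which matches the stated constant.

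\textbf{Step 1: telescoping the Bellman error.} Since $\hat f(s)$ is the regularized max, $\hat f(s)=\E_{a\sim\pi_g}\hat f(s,a)-\psi(\pi_g;s)+\breg_\psi(\pi_g,\pi_{\hat f};s)$. This is the standard Legendre identity: the first-order condition of the regularized argmax gives $\hat f(s,\cdot)-\nabla\psi(\pi_{\hat f};s)\propto \mathbf 1$, and the Legendre property keeps $\pi_{\hat f}$ in the interior. Telescoping along $d_M^{\pi_g}$ gives
\[
\E_{d_M^{\pi_g}}[\hat f-R_M-P_M\hat f]=\hat f(s_1)-J_M(\pi_g)-\E_{s\sim d_M^{\pi_g}}\breg_\psi(\pi_g,\pi_{\hat f};s).
\]
Because $\hat f(s_1)\le g(s_1)=J_M(\pi_g)$, both terms on the right are nonpositive. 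Hence the absolute Bellman error dominates $\E_{d_M^{\pi_g}}\breg_\psi(\pi_g,\pi_{\hat f};s)=:B$. This is the only place pessimism is used.

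\textbf{Step 2: a performance difference lemma under partial coverage.} I need $J_M(\pi_g)-J_M(\pi_{\hat f})\lesssim C_1^\psi(1+H^3C_2^\psi)B$, with all quantities measured under $d_M^{\pi_g}$, since $\pi_{\hat f}$'s occupancy is uncontrolled. I will use the regularized PDL with $\pi_g$ optimal in $M$:
\[
J_M(\pi_g)-J_M(\pi)=\E_{s\sim d_M^{\pi}}\breg_\psi(\pi,\pi_g;s),
\]
which is a reverse Bregman under the wrong occupancy. To move to $d_M^{\pi_g}$, I will write $d_M^{\pi}-d_M^{\pi_g}$ via a per-layer coupling. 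The total-variation change of the state distribution at layer $h$ is at most $\sum_{h'<h}\E_{d^{\pi_g}_{h'}}\|\pi-\pi_g\|_1$ plus higher-order terms. By Pinsker, and by $\KL\le C_2^\psi\breg$ from \pref{assum:psi_proper}(3), this is controlled by $C_2^\psi$ times Bregman terms. The reverse Bregman $\breg_\psi(\pi_{\hat f},\pi_g)$ is converted to the forward one by $C_1^\psi$ (\pref{assum:psi_proper}(2)). Bregman values are $O(H)$-bounded, since $|\hat f|,|g|\lesssim H$. Collecting terms over $H$ layers gives a factor $H^3C_2^\psi$. This yields $J_M(\pi_g)-J_M(\pi_{\hat f})\le C_1^\psi B+C_1^\psi H^3C_2^\psi B$ up to constant 3.

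\textbf{Main obstacle.} The hard part is Step 2: converting the linear TV drift into a linear Bregman bound, not a square root. Pinsker naturally gives $\sqrt{\KL}$, which would reproduce the $\epsilon^{-4}$ rate. My first attempt would avoid Pinsker. I would use the bound $\|d_M^{\pi}-d_M^{\pi_g}\|$ weighted by a bounded function. I would write the error as $\E_{d^{\pi_g}}[\sum_a(\pi-\pi_g)(a|s)\,(\text{bounded }Q\text{-like quantity})]$ and bound each term by $\KL(\pi_g\|\pi)$-type quantities via the Donsker–Varadhan/log-ratio inequality. This exploits the fact that $\pi_g$ and $\pi_{\hat f}$ are both softmax-type policies whose log-ratio is a difference of $Q$-functions of size $O(H)$. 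That makes $\chi^2$ and $\KL$ comparable and gives a linear, not square-root, dependence.
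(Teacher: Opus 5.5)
Your reduction to $\pesratio^\av$ at $\rho=\delta_{\pi_{\hat f}}$ (then $\odecratio^\av\le\pesratio^\av$, and $4\cdot 9=36$) is correct, and so is your Step~1. Both coincide with the paper (\pref{lem: pessimismc solution gua} and the first display in the proof of \pref{thm:regular-dec}). The gap is in Step~2, which carries the whole theorem; in the paper it is the second-order performance difference lemma, \pref{lem: SO PDL}.

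The route you write down is first order in the policy difference. You bound $\|d^{\pi}_{M,h}-d^{\pi_g}_{M,h}\|_1$ by accumulated $\E_{d^{\pi_g}_M}\|\pi-\pi_g\|_1$ and multiply by the $O(H)$ sup of the Bregman term. After Pinsker and $\KL\le C_2^\psi\breg_\psi$ this gives a bound of order $\sqrt{B}$, so the asserted ``$C_1^\psi B+C_1^\psi H^3C_2^\psi B$'' does not follow.

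Your sketched repair does not close this. Bounded log-ratios make $\chi^2$ and $\KL$ comparable, but no divergence can control $\sum_a(\pi-\pi_g)(a|s)X(s,a)$ linearly when $X$ is merely bounded. For instance, take $\pi_g$ uniform on two actions, $\pi=(\tfrac12+\epsilon,\tfrac12-\epsilon)$ and $X=(1,-1)$: the left side is $2\epsilon$ while $\KL$ is $\Theta(\epsilon^2)$. Donsker--Varadhan gives $\sum_a(\pi-\pi_g)X\le\frac1\eta\KL(\pi\|\pi_g)+\eta\,\E_{a\sim\pi_g}[X^2]$ (\pref{lem:KL-stability}). With $X$ only bounded, optimizing $\eta$ lands you back at a square root.

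The missing idea is that the specific $X$ arising here is self-bounding. Do the change of measure exactly: apply the PDL to the state-only ``reward'' $\breg_\psi(\pi,\pi_g;s)$, whose value function under $\pi$ is $V^\star_M-V^\pi_M$. The correction term is then $\E_{s\sim d^{\pi_g}_M}\big[\sum_a(\pi-\pi_g)(a|s)\,q(s,a)\big]$ with $q=Q^\star_M-Q^\pi_M\in[0,H^2]$ (\pref{lem:bound-of-Q*}). Hence $\E_{a\sim\pi_g}[q^2]\le H^2\,\E_{a\sim\pi_g}[q]=H^2\,\E_{s'\sim P_M(\cdot|s,\pi_g)}[V^\star_M(s')-V^\pi_M(s')]$. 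The quadratic remainder is the next-layer suboptimality itself, so it can be absorbed into a recursion. Done layer by layer, this gives
\[
V^\star_M(s)-V^\pi_M(s)\le\Big(1+\tfrac{C_2^\psi}{\eta}\Big)\breg_\psi(\pi,\pi_g;s)+(1+\eta H^2)\,\E_{s'\sim P_M(\cdot|s,\pi_g)}\big[V^\star_M(s')-V^\pi_M(s')\big].
\]
Take $\eta=H^{-3}$, so that $\eta q\le 1$, and use $(1+1/H)^H\le 3$ to unroll along $d^{\pi_g}_M$. This yields
\[
J_M(\pi_g)-J_M(\pi_{\hat f})\le 3(1+H^3C_2^\psi)\,\E_{d^{\pi_g}_M}[\breg_\psi(\pi_{\hat f},\pi_g;s)]\le 3C_1^\psi(1+H^3C_2^\psi)\,B.
\]
This is also where the $H^3$ comes from: the horizon times the range $H^2$ of $Q^\star_M-Q^\pi_M$, not TV accumulated over layers.
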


\pref{thm:regular-dec} together with $\max_{f\in\calF_\conf} D^{\pi^\star}_\av(f\|M^\star)\lesssim C^{\pi^\star}/n$ (\pref{lem: BC case} or \pref{lem: weight case}) implies $\sqrt{C^{\pi^\star}/n}$ suboptimality gap, which provides the first $1/\epsilon^2$ sample complexity for this setting. To achieve this, the key technique in proving \pref{thm:regular-dec} is a new \emph{second-order performance difference lemma} for regularized RL (\pref{lem: SO PDL}). In the unregularized case, performance gaps between $\pi^\star$ and $\pi_f$ are usually controlled by \emph{first-order} divergence such as $D_{\mathrm{TV}}(\pi^\star\| \pi_f)$. In contrast, regularization introduces curvature that allows a \emph{second-order} divergence. Specifically, we show $J(\pi^\star) - J(\pi_f) \lesssim \E_{s \sim d^{\pi^\star}}\left[\Breg_{\psi}(\pi_f, \pi^\star; s)\right]$ where the right-hand side scales quadratically with the policy difference. While \cite{uehara2023offline} also studies regularized MDP, they bound the performance gap under the unregularized reward, and only achieves $1/\epsilon^4$ sample complexity. In addition, we remove their $\poly(|\calA|)$ dependence and accommodate more general offline distributions. A detailed comparison with \cite{uehara2023offline} is provided in \pref{app:regularized-proof}.

\section{Bounding the Estimation Error}
\label{sec:estimation-error}
In this section, we present several constructions of the confidence set
$\calF_\conf$ that ensure $Q^\star\in\calF_\conf$ while controlling the
estimation error
$\max_{f\in\calF_\conf}\Dav^{\pi^\star}(f\|M^\star)$ in \pref{thm: main}.
For a dataset $\calD$ of $n$ tuples $(s,a,r,s')$, we write
$\E_{\calD}[g]
:=
\frac{1}{n}\sum_{(s,a,r,s')\in\calD} g(s,a,r,s')$
for any function $g$. See \pref{app:estimation-error} for proofs in this section.

\subsection{Bellman Completeness}
\label{sec:regular}
Bellman completeness (\pref{assum: Bellman complete}) is widely adopted in online RL \citep{jin2021bellman, xie2023role} and offline RL with full coverage \citep{chen2019information}, but has not been studied under partial coverage except for the linear case \citep{golowichrole}. The following guarantee is, however, standard in all these settings.

\begin{lemma}\label{lem: BC case}
   Assume $\calF$ and $\calG$ satisfies \pref{assum: Bellman complete}. Let 
   \begin{itemize}[leftmargin=11pt, topsep=0pt]
   \setlength{\itemsep}{0pt}
  \setlength{\parskip}{0pt}
  \setlength{\parsep}{0pt}
      \item $L_\bc(g, f) = \E_{(s,a,r,s')\sim\calD}\big[\left(g(s,a) - r - f(s')\right)^2\big]$ 
      \item $ \calF_\conf= \big\{f\in\calF:~ L_\bc(f,f) - \min_{g\in\calG} L_\bc(g,f)\leq \epsilon_\stat\big\}$, where $\epsilon_\stat=\frac{2H^2\log\left(|\calF||\calG|/\delta\right)}{n}$  
   \end{itemize}
    Then with probability at least $1-\delta$, $Q^\star\in\calF_\conf$ and $\max_{f\in\calF_\conf} \Dav^{\pi^\star}(f\|M^\star)\leq \order(HC^{\pi^\star}\epsilon_\stat)$. 
\end{lemma}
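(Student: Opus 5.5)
The plan follows the standard squared-loss analysis for Bellman completeness (as in \cite{xie2021bellman, chen2019information}), followed by a change of measure from $\mu$ to $d^{\pi^\star}$ using \pref{def: coverage}. Throughout we assume $f(s,a), g(s,a)\in[0,H]$ and $r\in[0,1]$, so every squared-loss term lies in $[0,(H+1)^2]$; this explains the $H^2$ in $\epsilon_\stat$.

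\textbf{Step 1 (concentration).} Fix $f\in\calF$ and $g\in\calG$, and define the random variable
\[
X_{g,f}=\big(g(s,a)-r-f(s')\big)^2-\big(\calT f(s,a)-r-f(s')\big)^2,
\]
where $\calT f\in\calG$ by \pref{assum: Bellman complete}. A direct computation gives $\E[X_{g,f}\mid s,a]=(g-\calT f)^2(s,a)$, and also $\mathrm{Var}(X_{g,f})\le O(H^2)\,\E_\mu[(g-\calT f)^2]$. Bernstein's inequality plus a union bound over $\calF\times\calG$ then give, with probability at least $1-\delta$, simultaneously for all pairs,
\[
\Big|\E_\calD[X_{g,f}]-\|g-\calT f\|_\mu^2\Big|\le \tfrac12\|g-\calT f\|_\mu^2+O(\epsilon_\stat).
\]
Two consequences follow.
\begin{itemize}
\item Since $\|g-\calT f\|_\mu^2\ge 0$, we get $L_\bc(\calT f,f)-\min_{g}L_\bc(g,f)\le O(\epsilon_\stat)$. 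That is, $\calT f$ is a near-empirical minimizer.
\item Taking $g=f$ and using $\calT Q^\star=Q^\star$, we get $L_\bc(Q^\star,Q^\star)-\min_g L_\bc(g,Q^\star)\le \epsilon_\stat$ after tuning constants. Hence $Q^\star\in\calF_\conf$.
\end{itemize}
The main obstacle is matching the stated constant $2H^2$ exactly. I would absorb this into the $O(\cdot)$ of the final bound, or enlarge the threshold by a constant, rather than fight it.

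\textbf{Step 2 (in-distribution Bellman error).} Take any $f\in\calF_\conf$. Writing
\[
L_\bc(f,f)-\min_g L_\bc(g,f)=\big[L_\bc(f,f)-L_\bc(\calT f,f)\big]+\big[L_\bc(\calT f,f)-\min_g L_\bc(g,f)\big],
\]
the second bracket is $\ge 0$, so the first bracket is at most $\epsilon_\stat$. The first bracket equals $\E_\calD[X_{f,f}]$, which by Step 1 is at least $\tfrac12\|f-\calT f\|_\mu^2-O(\epsilon_\stat)$. Therefore $\E_\mu[(f-\calT f)^2]\le O(\epsilon_\stat)$ for every $f\in\calF_\conf$.

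\textbf{Step 3 (transfer to $\pi^\star$).} By definition,
\[
\Dav^{\pi^\star}(f\|M^\star)=\Big(\E_{d^{\pi^\star}}[f-\calT f]\Big)^2,
\]
where $d^{\pi^\star}$ has total mass $H$. By Cauchy–Schwarz,
\[
\Big(\sum_{s,a} d^{\pi^\star}(s,a)(f-\calT f)(s,a)\Big)^2\le H\sum_{s,a} d^{\pi^\star}(s,a)(f-\calT f)^2(s,a).
\]
By \pref{def: coverage}, $d^{\pi^\star}\le HC^{\pi^\star}\mu$ pointwise, so the right-hand side is at most $H^2C^{\pi^\star}\E_\mu[(f-\calT f)^2]=O(H^2C^{\pi^\star}\epsilon_\stat)$.

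This gives an extra factor of $H$ compared with the claimed $O(HC^{\pi^\star}\epsilon_\stat)$. The next step removes it.

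\textbf{Step 4 (tightening by layers).} Apply Cauchy–Schwarz within each layer, where $d^{\pi^\star}$ restricted to the layer is a probability distribution, and then across the $H$ layers. This gives
\[
\Big(\sum_h \E_{d_h}[\cdot]\Big)^2\le H\sum_h \E_{d_h}[(\cdot)^2],
\]
which is the same bound, so layering alone does not help. If the $H$ factor cannot be removed this way, I would accept $O(H^2C^{\pi^\star}\epsilon_\stat)$ as what this argument yields. Alternatively, I would interpret the lemma as using $\sum_{s,a}\mu(s,a)=1$ together with the $H$-normalization in $C^{\pi^\star}$, so that the $\mu$-weighted sum over all layers already supplies one factor of $H$.
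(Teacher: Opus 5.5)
Your Steps 1--3 follow essentially the paper's own route. The paper likewise bounds the excess squared loss $X=(g-r-f(s'))^2-(\calT f-r-f(s'))^2$ by a Freedman/Bernstein inequality. It conditions on the sampled $(s,a)$ and compares with the empirical $\E_\calD[(g-\calT f)^2]$, moving to $\mu$ only at the end, which is a cosmetic difference. It then uses $\calT f\in\calG$ to get $\E_\calD[X_{f,f}]\le\epsilon_\stat$ for $f\in\calF_\conf$, solves the self-bounding inequality, and changes measure.

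Two bookkeeping points. First, your union bound in Step 1 ranges over $g\in\calG$, but Step 2 invokes it with $g=f$, which need not lie in $\calG$. Take the union over $g\in\calF\cup\calG$; the paper runs a separate union bound over $\calF$ for $X_{f,f}$. Second, $\epsilon_\stat$ is a fixed threshold in the definition of $\calF_\conf$, so the constant in the membership claim $Q^\star\in\calF_\conf$ cannot be absorbed into an $\order(\cdot)$. It must come from an explicit Bernstein constant plus AM--GM, using that only a union bound over $\calG$ is needed when $f=Q^\star$ is fixed.

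On the factor of $H$ you could not remove, you are right and the paper is not. The paper passes from $\Dav^{\pi^\star}(f\|M^\star)$ to $\E_{(s,a)\sim d^{\pi^\star}}[(f-\calT f)^2]$ by Jensen, as though $d^{\pi^\star}$ were a probability measure, and then applies $d^{\pi^\star}\le HC^{\pi^\star}\mu$. Under the paper's own convention, $\sum_{s,a}d^{\pi^\star}(s,a)=H$. So that Jensen step drops exactly the factor $H$ your Cauchy--Schwarz keeps, and the argument only gives $\order(H^2C^{\pi^\star}\epsilon_\stat)$.

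The extra $H$ is genuinely necessary, so drop the reinterpretation in Step 4: the $H$ in the definition of $C^{\pi^\star}$ has already been spent in $d^{\pi^\star}\le HC^{\pi^\star}\mu$. Here is a counterexample.
\begin{itemize}
\item Take a deterministic chain $s_1\to s_2\to\cdots\to s_H$ with a single action, reward $\tfrac12$ everywhere, and $\mu$ uniform over the $H$ states. Then $C^{\pi^\star}=1$ and $Q^\star(s_h)=\tfrac12(H-h+1)$.
\item Let $f_c(s_h)=(H-h+1)(\tfrac12+c)$, $\calF=\{Q^\star,f_c\}$ and $\calG=\{Q^\star,f_c-c\}$. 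Completeness holds because $\calT f_c=f_c-c$.
\item On every dataset, $L_\bc(f_c-c,f_c)=0$ and $L_\bc(f_c,f_c)=c^2$. With $c^2=\epsilon_\stat$, and $n$ large enough that $c\le\tfrac12$ (keeping all functions in $[0,H]$), $f_c\in\calF_\conf$ with probability one.
\item Meanwhile $\Dav^{\pi^\star}(f_c\|M^\star)=(Hc)^2=H\cdot HC^{\pi^\star}\epsilon_\stat$.
\end{itemize}
So the lemma's $\order(HC^{\pi^\star}\epsilon_\stat)$ fails for large $H$. Your Step 3 bound $\order(H^2C^{\pi^\star}\epsilon_\stat)$ is the correct conclusion; state it as the result rather than hedging.
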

We note that $Q^\star$-realizability under regularization, partial coverage, and Bellman completeness exactly matches the setting of Conservative $Q$-Learning (CQL). However, existing analyses of CQL are restricted to the tabular setting \citep{kumar2020conservative}. In \pref{app: CQL}, we provide the first sample-complexity guarantee for CQL under general function approximation.

\subsection{Weight Realizability}
\label{sec:density-confidence}

Weight realizability (or density-ratio realizability) has been adopted in several prior studies in offline RL with partial coverage \citep{zhan2022offline, chen2022offline, rashidinejadoptimal, ozdaglar2023revisiting}. Its guarantees are below. 
\begin{assumption}[Weight realizability]\label{assum: weight real}
Define $w^{\pi}(s,a) = \frac{d^{\pi}(s,a)}{H\mu(s,a)}$. Assume $w^{\pi^\star} \in \mathcal{W}$ for a given function set $\calW$. Additionally, for any $w \in \mathcal{W}$, we have $\|w\|_{\infty} \le B_{\mathcal{W}}$. 
\label{assum:weight-real}
\end{assumption}
\begin{lemma}\label{lem: weight case}
   Assume $\calF$ and $\calW$ satisfies \pref{assum: weight real}. Let 
   \begin{itemize}[leftmargin=11pt, topsep=0pt]
   \setlength{\itemsep}{0pt}
  \setlength{\parskip}{0pt}
  \setlength{\parsep}{0pt}
      \item $L_\dr(w,f) = \left|\E_{(s,a,r,s')\sim\calD}\big[w(s,a)(f(s,a)-r-f(s'))\big]\right|$.  
      \item $ \calF_\conf= \big\{f\in\calF:~ \max_{w\in\calW} L_\dr(w,f) \leq \epsilon_\stat\big\}$, where $\epsilon_\stat=B_{\mathcal{W}}H\sqrt{\frac{2\log\left(|\calF||\calW|/\delta\right)}{n}}$.  
   \end{itemize}
    Then with probability at least $1-\delta$, $Q^\star\in\calF_\conf$ and $\max_{f\in\calF_\conf} \Dav^{\pi^\star}(f\|M^\star)\leq \order(H^2\epsilon_\stat^2)$. 
\end{lemma}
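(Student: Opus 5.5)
Two things must be established: $Q^\star$ survives the test, and every $f$ that survives has small average Bellman error under $d^{\pi^\star}$.

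\textbf{Concentration.} For each pair $(w,f)\in\calW\times\calF$, the summands $w(s,a)(f(s,a)-r-f(s'))$ are i.i.d. and bounded in absolute value by $O(B_\calW H)$, assuming $f\in[0,H]$ and rewards in $[0,1]$. Their mean is
\[
\E_\mu\big[w(s,a)\,(f-\calT f)(s,a)\big].
\]
By Hoeffding's inequality and a union bound over $|\calF||\calW|$ pairs, with probability $1-\delta$, simultaneously for all pairs,
\[
\big|L_\dr(w,f)-|\E_\mu[w(f-\calT f)]|\big|\le \epsilon_\stat .
\]
Here the constant in $\epsilon_\stat$ absorbs the range; if the range is $H$ rather than $2H$, the stated $\sqrt{2\log(\cdot)/n}$ form matches directly.

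\textbf{Realizability.} In the unregularized case $\calT Q^\star=Q^\star$. With a regularizer, $\calT$ uses $f(s')$, the regularized max, so again $Q^\star=\calT Q^\star$. Hence the population term vanishes for $f=Q^\star$ and every $w$. Concentration then gives $L_\dr(w,Q^\star)\le\epsilon_\stat$ for all $w\in\calW$, so $Q^\star\in\calF_\conf$.

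\textbf{Error bound.} Fix $f\in\calF_\conf$ and take $w=w^{\pi^\star}\in\calW$. Then
\[
\E_\mu\big[w^{\pi^\star}(f-\calT f)\big]=\tfrac1H\,\E_{d^{\pi^\star}}\big[f-\calT f\big].
\]
This is exactly $\tfrac1H$ times the quantity inside the square in $\Dav^{\pi^\star}(f\|M^\star)$, since under $M^\star$ the Bellman residual is $f-R-\E_{P}f(s')=f-\calT f$. Combining with concentration,
\[
\big|\E_{d^{\pi^\star}}[f-\calT f]\big|\le H\big(L_\dr(w^{\pi^\star},f)+\epsilon_\stat\big)\le 2H\epsilon_\stat .
\]
Squaring yields $\Dav^{\pi^\star}(f\|M^\star)\le 4H^2\epsilon_\stat^2$, as claimed.

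The only delicate point is bookkeeping. One must check that the normalization by $H$ in $w^\pi$ makes the $\mu$-weighted expectation equal $d^{\pi^\star}/H$ summed over all layers, which it does since $d^{\pi^\star}$ sums to $H$. One must also check the boundedness range used in Hoeffding. Neither step is a real obstacle.
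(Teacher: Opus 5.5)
Your proposal is correct and follows the paper's proof essentially step for step: Hoeffding with a union bound, then $Q^\star=\calT Q^\star$ to place $Q^\star$ in $\calF_\conf$, then testing each surviving $f$ against $w^{\pi^\star}\in\calW$ via $\E_\mu[w^{\pi^\star}g]=\frac1H\E_{d^{\pi^\star}}[g]$, and finally squaring. On the constant you flag, the exact threshold only matters for the membership $Q^\star\in\calF_\conf$, where $|w(s,a)(Q^\star(s,a)-r-V^\star(s'))|\le B_{\mathcal{W}}H$ and only a union over $\calW$ is needed, so the stated $\epsilon_\stat$ essentially suffices; in the error bound, any constant is absorbed into the $\order(\cdot)$.
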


\subsection{Low $Q$-Bellman Rank}
\label{sec:low-q}
Bellman rank is established for online RL \citep{jiang2017contextual, du2021bilinear, jin2021bellman} but remains unexplored in offline RL except for special cases. Its absence is not for no reason---as we will show, under standard coverage and data format, efficient offline learning in low-Bellman-rank MDPs is impossible. To bypass this barrier, we introduce \emph{double policy sampling} (\pref{assum: double sample}) and \emph{policy feature coverage} (\pref{def: policy cover}). Then, in \pref{lem: first claim} and  \pref{lem: second claim} (\pref{app: jia lower }) we show  that dropping either assumption makes polynomial sample complexity impossible. 
\begin{assumption}[$Q$-Bellman rank]\label{assum: bellman rank} There exist mappings $X: \Pi\to \mathbb{R}^d$ and $W: \calF\to \mathbb{R}^d$ such that for any $\pi\in\Pi$ and $f\in\calF$,   $\E_{(s,a)\sim d^{\pi}}\left[f(s,a) - R(s,a) - \E_{s'\sim P(\cdot|s,a)}[f(s')]\right] = \inner{X(\pi), W(f)}$. 
\end{assumption}
\begin{assumption}[Double policy sampling]\label{assum: double sample}
    Assume that each offline data sample is generated as the following. First, sample a policy $\pi\sim \mubar\in\Delta(\Pi)$. Then generate \emph{two independent $(s,a,r,s')$ tuples} from this policy: $(s_\oo,a_\oo)\sim d^{\pi}$, $r_\oo\sim R(s_\oo, a_\oo)$, $s'_\oo\sim P(\cdot|s_\oo,a_\oo)$, and $(s_\xx,a_\xx)\sim d^{\pi}$, $r_\xx\sim R(s_\xx, a_\xx)$, $s'_\xx\sim P(\cdot|s_\xx,a_\xx)$.   They constitute a data sample: \mbox{$(s_\oo, a_\oo, r_\oo, s'_\oo, s_{\xx}, a_{\xx}, r_{\xx}, s'_{\xx})\in\calD$.} 
\end{assumption}
\begin{definition}[Policy feature coverage]\label{def: policy cover}
    Assume $\calF$ has low Bellman rank (\pref{assum: bellman rank}), and let $\mubar$ be defined in \pref{assum: double sample}. Define $ \Cbar^{\pi^\star} = X(\pi^\star)^\top \Sigma_{\mubar}^{\dagger}X(\pi^\star)$, where $\Sigma_{\mubar} = \E_{\pi\sim \mubar}[X(\pi)X(\pi)^\top]$. 
\end{definition}

\begin{lemma}\label{lem: Bellman rank case}
   Let \pref{assum: bellman rank} and \pref{assum: double sample} hold. Also, let  
   \begin{itemize}[leftmargin=11pt, topsep=0pt]
   \setlength{\itemsep}{0pt}
  \setlength{\parskip}{0pt}
  \setlength{\parsep}{0pt}
      \item $L_\brr(f) = \E_{(s_\oo,a_\oo,r_\oo,s'_\oo,s_\xx,a_\xx,r_\xx,s_\xx')\sim\calD}\big[\left(f(s_\oo, a_\oo) - r_\oo - f(s'_\oo)\right)\left(f(s_\xx, a_\xx) - r_\xx - f(s'_\xx)\right)\big]$.  
      \item $ \calF_\conf= \big\{f\in\calF:~  L_\brr(f) \leq \epsilon_\stat\big\}$, where $\epsilon_\stat=H^2\sqrt{\frac{\log(2|\calF|/\delta)}{2n}}$.
   \end{itemize}
   Then with probability at least $1-\delta$, $Q^\star\in\calF_\conf$ and $\max_{f\in\calF_\conf} \Dav^{\pi^\star}(f\|M^\star)\leq \order(\Cbar^{\pi^\star}\epsilon_\stat)$. 
\end{lemma}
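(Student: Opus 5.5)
The plan is to view $L_\brr(f)$ as an unbiased estimator of a quadratic form in $W(f)$, control it uniformly over $\calF$ with a concentration bound, and then convert the resulting data-weighted bound into a bound at $\pi^\star$ via the policy feature coverage $\Cbar^{\pi^\star}$.

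\textbf{Step 1 (population identity).} Fix $f$ and write the TD residual as $\delta_f(s,a,r,s')=f(s,a)-r-f(s')$. Under \pref{assum: double sample}, conditional on the sampled policy $\pi$, the two tuples $\oo$ and $\xx$ are independent and identically distributed. The expectation of each residual is therefore
\[
\E_{(s,a)\sim d^\pi}\big[f(s,a)-R(s,a)-\E_{s'}[f(s')]\big]=\inner{X(\pi),W(f)},
\]
using \pref{assum: bellman rank}. I will treat $d^\pi$ as normalized when sampling, so that $H$-scaling factors are absorbed into the constants. Since the conditional expectation of the product factorizes, we get
\[
\E[L_\brr(f)]=\E_{\pi\sim\mubar}\big[\inner{X(\pi),W(f)}^2\big]=W(f)^\top\Sigma_{\mubar}W(f)\ge 0 .
\]
At $f=Q^\star$ the Bellman residual vanishes pointwise in expectation. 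Hence $\E[L_\brr(Q^\star)]=0$.

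\textbf{Step 2 (concentration).} Each summand is a product of two residuals, each bounded by $O(H)$ in magnitude, so each summand is bounded by $H^2$ up to constants. Hoeffding's inequality combined with a union bound over $\calF$ gives, with probability at least $1-\delta$,
\[
|L_\brr(f)-W(f)^\top\Sigma_{\mubar}W(f)|\le\epsilon_\stat \quad \text{for all } f\in\calF .
\]
The constant and the factor $2$ inside the $\log$ come from the two-sided bound. From this:
\begin{itemize}
\item $L_\brr(Q^\star)\le\epsilon_\stat$, so $Q^\star\in\calF_\conf$;
\item every $f\in\calF_\conf$ satisfies $W(f)^\top\Sigma_{\mubar}W(f)\le 2\epsilon_\stat$.
\end{itemize}

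\textbf{Step 3 (transfer to $\pi^\star$).} Note that $\Dav^{\pi^\star}(f\|M^\star)=\inner{X(\pi^\star),W(f)}^2$. Since $\Cbar^{\pi^\star}<\infty$ is implicitly assumed, $X(\pi^\star)$ lies in the range of $\Sigma_{\mubar}$, so we can write $X(\pi^\star)=\Sigma_{\mubar}\Sigma_{\mubar}^\dagger X(\pi^\star)$. Applying Cauchy–Schwarz in the $\Sigma_{\mubar}$ geometry gives
\[
\inner{X(\pi^\star),W(f)}^2\le \big(X(\pi^\star)^\top\Sigma_{\mubar}^\dagger X(\pi^\star)\big)\big(W(f)^\top\Sigma_{\mubar}W(f)\big)\le 2\Cbar^{\pi^\star}\epsilon_\stat .
\]

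\textbf{Main obstacle.} The delicate part is Step 1: verifying that the product's expectation factorizes exactly into $\inner{X(\pi),W(f)}^2$. This requires conditional independence of the two tuples given $\pi$; a single tuple would instead produce the squared residual, which includes a variance term. The other point to handle carefully is the range condition in Step 3, which is what makes the pseudo-inverse Cauchy–Schwarz step valid.
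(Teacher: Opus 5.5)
Your proposal is correct and follows the paper's proof essentially step for step. The steps are the identity $\E[L_\brr(f)]=\E_{\pi\sim\mubar}[\inner{X(\pi),W(f)}^2]$ from the conditional independence of the two tuples, then Hoeffding plus a union bound giving $Q^\star\in\calF_\conf$ and $\E_{\pi\sim\mubar}[\Dav^\pi(f\|M^\star)]\le 2\epsilon_\stat$ on $\calF_\conf$, and finally Cauchy--Schwarz with $\Sigma_{\mubar}^\dagger$ to transfer to $\pi^\star$. One small correction to your range remark, a point the paper skips: with the pseudo-inverse definition $\Cbar^{\pi^\star}$ is always finite, so $X(\pi^\star)\in\mathrm{range}(\Sigma_{\mubar})$ does not follow from finiteness and must be assumed separately (or set $\Cbar^{\pi^\star}:=\infty$ otherwise).
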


\section{Limitation and Future Work}
Our work provides a new unified theory for offline RL under
$Q^\star$-realizability and partial coverage. Our complexity measure currently serves
mainly to identify new learnable regimes; we do not yet provide a matching lower
bound showing that it gives a tight characterization. Another limitation is that exactly solving our minimax algorithm is computational difficult, though it may be approximated through actor-critic algorithm similar to \cite{cheng2022adversarially}. 
In addition, we have been focusing on $Q^\star$-approximation and average Bellman error as the discrepancy measure; however, the framework may be generalized to different function approximation schemes and estimation methods, which could unify offline RL methods further.

\bibliographystyle{apalike}
\bibliography{ref}

\clearpage
\appendix
\noindent{\huge\textbf{Appendices}}\par\vspace{10pt}
{
\startcontents[section]
\printcontents[section]{l}{1}{\setcounter{tocdepth}{2}}
}

\clearpage

\section{Related Work}
\label{app:related works}
\paragraph{Offline RL Theory.} The theoretical study of offline reinforcement learning has largely focused on identifying the structural properties of the function class and the coverage conditions on offline data that enable sample-efficient learning. These two aspects exhibit a trade-off: stronger assumptions on the offline data can often compensate for weaker assumptions on the function class, and vice versa. For model-based offline RL, \cite{uehara2021pessimistic} provide a nearly complete characterization of learnability, showing that a realizable model class together with partial coverage is sufficient for sample-efficient learning. For model-free offline RL, however, the theoretical picture remains fragmented. Under the weakest assumption of $Q^\star$ realizability, the offline data must satisfy a strong push-forward coverage condition to enable sample-efficient learning \citep{xie2021batch}. When this data assumption is relaxed to all-policy coverage, it has been shown that even $Q^\pi$ realizability for every policy $\pi$ does not guarantee sample-efficient learning \citep{wang2020statistical, foster2022offline, jia2024offline}. This motivates the need for additional structural assumptions on the function class. A common such assumption is Bellman completeness. Under Bellman completeness, \cite{chen2019information} show that all-policy coverage together with $Q^\star$ realizability is sufficient for learnability, while \cite{xie2021bellman} show that single-policy coverage together with $Q^\pi$ realizability is sufficient. In the special case of linear function classes, \cite{golowichrole} show that $Q^\star$ realizability, Bellman completeness, and partial coverage together suffice for learning. However, the fundamental role of Bellman completeness beyond the linear setting remains elusive. Our work contributes to this direction by showing that, for general function classes, $Q^\star$ realizability, Bellman completeness, and partial coverage are not sufficient for sample-efficient learning. While the work of \cite{song2022hybrid} makes a similar claim, their lower bound is algorithm-specific and can be handled by existing algorithms. In contrast, our lower bound is information-theoretic. This provides a sharp separation from previous positive results. 

Beyond Bellman completeness, another line of work assumes additional realizability of a Lagrange multiplier or density-ratio-type function in addition to $Q^\star$ realizability \citep{zhan2022offline, chen2022offline, rashidinejadoptimal, ozdaglar2023revisiting, uehara2023offline}. Under such assumptions, sample-efficient learning can be obtained when the optimal value has a sufficiently large gap over suboptimal policies \citep{chen2022offline}, or when the value objective includes an additional regularizer \citep{zhan2022offline, rashidinejadoptimal, ozdaglar2023revisiting, uehara2023offline}. As discussed in \pref{sec:overview}, our analytical framework also recovers and improves several results in this line of work. Results discussed above are summaried in \pref{tab:offline-rl-theory}. For broader discussions of offline RL algorithms and theory, we refer the reader to \cite{che2025tutorial} for a comprehensive review and comparison.

\begin{table*}[t]
\centering
\scriptsize
\setlength{\tabcolsep}{4pt}
\renewcommand{\arraystretch}{1.18}
\begin{tabular}{
@{}
>{\raggedright\arraybackslash}p{0.12\textwidth}
>{\raggedright\arraybackslash}p{0.16\textwidth}
>{\raggedright\arraybackslash}p{0.18\textwidth}
>{\raggedright\arraybackslash}p{0.32\textwidth}
>{\centering\arraybackslash}p{0.1\textwidth}
@{}
}
\toprule
\textbf{Reference}
& \textbf{Value realizability}
& \textbf{Coverage}
& \textbf{Additional structure}
& \textbf{Learnability} \\
\midrule

\multicolumn{5}{@{}l}{\textbf{\textit{Only with value  realizability and coverage}}} \\
\midrule

\citep{xie2021batch}
& $Q^\star$ realizability
& Push-forward coverage
& None
& \textcolor{green!50!black}{Yes} \\

\citep{foster2022offline}
& $Q^\pi$ realizability
& All-policy coverage
& None
& \textcolor{red}{No} \\

\addlinespace[2pt]
\midrule
\multicolumn{5}{@{}l}{\textbf{\textit{Bellman-completeness-based results}}} \\
\midrule

\citep{chen2019information}
& $Q^\star$ realizability
& All-policy coverage
& Bellman completeness
& \textcolor{green!50!black}{Yes} \\

\citep{xie2021bellman}
& $Q^\pi$ realizability
& Partial coverage
& Bellman completeness
& \textcolor{green!50!black}{Yes} \\

\citep{golowichrole}
& $Q^\star$ realizability
& Partial coverage
& Bellman completeness; linear function class
& \textcolor{green!50!black}{Yes} \\

\textbf{Ours}
& $Q^\star$ realizability
& Partial coverage
& Bellman completeness; general function class
& \textcolor{red}{No} \\

\addlinespace[2pt]
\midrule
\multicolumn{5}{@{}l}{\textbf{\textit{Weighted-realizability-based results}}} \\
\midrule

\citep{chen2022offline}\textsuperscript{$\dagger$}
& $Q^\star$ realizability
& Partial coverage
& Weighted realizability; large value gap
& \textcolor{green!50!black}{Yes} \\

\citep{uehara2023offline}\textsuperscript{$\ddagger$}
& $Q^\star$ realizability
& Partial coverage
& Weighted realizability; regularized value
& \textcolor{green!50!black}{Yes} \\

\bottomrule
\end{tabular}

\vspace{1mm}
\begin{minipage}{0.96\textwidth}
\scriptsize
\textsuperscript{$\dagger$} Our method can handle unknown value gaps without online access.
\quad
\textsuperscript{$\ddagger$} Our method improves the sample-complexity bound.
\end{minipage}

\caption{Summary of model-free offline RL learnability with $Q^\star$ or $Q^\pi$ realizability under different structural assumptions on function class and offline data.}
\label{tab:offline-rl-theory}
\end{table*}

\paragraph{Bridge Online and Offline RL. } Online and offline RL have traditionally been studied as separate areas, but recent works suggest that their structural assumptions can be viewed through a unified lens \citep{xie2022role, amortila2024harnessing, mhammedi2024power, krishnamurthy2025role}. For example, coverage assumptions in offline RL have natural online counterparts,  referred to as coverability conditions \citep{xie2022role}, which measure whether the MDP can be effectively covered by some exploration distribution. Thus, coverability captures the intrinsic difficulty of exploration in online RL \citep{xie2022role}. Under small coverability and $Q^\star$ realizability, \cite{xie2022role} and \cite{amortila2024harnessing} show that sample-efficient online RL is possible under Bellman completeness and density-ratio realizability, respectively. These results illustrate how ideas from offline RL can inform online RL.

In this work, we pursue the complementary direction: using insights from online RL to better understand offline RL. In online RL, although optimism is a broadly useful principle, it can be arbitrarily suboptimal compared with information-directed methods on certain instances \citep{lattimore2017end}. We establish a mirrored phenomenon in offline RL, showing that value-based pessimism, despite its general applicability, can likewise be highly suboptimal in certain problems (\pref{sec:unify}).

We further provide, to our knowledge, the first characterization of offline learnability for low-Bellman-rank MDPs without Bellman completeness \citep{jiang2017contextual, du2021bilinear, jin2021bellman}. Low Bellman rank is a canonical structural condition in online RL, but its offline counterpart has been largely unexplored beyond special cases. Our analysis identifies \emph{double policy sampling} and \emph{policy feature coverage} as crucial requirements for offline learning in low-Bellman-rank MDPs (\pref{sec:low-q}), highlighting structural restrictions that are less visible in the online setting.

\newpage

\section{Proofs in \pref{sec: it lower bound}} \label{app: lower bound construct}
\begin{figure}[t]
    \centering
    \begin{minipage}[b]{0.45\textwidth}
        \centering
        \includegraphics[width=\textwidth]{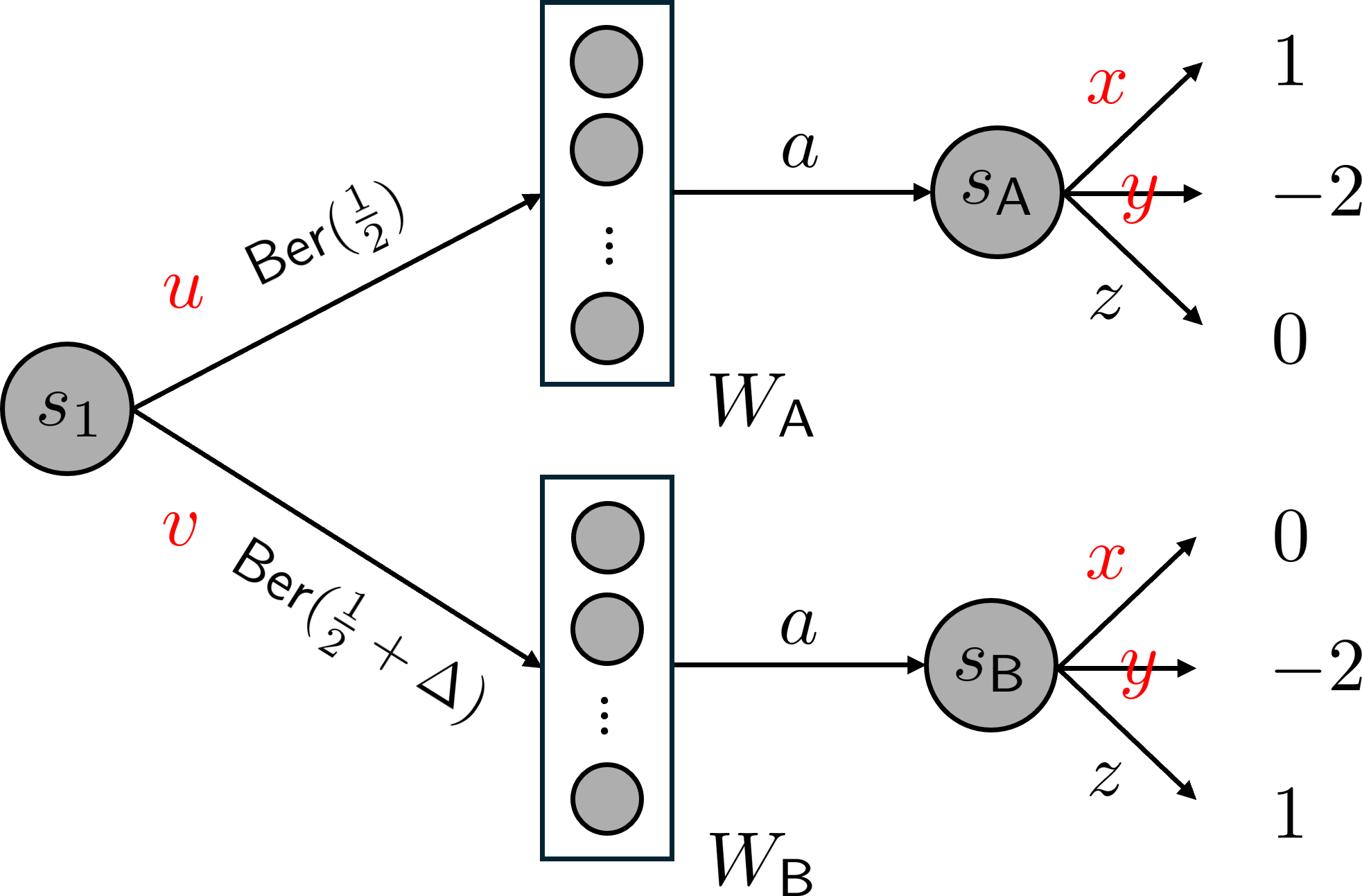}
        \par\medskip
        (a) $\calM_{u,x}$
    \end{minipage}
    \hfill
    \begin{minipage}[b]{0.45\textwidth}
        \centering
        \includegraphics[width=\textwidth]{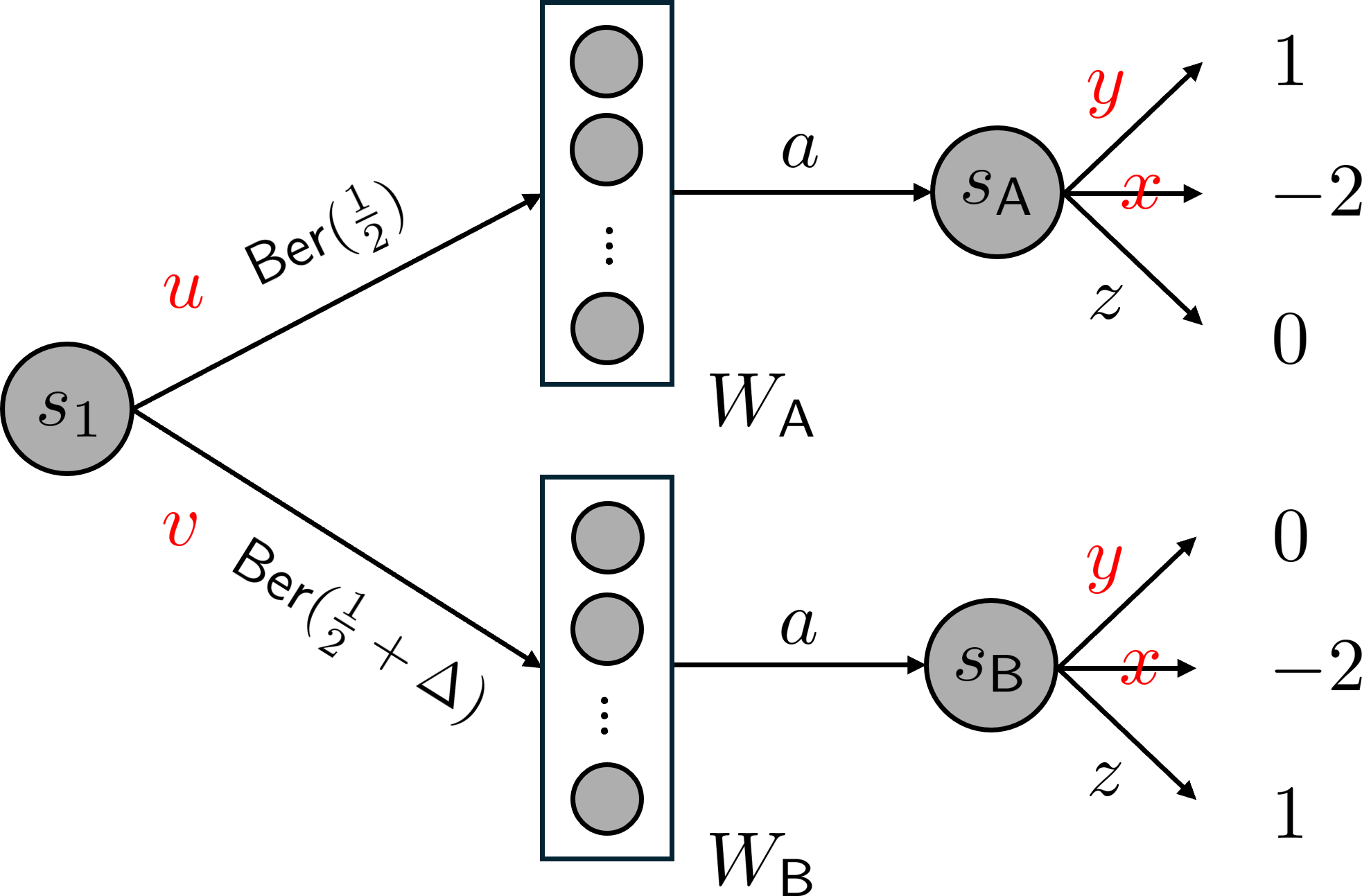}
        \par\medskip
        (b) $\calM_{u,y}$
    \end{minipage}
    
    \vspace{4em}
    
    \begin{minipage}[b]{0.45\textwidth}
        \centering
        \includegraphics[width=\textwidth]{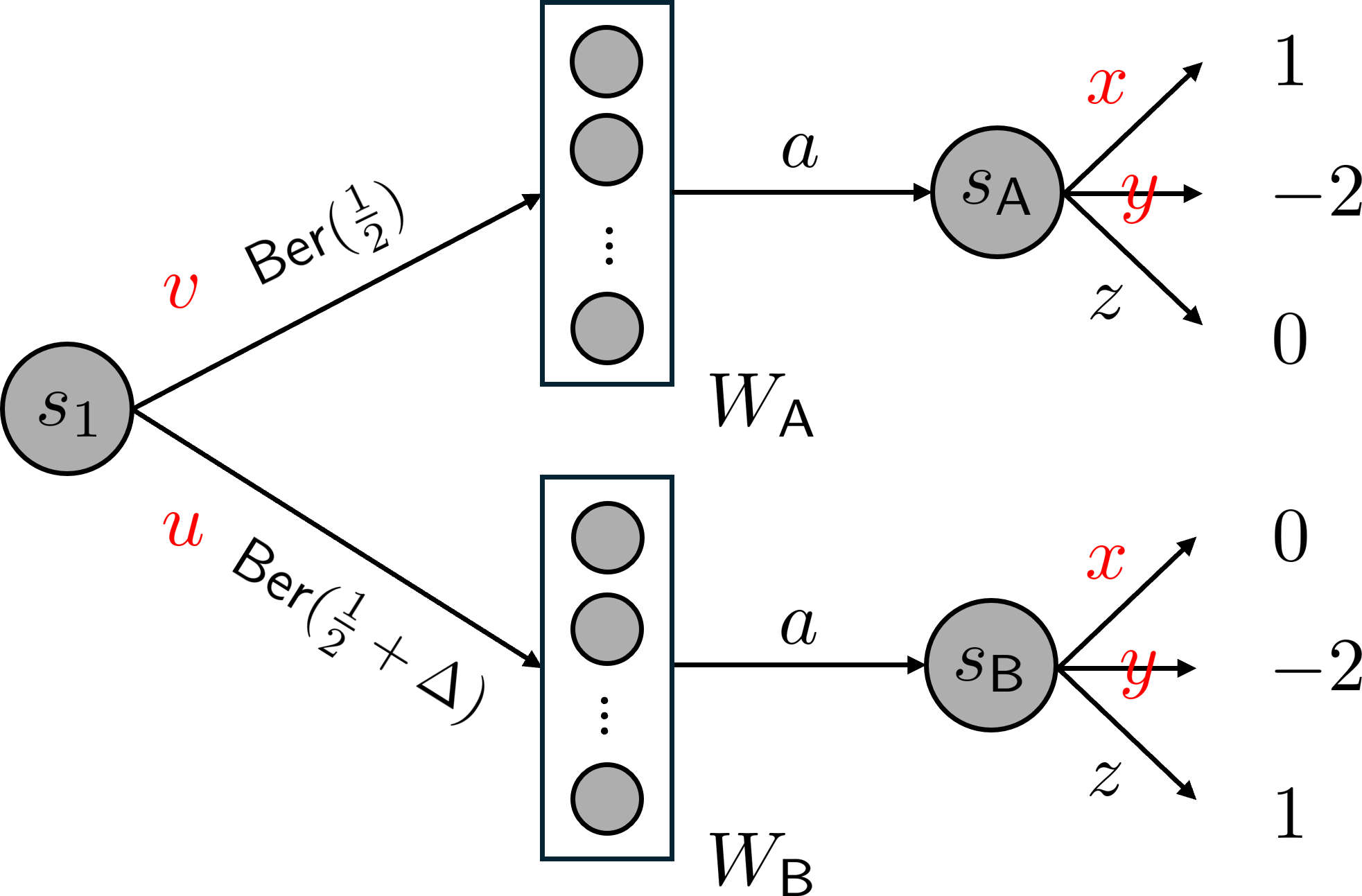}
        \par\medskip
        (c) $\calM_{v,x}$ 
    \end{minipage}
    \hfill
    \begin{minipage}[b]{0.45\textwidth}
        \centering
        \includegraphics[width=\textwidth]{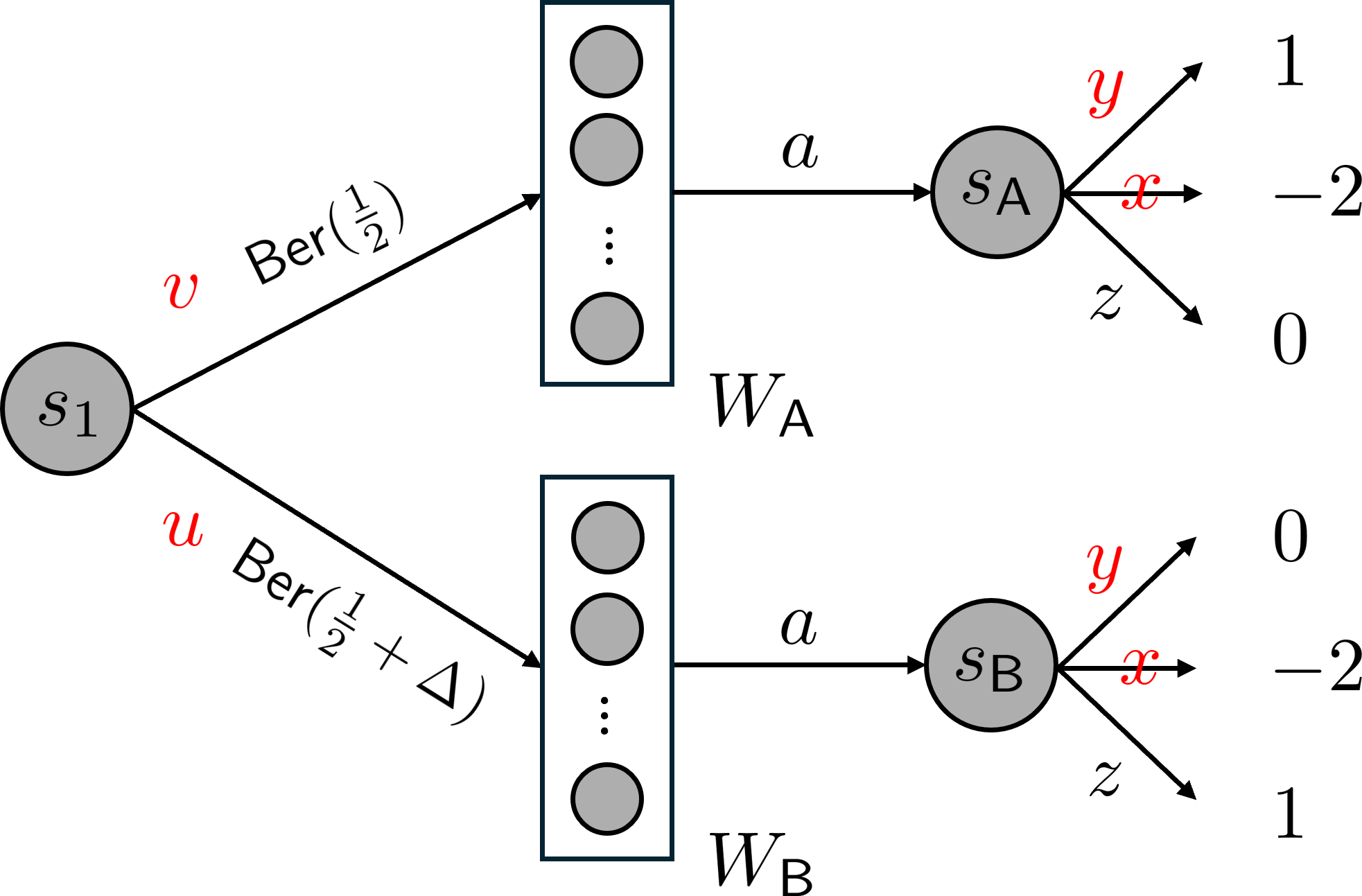}
        \par\medskip
        (d) $\calM_{v,y}$
    \end{minipage}
    \vspace{2em}
    \caption{Four classes of MDPs: $\calM_{u,x}$, $\calM_{u,y}$, $\calM_{v,x}$, and $\calM_{v,y}$. Red text highlights the differences between classes, with each class named according to the action pair $(u/v, x/y)$ in the top branch. }
    \label{fig:main}
\end{figure}

\subsection{MDP construction}\label{app: MDP consturction}
We construct four families of finite-horizon MDPs as illustrated in \pref{fig:main}.
In all families, the episode starts at state $s_1$, where the agent chooses between two actions ${u,v}$. Taking either action yields a Bernoulli reward (specified on the corresponding edge in the figure) and transitions to a middle-layer group of states, either $\WA$ or $\WB$. The group sizes satisfy $|\WA|=|\WB|=m$, and conditional on entering a group, the next state is drawn uniformly from that group. 

From any state $w\in \WA\cup\WB$, there is a single available action $a$, which yields zero reward and transitions deterministically to $\sA$ if $w\in\WA$ and to $\sB$ if $w\in\WB$. Finally, from $\sA$ or $\sB$, the agent chooses among three actions ${x,y,z}$, receives the deterministic terminal reward shown in \pref{fig:main}, and the episode ends. One terminal reward equals $-2$, which lies outside the reward range specified in \pref{sec:pre}; this can be removed by a simple affine rescaling and does not affect the argument.

Each family consists of a collection of MDPs indexed by hidden assignments. Let $W$ be a set of $2m$ abstract middle-layer states. Each MDP instance corresponds to an assignment
$\phi: W\to\{\WA,\WB\}$ where exactly 
$m$ states are assigned to $\WA$ and the remaining $m$ states are assigned to $\WB$. 
There are $\binom{2m}{m}$ such assignments. For each family $\calM_{u,x}$, $\calM_{u,y}$, $\calM_{v,x}$, and $\calM_{v,y}$, the transition and reward structure is fixed as in \pref{fig:main}, while the assignment $\phi$ varies over all such possibilities. Consequently,  $|\calM_{u,x}|=|\calM_{u,y}|=|\calM_{v,x}|=|\calM_{v,y}|=\binom{2m}{m}$. We write $M^\phi_{u,x}$ for the MDP in $\calM_{u,x}$ corresponding to assignment $\phi$, and similarly for the other families.

Define $\calM_u=\calM_{u,x}\cup\calM_{u,y}$, $\calM_v=\calM_{v,x}\cup\calM_{v,y}$, and $\calM=\calM_u\cup\calM_v$.
The ground-truth MDP $M^\star$ is drawn uniformly from $\calM$.

\subsection{Function set construction} \label{app: function construction}
We define the function class $\calF=\{(f_1, f_2, f_3), (f_1, f_2, g_3), (g_1, f_2, f_3), (g_1, f_2, g_3)\}$ where the components are defined as
\begin{gather*}
   \begin{cases}
       f_3(\sA, x)=1 \\
       f_3(\sA, y)=-2 \\
       f_3(\sA, z)=0 \\
       f_3(\sB, x)=0 \\
       f_3(\sB, y)=-2 \\
       f_3(\sB, z)=1 
   \end{cases} 
   \qquad 
   \begin{cases}
       g_3(\sA, x)=-2 \\
       g_3(\sA, y)=1 \\
       g_3(\sA, z)=0 \\
       g_3(\sB, x)=-2 \\
       g_3(\sB, y)=0 \\
       g_3(\sB, z)=1 
   \end{cases} 
   \qquad
       f_2(w, a) = 1 \ \ \forall w\in \WA\cup \WB
   \quad 
   \\[10pt]
   \begin{cases}
       f_1(s_1, u)=\tfrac{3}{2} \\
       f_1(s_1, v)=\tfrac{3}{2}+\Delta 
   \end{cases} 
   \qquad 
   \begin{cases}
       g_1(s_1, u)=\tfrac{3}{2}+\Delta \\
       g_1(s_1, v)=\tfrac{3}{2} 
   \end{cases} 
\end{gather*}
\noindent\underline{Checking Bellman completeness}\ \ \  
We verify that $\calF$ is Bellman complete.
Let $M\in\calM$ be arbitrary. We have for any $Q_3\in\{f_3, g_3\}$
\begin{align*}
    \forall w\in \WA\cup \WB, \quad  (\calT_M Q_3)(w,a) = R_M(w,a) + \E_{s'\sim P_M(\cdot|w,a)}\left[\max_{a'} Q_3(s', a')\right] = 0 + 1 = 1.  
\end{align*}
Thus, $(\calT_M Q_3) = f_2$. Next, 
\begin{align*}
    \forall a_1\in\{u,v\}, \quad (\calT_M f_2)(s_1, a_1) = R_M(s_1,a_1) + \E_{s'\sim P_M(\cdot|s_1,a_1)}\left[\max_{a'} f_2(s',a')\right] = R_M(s_1,a_1) + 1. 
\end{align*}
Depending on $M\in\calM_u$ or $M\in\calM_v$, $R_M+1$ equals either $f_1$ or $g_1$. Therefore, $(\calT_M f_2)\in \{f_1, g_1\}$. 
This verifies that $\calF$ is Bellman complete. 
\subsection{Dataset construction} \label{app: dataset constriction}
We assume the offline dataset $\calD=(\calD_1,\calD_2,\calD_3)$, where: 
\begin{itemize}[leftmargin=11pt, topsep=0pt]
    \item $\calD_1$ consists of $n$ tuples $(s_1, a_1^{(i)}, r_1^{(i)}, w_1^{(i)})_{i=1}^n$ from layer~1, where $a_1^{(i)}\sim \unif\{u,v\}$; then $r_1^{(i)}\in\{0,1\}$ and $w_1^{(i)}\in W$ are drawn according to $M^\star$ given $(s_1, a_1^{(i)})$.  
    \item $\calD_2$ consists of $n$ tuples $(w_2^{(i)}, a, 0, s_2^{(i)})_{i=1}^n$  from  layer~2, where  $w_2^{(i)}\sim \unif(W)$, and $s_2^{(i)}$ is drawn according to $M^\star$ given $(w_2^{(i)}, a)$. 
    \item $\calD_3$ consists of $n$ tuples $(s_3^{(i)}, z, r_3^{(i)})_{i=1}^n$ from layer~3, where $s_3^{(i)}\sim \unif\{\sA, \sB\}$. After taking action $z$, $r_3^{(i)}\in \{0,1\}$ is drawn according to $M^\star$ given $(s_3^{(i)}, z)$. 
\end{itemize}
\noindent\underline{Checking coverage over the optimal policy $\pi_{M}$}\ \ \  For any model $M\in\calM$ in \pref{fig:main}, the unique optimal policy $\pi_M$ always follows the lowest branch and achieves value $\frac{3}{2}+\Delta$. The above data distribution has coverage $C^{\pi_M}=2$ with respect to $\pi_M$.

\subsection{Lower bound proof}\label{app: lower bound proof}
For each $M\in\calM$, let \(\mathbb{P}_M\) denote the law of the dataset $\calD$ and $\E_M$ the expectation under $\PP_M$. Let \(\pi_M\) denote the optimal policy in $M$. 
Define the mixture laws of the dataset: 
\begin{alignat}{2}
\mathbb{P}_{u,x} &= \frac{1}{|\mathcal{M}_{u,x}|}\sum_{M\in\mathcal{M}_{u,x}}\mathbb{P}_M, 
&\quad \mathbb{P}_{u,y} &= \frac{1}{|\mathcal{M}_{u,y}|}\sum_{M\in\mathcal{M}_{u,y}}\mathbb{P}_M, \nonumber \\
\mathbb{P}_{v,x} &= \frac{1}{|\mathcal{M}_{v,x}|}\sum_{M\in\mathcal{M}_{v,x}}\mathbb{P}_M, 
&\quad \mathbb{P}_{v,y} &= \frac{1}{|\mathcal{M}_{v,y}|}\sum_{M\in\mathcal{M}_{v,y}}\mathbb{P}_M,\nonumber  \\
\mathbb{P}_u &= \frac{1}{|\mathcal{M}_u|}\sum_{M\in\mathcal{M}_u}\mathbb{P}_M,
&\qquad \mathbb{P}_v &= \frac{1}{|\mathcal{M}_v|}\sum_{M\in\mathcal{M}_v}\mathbb{P}_M, \nonumber \\
\PP &= \frac{1}{|\calM|}\sum_{M\in\calM} \PP_M,   \label{eq: laws def} 
\end{alignat}
and let $\E_{u,x}, \E_{u,y}, \E_{v,x}, \E_{v,y}, \E_u, \E_v, \E$ denote the corresponding expectation. Furthermore, let $J_M(\pi)$ denote the value of $\pi$ under model $M$. 
  By the construction in \pref{fig:main}, 
    \begin{align}
       &J_M(\hat{\pi}) = \label{eq: V_M(pi)} \\
       &\begin{cases}  \hat{\pi}(u|s_1)\big(\frac{1}{2} + \hat{\pi}(x|\sA) - 2\hat{\pi}(y|\sA)\big) + \hat{\pi}(v|s_1)\big(\tfrac{1}{2}+\Delta+\hat{\pi}(z|\sB) - 2\hat{\pi}(y|\sB)\big)\triangleq J_{u,x}(\hat{\pi})  &\text{if\ } M\in \mathcal{M}_{u,x}   \\
        \hat{\pi}(u|s_1)\big(\frac{1}{2} + \hat{\pi}(y|\sA) - 2\hat{\pi}(x|\sA)\big) + \hat{\pi}(v|s_1)\big(\tfrac{1}{2}+\Delta+\hat{\pi}(z|\sB) - 2\hat{\pi}(x|\sB)\big) \triangleq J_{u,y}(\hat{\pi})  &\text{if\ } M\in \mathcal{M}_{u,y}  \\
       \hat{\pi}(v|s_1)\big(\frac{1}{2} + \hat{\pi}(x|\sA) - 2\hat{\pi}(y|\sA)\big) + \hat{\pi}(u|s_1)\big(\tfrac{1}{2}+\Delta+\hat{\pi}(z|\sB) - 2\hat{\pi}(y|\sB)\big) \triangleq J_{v,x}(\hat{\pi})   &\text{if\ } M\in \mathcal{M}_{v,x} \\
       \hat{\pi}(v|s_1)\big(\frac{1}{2} + \hat{\pi}(y|\sA) - 2\hat{\pi}(x|\sA)\big) + \hat{\pi}(u|s_1)\big(\tfrac{1}{2}+\Delta+\hat{\pi}(z|\sB) - 2\hat{\pi}(x|\sB)\big)  \triangleq J_{v,y}(\hat{\pi})  &\text{if\ } M\in \mathcal{M}_{v,y}
       \end{cases}
       \nonumber 
    \end{align}

\begin{lemma}[Action $z$ is optimal] \label{lem: z is better}
Let an algorithm output \(\hat\pi\) (possibly stochastic). Define \(\hat\pi^{z}\) to match \(\hat\pi\) everywhere except \(\hat\pi^{z}(\sA)=\hat\pi^{z}(\sB)=z\) deterministically. Then $\mathbb E\!\left[J_M(\hat\pi)\right]
\leq 
\mathbb E\!\left[J_M(\hat\pi^{z})\right]$. 
\end{lemma}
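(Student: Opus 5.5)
The plan is to exploit a symmetry of the construction: the offline data cannot distinguish the families $\calM_{\cdot,x}$ and $\calM_{\cdot,y}$. Averaging over this pair makes the $x/y$ terms at $\sA,\sB$ contribute non-positively, while $z$ contributes its maximal reward.

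\textbf{Step 1: the data law does not depend on $x$ versus $y$.} Fix an assignment $\phi$ and $b\in\{u,v\}$. I would show that $\PP_{M^\phi_{b,x}}=\PP_{M^\phi_{b,y}}$ as laws of $\calD=(\calD_1,\calD_2,\calD_3)$.
\begin{itemize}
\item $\calD_1$ and $\calD_2$ depend only on the first two layers (rewards at $s_1$, the assignment $\phi$, and the transitions to $\sA,\sB$). These are identical in the two models.
\item $\calD_3$ only records action $z$. By the figure and the functions $f_3,g_3$, the reward of $z$ is $0$ at $\sA$ and $1$ at $\sB$ in both the $x$-family and the $y$-family.
\end{itemize}
Consequently the algorithm's output $\hat\pi$, including its internal randomness, has the same distribution under $M^\phi_{b,x}$ and $M^\phi_{b,y}$. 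The modified policy $\hat\pi^z$ is a deterministic function of $\hat\pi$, so the same holds for it.

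\textbf{Step 2: pair the two models and average the values.} Because $M^\star$ is uniform on $\calM$ and the two families are in bijection via $\phi$, I can write
\[
\E[J_M(\hat\pi)]=\frac{1}{|\calM|}\sum_{b,\phi}\E_{M^\phi_{b,x}}\big[J_{b,x}(\hat\pi)+J_{b,y}(\hat\pi)\big],
\]
where Step 1 lets both terms be taken under one common law of $\hat\pi$. The same decomposition holds for $\hat\pi^z$. It therefore suffices to prove the pointwise inequality
\[
J_{b,x}(\pi)+J_{b,y}(\pi)\le J_{b,x}(\pi^z)+J_{b,y}(\pi^z)\quad\text{for every policy }\pi.
\]

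\textbf{Step 3: pointwise check using \eqref{eq: V_M(pi)}.} Take $b=u$; the case $b=v$ is identical with $u$ and $v$ swapped. The coefficient of $\hat\pi(u|s_1)$ in the sum is
\[
1+\hat\pi(x|\sA)-2\hat\pi(y|\sA)+\hat\pi(y|\sA)-2\hat\pi(x|\sA)=1-\big(\hat\pi(x|\sA)+\hat\pi(y|\sA)\big)\le 1.
\]
The value $1$ is exactly what $\hat\pi^z$ attains at $\sA$.

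The coefficient of $\hat\pi(v|s_1)$ is
\[
1+2\Delta+2\hat\pi(z|\sB)-2\big(\hat\pi(x|\sB)+\hat\pi(y|\sB)\big)\le 1+2\Delta+2.
\]
The right-hand side is exactly what $\hat\pi^z$ attains, since $\hat\pi^z(z|\sB)=1$.

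Finally, $\hat\pi$ and $\hat\pi^z$ agree at $s_1$, so the weights $\hat\pi(u|s_1),\hat\pi(v|s_1)$ are nonnegative and shared by both policies. Multiplying each coefficient bound by its weight and adding gives the pointwise inequality.

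\textbf{Main obstacle.} The only delicate point is Step 1: making sure that the $z$-rewards in $\calD_3$ really coincide across the $x$- and $y$-families, so that the coupling is exact. Once this is verified, Steps 2 and 3 are routine algebra.
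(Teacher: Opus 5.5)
Your proposal is correct and follows essentially the same route as the paper. Both arguments use the fact that the $x$- and $y$-families induce the same data law ($\PP_{u,x}=\PP_{u,y}$, $\PP_{v,x}=\PP_{v,y}$) to pair them, sum the values from \eqref{eq: V_M(pi)}, and observe that the coefficients of $\hat\pi(u|s_1)$ and $\hat\pi(v|s_1)$ are maximized by putting all mass on $z$ at $\sA,\sB$. The only cosmetic difference is that you pair model-by-model through $\phi$ and compare each coefficient directly with its value under $\hat\pi^z$, whereas the paper pairs the mixture laws and rewrites the sum as $\hat\pi(u|s_1)\hat\pi(z|\sA)+\hat\pi(v|s_1)(-1+2\Delta+4\hat\pi(z|\sB))$, which is visibly increasing in the $z$-probabilities.
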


\begin{proof} 
    For any assignment $\phi$, the models $M_{u,x}^\phi$ and $M_{u,y}^\phi$ induce identical data distributions, since they differ only in terminal rewards for actions $x$ and $y$, which never appear in the dataset. The same holds for $M_{v,x}^\phi$ and $M_{v,y}^\phi$. Consequently, 
    \begin{align}
        \PP_{u,x}=\PP_{u,y} \quad \text{and} \quad \PP_{v,x}=\PP_{v,y}. \label{eq: same marginal}  
    \end{align}
    \allowdisplaybreaks
    The expected value of policy $\hat{\pi}$ is 
    \begin{align*}
    &\E[J_M(\hat{\pi}) ] \\
        &= \frac{1}{4} \E_{M\sim \unif(\calM_{u,x})} \E_M[J_M(\hat{\pi})] + \frac{1}{4} \E_{M\sim \unif(\calM_{u,y})} \E_M[J_M(\hat{\pi})] \\
        &\qquad +\frac{1}{4} \E_{M\sim \unif(\calM_{v,x})} \E_M[J_M(\hat{\pi})] + \frac{1}{4} \E_{M\sim \unif(\calM_{v,y})} \E_M[J_M(\hat{\pi})] \tag{$M$ is chosen uniformly from $\calM$}\\
        &= \frac{1}{4} \E_{M\sim \unif(\calM_{u,x})} \E_M[J_{u,x}(\hat{\pi})] + \frac{1}{4} \E_{M\sim \unif(\calM_{u,y})} \E_M[J_{u,y}(\hat{\pi})] \\
        &\qquad +\frac{1}{4} \E_{M\sim \unif(\calM_{v,x})} \E_M[J_{v,x}(\hat{\pi})] + \frac{1}{4} \E_{M\sim \unif(\calM_{v,y})} \E_M[J_{v,y}(\hat{\pi})] \tag{by \pref{eq: V_M(pi)}}\\
        &= \frac{1}{4} \E_{u,x}[J_{u,x}(\hat{\pi})] + \frac{1}{4} \E_{u,y}[J_{u,y}(\hat{\pi})] + \frac{1}{4} \E_{v,x}[J_{v,x}(\hat{\pi})] + \frac{1}{4} \E_{v,y}[J_{v,y}(\hat{\pi})] \\
        &= \frac{1}{4} \E_{u,x}[J_{u,x}(\hat{\pi})+J_{u,y}(\hat{\pi})]  + \frac{1}{4} \E_{v,x}[J_{v,x}(\hat{\pi})+J_{v,y}(\hat{\pi})]  \tag{by \pref{eq: same marginal}} \\
        &= \frac{1}{4}\E_{u,x}\left[ \hat{\pi}(u|s_1)\big(1-\hat{\pi}(x|\sA)-\hat{\pi}(y|\sA)\big) +\hat{\pi}(v|s_1)\big(1+2\Delta + 2\hat{\pi}(z|\sB) - 2\hat{\pi}(x|\sB)-2\hat{\pi}(y|\sB)\big)  \right] \\
        &\qquad + \frac{1}{4}\E_{v,x}\left[ \hat{\pi}(v|s_1)\big(1-\hat{\pi}(x|\sA)-\hat{\pi}(y|\sA)\big) +\hat{\pi}(u|s_1)\big(1+2\Delta + 2\hat{\pi}(z|\sB) - 2\hat{\pi}(x|\sB)-2\hat{\pi}(y|\sB)\big)  \right] \tag{by \pref{eq: V_M(pi)}} \\
        &= \frac{1}{4}\E_{u,x}\left[ \hat{\pi}(u|s_1)\hat{\pi}(z|\sA) +\hat{\pi}(v|s_1)\big(-1+2\Delta + 4\hat{\pi}(z|\sB)\big)  \right] \\
        &\qquad + \frac{1}{4}\E_{v,x}\left[ \hat{\pi}(v|s_1)\hat{\pi}(z|\sA) +\hat{\pi}(u|s_1)\big(-1+2\Delta + 4\hat{\pi}(z|\sB)\big)  \right]. \tag{$\sum_{a\in\{x,y,z\}} \hat{\pi}(a|s)=1$ for $s=\sA$ and $\sB$}
    \end{align*}
    As $\hat{\pi}^z(\cdot|s_1) = \hat{\pi}(\cdot|s_1)$ and $\hat{\pi}^z(z|\sA) \geq \hat{\pi}(z|\sA)$ and $\hat{\pi}^z(z|\sB) \geq \hat{\pi}(z|\sB)$, by the last expression, we have 
    \begin{align*}
        \E[J_M(\hat{\pi}) ] \leq \E[J_M(\hat{\pi}^z) ]. 
    \end{align*}
\end{proof}

\begin{lemma}[Reduction to total variation]\label{lem: reduction to TV}
For any offline algorithm,
\[
\mathbb E\!\left[J_M(\pi_M)-J_M(\hat\pi)\right]
\;\ge\;
\frac12(1+\Delta)\left(1-D_{\mathrm{TV}}(\mathbb{P}_u,\mathbb{P}_v)\right).
\]
\end{lemma}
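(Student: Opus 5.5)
The plan is to first invoke \pref{lem: z is better}, which lets us assume without loss of generality that the output policy plays $z$ deterministically at $\sA$ and $\sB$. Replacing $\hat\pi$ by $\hat\pi^z$ can only increase $\E[J_M(\hat\pi)]$. The optimal value $J_M(\pi_M)=\tfrac32+\Delta$ is the same for every $M\in\calM$, so this replacement can only decrease the expected suboptimality. It therefore suffices to lower bound $\E[J_M(\pi_M)-J_M(\hat\pi^z)]$.

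Next I compute the suboptimality of $\hat\pi^z$ explicitly from \pref{eq: V_M(pi)}. Write $p:=\hat\pi(u\mid s_1)$, which is a (possibly randomized) function of the dataset $\calD$; we may absorb the algorithm's internal randomness by replacing $p$ with its conditional expectation given $\calD$, which is $[0,1]$-valued. Substituting $\hat\pi^z(z\mid\sA)=\hat\pi^z(z\mid\sB)=1$ gives the following values.
\begin{itemize}
\item For $M\in\calM_u$ (either $x$ or $y$ version): $J_M(\hat\pi^z)=\tfrac12 p+(1-p)(\tfrac32+\Delta)$, so $J_M(\pi_M)-J_M(\hat\pi^z)=(1+\Delta)p$.
\item For $M\in\calM_v$: $J_M(\pi_M)-J_M(\hat\pi^z)=(1+\Delta)(1-p)$.
\end{itemize}
A point to check carefully is that the $x/y$ variants give identical values once $z$ is played at the terminal layer, so the four-family mixture collapses into the two-family mixture $\PP_u,\PP_v$ defined in \pref{eq: laws def}.

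Since $M^\star$ is uniform over $\calM$ and $|\calM_u|=|\calM_v|$, the expected suboptimality equals
\[
\tfrac12(1+\Delta)\big(\E_u[p]+\E_v[1-p]\big)=\tfrac12(1+\Delta)\big(1-(\E_v[p]-\E_u[p])\big).
\]
Because $p\in[0,1]$ is a measurable function of the data, the variational characterization of total variation gives $\E_v[p]-\E_u[p]\le D_{\mathrm{TV}}(\PP_u,\PP_v)$. Combining this with the display yields the claimed bound.

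The main obstacle is bookkeeping rather than any deep step. Specifically, we must make sure the conditional averaging over algorithmic randomness is legitimate, and that $\E_u$ and $\E_v$ are exactly the expectations under the mixtures $\PP_u,\PP_v$ induced by drawing $M^\star$ uniformly within each half. Both follow directly from the definitions in \pref{eq: laws def}.
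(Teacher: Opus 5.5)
Your proposal is correct and follows the paper's proof essentially step for step. You reduce to $\hat\pi^z$ via \pref{lem: z is better}, compute the suboptimality as $(1+\Delta)\hat\pi(u\mid s_1)$ on $\calM_u$ and $(1+\Delta)(1-\hat\pi(u\mid s_1))$ on $\calM_v$, average with weight $\tfrac12$ each, and bound $\E_v[\hat\pi(u\mid s_1)]-\E_u[\hat\pi(u\mid s_1)]$ by $D_{\mathrm{TV}}(\PP_u,\PP_v)$. Your explicit treatment of the algorithm's internal randomness, by conditioning on $\calD$, is a small bookkeeping step the paper leaves implicit and does not change the argument.
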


\begin{proof}
By \pref{lem: z is better} we may assume \(\hat\pi(\sA)=\hat\pi(\sB)=z\). Let $\beta=\hat\pi(u\mid s_1)\in[0,1]$. A direct calculation from \pref{eq: V_M(pi)} yields 
\begin{align*}
   J_M(\hat{\pi})=
   \begin{cases}
   \tfrac{1}{2}\beta + (\tfrac{3}{2}+\Delta)(1-\beta)  & \text{if } M\in\calM_u, \\
   \tfrac{1}{2}(1-\beta) + (\tfrac{3}{2}+\Delta)\beta & \text{if } M\in\calM_v.
   \end{cases}
\end{align*}
Also, $J_M(\pi_M) = \frac{3}{2}+\Delta$ for any $M$. 
Thus, 
\begin{align*}
&\E[J_M(\pi_M)-J_M(\hat\pi)] \\
&= \left(\frac{3}{2}+\Delta\right) - \frac{1}{2}\E_u \left[ \frac{1}{2}\beta + \left(\frac{3}{2}+\Delta\right)(1-\beta)\right]
 - \frac{1}{2}\E_v \left[ \frac{1}{2}(1-\beta) + \left(\frac{3}{2}+\Delta\right)\beta\right]\\
 &= \left(\frac{3}{2}+\Delta\right) - \frac{1}{2}\left(\frac{3}{2}+\Delta\right) - \frac{1}{2}\cdot\frac{1}{2} + \frac{1}{2}\left(\left(\frac{3}{2}+\Delta\right)-\frac{1}{2}\right) \left(\E_u[\beta] - \E_v[\beta]\right) \\
 &= \frac{1}{2}(1+\Delta) \left(1 + (\E_u[\beta] - \E_v[\beta]) \right) \\ 
 &\geq \frac{1}{2}(1+\Delta) \left(1 - D_{\mathrm{TV}}(\PP_u, \PP_v)\right), 
\end{align*}
where we use that \(\beta\in[0,1]\), \(\big|\E_u[\beta]-\E_v[\beta]\big|\le D_{\mathrm{TV}}(\mathbb{P}_u,\mathbb{P}_v)\) in the last inequality.
\end{proof}

\begin{lemma}[TV bound]\label{lem: TV bound}
If
$m\geq 4n^2$, then
$
D_{\mathrm{TV}}(\mathbb{P}_u,\mathbb{P}_v)\;\le\;\frac{6n^2}{m} + 4\sqrt{n}\Delta 
$. 
\end{lemma}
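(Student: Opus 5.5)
We bound $D_{\mathrm{TV}}(\mathbb{P}_u,\mathbb{P}_v)$ by separating the two sources of information: the Type~1 rewards, and the coupling between the Type~1 next-states $w_1^{(i)}$ and the Type~2 group labels. The intermediate object is a hybrid law $\mathbb{Q}_u$ (resp.\ $\mathbb{Q}_v$) in which the Type~1 next states $w_1^{(i)}$ are drawn i.i.d.\ uniformly from $W$, independently of the action and of $\calD_2$. We then split by the triangle inequality:
\[
D_{\mathrm{TV}}(\mathbb{P}_u,\mathbb{P}_v)\le D_{\mathrm{TV}}(\mathbb{P}_u,\mathbb{Q}_u)+D_{\mathrm{TV}}(\mathbb{Q}_u,\mathbb{Q}_v)+D_{\mathrm{TV}}(\mathbb{Q}_v,\mathbb{P}_v).
\]

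\textbf{Middle term.} Under $\mathbb{Q}_u$ and $\mathbb{Q}_v$ the only difference is the reward law of $r_1^{(i)}$ given $a_1^{(i)}$. Type~2 samples have the same law under both, since the assignment $\phi$ is uniform. Type~3 samples depend only on $s_3$ and are also identical. So the middle term equals the TV distance between the $n$-fold products of the (action, reward) pairs, where each action yields $\Ber(1/2)$ or $\Ber(1/2+\Delta)$ with the roles swapped. By Pinsker and chain rule, this is at most $\sqrt{\tfrac12\, n\,\KL}$ with per-sample $\KL=O(\Delta^2)$ for $\Delta\le 1/4$. This gives a bound of the form $c\sqrt n\Delta$, and we would check that the constant is at most $4$.

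\textbf{Outer terms.} Fix $\mathbb{P}_u$ and average over the uniformly random balanced assignment $\phi$. We condition on the event $E$ that the $2n$ sampled middle states (the $n$ values $w_1^{(i)}$ and the $n$ values $w_2^{(i)}$) are all distinct.

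Under $\mathbb{Q}_u$ the $w_1$'s are i.i.d.\ uniform on $W$, and the $w_2$'s are also uniform. By a birthday bound, $\Pr(E^c)\le \binom{2n}{2}/(2m)\le n^2/m$.

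Under $\mathbb{P}_u$, each $w_1^{(i)}$ is uniform within a group of size $m$, so its marginal over $W$ after averaging over $\phi$ is uniform. Conditional on $E$, we then want to show that the joint law of all the sampled labels is close under the two measures. The data reveal the $\phi$-labels of the $w_2$'s through $s_2$, and under $\mathbb{P}_u$ they implicitly reveal the labels of the $w_1$'s through the actions. Under $E$ these are $2n$ distinct points of a uniformly random balanced partition.

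The partition labels of $2n$ distinct points are close to i.i.d.\ fair bits. The TV distance between sampling without replacement and with replacement is $O(n^2/m)$. The condition $m\ge 4n^2$ ensures all these approximations are valid, namely that the groups are not exhausted and the collision probability is at most $1/4$.

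Once the labels are i.i.d.\ fair and the sampled states are distinct, the specific identities carry no information. Concretely, the law of the $w_1$'s given the actions is a uniform draw of distinct states whose labels match the action-determined groups. Averaging over $\phi$, this matches a uniform draw of distinct states with independent labels, i.e.\ the $\mathbb{Q}_u$ law conditioned on $E$, up to $O(n^2/m)$.

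Summing the collision probabilities under both measures and the with/without-replacement error gives roughly $3n^2/m$ per side, hence $6n^2/m$ in total.

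\textbf{Main obstacle.} The delicate step is this exchangeability/coupling argument. The $w_1$'s under $\mathbb{P}_u$ are correlated with $\phi$, which is also what generates the Type~2 labels, so we cannot treat the $w_1$'s as independent of $\calD_2$ without care. The first thing to try is an explicit coupling: generate $\phi$ lazily, label by label, only at the sampled points, and compare the resulting sequential process with the fresh-uniform process step by step. Each step differs by at most $O(n/m)$ (from collisions or group-depletion effects), and summing over the $2n$ steps gives $O(n^2/m)$.
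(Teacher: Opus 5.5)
Your proof is correct, but it takes a genuinely different route from the paper's: your hybrid interpolates in the transitions, while the paper's interpolates in the rewards.

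\textbf{The paper's route.} It introduces a comparison family with $\Delta=0$ at $s_1$. Two Pinsker steps give $D_{\mathrm{TV}}(\mathbb{P}_u,\mathbb{P}'_u)+D_{\mathrm{TV}}(\mathbb{P}_v,\mathbb{P}'_v)\le 4\sqrt{n}\Delta$. It then compares $\mathbb{P}'_u$ with $\mathbb{P}'_v$ directly. On repeat-free datasets both likelihoods equal a common product factor times $\binom{2m-2n}{m-l}/\binom{2m}{m}\in 2^{-2n}[1-\tfrac{4n^2}{m},1+\tfrac{4n^2}{m}]$. So a single counting estimate plus the birthday bound gives $6n^2/m$.

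\textbf{Your route.} You sever the only link between actions and the hidden partition, namely a middle state appearing both as a Type~1 next state and as a Type~2 start. This makes the information channel explicit. It also reduces the reward part to a single product-measure Pinsker bound. That bound gives $2\sqrt{n}\Delta$, half of what the statement allows, since the per-sample KL is at most $8\Delta^2$ for $\Delta\le\tfrac14$.

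\textbf{The cost, and the constant.} Each outer term needs its own counting comparison, and it is asymmetric. On repeat-free configurations, $\mathbb{P}_u/\mathbb{Q}_u$ equals $2^{n}$ times the ratio of the fractions of balanced assignments consistent with $2n$ versus $n$ label constraints.
\begin{itemize}
\item With the paper's crude estimate $2^{-N}(1\pm N^2/m)$ for these fractions, you get $\tfrac72\, n^2/m$ per side, i.e.\ $7n^2/m$ in total. This overshoots the stated bound when $\Delta$ is small.
\item To stay under $6n^2/m$, use $\prod_{i<k}(1-\tfrac{i}{m})\ge 1-\binom{k}{2}/m$ for the numerators and $\prod_{i<N}(1-\tfrac{i}{2m})\ge 1-\binom{N}{2}/(2m)\ge\tfrac34$ for the denominator. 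Together with your birthday bound $n^2/m$, this gives about $\tfrac{13}{6}\,n^2/m$ per side.
\end{itemize}
Your lazy-coupling alternative also gives $O(n^2/m)$.

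So the paper needs one symmetric counting estimate and pays twice in $\Delta$. You pay once in $\Delta$ but need two asymmetric counting estimates, done with sharper bookkeeping, to reach the stated constant.
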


\begin{proof}
\ \\[-20pt]
\paragraph{Step 1: remove the $\Delta$-shift via a comparison family.}  
Define an auxiliary collection of families $\calM'_{u,x},\calM'_{u,y},\calM'_{v,x},\calM'_{v,y}$ that are identical to those in \pref{fig:main}, except that the rewards on actions $u$ and $v$ at $s_1$ are both $\Ber(\tfrac12)$ (i.e., we set $\Delta=0$ at the first layer). Define $\calM'_u,\calM'_v,\calM'$ and the corresponding mixture laws $\PP'_u,\PP'_v,\PP'$ analogously to \pref{eq: laws def}. 

By the triangle inequality and Pinsker’s inequality,
\begin{align*}
    D_{\mathrm{TV}}(\mathbb{P}_u,\mathbb{P}_v) &\leq D_{\mathrm{TV}}(\mathbb{P}_u',\mathbb{P}_v') + D_{\mathrm{TV}}(\mathbb{P}_u,\mathbb{P}_u') + D_{\mathrm{TV}}(\mathbb{P}_v,\mathbb{P}_v') \\
    &\leq D_{\mathrm{TV}}(\mathbb{P}_u',\mathbb{P}_v') + \sqrt{\frac{1}{2}\KL(\PP_u, \PP_u')} + \sqrt{\frac{1}{2}\KL(\PP_v, \PP_v')} 
\end{align*}
The only difference between $\PP_u$ and $\PP'_u$ (and similarly between $\PP_v$ and $\PP'_v$) is the $n$ first-layer rewards in $\calD_1$: under $\PP_u$ these rewards are $\Ber(\tfrac12)$ or $\Ber(\tfrac12+\Delta)$ depending on the action, whereas under $\PP'_u$ they are always $\Ber(\tfrac12)$. Hence, 
\begin{align*}
   \KL(\PP_u, \PP_u') \leq n\KL\left( \Ber\left(\tfrac{1}{2}+\Delta\right), \Ber\left(\tfrac{1}{2}\right)\right), \quad \KL(\PP_v, \PP_v') \leq n\KL\left( \Ber\left(\tfrac{1}{2}+\Delta\right), \Ber\left(\tfrac{1}{2}\right)\right).  
\end{align*}
Using the bound
$\KL(\Ber(\tfrac12+\Delta), \Ber(\tfrac12))\le 8\Delta^2$ for $\Delta\le \tfrac14$, we obtain
\begin{align*}
    D_{\mathrm{TV}}(\mathbb{P}_u,\mathbb{P}_v)
    &\leq D_{\mathrm{TV}}(\mathbb{P}_u',\mathbb{P}_v') + 2\sqrt{\frac{n}{2} \times 8\Delta^2}   \\
    &= D_{\mathrm{TV}}(\mathbb{P}_u',\mathbb{P}_v') + 4\sqrt{n}\Delta. 
\end{align*}

\paragraph{Step 2: control repeats in the visited middle-layer states. }
By definition,
\[
D_{\mathrm{TV}}(\mathbb{P}_u',\mathbb{P}_v')
\;=\;\frac12\sum_{\calD}\big|
\mathbb{P}'_u(\calD)-\mathbb{P}'_v(\calD)
\big| 
\]
where the summation runs over all possible datasets. 
For any dataset $\calD=(\calD_1,\calD_2,\calD_3)$ of the form in \pref{app: dataset constriction}, define
\begin{align*}
   \mathsf{Rep}(\calD) = \ind\{\exists i\neq j, \ \ w_1^{(i)}=w_1^{(j)}\text{\ or\ } w_2^{(i)}=w_2^{(j)}\text{\ or\ } w_1^{(i)}=w_2^{(j)}\}. 
\end{align*}
That is, $\Rep(\calD)=1$ if any state in $W$ appears more than once in the dataset $\calD$.  Note that
under any $M\in\calM'$, the middle-layer states $\{w_1^{(i)}\}_{i=1}^n$ are i.i.d.\ and uniformly distributed over $W$. Indeed, $a_1^{(i)}\sim\unif\{u,v\}$, and conditional on $a_1^{(i)}$ the next state is uniform over the corresponding size-$m$ group; hence, marginally, $\Pr(w_1^{(i)}=w)
=\frac{1}{2m}$ for all $w\in W$. 
Moreover, $\{w_2^{(i)}\}_{i=1}^n$ are i.i.d.\ $\unif(W)$ by construction. Therefore, the $2n$ middle-layer states
$\{w_1^{(i)}\}_{i=1}^n \cup \{w_2^{(i)}\}_{i=1}^n$
are i.i.d.\ samples from $\unif(W)$.

Thus, the probability that any state in $W$ is sampled more than once is 
\begin{align*}
    \PP_u'(\Rep(\calD)=1) 
    &= 1- \PP_u'(\Rep(\calD)=0) \\
    &= 1 - 1\times \frac{2m-1}{2m} \times \frac{2m-2}{2m} \times \cdots \times \frac{2m-(2n-1)}{2m} \\
    &\leq 1-\left(1-\frac{n}{m}\right)^{2n} 
    \leq \frac{2n^2}{m}. 
\end{align*}
The same bound holds for $\PP_v'(\Rep(\calD)=1)$.  Decomposing the TV sum according to $\mathsf{Rep}(\calD)$ yields 
\begin{align}
    \sum_\calD  |\PP_u'(\calD) - \PP_v'(\calD)| &= \sum_{\calD:~ \Rep(\calD)=0} |\PP_u'(\calD) - \PP_v'(\calD)| + \sum_{\calD:~\Rep(\calD)=1} |\PP_u'(\calD) - \PP_v'(\calD)| \nonumber \\
    &\leq \sum_{\calD:~\Rep(\calD)=0} |\PP_u'(\calD) - \PP_v'(\calD)| +  \PP_u'(\Rep(\calD)=1) +   \PP_v'(\Rep(\calD)=1) \nonumber \\
    &\leq \sum_{\calD:~\Rep(\calD)=0} |\PP_u'(\calD) - \PP_v'(\calD)| + \frac{4n^2}{m}.   \label{eq: inter step 2}
\end{align}
\paragraph{Step 3: evaluate $\PP'_u(\calD)$ on datasets without repeats.} 
Fix a dataset $\calD$ with $\Rep(\calD)=0$. 
For a fixed model $M\in\calM_u'$, the likelihood factors over layers: 

\begin{itemize}[leftmargin=11pt, topsep=0pt]
\item \textbf{Samples in $\calD_1$:} \ \   Each sample in $\calD_1$ starts with $s_1$, choosing $a_1^{(i)}\sim \unif\{u,v\}$. Under $M\in\calM_u'$,  $r_1^{(i)}\sim \Ber(\frac{1}{2})$, and $w_1^{(i)}$ is uniform on the appropriate \mbox{size-$m$} set ($\WA$ or $\WB$) given $a_1^{(i)}$. For example, if $a_1^{(i)}=u$ then $w_1^{(i)}\sim \unif(\WA)$ according to \pref{fig:main}(a)(b). Hence, for a realized tuple $(s_1,a_1^{(i)}, r_1^{(i)}, w_1^{(i)})$, 
\begin{align*}
\PP_M\big((s_1, a_1^{(i)}, r_1^{(i)}, w_1^{(i)})\big)
&=\PP'((a_1^{(i)},r_1^{(i)})) \cdot \frac{1}{m}\ind\{\text{$M$'s transition is consistent with } (s_1,a_1^{(i)},w_1^{(i)})\}
\end{align*}
where $\PP'$ is mixture law $\PP'=\E_{M\sim \unif(\calM')}\PP_M$. It can be used here because all $M\in\calM'$ share the same law of $(a_1^{(i)},r_1^{(i)})$, and so does their mixture. The indicator enforces that the assignment of $M$ places $w_1^{(i)}$
in the correct group ($\WA$ or $\WB$) given $a_1^{(i)}$, as the example given above. 
\item \textbf{Samples in $\calD_2$.  }  Each sample in $\calD_2$ first draws $w_2^{(i)}\sim \unif(W)$ and chooses action $a$. Then the next-state $s_2^{(i)}$ is deterministic given $w_2^{(i)}$: if $w_2^{(i)}\in W_{\mathsf{A}}$ under $M$, then $s_2^{(i)}=\sA$; otherwise, $s_2^{(i)}=\sB$. Thus for a realized tuple $(w_2^{(i)},a,0,s_2^{(i)})$,
\[
\begin{aligned}
\PP_M\big((w_2^{(i)},a,0,s_2^{(i)})\big)
&=\frac{1}{2m}\ind\{\text{$M$'s transition is consistent with } (w_2^{(i)},a,s_2^{(i)})\}.
\end{aligned}
\]
\item \textbf{Samples in $\calD_3$. }  Each sample in $\calD_3$ first draws $s_3^{(i)}\sim \unif\{\sA,\sB\}$, chooses $z$, and receives reward $r_3^{(i)}$. By construction, $r_3^{(i)}=0$ if $s_3^{(i)}=\sA$, and $r_3^{(i)}=1$ if $s_3^{(i)}=\sB$. This procedure is the same for all models $M$. Thus, 
\begin{align*}
    \PP_M\big( (s_3^{(i)}, z, r_3^{(i)}\big) = \PP'((s_3^{(i)}, r_3^{(i)}))
\end{align*}
\end{itemize}
Multiplying across all samples in all $\calD_1, \calD_2, \calD_3$, we get 
\begin{align*}
\PP_M(\calD)
&= \PP'\big((a_1^{(i)}, r_1^{(i)})_{i=1}^n\big)\PP'\big((s_3^{(i)}, r_3^{(i)})_{i=1}^n\big) \cdot \frac{1}{m^n}\cdot \frac{1}{(2m)^n}\cdot \\
&\qquad  \ind\{\text{$M$'s transition is consistent with } (s_1, a_1^{(i)}, w_1^{(i)})_{i=1}^n \text{\ and\ }(w_2^{(i)}, a, s_2^{(i)})_{i=1}^n\}
\end{align*}
Averaging over $M\sim\unif(\calM_u')$ yields
\begin{align}
\PP_u'(\calD)
&= \PP'\big((a_1^{(i)}, r_1^{(i)})_{i=1}^n\big)\PP'\big((s_3^{(i)}, r_3^{(i)})_{i=1}^n\big)  \cdot \frac{1}{m^n}\cdot \frac{1}{(2m)^n}\cdot   \label{eq: tmpe step}\\
&\quad  \underbrace{\E_{M\sim \unif(\calM_u')}\big[\ind\{\text{$M$'s transition is consistent with } (s_1, a_1^{(i)}, w_1^{(i)})_{i=1}^n \text{\ and\ }(w_2^{(i)}, a, s_2^{(i)})_{i=1}^n\} \big]}_{(\star)}  \nonumber 
\end{align}

\paragraph{Step 4: count consistent assignments.}
Next, we calculate $(\star)$ for a given $\calD$ with $\Rep(\calD)=0$.
Each transition sample $(s_1, a_1^{(i)}, w_1^{(i)})$ in $\calD_1$ specifies a constraint for $M$. For example, assume $(s_1, a_1^{(i)}, w_1^{(i)})=(s_1, u, w)$ for some $w\in W$. Then to make $M\in\calM_u'$ consistent with $(s_1, u, w)$, $M$ must assign $w$ to $W_{\mathsf{A}}$. Similarly, each transition tuple $(w_2^{(i)}, a, s_2^{(i)})$ sets a constraint for $M$: Assume $(w_2^{(i)}, a, s_2^{(i)}) = (w, a, \sB)$ for some $w\in W$. Then to make $M\in\calM_u'$ consistent with this tuple, $M$ must assign $w$ to $W_{\mathsf{B}}$. As there are no repeating states from $W$ in the dataset, there are $2n$ non-overlapping constraints on how $M$ should assign each $w\in W$ to $\{\WA, \WB\}$.   

The total number of possible assignments from $W$ to $\{\WA, \WB\}$ is $\binom{2m}{m}$. With $2n$ non-overlapping constraints, there are $N=2n$ elements in $W$ whose assignments are \emph{not free}. The total number of assignments under the constraints becomes $\binom{2m-N}{m-l}$, where $l$ is the number of constraints restricting $w$ to be in $W_{\mathsf{A}}$, and $N-l$ is the number of constraints restricting $w$ to $W_{\mathsf{B}}$. 

Then, $(\star)$ is the fraction of the assignments satisfying the constraints specified by $(s_1, a_1^{(i)}, w_1^{(i)})_{i=1}^n$ and $(w_2^{(i)}, a, s_2^{(i)})_{i=1}^n$, which is given by $\binom{2m-N}{m-l}\Big/\binom{2m}{m}$ and can be expanded as 
\begin{align*}
    \frac{\frac{(2m-N)!}{(m-l)!\,(m-N+l)!}}{\frac{(2m)!}{m!\,m!}} = \frac{\prod_{i=0}^{l-1}\left(m-i\right)\times\prod_{i=0}^{N-l-1}\left(m-i\right) }{\prod_{i=0}^{N-1} \left(2m-i\right)} = \frac{m^N}{(2m)^N}\cdot\frac{\prod_{i=0}^{l-1}\left(1-\frac{i}{m}\right)\times\prod_{i=0}^{N-l-1}\left(1-\frac{i}{m}\right) }{\prod_{i=0}^{N-1} \left(1-\frac{i}{2m}\right)}. 
\end{align*}
From this expression, we have
\begin{align*}
    (\star) &\geq 2^{-N} \left(1-\frac{l}{m}\right)^l \left(1-\frac{N-l}{m}\right)^{N-l}  \tag{lower bound the numerator and upper bound the denominator}\\
    &\geq 2^{-N} \left(1-\frac{l^2}{m}\right) \left(1-\frac{(N-l)^2}{m}\right)   \tag{$(1-\zeta)^k\geq 1-k\zeta$ for $\zeta\in(0,1)$ and $k\geq 1$}\\
    &\geq 2^{-N} \left(1-\frac{N^2}{m}\right) \tag{$l^2 + (N-l)^2\leq N^2$}
\end{align*}
and similarly 
\begin{align*}
    (\star) &\leq 2^{-N} \cdot\frac{1}{\left(1-\frac{N}{2m}\right)^N} \leq 2^{-N}\cdot \frac{1}{1-\frac{N^2}{2m}}\leq 2^{-N} \left(1+\frac{N^2}{m}\right). \tag{$\frac{1}{1-\zeta}\leq 1+2\zeta$ for $\zeta\in(0,\frac{1}{2})$}  
\end{align*}
Therefore, by \pref{eq: tmpe step} and that $N=2n$, 
\begin{align*}
    \PP_u'(\calD) \in \PP'\big((a_1^{(i)}, r_1^{(i)})_{i=1}^n\big)\PP'\big((s_3^{(i)}, r_3^{(i)})_{i=1}^n\big) \cdot \frac{1}{m^n}\cdot \frac{1}{(2m)^n} \cdot 2^{-2n} \left[1-\frac{4n^2}{m}, 1+\frac{4n^2}{m}\right]. 
\end{align*}
The same holds for $\PP_v'(\calD)$. Plugging them into the first term on the right-hand side of \pref{eq: inter step 2}, we get 
\begin{align*}
    &\sum_{\calD:~\Rep(\calD)=0} |\PP_u'(\calD) - \PP_v'(\calD)| \\
    &\leq \sum_{(a_1^{(i)}, r_1^{(i)}, w_1^{(i)}, w_2^{(i)}, s_2^{(i)}, s_3^{(i)}, r_3^{(i)})_{i=1}^n} \PP'((a_1^{(i)}, r_1^{(i)}))\PP'((s_3^{(i)}, r_3^{(i)}))  \cdot \frac{1}{m^n}\cdot \frac{1}{(2m)^n}\cdot 2^{-2n}\cdot \frac{8n^2}{m} \\
    &= \sum_{(w_1^{(i)}, w_2^{(i)}, s_2^{(i)})_{i=1}^n}  \frac{1}{m^n}\cdot \frac{1}{(2m)^n}\cdot 2^{-2n}\cdot \frac{8n^2}{m} \\
    &= (2m\times 2m\times 2)^n \times \frac{1}{(m\times 2m\times 4)^{n}}\cdot \frac{8n^2}{m}   \tag{each $w_1^{(i)}$ and $w_2^{(i)}$ has $2m$ possible values, and $s_2^{(i)}$ has $2$ possible values}\\
    &= \frac{8n^2}{m}. 
\end{align*}
Combining everything above, we get 
\begin{align*}
    D_{\mathrm{TV}}(\mathbb{P}_u,\mathbb{P}_v) \leq \frac{1}{2}\left(\frac{8n^2}{m} + \frac{4n^2}{m}\right) + 4\sqrt{n}\Delta = \frac{6n^2}{m} + 4\sqrt{n}\Delta. 
\end{align*}
\end{proof}

\begin{lemma}[An $\epsilon$-independent lower bound]\label{lem: eps indep lower bound}
    For any $\epsilon\leq \frac{1}{4}$, to achieve \mbox{$\E[J(\pi^\star) - J(\hat{\pi})]\leq \epsilon$} in the instance described in \pref{app: MDP consturction}--\pref{app: dataset constriction}, any offline algorithm must access at least $n\geq \Omega\big( \min\big\{\sqrt{m}, \frac{1}{\Delta^2}\big\}\big)$ samples. 
\end{lemma}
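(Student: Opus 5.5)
The plan is to combine \pref{lem: reduction to TV} with \pref{lem: TV bound} and argue by contradiction. Suppose an algorithm achieves $\E[J(\pi^\star)-J(\hat\pi)]\le\epsilon\le\frac14$ on the instance of \pref{app: MDP consturction}--\pref{app: dataset constriction} with $M^\star\sim\unif(\calM)$. By \pref{lem: reduction to TV}, using $\Delta\ge 0$, we get
\[
\epsilon\ \ge\ \tfrac12(1+\Delta)\bigl(1-D_{\mathrm{TV}}(\PP_u,\PP_v)\bigr)\ \ge\ \tfrac12\bigl(1-D_{\mathrm{TV}}(\PP_u,\PP_v)\bigr).
\]
Since $\epsilon\le\frac14$, this forces $D_{\mathrm{TV}}(\PP_u,\PP_v)\ge 1-2\epsilon\ge\frac12$. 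The remainder of the argument shows that this large total variation distance cannot hold unless $n$ is large.

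Consider first the case $m\ge 4n^2$, which is the regime where \pref{lem: TV bound} applies. That lemma gives $\frac12\le D_{\mathrm{TV}}(\PP_u,\PP_v)\le \frac{6n^2}{m}+4\sqrt n\Delta$, so at least one of the two terms on the right must be at least $\frac14$. If $\frac{6n^2}{m}\ge\frac14$, then $n\ge\sqrt{m/24}$. If instead $4\sqrt n\Delta\ge\frac14$, then $n\ge\frac{1}{256\Delta^2}$. In either case $n\ge c\min\{\sqrt m,1/\Delta^2\}$ for an absolute constant $c>0$.

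In the complementary case $m<4n^2$, we have directly $n>\sqrt m/2\ge\frac12\min\{\sqrt m,1/\Delta^2\}$, so the bound holds trivially.

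The two cases together give $n\ge\Omega(\min\{\sqrt m,1/\Delta^2\})$. We will also note that $|\calS|=2m+3$, so $\sqrt m=\Theta(\sqrt{|\calS|})$. This matches the form stated in \pref{thm: eps dep lower bound}, without the $1/\epsilon$ factor.

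All the substantive work is in the two earlier lemmas. The only points needing care here are the constants and the case split that handles the hypothesis $m\ge 4n^2$ of \pref{lem: TV bound}, and neither is a real obstacle.
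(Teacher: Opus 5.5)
Your proposal is correct and follows the paper's own argument: you chain \pref{lem: reduction to TV} with \pref{lem: TV bound} to get $\frac{6n^2}{m}+4\sqrt n\Delta\ge\frac12$, then split on which term is at least $\frac14$. Your explicit handling of the regime $m<4n^2$, where \pref{lem: TV bound} does not apply, is a small gain in rigor that the paper's two-line proof leaves implicit.
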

\begin{proof}
   By \pref{lem: reduction to TV} and \pref{lem: TV bound}, we have 
   \begin{align*}
       \E[J(\pi^\star) - J(\hat{\pi})] \geq \frac{1}{2}\left(1 - D_{\mathrm{TV}}(\PP_u, \PP_v)\right)\geq \frac{1}{2}\left(1 - \frac{6n^2}{m} - 4\sqrt{n}\Delta\right). 
   \end{align*}
   To make the left-hand side smaller than $\epsilon\leq \frac{1}{4}$, we need $\frac{6n^2}{m} + 4\sqrt{n}\Delta \geq \frac{1}{2}$, 
   implying that either $\frac{6n^2}{m}\geq \frac{1}{4}$ or $4\sqrt{n}\Delta\geq \frac{1}{4}$. Hence, $n\geq \Omega\big(\min\{\sqrt{m}, \frac{1}{\Delta^2}\}\big)$. 
\end{proof}

\begin{proof}[Proof of \pref{thm: eps dep lower bound}]
~\\
\textbf{$\epsilon$-dependent lower bound for $(s,a,r,s')$ data. }
    We start with proving the first part of the theorem, where the learner can only access $(s,a,r,s')$ data. We leverage the lower bound in \pref{lem: eps indep lower bound} (with the instances illustrated in \pref{fig:main}). Notice \pref{lem: eps indep lower bound} already proves this theorem for the special case $\epsilon=\frac{1}{4}$. 

    To lift that lower bound construction to an $\epsilon$-dependent bound for an arbitrary $\epsilon\in(0,\frac{1}{4}]$, we extend it in the way illustrated in \pref{fig: eps-dependent}. That is, we add an initial state $s_0$, on which the learner can only take one action $a_0$ that has an instantaneous reward of zero.  After taking $a_0$, with probability $p=4\epsilon$, the state transitions to $s_1$ (the initial state in the MDP instance of \pref{lem: eps indep lower bound}), and with probability $1-p=1-4\epsilon$, the state transitions to a chain of states $(z_1, z_2, z_3)$ on which there is only one action $a_0$ to choose and the reward is always zero.  In the previous construction, $\mu$ is induced by a behavior policy $\pi_b$. We will keep this behavior policy, as there is no decision to make on all other states we add. It is straightforward to check that $C^{\pi^\star}$ remains the same after this extension. For the function set $\calF$ defined in the previous construction (\pref{app: function construction}), we extend each $f\in\calF$ with $f(s_0, a_0) = p f(s_1)$ and $f(z_1,a_0) = f(z_2,a_0) = f(z_3,a_0)=0$. This keeps the function set to satisfy $Q^\star$-realizability and Bellman completeness. 

    With all the conditions satisfied, we check how many samples the learner needs to ensure $J(\pi^\star) - J(\hat{\pi}) \leq \epsilon$. Observe that 
    \begin{align*}
        J(\pi^\star) - J(\hat{\pi}) = p(V^\star(s_1) - V^{\hat{\pi}}(s_1)) = 4\epsilon (V^\star(s_1) - V^{\hat{\pi}}(s_1)). 
    \end{align*}
    In order to achieve $J(\pi^\star) - J(\hat{\pi}) \leq \epsilon$, the learner must ensure $V^\star(s_1) - V^{\hat{\pi}}(s_1)\leq \frac{1}{4}$. As already established in \pref{lem: eps indep lower bound}, this requires at least $\Omega\big(\min\{\sqrt{|\calS|}, \frac{1}{\Delta^2}\}\big)$ samples in the original MDP starting from $s_1$. However, in the extended MDP, $1-p$ portion of the offline data are trivial, and with only probability $p$ can the learner get any samples from the original MDP. Therefore, the total number of samples the learner needs is of order $\Omega\big(\frac{1}{p}\min\{\sqrt{|\calS|}, \frac{1}{\Delta^2}\}\big) = \Omega\big(\frac{1}{\epsilon}\min\{\sqrt{|\calS|}, \frac{1}{\Delta^2}\}\big)$. This proves the first claim of the theorem.

\textbf{$\epsilon$-dependent lower bound for trajectory data. } To prove a lower bound under trajectory feedback, we further extend the construction with a reduction established in \cite{jia2024offline}. We note that the reason why trajectory breaks the lower bound in the instances in \pref{fig:main} or \pref{fig: eps-dependent} is because with trajectory data, the learner can easily tell whether it is $u$ or $v$ that leads to $\sB$ by just checking which state the trajectory leads to after two steps. This is contrary to the case of $(s,a,r,s')$ data where the state is essentially ``re-sampled'' from $\mu$ after the state transitions to $\WA\cup \WB$.

\begin{figure}[t]
    \centering
    \includegraphics[width=0.65\textwidth]{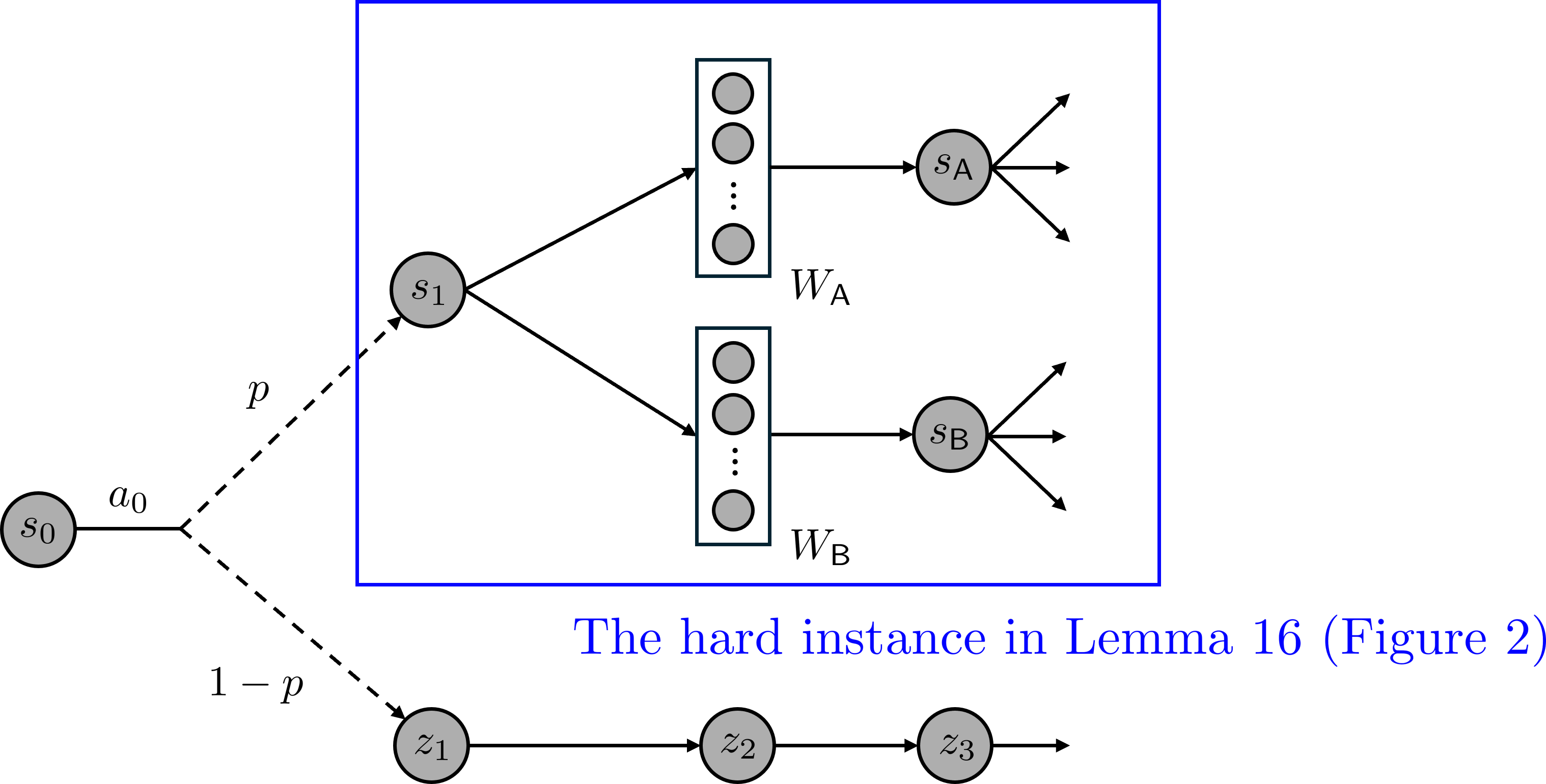}
    \caption{Construction for $\epsilon$-dependent lower bound with non-trajectory data} \label{fig: eps-dependent}
\end{figure}

    The idea of \cite{jia2024offline} (in their Section~4.2) is to mimic such re-sampling under trajectory feedback. In our case, we extend the construction in \pref{fig: eps-dependent} by repeating the middle layers for $\Theta(H)$ times, resulting in \pref{fig: trajecto}. Denote $W_h=W_{h,\mathsf{A}}\cup W_{h,\mathsf{B}}$ for $h=3,4,\ldots, H-1$. There are two actions on states in $W_3, W_4, \ldots, W_{H-2}$.  If taking action~1 (indicated by the blue arrows in \pref{fig: trajecto}), the next state is drawn uniformly from \emph{the same group} on the next layer; if taking action~2 (indicated by the red arrows in \pref{fig: trajecto}), the next state is drawn uniformly from the \emph{union of the two groups.} We let action~1 have an instantaneous reward of $0$,  and action~2 have an instantaneous reward of $-2$. This implies that the optimal action in $W_3, W_4, \ldots, W_{H-2}$ is always action~1. Also, we extend any function $f$ in the function set (defined in \pref{app: function construction}) by letting $f(w, 1)=1$ and $f(w,2) = -2+1=-1$ for $w\in W_3 \cup \cdots \cup W_{H-2}$. For the behavior policy, we let $\pi_b(1|w) = \pi_b(2|w) = \frac{1}{2}$ for $w\in W_3 \cup \cdots \cup W_{H-2}$. 

    This construction makes $C^{\pi^\star}\leq 4$. This is because the state distribution coverage $\frac{d^{\pi^\star}(s)}{H\mu(s)}\leq 2$ remains, and thus $\frac{d^{\pi^\star}(s,a)}{H\mu(s,a)}= \frac{d^{\pi^\star}(s)\pi^\star(a|s)}{H\mu(s)\pi_b(a|s)}\leq 2\times 2=4$ due to $\pi_b(1|w)=\frac{1}{2}$ for $w\in W_{3}\cup \cdots W_{H-2}$. Furthermore, it keeps realizability and Bellman completeness. 
    
    Because $\pi_b$ chooses both actions with equal probability, for each trajectory \emph{that passes through $s_1$}, with probability $1-2^{-(H-4)}$,  action~2 is chosen at least once during layers $h=3, \ldots, H-2$. Furthermore, if action~2 is chosen in any of these layers, the state distribution on layer $H-1$ will keep no information of the action chosen on~$s_1$. This exactly mimics the ``re-sampling'' scenario when the learner only has $(s,a,r,s')\sim \mu$ data. 

    If the number of trajectories passing through $s_1$ is smaller than $2^{H-4}$, then with a constant probability, action~2 is chosen at least once in all these trajectories. In this case, the previous hardness argument for $(s,a,rs')$ applies, and the learner requires at least $\Omega\big(\frac{1}{\epsilon} \min\{\sqrt{m}, \frac{1}{\Delta^2}\}\big) = \Omega\big(\frac{1}{\epsilon} \min\{\sqrt{|\calS|/H}, \frac{1}{\Delta^2}\}\big)$ samples.  This concludes that in this new construction, the learner requires at least $\Omega\big(\frac{1}{\epsilon \poly(H)} \min\{2^H, \sqrt{|\calS|}, \frac{1}{\Delta^2}\}\big)$ trajectories to learn an $\epsilon$-optimal policy. 
\end{proof}

We note that our gap-dependent lower bound is quite different from those in online RL or simpler offline RL settings \citep{wang2022gap, nguyen2023instance}. In those settings, if the learner cannot tell apart two actions with gap $\Delta$, the sub-optimality suffered is only of order $\Delta$. In our case, before the learner tells apart two actions of gap $\Delta$, the learner suffers $\Theta(1)$ sub-optimality. This difference also shows in the sample complexity bounds: in prior settings, the learner can achieve $O(\frac{1}{\epsilon^2})$ sample complexity regardless of the value gap. In contrast, the sample complexity in \pref{thm: eps dep lower bound} scales with $\frac{1}{\epsilon\Delta^2}$, which can become arbitrarily large even under constant $\epsilon$. We remark that when $\calF$ is the linear function class with a known feature mapping, polynomial sample complexity without dependence on $\Delta$ is possible under partial coverage and Bellman completeness \citep{golowichrole}, but their result heavily relies on the linear structure.

\begin{figure}[t]
    \centering
    \includegraphics[width=0.9\textwidth]{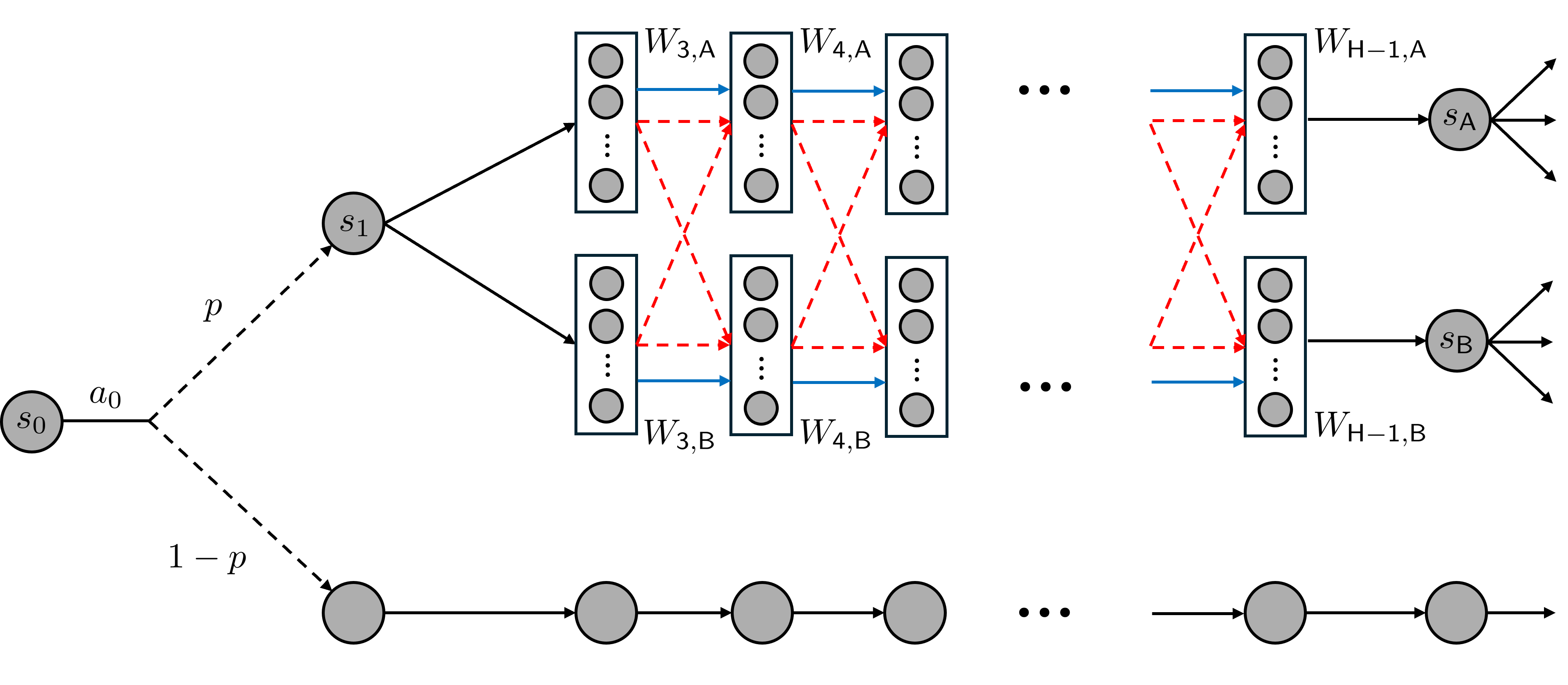}
    \caption{Construction for $\epsilon$-dependent lower bound with trajectory data (extended from \pref{fig: eps-dependent}). The blue arrows indicate the transition of action~1, which always leads to a uniform distribution \emph{over the same group} on the next layer. The red arrows indicate the transition of action~2, which always leads to a uniform distribution  \emph{over all states} on the next layer. }
    \label{fig: trajecto}
\end{figure}

\section{Competing with An Arbitrary Comparator} \label{app: arbitrary comparator}

A stronger and more desirable guarantee in offline RL under partial coverage is to compete with an arbitrary comparator policy $\picomp$, with the performance gap adapting to the coverage $C^{\picomp}$. Such guarantees have been established under model realizability \cite{uehara2021pessimistic} and $Q^\pi$-realizability \cite{xie2021bellman}. However, under $Q^\star$-realizability, this type of guarantee is fundamentally harder to obtain. Below, we present a pathological example showing that even in a seemingly simple setting, it is impossible to compete with an arbitrary $\picomp$.

Recall that the hardness of competing with $\pi^\star$ in \pref{thm: eps dep lower bound} relies on an extreme small or zero value gap $V^\star(s)-\max_{a\neq \pi^\star(s)}Q^\star(s,a)$. Below, we show that when the goal is to compete with an arbitrary comparator policy $\picomp$, the hardness holds even when this value gap is $\Theta(1)$.

To show this, we use the same MDP structure in \pref{fig:main} with $r(s_1, u) \sim \Ber(\frac{1}{2} + 0.1)$, $r(s_1, v) \sim \Ber(\frac{1}{2})$ and other rewards remain the same. For every MDP, let the comparator policy be the one that always goes to the lowest branch and take action $z$ on $s_B$. Specifically, for $\calM_{u,x}$ and $\calM_{u,y}$, we consider $\picomp(s_1) = v$ and $\picomp(s_B)= z$, which is not the optimal policy now because the optimal policies for $\calM_{u,x}$ and $\calM_{u,y}$ take $u$ on $s_1$ and take $x$ and $y$ on $s_A$, respectively. For $\calM_{v,x}$ and $\calM_{v,y}$, we consider $\picomp(s_1) = u$ and $\picomp(s_B)= z$, which remains to be the optimal policy. We assume the offline data remain the same, so $C^{\picomp}$ is small for any MDP although the optimal policy may not be covered. In this case, when we try to learn a policy $\hat{\pi}$ such that $\E[J(\picomp) - J(\hat{\pi})]\leq \epsilon$, the constant gap on $u,v$ make it possible to distinguish them with constant samples. However, even if the learner can distinguish $u$ and $v$, they still do not know which action leads to the lowest branches unless $\text{poly}(m)$ samples are observed. Without such information, a random guess will lead to constant suboptimality gap.

However, we may still extend our algorithm and the associated decision complexity to accommodate an arbitrary comparator policy $\picomp$. As shown in \pref{alg:arb-dec}, given a policy class $\Pi$ that contains our target comparator policy, we solve a minimax problem against the joint worst case over both the environment model and the comparator policy. Given the above pathological example,  the associated decision complexity under $Q^\star$-realizability may not be bounded.  However, under $Q^\pi$-realizability and policy Bellman completeness assumptions, with $\calF_{\rm conf}$ chosen to match the confidence set in \cite{xie2021bellman}, \pref{alg:arb-dec} recovers their guarantees.
\begin{algorithm}[t]
    \caption{Offline Robust Estimation-to-Decision with Arbitrary Compartor}
    \label{alg:arb-dec}
    \textbf{Input:} Confidence set $\calF_{\mathrm{conf}}$, policy class $\Pi$ divergence measure $D^\pi(f\|M)$, parameter $\gamma$ for the offset version.   \\
    Define $\calM_\conf = \{M:~ Q_M^\star\in\calF_\conf\}$ and compute
    \begin{align*}
        \rhohat = 
        \begin{cases}
        \displaystyle\argmin_{\rho\in\Delta(\Pi)} \max_{(M, \picomp) \in\calM_\conf \times \Pi} \E_{\pi\sim \rho}\left[J_M(\picomp) - J_M(\pi) - \gamma \max_{f\in\calF_\conf} D^{\picomp}(f\|M)\right]  &\text{(offset version)}\\[15pt]
        \displaystyle\argmin_{\rho\in\Delta(\Pi)} \max_{(M, \picomp)\in \calM_\conf \times \Pi} \frac{J_M(\picomp) - \E_{\pi\sim \rho} [J_M(\pi)]}{\left(\max_{f\in\calF_\conf}D^{\picomp}(f\|M)\right)^{1/2}}.  &\text{(ratio version)} 
        \end{cases}
    \end{align*}
    \textbf{Output:} mixture policy $\hat{\pi}\sim \rhohat$.    
\end{algorithm}

\section{Proofs in \pref{sec:unify}}\label{app: section 4 proof}

\begin{lemma}
$\odecoff_{\gamma}^D(\calF_\conf) \le \frac{4}{\gamma}\left(\odecratio^D(\calF_\conf)\right)^2$
\label{lem:offset-ratio conn}.
\end{lemma}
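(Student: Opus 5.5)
The plan is to fix a single distribution $\rho$ and show that it certifies the offset bound through an AM--GM step. Let $\rho^\star$ attain (or nearly attain, up to an arbitrary slack that we send to zero) the minimum in the definition of $\odecratio^D(\calF_\conf)$, and write $R := \odecratio^D(\calF_\conf)$. Since $\odecoff_\gamma^D$ is a minimum over $\rho$, it is at most the inner maximum evaluated at $\rho^\star$. It therefore suffices to show that for every $M\in\calM_\conf$,
\begin{align*}
J_M(\pi_M) - \E_{\pi\sim\rho^\star}[J_M(\pi)] - \gamma\, \overline{D}_M \le \frac{4}{\gamma}R^2, \qquad \text{where } \overline{D}_M := \max_{f\in\calF_\conf} D^{\pi_M}(f\|M).
\end{align*}
The expectation of the offset objective is linear in $\rho$, and $\overline{D}_M$ does not depend on $\pi$. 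Hence the offset payoff at $\rho^\star$ equals the gap $G_M := J_M(\pi_M) - \E_{\rho^\star}[J_M(\pi)]$ minus $\gamma\overline{D}_M$.

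Fix $M$. By the choice of $\rho^\star$ we have $G_M \le R\,\overline{D}_M^{1/2}$. We split into two cases.
\begin{itemize}
\item If $G_M \le 0$, the left-hand side is nonpositive, so the bound holds trivially.
\item If $G_M > 0$ and $\overline{D}_M > 0$, apply AM--GM: $R\,x - \gamma x^2 \le \frac{R^2}{4\gamma}$ with $x = \overline{D}_M^{1/2}$. This gives $G_M - \gamma\overline{D}_M \le \frac{R^2}{4\gamma} \le \frac{4}{\gamma}R^2$.
\end{itemize}

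The only delicate point is the degenerate case $\overline{D}_M = 0$ with $G_M > 0$. Here the ratio is $+\infty$, so $R = \infty$ and the claimed bound is vacuous. Likewise, if $G_M\le 0$ and $\overline{D}_M=0$, the ratio is $0/0$; we adopt the convention that this counts as $\le R$, and the offset term is then nonpositive anyway. We will state this convention explicitly.

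The remaining issue is attainment of the minimum over $\rho$. If $R$ is only an infimum, we run the argument with $\rho_\eta$ satisfying ratio at most $R+\eta$, then let $\eta\to 0$. The stated constant $4$ is looser than the $1/4$ this argument yields, so there is ample room.
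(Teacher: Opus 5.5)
Your proof is correct and is essentially the paper's argument. Both apply AM--GM model by model to the product of the ratio and $(\max_{f\in\calF_\conf} D^{\pi_M}(f\|M))^{1/2}$; the paper does this for every $\rho$ and then passes to the min--max, while you fix the ratio-optimal $\rho$ first. Your version gives the sharper constant $\frac{1}{4\gamma}$ and handles the zero-discrepancy and non-attainment cases explicitly. Note also that $G_M\ge 0$ always, since $\pi_M$ is optimal for $M$, so your first case is just $G_M=0$.
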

\begin{proof}
For any $M \in \calM_\conf$, we have
\begin{align*}
     &\E_{\pi \sim \rho}\left[J_M(\pi_M) - J_M(\pi)\right] 
     \\&= \frac{J_M(\pi_M) - \E_{\pi \sim \rho}\left[J_M(\pi) \right]}{\sqrt{\max_{f \in \calF_{\rm conf}} D^{\pi_M}\left(f\| M\right)}} \sqrt{ \max_{f \in \calF_{\rm conf}} D^{\pi_M}\left(f\| M\right) }
     \\&\le \frac{4}{\gamma}\frac{\left(J_M(\pi_M) - \E_{\pi \sim \rho}\left[J_M(\pi) \right]\right)^2}{ \max_{f \in \calF_{\rm conf}}  D^{\pi_M}\left(f\| M\right)} + \gamma \max_{f \in \calF_{\rm conf}} D^{\pi_M}\left(f\| M\right) \tag{AM-GM}
\end{align*}
Since $\frac{J_M(\pi_M) - \E_{\pi\sim \rho} [J_M(\pi)]}{\left(\max_{f\in\calF_\conf}D^{\pi_M}(f\|M)\right)^{1/2}} > 0$, we have $  \odecoff_{\gamma}^D(\calF_\conf) \le \frac{4}{\gamma}\left(\odecratio^D(\calF_\conf)\right)^2$.
\end{proof}

\begin{proof}[Proof of \pref{thm: main}]
For the offset version, we have
\begin{align*}
    &J(\pi^\star) - \E_{\pi \sim \hat{\rho}}[J(\pi)] 
    \\&= J(\pi^\star) - \E_{\pi \sim \hat{\rho}}[J(\pi)] -  \gamma \max_{f\in\calF_\conf} D^{\pi^\star}(f\|M^\star) +  \gamma \max_{f\in\calF_\conf} D^{\pi^\star}(f\|M^\star)
    \\&\le \max_{M \in \calM_{\rm conf}}\left\{J_M(\pi_M) - \E_{\pi \sim \hat{\rho}}[J_M(\pi)] -  \gamma \max_{f\in\calF_\conf} D^{\pi_M}(f\|M)\right\} +  \gamma \max_{f\in\calF_\conf} D^{\pi^\star}(f\|M^\star)  \tag{$M^\star \in M_{\rm conf}$ from $Q^\star \in \calF_{\rm conf}$}
    \\&= \min_{\rho \in \Delta(\Pi)}\max_{M \in \calM_{\rm conf}}\left\{J_M(\pi_M) - \E_{\pi \sim \rho}[J_M(\pi)] -  \gamma \max_{f\in\calF_\conf} D^{\pi_M}(f\|M)\right\} +  \gamma \max_{f\in\calF_\conf} D^{\pi^\star}(f\|M^\star)  \tag{by the choice of $\hat{\rho}$}
    \\&= \odecoff_\gamma^D(\calF_\conf) + \gamma \max_{f\in\calF_\conf} D^{\pi^\star}(f\|M^\star).
\end{align*}
For the ratio version, we have
\begin{align*}
        &J(\pi^\star) - \E_{\pi \sim \hat{\rho}}[J(\pi)] 
        \\&= \frac{J(\pi^\star) - \E_{\pi \sim \hat{\rho}}[J(\pi)]}{\left(\max_{f\in\calF_\conf} D^{\pi^\star}(f\|M^\star)\right)^{\frac{1}{2}}} \left(\max_{f\in\calF_\conf} D^{\pi^\star}(f\|M^\star)\right)^{\frac{1}{2}}
        \\&\le \max_{M \in \calM_{\rm conf}}\frac{J_M(\pi_M) - \E_{\pi \sim \hat{\rho}}[J(\pi)]}{\left(\max_{f\in\calF_\conf} D^{\pi_M}(f\|M)\right)^{\frac{1}{2}}} \left(\max_{f\in\calF_\conf} D^{\pi^\star} (f\|M^\star)\right)^{\frac{1}{2}} \tag{$M^\star \in M_{\rm conf}$ from $Q^\star \in \calF_{\rm conf}$}
        \\&= \min_{\rho \in \Delta(\Pi)}\max_{M \in \calM_{\rm conf}}\frac{J_M(\pi_M) - \E_{\pi \sim \rho}[J(\pi)]}{\left(\max_{f\in\calF_\conf} D^{\pi_M}(f\|M)\right)^{\frac{1}{2}}} \left(\max_{f\in\calF_\conf} D^{\pi^\star}(f\|M^\star)\right)^{\frac{1}{2}}
        \\&= \odecratio^D(\calF_\conf)\left(\max_{f\in\calF_\conf} D^{\pi^\star}(f\|M^\star)\right)^{\frac{1}{2}}.
\end{align*}
\end{proof}

The guarantee of value-centered pessimism \pref{eq: value centr} is given by 
\begin{theorem}\label{thm: pessimism}
    If $Q^\star\in\calF_\conf$, then \algpes ensures $$ J(\pi^\star) - J(\hat{\pi}) \leq\pesratio^D(\calF_\conf) \big(D^{\pi^\star}(\hat{f}\|M^\star)\big)^{1/2}. $$
\end{theorem}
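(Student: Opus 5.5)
The argument mirrors the ratio-version proof of \pref{thm: main}. The only difference is that the learner's decision is the fixed policy $\hat{\pi}=\pi_{\hat f}$ rather than a minimizing distribution $\rho$. The comparison is made against the single function $\hat f=\argmin_{f\in\calF_\conf}f(s_1)$, not against the worst $f\in\calF_\conf$.

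First, note that $Q^\star\in\calF_\conf$ implies $M^\star\in\calM_\conf=\{M:Q^\star_M\in\calF_\conf\}$, and $\pi_{M^\star}=\pi^\star$. We may assume $J(\pi^\star)-J(\hat\pi)>0$; otherwise the claim is trivial, since the right-hand side is nonnegative. We also need $D^{\pi^\star}(\hat f\|M^\star)>0$; otherwise we read the ratio under the natural convention $c/0=+\infty$ for $c>0$, as is implicit in the definition of $\pesratio^D$.

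Write
\[
J(\pi^\star)-J(\hat\pi)=\frac{J_{M^\star}(\pi_{M^\star})-J_{M^\star}(\pi_{\hat f})}{\big(D^{\pi_{M^\star}}(\hat f\|M^\star)\big)^{1/2}}\cdot\big(D^{\pi^\star}(\hat f\|M^\star)\big)^{1/2}.
\]
The first factor is exactly the objective in \eqref{eq: pesdec 2} evaluated at the particular model $M=M^\star\in\calM_\conf$. It is therefore bounded by the maximum over $\calM_\conf$, which is $\pesratio^D(\calF_\conf)$. Multiplying through gives the claim.

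The only subtle point is the degenerate case $D^{\pi^\star}(\hat f\|M^\star)=0$ while the gap is positive. Under the $+\infty$ convention, $\pesratio^D=\infty$ and the bound holds vacuously. Alternatively, one could state the result with the inequality written in multiplied-out form to avoid dividing by zero.

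No use of the pessimistic choice of $\hat f$ is needed for this inequality itself; pessimism only enters when bounding $\pesratio^D$ (as in \pref{thm: gap bound}). So there is no real obstacle beyond this bookkeeping.
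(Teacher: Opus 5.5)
Your proof is correct and is exactly the paper's argument: multiply and divide by $\big(D^{\pi^\star}(\hat f\|M^\star)\big)^{1/2}$, then bound the ratio at $M^\star\in\calM_\conf$ by the maximum that defines $\pesratio^D(\calF_\conf)$. Your treatment of the degenerate case $D^{\pi^\star}(\hat f\|M^\star)=0$ goes beyond the paper, which divides without comment; note that it works only if the right-hand side $\infty\cdot 0$ is read as $+\infty$, so it is a convention rather than an argument.
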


\begin{proof}[Proof of \pref{thm: pessimism}]
We have
\begin{align*}
    J(\pi^\star) - J(\pi_{\hat{f}}) &= \frac{J(\pi^\star) - J(\pi_{\hat{f}})}{\left(D^{\pi^\star}(\hat{f}\|M^\star)\right)^{\frac{1}{2}}}\left(D^{\pi^\star}(\hat{f}\|M^\star)\right)^{\frac{1}{2}}
    \\&\le \max_{M\in\calM_\conf}\frac{J_M(\pi_M) - J_M(\pi_{\hat{f}})}{\big(D^{\pi_M}(\hat{f}\|M)\big)^{1/2}} \big(D^{\pi^\star}(\hat{f}\|M^\star)\big)^{1/2} \tag{$M^\star \in M_{\rm conf}$ from $Q^\star \in \calF_{\rm conf}$}
    \\&= \pesratio^D(\calF_\conf) 
    \big(D^{\pi^\star}(\hat{f}\|M^\star)\big)^{1/2}.
\end{align*}
\end{proof}

\section{Proofs in \pref{sec:decision-error}}
\label{app:decision}

To connect $\odec$ with complexities established in prior work, we define the \emph{Exploitability Ratio}: 
\begin{definition}[Exploitability Ratio]\label{def: exploitability}
For $f\in\calF_\conf$, define
\begin{alignat*}{3}
\gapcomp(f; \calF_\conf)\,&=&
\max_{M\in\calM_\conf}
\frac{J_M(\pi_M) - J_M(\pi_f)}
{\E_{(s,a)\sim d^{\pi_M}_M}[f(s) - f(s,a) + \psi(\pi_M;s)]}
\end{alignat*}
with $\frac{0}{0}\triangleq 0$. If $\pi_f$ or $\pi_M$ is not unique, choose the one that makes this ratio the largest. 
\end{definition}
The ratio $\gapcomp(f;\calF_\conf)$ quantifies how much the greedy policy $\pi_f$ may be exploited by another model $M\in\calM_\conf$ (i.e., $J_M(\pi_M) - J_M(\pi_f)$ in the numerator), relative to how much $f$ thinks $\pi_f$ is better than $\pi_M$ (i.e., $f(s) - f(s,\pi_M(s))+ \psi(\pi_M; s) $ in the denominator).
The next theorem shows performance bounds achieved by \algoff and \algpes in terms of $\gapcomp$: 
\begin{theorem}\label{thm: dec to ER}
Let $\hat{f}=\argmin_{f\in\calF_\conf} f(s_1)$ and assume $Q^\star\in\calF_\conf$. Then the following hold: 
\begin{align}
    \odecoff_{\gamma}^\av(\calF_\conf) & \textstyle \leq \frac{1}{2\gamma}\gapcomp(Q^\star;~\calF_\conf)^2 + \gamma \max_{f\in\calF_\conf}\Dav^{\pi^\star}(f \| M^\star),    \label{eq: bound 11}\\
    \odecratio^\av(\calF_\conf) &\leq \gapcomp(\hat{f}; \calF_\conf),  \label{eq: bound 22} \\
    \pesratio^\av(\calF_\conf) &\leq \gapcomp(\hat{f}; \calF_\conf).   \label{eq: bound 33}
\end{align}
\end{theorem}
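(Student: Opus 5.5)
The whole proof rests on one telescoping identity, which I will establish first. For any model $M$, any $f\in\calF$ and any policy $\pi$, write $\mathrm{BE}^{\pi}_M(f):=\E_{(s,a)\sim d^{\pi}_M}[f(s,a)-R_M(s,a)-\E_{s'\sim P_M(\cdot|s,a)}f(s')]$, with the convention $f(s')=0$ past layer $H$. Telescoping $f(s_h)$ along trajectories of $\pi$ in $M$ gives
\begin{align*}
f(s_1)-J_M(\pi)=\mathrm{BE}^{\pi}_M(f)+\E_{(s,a)\sim d^{\pi}_M}\big[f(s)-f(s,a)+\psi(\pi;s)\big],
\end{align*}
and $\Dav^{\pi}(f\|M)=\mathrm{BE}^\pi_M(f)^2$. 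By the definition of $f(s)$ as a maximum over $p\in\Delta(\calA)$, the last expectation is nonnegative. When $\pi=\pi_M$ it is exactly the denominator of $\gapcomp(f;\calF_\conf)$; call it $G_M(f)\ge 0$. Hence $J_M(\pi_M)-J_M(\pi_f)\le \gapcomp(f;\calF_\conf)\,G_M(f)$ for every $M\in\calM_\conf$, where the convention $0/0=0$ handles the degenerate case (I will check that $G_M(f)=0$ forces the numerator to vanish, or otherwise treat that case via the tie-breaking convention).

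For \eqref{eq: bound 33}, fix $M\in\calM_\conf$. Since $Q_M^\star\in\calF_\conf$ and $\hat f$ minimizes $f(s_1)$ over $\calF_\conf$, we have $\hat f(s_1)\le Q^\star_M(s_1)=J_M(\pi_M)$. The identity with $\pi=\pi_M$ then gives $0\le G_M(\hat f)\le -\mathrm{BE}^{\pi_M}_M(\hat f)$, so $(\Dav^{\pi_M}(\hat f\|M))^{1/2}\ge G_M(\hat f)$. Dividing,
\[
\frac{J_M(\pi_M)-J_M(\pi_{\hat f})}{(\Dav^{\pi_M}(\hat f\|M))^{1/2}}\le\frac{J_M(\pi_M)-J_M(\pi_{\hat f})}{G_M(\hat f)}\le\gapcomp(\hat f;\calF_\conf),
\]
and maximizing over $M$ yields the claim. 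For \eqref{eq: bound 22}, I plug $\rho=\delta_{\pi_{\hat f}}$ into \eqref{eq: offdec 2}. Since $\hat f\in\calF_\conf$, the denominator satisfies $\max_{f}\Dav^{\pi_M}(f\|M)\ge\Dav^{\pi_M}(\hat f\|M)$, so $\odecratio^\av\le\pesratio^\av$, and \eqref{eq: bound 33} finishes it.

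For \eqref{eq: bound 11}, I plug $\rho=\delta_{\pi_{Q^\star}}$ into \eqref{eq: offdec 1}. For each $M\in\calM_\conf$ the suboptimality is at most $\gapcomp(Q^\star;\calF_\conf)\,G_M(Q^\star)$, and the identity gives $G_M(Q^\star)=Q^\star(s_1)-J_M(\pi_M)-\mathrm{BE}^{\pi_M}_M(Q^\star)$. The main obstacle is the term $Q^\star(s_1)-J_M(\pi_M)$, which is not signed a priori. To control it, I apply the identity in the true model $M^\star$ with policy $\pi^\star$ and function $Q_M^\star\in\calF_\conf$:
\[
Q^\star_M(s_1)-J(\pi^\star)\ge \mathrm{BE}^{\pi^\star}_{M^\star}(Q^\star_M).
\]
Using $J(\pi^\star)=Q^\star(s_1)$ and $Q^\star_M(s_1)=J_M(\pi_M)$, this gives
\[
Q^\star(s_1)-J_M(\pi_M)\le a:=\big(\max_{f\in\calF_\conf}\Dav^{\pi^\star}(f\|M^\star)\big)^{1/2}.
\]
Also $-\mathrm{BE}^{\pi_M}_M(Q^\star)\le b:=(\max_{f\in\calF_\conf}\Dav^{\pi_M}(f\|M))^{1/2}$. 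Writing $E=\gapcomp(Q^\star;\calF_\conf)$, AM--GM gives $E(a+b)\le \frac{E^2}{4\gamma}+\gamma a^2+\frac{E^2}{4\gamma}+\gamma b^2$. The $\gamma b^2$ term cancels the penalty in \eqref{eq: offdec 1}, leaving $\frac{E^2}{2\gamma}+\gamma\max_f\Dav^{\pi^\star}(f\|M^\star)$ uniformly over $M$, which is \eqref{eq: bound 11}.
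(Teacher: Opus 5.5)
Your proof is correct and follows essentially the paper's route. It uses the same telescoping identity (the paper's \eqref{eq: same calc}); pessimism of $\hat f$ then gives exactly \pref{lem: pessimismc solution gua} for \eqref{eq: bound 22}--\eqref{eq: bound 33}, and applying the identity in $M$ with $Q^\star$ and in $M^\star$ with $Q^\star_M$, plus AM--GM, gives \eqref{eq: bound 11}. The only difference is cosmetic: you bound $G_M(Q^\star)\le a+b$ linearly and apply AM--GM termwise, whereas the paper sums the two identities and squares (\pref{lem: symmetric}) to get $\tfrac12 G_M(Q^\star)^2\le a^2+b^2$ before a single AM--GM, with the same constant; also, your hedge about $G_M(f)=0$ is unnecessary, since a positive numerator there simply makes $\gapcomp=\infty$ and the bounds hold trivially.
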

Before proving \pref{thm: dec to ER}, we first provide auxiliary \pref{lem: symmetric} and \pref{lem: pessimismc solution gua}.

\begin{lemma}\label{lem: symmetric}
   Let $M, M'$ be two models and $f=Q_M^\star, f'=Q_{M'}^\star$. Then 
   \begin{align*}
       &\Dav^{\pi_M}(f'\|M) + \Dav^{\pi_{M'}}(f\|M') \\&\geq \frac{1}{2}\left(\E^{\pi_M, M}[f'(s)-f'(s,a) + \psi(\pi_M;s)] + \E^{\pi_{M'}, M'}[f(s)-f(s,a) + \psi(\pi_{M'};s)]\right)^2 
   \end{align*}
 where $\E^{\pi, M}[g(s,a)]$ denotes $\E_{(s,a)\sim d^\pi_M}[g(s,a)]$. 
\end{lemma}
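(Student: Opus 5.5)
The plan is to compute each average Bellman error exactly by telescoping along the trajectory of the model's own optimal policy. The two resulting signed quantities sum to exactly minus the sum of the two expectations on the right-hand side. The elementary inequality $x^2+y^2\ge \frac12(x+y)^2$ then finishes the proof.

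\textbf{Step 1 (telescoping).} Fix $M$ and write $E_M := \E_{(s,a)\sim d^{\pi_M}_M}\big[f'(s,a)-R_M(s,a)-\E_{s'\sim P_M(\cdot|s,a)}[f'(s')]\big]$, so that $\Dav^{\pi_M}(f'\|M)=E_M^2$. Here we use the convention of the Bellman equations in \pref{sec:pre}: the next-state term is multiplied by $\mathbb{I}[s\notin\calS_H]$, i.e.\ $f'(s_{H+1})\equiv 0$. Because the state space is layered and $d^{\pi_M}_M$ is the sum of the per-layer distributions, we can write $E_M$ as an expectation over a trajectory $(s_1,a_1,\dots,s_H,a_H)$ generated by $\pi_M$ in $M$. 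The terms $-f'(s_{h+1})$ and $+f'(s_{h+1})$ telescope, giving
\begin{align*}
E_M = f'(s_1) + \E^{\pi_M,M}\Big[\textstyle\sum_h \big(f'(s_h,a_h)-f'(s_h)\big)\Big] - \E^{\pi_M,M}\Big[\textstyle\sum_h R_M(s_h,a_h)\Big].
\end{align*}

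\textbf{Step 2 (identify the reward sum).} By definition of the regularized value,
\[
\E^{\pi_M,M}\Big[\textstyle\sum_h R_M\Big] = J_M(\pi_M) + \E^{\pi_M,M}\big[\psi(\pi_M;s)\big].
\]
Since $f=Q^\star_M$ and $\pi_M$ is optimal for $M$, we have $J_M(\pi_M)=V^\star_M(s_1)=f(s_1)$, where $f(s_1)$ is the regularized max from \pref{def: Q realiability}. Substituting,
\[
E_M = f'(s_1)-f(s_1) - \E^{\pi_M,M}\big[f'(s)-f'(s,a)+\psi(\pi_M;s)\big].
\]
By the symmetric argument, with the roles of $(M,f)$ and $(M',f')$ swapped,
\[
E_{M'} = f(s_1)-f'(s_1) - \E^{\pi_{M'},M'}\big[f(s)-f(s,a)+\psi(\pi_{M'};s)\big].
\]

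\textbf{Step 3 (combine).} Adding the two identities, the initial-value terms $f'(s_1)-f(s_1)$ and $f(s_1)-f'(s_1)$ cancel. Hence $E_M+E_{M'}$ equals minus the sum of the two expectations in the lemma. Since
\[
\Dav^{\pi_M}(f'\|M)+\Dav^{\pi_{M'}}(f\|M') = E_M^2+E_{M'}^2 \ge \tfrac12 (E_M+E_{M'})^2,
\]
the claim follows.

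The only delicate point is Step 2. One must make sure that the $\psi$ term enters with the correct sign: $\psi$ is subtracted inside $V$, so the reward sum equals $J_M(\pi_M)$ plus the accumulated regularization. One must also check that $f(s_1)$, as defined through the regularized argmax in \pref{def: Q realiability}, indeed coincides with $V^\star_M(s_1)=J_M(\pi_M)$. Both follow directly from the regularized Bellman optimality equations stated in \pref{sec:pre}.
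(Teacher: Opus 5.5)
Your proof is correct and follows essentially the same route as the paper. You telescope each average Bellman error along $\pi_M$'s trajectory in $M$, use $J_M(\pi_M)=f(s_1)$ so that the initial-value terms cancel when the two identities are summed, and finish with $(x+y)^2\le 2(x^2+y^2)$; the only difference is that your $E_M$ has the opposite sign from the paper's expression, which is immaterial after squaring.
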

\begin{proof}
\begin{align}
    &f(s_1) - f'(s_1) 
    \\&= J_M(\pi_M) - f'(s_1) \nonumber \\
    &=  \E^{\pi_M, M}\left[R_M(s,a)  - \psi(\pi_M; s)  \right] + \E^{\pi_M, M}\left[ -f'(s) + \E_{s'\sim P_M(\cdot|s,a)} [f'(s')] \right] \nonumber\\
    &= \E^{\pi_M, M}\left[ -f'(s,a) + R_M(s,a) + \E_{s'\sim P_M(\cdot|s,a)} [f'(s')]  \right] - \E^{\pi_M, M}\left[ f'(s) - f'(s,a) + \psi(\pi_M; s)\right]. \label{eq: same calc}
\end{align}
Similarly, 
\begin{align*}
    &f'(s_1) - f(s_1) 
    \\&= \E^{\pi_{M'}, M'}\left[ -f(s,a) + R_{M'}(s,a) + \E_{s'\sim P_{M'}(\cdot|s,a)} [f(s')]  \right] - \E^{\pi_{M'}, M'}\left[ f(s) - f(s,a) + \psi(\pi_{M'};s)\right]. 
\end{align*}
Summing up the two equalities, we get 
\begin{align*}
    &\E^{\pi_M, M}\left[ f'(s) - f'(s,a) + \psi(\pi_M;s)\right] + \E^{\pi_{M'}, M'}\left[ f(s) - f(s,a) + \psi({\pi_{M'}};s)\right]  \\
    &= \E^{\pi_M, M}\left[ -f'(s,a) + R_M(s,a) + \E_{s'\sim P_M(\cdot|s,a)} [f'(s')]  \right] \\
    &\qquad + \E^{\pi_{M'}, M'}\left[ -f(s,a) + R_{M'}(s,a) + \E_{s'\sim P_{M'}(\cdot|s,a)} [f(s')]  \right]. 
\end{align*}
Squaring both sides and using $(x+y)^2\leq 2(x^2+y^2)$: 
\begin{align*}
    &\left(\E^{\pi_M, M}\left[ f'(s) - f'(s,a) + \psi(\pi_M; s)\right] + \E^{\pi_{M'}, M'}\left[ f(s) - f(s,a) + \psi(\pi_{M'};s)\right]\right)^2 \\
    &\leq 2\left( \E^{\pi_M, M}\left[ -f'(s,a) + R_M(s,a) + \E_{s'\sim P_M(\cdot|s,a)} [f'(s')]  \right] \right)^2 \\
    &\qquad + 2\left(\E^{\pi_{M'}, M'}\left[ -f(s,a) + R_{M'}(s,a) + \E_{s'\sim P_{M'}(\cdot|s,a)} [f(s')]  \right]\right)^2 \\
    &= 2\Dav^{\pi_M}(f'\|M) +  2\Dav^{\pi_{M'}}(f\|M'). 
\end{align*}
\end{proof}

\begin{lemma}\label{lem: pessimismc solution gua}
    Let $\hat{f}= \argmin_{f\in\calF_\conf} f(s_1)$. Then for any $M\in\calM_\conf$, 
    \begin{align*}
        D^{\pi_M}_\av(\hat{f}\|M) \geq \left(\E^{\pi_M, M}[\hat{f}(s) - \hat{f}(s,a) + \psi(\pi_M;s)]\right)^2 
    \end{align*}
    where $\E^{\pi, M}[g(s,a)]$ denotes $\E_{(s,a)\sim d^\pi_M}[g(s,a)]$. 
\end{lemma}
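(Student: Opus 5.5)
Since $\hat f\in\calF_\conf$, the plan is to reuse the telescoping identity \eqref{eq: same calc} from the proof of \pref{lem: symmetric}, now with $f'=\hat f$. Fix $M\in\calM_\conf$ and let $f=Q^\star_M$, so that $f(s_1)=J_M(\pi_M)$. The identity gives
\begin{align*}
f(s_1)-\hat f(s_1) = \E^{\pi_M,M}\big[-\hat f(s,a)+R_M(s,a)+\E_{s'\sim P_M(\cdot|s,a)}[\hat f(s')]\big] - \E^{\pi_M,M}\big[\hat f(s)-\hat f(s,a)+\psi(\pi_M;s)\big].
\end{align*}
This holds for any function $\hat f$: expand $J_M(\pi_M)$ as the sum of rewards minus regularizers along $d^{\pi_M}_M$, then telescope $\hat f(s_1)$ across the layers. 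The terminal layer works out because $\hat f(s')$ is absent for $s\in\calS_H$.

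Next I use the pessimism of $\hat f$. Because $M\in\calM_\conf$, we have $Q^\star_M\in\calF_\conf$, and $\hat f$ minimizes $f(s_1)$ over $\calF_\conf$. Hence the left-hand side is nonnegative, $f(s_1)-\hat f(s_1)\ge 0$. Rearranging, the average Bellman residual $B:=\E^{\pi_M,M}[\hat f(s,a)-R_M(s,a)-\E_{s'}[\hat f(s')]]$ satisfies
\begin{align*}
-B \ge \E^{\pi_M,M}\big[\hat f(s)-\hat f(s,a)+\psi(\pi_M;s)\big] =: G.
\end{align*}

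I then need $G\ge 0$. This holds because $\hat f(s)=\max_{p}\{\E_{a\sim p}\hat f(s,a)-\psi(p;s)\}\ge \E_{a\sim\pi_M(\cdot|s)}\hat f(s,a)-\psi(\pi_M;s)$, applied pointwise in $s$ under $d^{\pi_M}_M(s)\pi_M(a|s)$. So $-B\ge G\ge 0$, and squaring preserves the inequality: $\Dav^{\pi_M}(\hat f\|M)=B^2\ge G^2$, which is the claim.

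The only delicate step is the sign and bookkeeping in the telescoping identity, including how $\psi$ enters through $V$. It is the same computation as \eqref{eq: same calc}, so I will cite it rather than redo it. The nonnegativity of $G$ is needed before squaring, and the variational definition of $\hat f(s)$ supplies it.
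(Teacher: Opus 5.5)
Your proposal is correct and follows essentially the same route as the paper. You apply the identity \eqref{eq: same calc} with $f=Q^\star_M$ and $f'=\hat f$, use $\hat f(s_1)\le Q^\star_M(s_1)$ (which holds because $Q^\star_M\in\calF_\conf$), and square only after noting that $\E^{\pi_M,M}[\hat f(s)-\hat f(s,a)+\psi(\pi_M;s)]\ge 0$; your explicit justification of that nonnegativity via the variational definition of $\hat f(s)$ fills in a step the paper merely asserts.
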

\begin{proof}
    Let $f=Q_M^\star\in\calF_\conf$. By \pref{eq: same calc}, 
    \begin{align}
        &\E^{\pi_M, M}\left[ \hat{f}(s) - \hat{f}(s,a) + \psi(\pi_M; s)\right] 
        \\&= \E^{\pi_M, M}\left[ -\hat{f}(s,a) + R_M(s,a) + \E_{s'\sim P_M(\cdot|s,a)} [\hat{f}(s')]  \right] + \hat{f}(s_1) - f(s_1)  \label{eq: temptemp} \\
        &\leq \E^{\pi_M, M}\left[ -\hat{f}(s,a) + R_M(s,a) + \E_{s'\sim P_M(\cdot|s,a)} [\hat{f}(s')]  \right]. \tag{by the choice of $\hat{f}$}
    \end{align}
    Since the left-hand side is non-negative, after squaring both sides the inequality is still true. This proves the lemma. 
\end{proof}

\begin{proof}[Proof of \pref{thm: dec to ER}]
~\\
\textbf{Proving \pref{eq: bound 11}}
    By definition, 
    \begin{align*}
        \odecoff_{\gamma}^\av(\calF_\conf) = \min_{\rho\in\Delta(\Pi)} \max_{M\in\calM_\conf} \left\{J_M(\pi_M) - \E_{\pi\sim \rho}\left[J_M(\pi)\right] - \gamma\max_{f\in \calF_\conf} \Dav^{\pi_M}(f\|M)\right\}. 
    \end{align*}
    For the last term, we have 
\begin{align*}
    &\max_{f\in\calF_\conf} \Dav^{\pi_M} (f\|M) \\
    &\geq \Dav^{\pi_M} (Q^\star\|M) \tag{recall $Q^\star\triangleq Q^\star_{M^\star}$}\\
    &\geq \frac{1}{2}\Big(  \E^{\pi_M, M}[\underbrace{V^\star(s) - Q^\star(s,a) + \psi(\pi_M; s)}_{\geq  0}]+ \E^{\pi^\star, M^\star}[\underbrace{V^\star_M(s) - Q^\star_M(s,a) + \psi(\pi^\star;s)}_{\geq 0}]\Big)^2 - \Dav^{\pi^\star}(Q_M^\star \| M^\star)   \tag{\pref{lem: symmetric}} \\
    &\geq \frac{1}{2}\left(  \E^{\pi_M, M}[V^\star(s) - Q^\star(s,a) + \psi(\pi_M;s)]\right)^2 - \Dav^{\pi^\star}(Q_M^\star \| M^\star)    \\
    &\geq \frac{1}{2}\left(  \E^{\pi_M, M}[V^\star(s) - Q^\star(s,a) + \psi(\pi_M;s)]\right)^2 - \max_{f\in\calF_\conf}\Dav^{\pi^\star}(f \| M^\star). 
\end{align*}
Thus, 
 \begin{align*}
        &\odecoff_{\gamma}^\av(\calF_\conf) \\
        &= \min_{\rho\in\Delta(\Pi)} \max_{M\in\calM_\conf} \Bigg\{J_M(\pi_M) - \E_{\pi\sim \rho}\left[J_M(\pi)\right] \\
        &\qquad - \frac{1}{2}\gamma\left(  \E^{\pi_M, M}[V^\star(s) - Q^\star(s,a) + \psi(\pi_M;s)]\right)^2 + \gamma \max_{f\in\calF_\conf}\Dav^{\pi^\star}(f \| M^\star)\Bigg\} \\
        &\leq  \frac{1}{2\gamma}\min_{\rho\in\Delta(\Pi)}\max_{M\in\calM_\conf} \left(\frac{J_M(\pi_M) - \E_{\pi\sim \rho}[J_M(\pi)]}{\E^{\pi_M, M}[V^\star(s) - Q^\star(s,a) + \psi(\pi_M;s)]}\right)^2  + \gamma \max_{f\in\calF_\conf}\Dav^{\pi^\star}(f \| M^\star)
    \end{align*}
where in the last inequality we use the AM-GM inequality. Finally, using that $\min_x \max_y (F(x,y)^2) = \min_x  (\max_y F(x,y))^2 = (\min_x \max_y F(x,y))^2$ for non-negative $F$ proves \pref{eq: bound 11}.

\textbf{Proving \pref{eq: bound 22}} 
   By definition, 
    \begin{align*}
       \odecratio^{\av}(\calF_\conf) &:= \min_{\rho\in\Delta(\Pi)} \max_{M\in\calM_\conf}\frac{J_M(\pi_M) - \E_{\pi\sim \rho} [J_M(\pi)]}{\sqrt{\max_{f\in\calF_\conf}\Dav^{\pi_M}(f\|M)}}. 
    \end{align*}
    For the denominator, we have 
    \begin{align}
        \sqrt{\max_{f\in\calF_\conf}\Dav^{\pi_M}(f\|M)} \geq \sqrt{\Dav^{\pi_M}(\hat{f}\|M)} \geq \E^{\pi_M, M}[\hat{f}(s) - \hat{f}(s,a) + \psi(\pi_M;s)] \label{eq: using}
    \end{align}
    by \pref{lem: pessimismc solution gua}. 
    Plugging this into the above definition shows  \pref{eq: bound 22}.

    \textbf{Proving \pref{eq: bound 33}} By definition, 
    \begin{align*}
        \pesratio^{\av}(\calF_\conf) &:= \max_{M\in\calM_\conf}\frac{J_M(\pi_M) - J_M(\pi_{\hat{f}})}{\sqrt{\Dav^{\pi_M}(\hat{f}\|M)}}. 
    \end{align*}
    Using \pref{eq: using} again in the denominator shows \pref{eq: bound 33}. 
\end{proof}

\begin{proof}[Proof of \pref{thm: gap bound}]
   By definition, when $\psi\equiv 0$, 
   \begin{align*}
       \gapcomp(f; \calF_\conf) &= \max_{M\in\calM_\conf}
\frac{J_M(\pi_M) - J_M(\pi_f)}
{\E_{(s,a)\sim d^{\pi_M}_M}[f(s) - f(s,a)]} \\
&\leq \max_{M\in\calM_\conf}
\frac{\E_{(s,a)\sim d^{\pi_M}_M} \left[Q_M^{\pi_f}(s,\pi_M) - Q_M^{\pi_f}(s,\pi_f)\right] }
{\E_{(s,a)\sim d^{\pi_M}_M}[\Delta_f \|\pi_M(\cdot|s) - \pi_f(\cdot|s)\|_1]} \tag{by the performance difference and the gap definition} \\
&\leq \max_{M\in\calM_\conf}
\frac{\E_{(s,a)\sim d^{\pi_M}_M} \left[H \|\pi_M(\cdot|s) - \pi_f(\cdot|s)\|_1\right] }
{\E_{(s,a)\sim d^{\pi_M}_M}[\Delta_f \|\pi_M(\cdot|s) - \pi_f(\cdot|s)\|_1]} \\
&= \frac{H}{\Delta_f}. 
   \end{align*}
Combining this bound with \pref{thm: dec to ER} finishes the proof.
\end{proof}

The first two assumptions in \pref{thm: curved boundary} are mild. The third \pref{eq: curved} requires some \emph{curvature} at the boundary of the feature set, making the optimal action unique. Notably, it only requires curvature at the greedy action of $\theta^\star$ but not other $\theta$'s, similar to \pref{thm: gap bound} that only depends on the value gap of~$Q^\star$.

\begin{proof}[Proof of \pref{thm: curved boundary}]
    Following the same steps as in the Proof of \pref{thm: dec to ER} and $\psi\equiv 0$, we arrive at the following bound: 
    \begin{align*}
        &\odecoff_{\gamma}^\av(\calF_\conf) \\
        &= \min_{\rho\in\Delta(\Pi)} \max_{M\in\calM_\conf} \Bigg\{J_M(\pi_M) - \E_{\pi\sim \rho}\left[J_M(\pi)\right] \\
        &\qquad - \frac{1}{2}\gamma\left(  \E^{\pi_M, M}[V^\star(s) - Q^\star(s,a)]\right)^2 + \gamma \max_{f\in\calF_\conf}\Dav^{\pi^\star}(f \| M^\star)\Bigg\} \\
        &\leq \max_{M\in\calM_\conf} \Bigg\{J_M(\pi_M) - J_M(\pi^\star) - \frac{1}{2}\gamma\left(  \E^{\pi_M, M}[V^\star(s) - Q^\star(s,a)]\right)^2 \Bigg\}+ \gamma \max_{f\in\calF_\conf}\Dav^{\pi^\star}(f \| M^\star). 
    \end{align*}
    By the performance difference lemma and the second assumption, we have 
    \begin{align*}
        J_M(\pi_M) - J_M(\pi^\star) 
        &= \E^{\pi_M, M}\left[ Q^{\pi^\star}_M(s,\pi_M) - Q^{\pi^\star}_M(s, \pi^\star)\right] \\
        &\leq L\E^{\pi_M, M}\left[ \|\phi(s,\pi_M) - \phi(s, \pi^\star)\|\right]
    \end{align*}
    where $\phi(s,\pi):=\phi(s,\pi(s))$. On the other hand, 
    \begin{align*}
        \E^{\pi_M, M} \left[V^\star(s) - Q^\star(s,a)\right] 
        &= \left(\E^{\pi_M, M} \left[\phi(s,\pi^\star) - \phi(s,\pi_M)\right]\right)^\top \theta^{\star} \\
        &\geq \kappa \E^{\pi_M, M} \left[\|\phi(s,\pi^\star) - \phi(s,\pi_M)\|^\beta\right]
    \end{align*}
    where the last inequality is by the assumption of the theorem. Therefore, 
    \begin{align*}
         &J_M(\pi_M) - J_M(\pi^\star) - \frac{1}{2}\gamma \left(\E^{\pi_M, M} \left[V^\star(s) - Q^\star(s,a)\right] \right)^2 \\
         &\leq L\E^{\pi_M, M}\left[ \|\phi(s,\pi_M) - \phi(s, \pi^\star)\|\right] - \frac{\kappa^2\gamma}{2} \left(\E^{\pi_M, M} \left[\|\phi(s,\pi^\star) - \phi(s,\pi_M)\|^\beta\right]\right)^2 
    \end{align*}
    For any random variable $X\geq 0$, and any $A,B>0$, we can bound 
    \begin{align*}
        &A\E[X] - B\left(\E[X^\beta]\right)^2 \\
        &\leq A \left(\E[X^\beta]\right)^{\frac{1}{\beta}} - B\left(\E[X^\beta]\right)^2 \\
        &= AY^{\frac{1}{\beta}} - BY^2  \tag{let $Y=\E[X^\beta]$} \\
        &\lesssim A^{\frac{2\beta}{2\beta-1}} B^{-\frac{1}{2\beta-1}}. 
    \end{align*}
    Plugging in $A=L$ and $B=\frac{\kappa^2\gamma}{2}$ gives $\left(\frac{L^{2\beta}}{\kappa^2\gamma}\right)^{\frac{1}{2\beta-1}}$. 
\end{proof}

\begin{lemma}[Second-Order Performance Difference Lemma]\label{lem: SO PDL}
If the regularizer $\psi$ satisfies \pref{assum:psi_proper}, then for any policy $\pi$ and model $M$ with $Q^\star_M(s,a) - Q_M^\pi(s,a) \le B, \,\, \forall s,a$, we have
\begin{align*}
    J_M(\pi_M)-  J_M(\pi) \le 3\left(1 + BHC^{\psi}_2\right) \E^{\pi_M, M}\left[\breg_{\psi}\left(\pi, \pi_M; s\right)\right], 
\end{align*}
where $\E^{\pi, M}[g(s)]$ denotes $\sum_{s} d^{\pi}_M(s) g(s)$. 
\label{lem:bregman-pdl}
\end{lemma}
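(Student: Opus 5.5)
The plan is to telescope the suboptimality layer by layer along the occupancy of $\pi_M$. A Legendre first-order optimality identity turns the per-state "regularized advantage" into exactly $\breg_\psi(\pi,\pi_M;s)$. The leftover first-order cross term is then absorbed by a change-of-measure inequality whose amplification factor is only $1+1/H$ per layer, so it compounds to at most $e\le 3$.

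\textbf{Step 1 (one-step identity).} Write $g(s,a)=Q^\star_M(s,a)-Q^\pi_M(s,a)\in[0,B]$ and, for $s\in\calS_h$, let $\Delta_h=\E_{s\sim d^{\pi_M}_{M,h}}[V^\star_M(s)-V^\pi_M(s)]$. Then $\Delta_1=J_M(\pi_M)-J_M(\pi)$ and $\Delta_{H+1}=0$. Since $\psi(\cdot;s)$ is Legendre (\pref{assum:psi_proper}, item 1), $\pi_M(\cdot|s)$ lies in the interior. The first-order condition reads $Q^\star_M(s,\cdot)=\nabla\psi(\pi_M;s)+c_s\mathbf 1$, and it gives
\begin{align*}
V^\star_M(s)-\big(\langle Q^\star_M(s,\cdot),\pi(\cdot|s)\rangle-\psi(\pi;s)\big)=\breg_\psi(\pi,\pi_M;s).
\end{align*}
Substituting $Q^\pi_M=Q^\star_M-g$ into $V^\pi_M(s)=\langle Q^\pi_M(s,\cdot),\pi\rangle-\psi(\pi;s)$ yields
\begin{align*}
V^\star_M(s)-V^\pi_M(s)=\breg_\psi(\pi,\pi_M;s)+\E_{a\sim\pi(\cdot|s)}[g(s,a)].
\end{align*}
Moreover, $\E_{s\sim d^{\pi_M}_{h}}\E_{a\sim\pi_M}[g(s,a)]=\Delta_{h+1}$, because $g(s,a)=\E_{s'\sim P_M(\cdot|s,a)}[V^\star_M(s')-V^\pi_M(s')]$ (the reward terms cancel).

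\textbf{Step 2 (change of measure, the key step).} I need to compare $\E_{a\sim\pi}[g]$ with $\E_{a\sim\pi_M}[g]$ while paying only a second-order price. I use Donsker--Varadhan with $\lambda=1/(BH)$:
\begin{align*}
\E_{\pi}[g]\le \tfrac1\lambda\KL(\pi\|\pi_M)+\tfrac1\lambda\log\E_{\pi_M}[e^{\lambda g}].
\end{align*}
Since $\lambda g\le 1/H\le 1$, the bound $e^x\le 1+x+x^2\le 1+(1+\tfrac1H)x$ for $x\in[0,1/H]$ holds. Together with $\log(1+y)\le y$ this gives
\begin{align*}
\E_\pi[g]\le BH\,\KL(\pi(\cdot|s)\|\pi_M(\cdot|s))+(1+\tfrac1H)\E_{\pi_M}[g].
\end{align*}
Next, item 3 of \pref{assum:psi_proper} gives $\KL\le C^\psi_2\breg_\psi(\pi,\pi_M;s)$. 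This step is where I expect friction: the assumption is stated for greedy policies $\pi_{f_1},\pi_{f_2}$ with $f_i\in\calF$. So I will either restrict $\pi$ to greedy policies $\pi_f$ (the only case used in \pref{thm:regular-dec}), or note that the lemma is applied in that form.

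\textbf{Step 3 (unrolling).} Combining Steps 1 and 2 and averaging over $d^{\pi_M}_{M,h}$ gives the recursion
\begin{align*}
\Delta_h\le (1+BHC^\psi_2)\,\E_{d^{\pi_M}_{M,h}}[\breg_\psi(\pi,\pi_M;s)]+(1+\tfrac1H)\Delta_{h+1}.
\end{align*}
Unrolling from $h=1$ to $H$ with $\Delta_{H+1}=0$ bounds every multiplier by $(1+1/H)^{H}\le e\le 3$. Summing over layers then yields $J_M(\pi_M)-J_M(\pi)\le 3(1+BHC^\psi_2)\E^{\pi_M,M}[\breg_\psi(\pi,\pi_M;s)]$.

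The main obstacle is Step 2. A naive Hölder/Pinsker bound on $\langle g,\pi-\pi_M\rangle$ only gives first order in the policy difference, and a crude constant like $2$ in front of $\E_{\pi_M}[g]$ would blow up as $2^H$. The choice $\lambda=1/(BH)$ is exactly what keeps the per-layer inflation at $1+1/H$.
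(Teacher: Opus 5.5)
Your proof is correct and follows the paper's argument. The paper writes $V^\star_M(s)-V^\pi_M(s)$ as $\breg_\psi(\pi,\pi_M;s)$ plus the cross term $\langle \pi-\pi_M,\,Q^\star_M-Q^\pi_M\rangle$ plus the next-layer term; it bounds the cross term with its KL-stability lemma (which is exactly your Donsker--Varadhan step with $e^x\le 1+x+x^2$ and $\log(1+y)\le y$), sets $\eta=1/(BH)$, and unrolls using $(1+1/H)^H\le 3$. Your caveat about item 3 of \pref{assum:psi_proper} is well taken: the paper applies it to an arbitrary $\pi$ without comment, and restricting to $\pi=\pi_f$ with $Q^\star_M\in\calF$, as in \pref{thm:regular-dec}, is what actually makes that step valid.
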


\begin{proof}
Below, denote $g(s,\pi) = \sum_a \pi(a|s)g(s,a)$. 
For $s \in \calS_{H+1}$, define $V_M^\pi(s) = 0$. For any $s \in \calS_h$, we have
\begin{align*}
   &V_M^{\star}(s) - V_M^\pi(s) \\
   &= Q_M^\star(s,\pi_M) - \psi(\pi_M; s) - Q^\pi_M(s, \pi) + \psi(\pi; s) \\
   &= R_M(s,\pi_M) + \E_{s'\sim P_M(\cdot|s,\pi_M)}[V^\star_M(s')] - R_M(s,\pi) - \E_{s'\sim P_M(\cdot|s,\pi)}[V^\pi_M(s')] - \psi(\pi_M; s) + \psi(\pi;s) \\
   &= R_M(s,\pi_M) + \E_{s'\sim P_M(\cdot|s,\pi_M)}[V^\pi_M(s')] - R_M(s,\pi) - \E_{s'\sim P_M(\cdot|s,\pi)}[V^\pi_M(s')] - \psi(\pi_M; s) + \psi(\pi;s) \\
   &\qquad + \E_{s'\sim P_M(\cdot|s,\pi_M)}[V^\star_M(s')-V^\pi_M(s')] \\
   &= Q_M^\pi(s,\pi_M) - Q_M^\pi(s,\pi)  - \psi(\pi_M; s) + \psi(\pi;s) + \E_{s'\sim P_M(\cdot|s,\pi_M)}[V^\star_M(s')-V^\pi_M(s')] \\
   &= \sum_a (\pi_M(a|s) - \pi(a|s))Q_M^\pi(s,a)  - \psi(\pi_M; s) + \psi(\pi;s) + \E_{s'\sim P_M(\cdot|s,\pi_M)}[V^\star_M(s')-V^\pi_M(s')] \\
   &= \underbrace{\sum_a (\pi_M(a|s) - \pi(a|s))(Q_M^\pi(s,a) - Q_M^\star(s,a))}_{\term_1}\\
   &\qquad + \underbrace{\Big(Q^\star_M(s,\pi_M) - \psi(\pi_M; s)\Big) - \Big(Q^\star_M(s,\pi) - \psi(\pi;s) \Big)}_{\term_2} + \E_{s'\sim P_M(\cdot|s,\pi_M)}[V^\star_M(s')-V^\pi_M(s')]
\end{align*}
Since for any $s$, $\pi_M(\cdot|s) = \argmax_{p \in \Delta(\calA)}\left\{\sum_{a} p(a) Q_M^\star(s,a) - \psi(p)\right\}$, from \pref{assum:psi_proper}, $\psi$ is a  Legendre mirror map. Thus, from \pref{lem:breg}, we have
\begin{align*}
    \term_2 = \breg_{\psi}\left(\pi, \pi_M; s\right). 
\end{align*}
Furthermore, 
\begin{align*}
    \term_1 
    &=\sum_{a \in \calA} \left(\pi_M(a|s) - \pi(a|s)\right)\left(Q_M^{\pi}(s, a) - Q_M^\star(s, a)\right) 
    \\&\le\frac{1}{\eta}\KL\left(\pi(\cdot|s)\|\pi_M(\cdot|s)\right) + \eta \sum_{a}\pi_M(a|s)\left(Q_M^\star(s,a) - Q_M^\pi(s,a)\right)^2  \tag{\pref{lem:KL-stability}}
    \\&\le \frac{1}{\eta}\KL\left(\pi(\cdot|s)\|\pi_M(\cdot|s)\right) + \eta B \sum_{a}\pi_M(a|s)\left(Q_M^\star(s,a) - Q_M^\pi(s,a)\right)\tag{$0 \le Q^\star_M(s,a) - Q_M^\pi(s,a) \le B, \, \forall \pi$ }
    \\&\leq  \frac{1}{\eta}C^\psi_2 \breg_\psi(\pi, \pi_M; s) + \eta B \cdot \E_{s' \sim P_M(\cdot|s,\pi_M)}\left[V_M^\star(s') - V_M^\pi(s')\right]. 
\end{align*}
Overall, 
\begin{align*}
    V_M^{\star}(s) - V_M^\pi(s) \leq \left(1+\frac{1}{\eta}C^\psi_2\right) \breg_\psi(\pi, \pi_M; s) + \left(1+\eta B\right)  \E_{s' \sim P_M(\cdot|s,\pi_M)}\left[V_M^\star(s') - V_M^\pi(s')\right]. 
\end{align*}
Letting $\eta=\frac{1}{BH}$ and taking expectation over $s\sim d^{\pi_M}_M$ in $\calS_h$, we get 
\begin{align*}
   &\sum_{s\in\calS_h} d^{\pi_M}_M(s)\Big(V_M^{\star}(s) - V_M^\pi(s) \Big) \\
   &\leq \left(1+BHC^\psi_2\right) \sum_{s\in\calS_h} d^{\pi_M}_M(s) \breg_\psi(\pi, \pi_M; s) + \left(1+\frac{1}{H}\right)\sum_{s\in\calS_{h+1}} d^{\pi_M}_M(s)\Big(V_M^{\star}(s) - V_M^\pi(s) \Big). 
\end{align*}
Expand this recursively over layers and using that $(1+\frac{1}{H})^H\leq 3$ gives the desired inequality. 

\end{proof}

\begin{proof}[Proof of \pref{thm:regular-dec}] For any $f \in \calF_{\rm conf}$, we have
\begin{align*}
&\E_{(s,a)\sim d^{\pi_M}_M}[f(s) - f(s,a) + \psi(\pi_M;s)] 
\\&= \E_{(s,a)\sim d^{\pi_M}_M}[\E_{a \sim \pi_f(\cdot|s)}[f(s,a)] - \psi(\pi_f; s) - f(s,a) + \psi(\pi_M; s)]
\\&= \E_{(s,a)\sim d^{\pi_M}_M}\left[\text{Breg}_{\psi}\left(\pi_M, \pi_{f}; s\right)\right]  \tag{\pref{lem:breg}}
\\& \ge  \frac{1}{C_1^{\psi}}\E_{(s,a)\sim d^{\pi_M}_M}\left[\text{Breg}_{\psi}\left(\pi_f, \pi_{M}; s\right)\right].  \tag{\pref{assum:psi_proper}}
\end{align*}
Thus, from \pref{lem:bregman-pdl}, we have
\begin{align*}
&\frac{J_M(\pi_M) - J_M(\pi_f)}
{\E_{(s,a)\sim d^{\pi_M}_M}[f(s) - f(s,a) + \psi(\pi_M;s)]} 
\\&\le \frac{3C_1^\psi\left(1 + BHC^{\psi}_2\right) \E^{\pi_M, M}\left[\breg_{\psi}\left(\pi_f, \pi_M; s\right)\right]}{ \E^{\pi_M, M}\left[\breg_{\psi}\left(\pi_f, \pi_M; s\right)\right]} = 3C_1^\psi\left(1 + BHC^{\psi}_2\right)
\end{align*}
From \pref{lem:bound-of-Q*}, $B = H^2$, thus 
\begin{align}
\gapcomp(f; \calF_\conf) \le 3C_1^\psi\left(1 + H^3C^{\psi}_2\right) 
\label{eq:ER-regular}
\end{align}
Combining this bound with \pref{thm: dec to ER} together with  $\odecoff_{\gamma}^D(\calF_\conf) \le \frac{4}{\gamma}\big(\odecratio^D(\calF_\conf)\big)^2$ finishes the proof.
\end{proof}

\subsection{Comparison with \cite{uehara2023offline}}\label{app:regularized-proof}
\cite{uehara2023offline} considers KL regularized MDPs, which is equivalent to our formulation with $\psi(p; s) = \alpha \KL\left(p\|\pi_{\rm ref}(\cdot|s)\right)$. Besides $Q^\star \in \calF$, \cite{uehara2023offline} additionally assume access to a function class $\mathcal{L}$ such that a specific Lagrangian multiplier is realizable and every $\ell \in \mathcal{L}$ has $\|\ell\|_{\infty} \le B_{\mathcal{L}}$. Let $\hat{\pi}$ be the output policy of Algorithm 1 in \cite{uehara2023offline},  their Theorem 3 shows that with probabilty $1-\delta$,
\begin{align}
    \E_{(s,a) \sim d^{\pi^\star}}[R(s,a)] - \E_{(s,a) \sim d^{\hat{\pi}}}[R(s,a)] \le n^{-\frac{1}{4}} \text{Poly}\left(|\calA|, H, C^{\pi^\star}, R_{\max}, \log\left(\frac{|\calF||\mathcal{L}|}{\delta}\right), B_{\mathcal{L}}\right)
\label{eq:uehara-result}
\end{align}
where $R_{\max} = \max_{s,a}\left|\log \frac{\pi^\star(a|s)}{\pi_{\rm ref}(a|s)} \right|$. \cite{uehara2023offline} also assumes the offline action distribution is identical to $\pi_{\rm ref}$. In other words,  the offline dataset $\mathcal{D}$ consists of $n$ i.i.d. tuples $(s, a, r, s')$, where $s \sim \mu$ for some unknown distribution $\mu \in \Delta(\mathcal{S})$, $a \sim \pi_{\rm ref}(\cdot|s)$, $\mathbb{E}[r | s, a] = R(s, a)$, and $s' \sim P(\cdot | s, a)$. This assumption is restrictive and our methods do not need it.

Let $\hat{f}$ be the output of \pref{eq: value centr} with $\psi(p; s) = \alpha \KL\left(p\|\pi_{\rm ref}(\cdot|s)\right)$, we have
\begin{align*}
    &\left|\E^{\pi^\star, M^\star}\left[ -\hat{f}(s,a) + R(s,a) + \E_{s'\sim P(\cdot|s,a)} [\hat{f}(s')]  \right]\right|
    \\&\ge \E^{\pi^\star, M^\star}\left[ -\hat{f}(s,a) + R(s,a) + \E_{s'\sim P(\cdot|s,a)} [\hat{f}(s')]  \right] + \hat{f}(s_1) - V^\star(s_1) \tag{by the choice of $\hat{f}$}
    \\& = \E^{\pi^\star, M^\star}\left[ \hat{f}(s) - \hat{f}(s,a) + \psi(\pi^\star; s)\right] 
    \\&= \E^{\pi^\star, M^\star}[\E_{a \sim \pi_{\hat{f}}(\cdot|s)}[\hat{f}(s,a)] - \psi(\pi_{\hat{f}}; s) - \hat{f}(s,a) + \psi(\pi^\star; s)]
    \\&=  \alpha\E^{\pi^\star, M^\star}\left[\KL(\pi^\star, \pi_{\hat{f}}; s)\right]  \tag{\pref{lem:breg}}
\end{align*}
From \pref{lem:pdl} and using the confidence set defined in \pref{sec:density-confidence}, we have
\begin{align*}
     &\E_{(s,a) \sim d^{\pi^\star}}[R(s,a)] - \E_{(s,a) \sim d^{\pi_{\hat{f}}}}[R(s,a)]  
     \\&\le \E^{\pi^\star, M^\star}\left[\text{TV}\left(\pi^\star(\cdot|s), \pi_{\hat{f}}(\cdot|s)\right)\right] - \alpha \E^{\pi^\star, M^\star}\left[\KL\left(\pi^\star(\cdot|s), \pi_{\hat{f}}(\cdot|s)\right)\right] + \alpha \E^{\pi^\star, M^\star}\left[\KL\left(\pi^\star(\cdot|s), \pi_{\hat{f}}(\cdot|s)\right)\right]
     \\&\le \frac{1}{\alpha} + \alpha\left|\E^{\pi^\star, M^\star}\left[\hat{f}(s,a) - R(s,a) - \E_{s' \sim P(\cdot|s,a)}\left[\hat{f}(s')\right]\right]\right|
     \\&\le  \order\left(\frac{1}{\alpha} + \alpha B_{\mathcal{W}}H^2\sqrt{\frac{2\log\left(|\calF||\calW|/\delta\right)}{n}} \right)  \tag{\pref{lem: weight case}}
     \\&= \order\left(H\left(\frac{B_{\mathcal{W}}^2\log\left(|\calF||\calW|/\delta\right)}{n}\right)^{\frac{1}{4}}\right) \tag{Optimal choice of $\alpha$}
\end{align*}

Compared with \pref{eq:uehara-result}, our bound eliminates the polynomial dependence on $|\calA|$, $R_{\max}$ and remains valid for arbitrary offline action distributions, which need not coincide with $\pi_{\rm ref}$. Moreover, under the measure of regularized objective $J(\pi)$, Algorithm in \pref{eq: value centr} can achieve $1/\epsilon^2$ sample complexity bound as shown in \pref{eq: bound 11}, which cannot be obtained by the analysis in \cite{uehara2021pessimistic}. Our analysis here relies on \pref{assum: weight real}, and therefore introduces an additional function class $\mathcal{W}$ for density ratio realizability. This assumption is comparable with the 
Lagrangian multiplier realizability condition in \cite{uehara2023offline}, which posits that the corresponding dual variable lies in a function class $\mathcal{L}$.

\section{Proofs in \pref{sec:estimation-error}}
\label{app:estimation-error}

In this section, we assume $f \in [0,H]$ for every $f \in \calF$. This is valid because $Q^\star \in [0,H]$ from \pref{lem:bound-of-Q*}, and we can remove every $f$ not in $[0,H]$ out of $\calF$.

\begin{proof}[Proof of \pref{lem: BC case}] 
We first prove $Q^\star \in \calF_{\conf}$ with high probability. Define
\begin{align*}
    X_{g}^\star(s,a,r,s') &= \left(g(s,a) - r - V^\star(s')\right)^2 -   \left(Q^\star(s,a) - r - V^\star(s')\right)^2  
    \\&= \left(g(s,a) - Q^\star(s,a)\right)^2 - \left(g(s,a) - Q^\star(s,a)\right)\left(Q^\star(s,a) - r - V^\star(s')\right)
\end{align*}
Given $\E\left[ r + V^\star(s')\mid s,a\right] = (\mathcal{T}Q^\star)(s,a) = Q^\star(s,a)$, we have
\begin{align*}
    \E\left[X_g^\star(s,a,r,s') \mid s,a\right] = (g(s,a) - Q^\star(s,a))^2. 
\end{align*}
By Freedman inequality, with probability at least $1-\delta$, for any $g \in \mathcal{G}$, we have
\begin{align*}
    &\left|\E_{(s,a,r,s') \sim \calD}\left[X^\star_g(s,a,r,s')\right] - \E_{(s,a) \sim \calD}\left[(g(s,a) - Q^\star(s,a))^2\right]\right|
    \\& \le H\sqrt{\frac{\log(|\mathcal{G}|/\delta)}{n}\E_{(s,a) \sim \calD}\left[(g(s,a) - Q^\star(s,a))^2\right]} + \frac{H^2\log(|\mathcal{G}|/\delta)}{n}
    \\&\le  \frac{1}{2}\E_{(s,a) \sim \calD}\left[(g(s,a) - Q^\star(s,a))^2\right] + \frac{2H^2\log(|\mathcal{G}|/\delta)}{n}, \tag{AM-GM}
\end{align*}
where we assume $g \in [0,H]$ for every $g \in \mathcal{G}$. This is valid because $Q^\star \in [0,H]$ from \pref{lem:bound-of-Q*}. Thus, 
\begin{align*}
    -\E_{(s,a,r,s') \sim \calD}\left[X^\star_g(s,a,r,s')\right] &\le -\frac{1}{2}\E_{(s,a) \sim \calD}\left[(g(s,a) - Q^\star(s,a))^2\right] + \frac{2H^2\log(|\mathcal{G}|/\delta)}{n} \\
    &\le \frac{2H^2\log(|\mathcal{G}|/\delta)}{n}. 
\end{align*}
This implies
\begin{align*}
    &\E_{(s,a,r,s') \sim \calD}\left[\left(Q^\star(s,a) - r - V^\star(s')\right)^2\right] \\
    &\quad - \min_{g \in \mathcal{G}}\E_{(s,a,r,s') \sim \calD}\left[\left(g(s,a) - r - V^\star(s')\right)^2\right] \le  \frac{2H^2\log(|\mathcal{G}|/\delta)}{n} \le \epsilon_{\rm stat}.
\end{align*}
Now we prove the guarantee of $D_{\av}^{\pi^\star}$. Define
\begin{align*}
X_f(s,a,r,s') &=  \left(f(s,a) - r - f(s')\right)^2 -   \left((\mathcal{T}f)(s,a) - r - f(s')\right)^2 
\\& = (f(s,a) - (\mathcal{T}f)(s,a))^2 + 2\left(f(s,a) - (\mathcal{T}f)(s,a))\right)\left((\mathcal{T}f)(s,a) - r - f(s')\right).
\end{align*}
Given $\E\left[ r + f(s')\mid s,a\right] = (\mathcal{T}f)(s,a)$, we have
\begin{align*}
    \E\left[X_f(s,a,r,s') \mid s,a\right] = (f(s,a) - (\mathcal{T}f)(s,a))^2. 
\end{align*}
By Freedman inequality, with probability at least $1-\delta$, for any $f \in \calF$, we have
\begin{align*}
    &\left|\E_{(s,a,r,s') \sim \calD}\left[X_f(s,a,r,s')\right] - \E_{(s,a) \sim \calD}\left[(f(s,a) - (\mathcal{T}f)(s,a))^2\right]\right|
    \\& \le H\sqrt{\frac{\log(|\calF|/\delta)}{n}\E_{(s,a) \sim \calD}\left[(f(s,a) - (\mathcal{T}f)(s,a))^2\right]} + \frac{H^2\log(|\calF|/\delta)}{n}. 
\end{align*}
Thus, 
\begin{align*}
    &\E_{(s,a) \sim \calD}\left[(f(s,a) - (\mathcal{T}f)(s,a))^2\right] 
    \\&\le \E_{(s,a,r,s') \sim \calD}\left[X_f(s,a,r,s')\right] +  H\sqrt{\frac{\log(|\calF|/\delta)}{n}\E_{(s,a) \sim \calD}\left[(f(s,a) - (\mathcal{T}f)(s,a))^2\right]} + \frac{H^2\log(|\calF|/\delta)}{n}
    \\&\le \underbrace{\E_{(s,a,r,s') \sim \calD}\left[\left(f(s,a) - r - f(s')\right)^2\right] -   \min_{g \in \mathcal{G}}\E_{(s,a,r,s') \sim \calD}\left[\left(g(s,a) - r - f(s')\right)^2\right]}_{\le \epsilon_{\rm stat}}
    \\& \qquad \quad + H\sqrt{\frac{\log(|\calF|/\delta)}{n}\E_{(s,a) \sim \calD}\left[(f(s,a) - (\mathcal{T}f)(s,a))^2\right]} + \frac{H^2\log(|\calF|/\delta)}{n}. 
\end{align*}
Thus, we have
\begin{align*}
    &\E_{(s,a) \sim \calD}\left[(f(s,a) - (\mathcal{T}f)(s,a))^2\right] \\
    &\le H\sqrt{\frac{\log(|\calF|/\delta)}{n}\E_{(s,a) \sim \calD}\left[(f(s,a) - (\mathcal{T}f)(s,a))^2\right]} + \order\left(\frac{H^2\log(|\calF||\mathcal{G}|/\delta)}{n}\right). 
\end{align*}
Solving the equation gives
\begin{align}
     \E_{(s,a) \sim \calD}\left[(f(s,a) - (\mathcal{T}f)(s,a))^2\right] \le \order\left(\frac{H^2\log(|\calF||\mathcal{G}|/\delta)}{n}\right). 
\label{eq:empirical-square}
\end{align}
Finally, with probability $1-\delta$, for any $f \in \calF_{\rm conf}$, we have
\begin{align*}
      \Dav^{\pi^\star}(f\| M^\star) &= \left(\E_{(s,a) \sim d^{\pi^\star}}\left[f(s,a) - R(s,a) - \E_{s' \sim P(\cdot|s,a)}\left[f(s')\right]\right]\right)^2 
      \\&\le \E_{(s,a) \sim d^{\pi^\star}}\left[\left(f(s,a) - R(s,a) - \E_{s' \sim P(\cdot|s,a)}\left[f(s')\right]\right)^2 \right]
      \\&\le HC^{\pi^\star}\E_{(s,a) \sim \mu}\left[\left(f(s,a) - R(s,a) - \E_{s' \sim P(\cdot|s,a)}\left[f(s')\right]\right)^2 \right]
      \\&\le \order\left(\frac{C^{\pi^\star}  H^3\log(|\calF||\mathcal{G}|/\delta)}{n}\right).  \tag{Comine \pref{eq:empirical-square} and \pref{lem:freedman}}
\end{align*}
\end{proof}

\begin{proof}[Proof of \pref{lem: weight case}] We first prove $Q^\star \in \calF_{\rm conf}$. From Hoeffding's inequality, with probability $1-\delta$, for any $w \in \mathcal{W}$, we have 
\begin{align*}
&\left|\E_{(s,a,r,s') \sim \calD}\left[w(s,a)\left(Q^\star(s,a) -r - V^\star(s')\right)\right]\right|
\\&\le \left|\E_{(s,a) \sim \calD}\left[w(s,a)\left(Q^\star(s,a) - (\mathcal{T}Q^\star)(s,a)\right)\right]\right| + B_{\mathcal{W}}H \sqrt{\frac{2\log(|\mathcal{W}|/\delta)}{n}}
\\&= B_{\mathcal{W}}H \sqrt{\frac{2\log(|\mathcal{W}|/\delta)}{n}}  \\
&\leq \epsilon_\stat. 
\end{align*}
Thus, $Q^\star\in\calF_\conf$. 
Now we prove the guarantee of $\Dav^{\pi^\star}$. For any $f \in \calF_{\rm conf}$, we have
\begin{align*}
    &\left|\E_{(s,a) \sim d^{\pi^\star}}\left[f(s,a) - R(s,a) - \E_{s' \sim P(\cdot|s,a)}\left[f(s')\right]\right]\right|
    \\&= H \left|\E_{(s,a) \sim \mu}\left[w^{\pi^\star}(s,a)\left(f(s,a) - R(s,a) - \E_{s' \sim P(\cdot|s,a)}\left[f(s')\right]\right)\right]\right|
    \\&\le H \left|\E_{(s,a,r,s') \sim \calD}\left[w^{\pi^\star}(s,a)\left(f(s,a) - r - f(s')\right)\right]\right| + \order\left(B_{\mathcal{W}}H^2\sqrt{\frac{\log(|\calF|/\delta)}{n}}\right) \tag{Hoeffding's Inequality}
    \\&\le H\max_{w \in \mathcal{W}}\left|\E_{(s,a,r,s') \sim \calD}\left[w(s,a)\left(f(s,a) - r - f(s')\right)\right]\right| + \order\left(B_{\mathcal{W}}H^2\sqrt{\frac{\log(|\calF|/\delta)}{n}}\right)
    \\&\le  \order\left(H\epsilon_{\rm stat}\right) \tag{Use the definition of $\epsilon_{\rm stat}$}
\end{align*}
Thus,
\begin{align*}
     \Dav^{\pi^\star}(f\| M^\star) &= \left(\E_{(s,a) \sim d^{\pi^\star}}\left[f(s,a) - R(s,a) - \E_{s' \sim P(\cdot|s,a)}\left[f(s')\right]\right]\right)^2 \le \order\left(H^2\epsilon_{\rm stat}^2\right). 
\end{align*}
\end{proof}

\begin{proof}[Proof of \pref{lem: Bellman rank case}]
   \begin{align*}
        \Dav^{\pi^\star}(f\|M^\star) 
        &= \left(\E^{\pi^\star, M^\star}\left[ f(s,a) - R(s,a) - \E_{s'\sim P(\cdot|s,a)}[f(s')] \right]\right)^2 \\
        &= \inner{X(\pi^\star), W(f)}^2   \tag{by \pref{assum: bellman rank}}\\
    &\leq \left(X(\pi^\star)^\top \Sigma_\mubar^\dagger X(\pi^\star)\right)\left(W(f)^\top \Sigma_\mubar W(f)\right)   \tag{Cauchy-Schwarz}\\
    &= \Cbar^{\pi^\star}  \E_{\pi\sim \mubar}\left[\inner{X(\pi), W(f)}^2\right] \tag{\pref{def: policy cover}}\\
    &\leq \Cbar^{\pi^\star}   \E_{\pi\sim \mubar}\left[\left( \E^{\pi, M^\star}\left[f(s,a) - R(s,a) - \E_{s'\sim P(\cdot|s,a)}[f(s')]\right]\right)^2\right] \\
    &= \Cbar^{\pi^\star} \E_{\pi\sim \mubar} [\Dav^\pi(f\|M^\star)]. \numberthis \label{eq: DDD}
   \end{align*}
   On the other hand, by the definition of $L_\brr$, we have 
   \begin{align*}
       \E[L_\brr(f)] 
       &= \E_{\pi\sim \mubar}\left[ \E^{\pi, M^\star}[f(s,a) - r - f(s')]\cdot \E^{\pi, M^\star}[f(s,a) - r - f(s')]  \right] \\
       &= \E_{\pi\sim \mubar}\left[ \Dav^\pi(f\|M^\star) \right]   \numberthis \label{eq: BLinear concen}
   \end{align*}
   and by Hoeffding's inequality, with probability at least $1-\delta$,  
   \begin{align*}
       \left|L_\brr(f) - \E_{\pi\sim \mubar}\left[ \Dav^\pi(f\|M^\star) \right]\right| \leq H^2\sqrt{\frac{\log(2|\calF|/\delta)}{2n}}.
   \end{align*}
   Under this event, 
   \begin{align*}
       L_\brr(Q^\star) \leq \E_{\pi\sim \mubar}\left[ \Dav^\pi(Q^\star\|M^\star) \right] + H^2\sqrt{\frac{\log(2|\calF|/\delta)}{2n}} = H^2\sqrt{\frac{\log(2|\calF|/\delta)}{2n}}
   \end{align*}
   and thus $Q^\star\in\calF_\conf$. Also, by \pref{eq: DDD} we have 
   \begin{align*}
       \max_{f\in\calF_\conf} \Dav^{\pi^\star}(f\|M^\star)  
       &\leq \Cbar^{\pi^\star} \max_{f\in\calF_\conf}\E_{\pi\sim \mubar} [\Dav^\pi(f\|M^\star)] \\
       &\leq \Cbar^{\pi^\star} \max_{f\in\calF_\conf}\left(L_{\brr}(f) + H^2\sqrt{\frac{\log(2|\calF|/\delta)}{2n}}\right)  \tag{by \pref{eq: BLinear concen}} \\
       &\leq \Cbar^{\pi^\star} \cdot 2H^2\sqrt{\frac{\log(2|\calF|/\delta)}{2n}}.  \tag{by the definition of $\calF_\conf$}
   \end{align*}
   
\end{proof}
\subsection{Low Bellman rank setting: lower bounds without policy feature coverage or without double policy samples}\label{app: jia lower }
For the necessity of the dependence on $\Cbar^{\pi^\star}$ and the necessity of double policy samples, we rely on an extension to the lower bound construction in \citep{jia2024offline}. The two claims are proven in \pref{lem: first claim} and \pref{lem: second claim}, respectively.

\subsubsection{Reusing the hard instance in \cite{jia2024offline}}\label{app: construct hard}

Specifically, we build upon the construction of \cite{jia2024offline} in the admissible setting ($\mu=\frac{1}{H}d^{\pi_b}$ for some $\pi_b$) described in their Theorem~4.1. 
Their construction is illustrated in \pref{fig: jia construction}, with details described in the caption. Moreover, their construction has the following properties: 
\begin{itemize}[leftmargin=11pt, topsep=6pt]
   \setlength{\itemsep}{0pt}
  \setlength{\parskip}{0pt}
  \setlength{\parsep}{0pt}
    \item They focus on \emph{policy evaluation}. There are two actions. The policy to be evaluated is $\pi_e = (1,0)$, i.e., always choosing action~1. The learner is given a function set $\calF=\{f_1, f_2\}$ \mbox{such that $Q^{\pi_e}\in\calF$.  }
    \item The offline data distribution $\mu$ satisfies $\mu=\frac{1}{H}d^{\pi_b}$ where $\pi_b = (\frac{1}{H^2}, 1-\frac{1}{H^2})$. 
    \item Their construction satisfies $C^{\pi}=O(H^3)$ for any policy $\pi$. 
    \item All rewards and $Q^\pi$ values are in the range of $[-1,1]$, 
\end{itemize}

\begin{figure}[!htbp]
    \centering
    \hspace{-40pt}\includegraphics[width=0.8\textwidth]{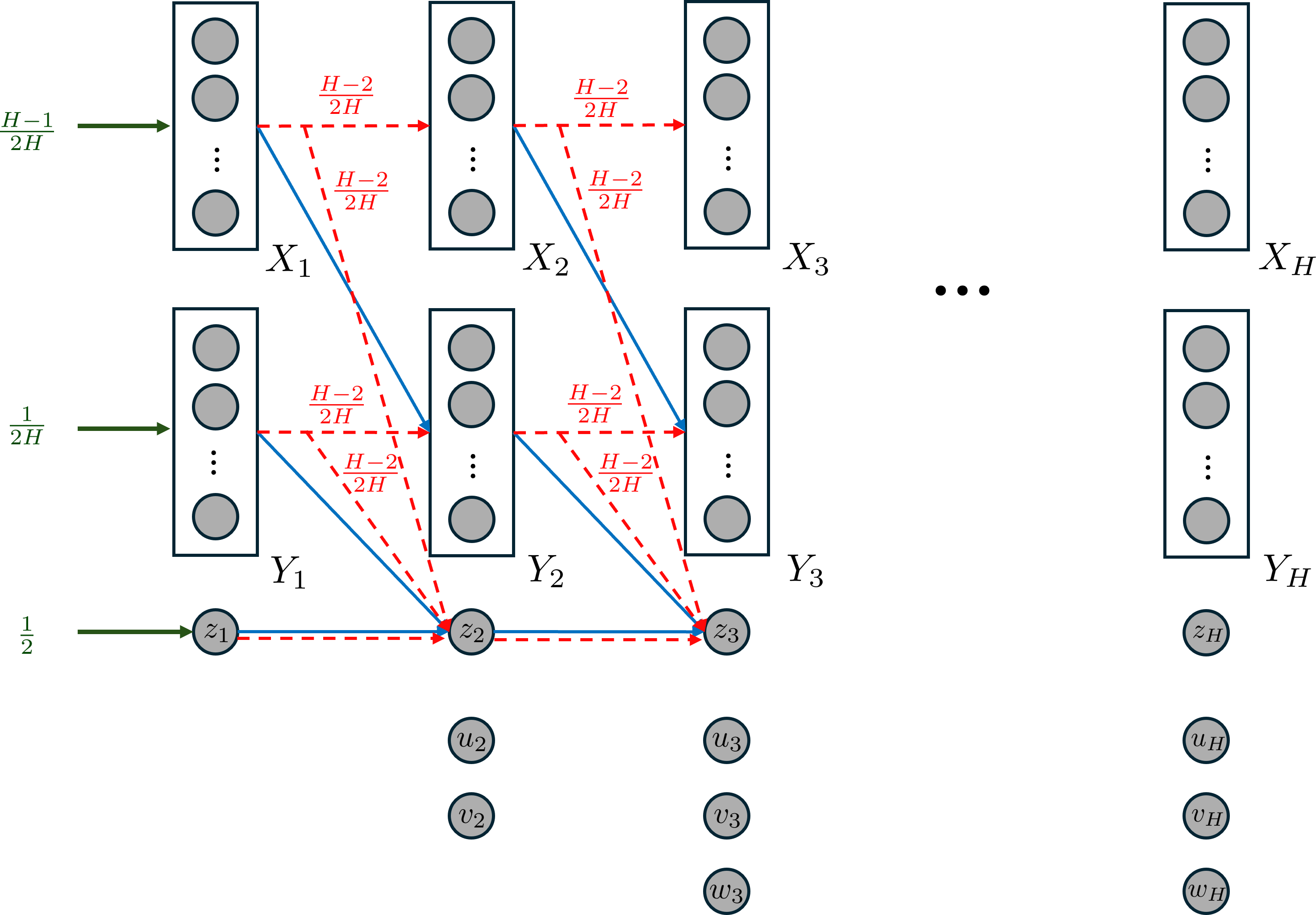}
    \caption{The construction of \cite{jia2024offline}. The numbers on the left of the green allows specify the initial state distribution. There are two actions. Taking either action on state in $X_h\cup Y_h \cup\{z_h\}$ leads to transitions to $u_{h+1}$ and $v_{h+1}$ with probabilities $P(u_{h+1}|s,a) + P(v_{h+1}|s,a) = \frac{2}{H}$ which are not specified in the figure.  Blue arrow specifies the transition if taking action $1$ besides those to $u_{h+1}$ and $v_{h+1}$, and red arrow if taking action $2$. The numbers on the red arrows are the transition probabilities. The blue and red arrows without a number on it have a transition probability $\frac{H-2}{H}$.  Every transition to group $X_h$ or $Y_h$ results in a uniform distribution over that group. On $u_h$ and $v_h$ and $w_h$, taking any action leads to a deterministic transition to $w_{h+1}$.
    }\label{fig: jia construction}

    \vspace{20pt}
    \centering
    \hspace{-40pt}\includegraphics[width=0.85\textwidth]{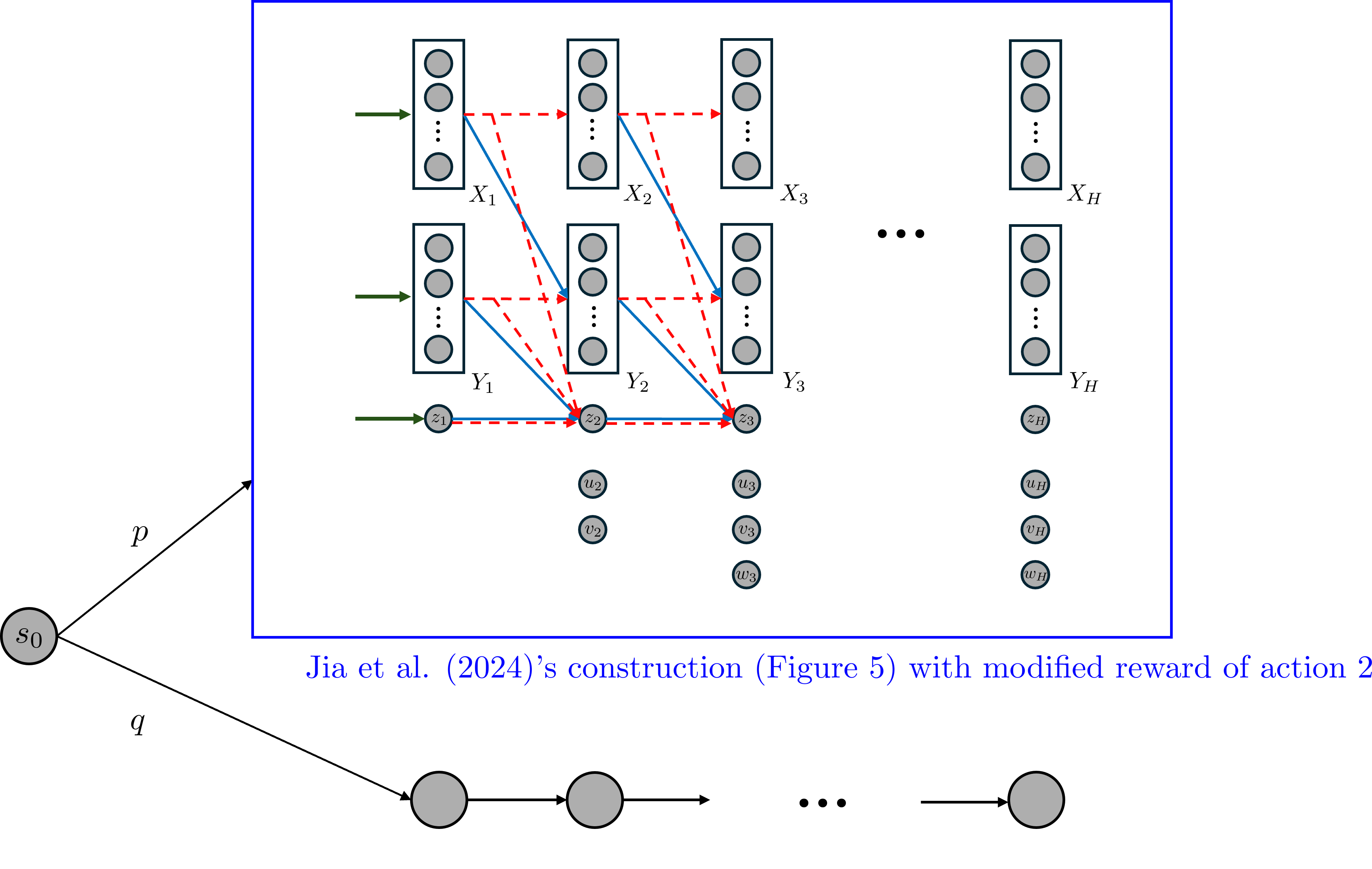}
    \caption{A hard instance for policy optimization (the construction is described in \pref{app: construct hard}). 
    }\label{fig: jia 2}
\end{figure}

The construction depends on a parameter $\epsilon\in[0,\frac{1}{15}]$ which makes $|\E_{s\sim \rho}[f_1(s)]- \E_{s\sim \rho}[f_2(s)]|=2\epsilon$, where $\rho$ is the initial state distribution. They prove that to estimate $\E_{s\sim \rho}[V^{\pi_e}(s)]$ up to precision~$\epsilon$, the learner must use at least $2^H/\epsilon$ samples drawn from $\mu$.  
Below we describe how to turn this into a hard case for \emph{policy optimization}. 
\begin{itemize}[leftmargin=11pt, topsep=6pt]
   \setlength{\itemsep}{0pt}
  \setlength{\parskip}{0pt}
  \setlength{\parsep}{0pt}
    \item Subtract the instantaneous reward of action~2 by $10$ on every state.  As this makes action 2 very bad, the optimal policy on any state is action~1. 
    \item Based on their function class $\calF$ for $Q^{\pi_e}$, define the new function class $\calF' = \{f_1', f_2'\}$ such that $f_i'(s,1) = f_i(s,1)$ and $f_i'(s,2) = f_i(s,2) - 10$ for $i=1,2$. Below, we argue that $Q^\star\in \calF'$ in the modified environments.  In the modified environment, since action 1 is always optimal, $Q^\star(s,1)$ is the value of always taking action 1, which is the same as $Q^{\pi_e}(s,1)$ in the original environment. For the same reason, in the modified environment, $Q^\star(s,2)$ is the value of taking action~2 for one step and taking action~1 thereafter. Therefore, $Q^\star(s,2) = -10 + R(s,2) + \E_{s'\sim P(\cdot|s,2)}[V^{\pi_e}(s')] = Q^{\pi_e}(s,2)-10$, where $R$,  $V^{\pi_e}, Q^{\pi_e}$ are the reward and values in the original environment. Since $Q^{\pi_e}\in\calF$ in the original environment, this implies $Q^\star\in\calF'$ in the modified environment. 
    \item This mere modification of offsetting action~2's reward by a constant does not make evaluating $\pi_e$ any easier. Since $\pi^\star$ in the modified environment is the same as $\pi_e$ in the original environment, and their state values are the same, it becomes a hard instance of evaluating $\E_{s\sim \rho}[V^\star(s)]$. 
    \item Finally, we add one additional state $s_0$ with two actions. Choosing $p$ leads to zero instantaneous reward and transitions to the MDP described above; choosing $q$ leads to a deterministic reward of $\avg=\E_{s\sim \rho}\big[\frac{f_1'(s)+f_2'(s)}{2}\big]$ and a chain of state with single action and zero reward until the end of the episode (\pref{fig: jia 2}).   Augment the functions with $f_1'(s_0, p) = \E_{s\sim \rho}[f_1'(s)]$ and $f_2'(s_0, p) = \E_{s\sim \rho}[f_2'(s)]$ and $f_1'(s_0,q) = f_2'(s_0,q)=\avg$. Define behavior policy $\pi_b(\cdot|s_0)=(\frac{1}{2},\frac{1}{2})$ and keep it unchanged in other states.  
    \item Let $\epsilon=|\E_{s\sim \rho}[f_1'(s)] - \E_{s\sim \rho}[f_2'(s)]|=\frac{1}{15}$. Then the two actions on $s_0$ has a gap of $|Q^\star(s_0,p) - Q^\star(s_0,q)|=\frac{1}{2} |\E_{s\sim \rho}[f_1'(s)] - \E_{s\sim \rho}[f_2'(s)]|=\frac{1}{2}\cdot\frac{1}{15}=\frac{1}{30}$. Therefore, in order to find an $\frac{1}{60}$-optimal policy, the learner must distinguish the two worlds, i.e., estimating $Q^\star(s_0,p) = \E_{s\sim \rho}[V^\star(s)]$ up to an accuracy of $\frac{1}{30}$, which is proven to be hard by \cite{jia2024offline} (needs $\Omega(2^H)$ samples).   
\end{itemize}

\subsubsection{Relate the hard instance to our setting}
We will use the instance in  \pref{app: construct hard} to show that both \emph{policy feature coverage} and \emph{double policy samples} are necessary for polynomial sample complexity in low $Q$-Bellman rank settings. 

We first show that the hard instance in \pref{app: construct hard} is a linear $Q^\star/V^\star$ setting \citep{du2021bilinear} (\pref{lem: linear Q* V*}), which is further an example of an MDP with low $Q$-Bellman rank (\pref{lem: imply Q beallman}). 

\begin{definition}[Linear $Q^\star/V^\star$ setting]\label{def: linear Q* V*}
   The learner has known features $\phi: \calS\times \calA\to \mathbb{R}^d$ and $\xi: \calS\to \mathbb{R}^d$ such that $Q^\star(s,a) = \phi(s,a)^\top \theta^\star$ and $V^\star(s) = \xi(s)^\top w^\star$ for some unknown $\theta^\star$ and $w^\star$. 
\end{definition}

\begin{lemma}\label{lem: linear Q* V*}
   The instance described in \pref{app: construct hard} is a linear $Q^\star/V^\star$ setting (\pref{def: linear Q* V*}). 
\end{lemma}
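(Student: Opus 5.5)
The instance is finite, so the plan is to use one-hot (tabular) features. We do not need a low dimension here; the point of the lemma is only that the instance has the linear $Q^\star/V^\star$ form of \pref{def: linear Q* V*}, which \pref{lem: imply Q beallman} will then turn into a low $Q$-Bellman rank. A tabular representation would make that dimension equal to $|\calS||\calA|$, which is exponentially large, so this would be unhelpful downstream. I will therefore aim for a constant-dimensional representation that is shared across both worlds (the two candidate environments indexed by $f_1'$ and $f_2'$).

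\textbf{Feature map.} The function class $\calF'=\{f_1',f_2'\}$ has only two elements, and $Q^\star$ in either world equals $f_1'$ or $f_2'$. I will define
\[
\phi(s,a)=\big(f_1'(s,a),\,f_2'(s,a)\big)\in\mathbb{R}^2 .
\]
In the world where $Q^\star=f_i'$, set $\theta^\star=e_i$. Then $Q^\star(s,a)=\phi(s,a)^\top\theta^\star$ holds exactly. This covers all states: the original states of \cite{jia2024offline}, the added root state $s_0$, and the chain states. For the chain states I extend $f_1',f_2'$ by the true values, which are $0$ and identical across the two worlds. For $V^\star$ I take
\[
\xi(s)=\big(f_1'(s),\,f_2'(s)\big),\qquad f_i'(s)=\max_a f_i'(s,a),
\]
and $w^\star=e_i$. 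Since $V^\star(s)=\max_a Q^\star(s,a)=f_i'(s)$, this gives $V^\star=\xi^\top w^\star$. Both $\phi$ and $\xi$ are known to the learner, because $\calF'$ is given. So $d=2$, which matters for the later Bellman-rank argument.

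\textbf{Checks.} I need three facts.
\begin{itemize}
\item The realizability argument in \pref{app: construct hard} shows $Q^\star\in\calF'$ on the original states.
\item At $s_0$, the augmented values $f_i'(s_0,p)=\E_\rho[f_i'(s)]$ and $f_i'(s_0,q)=\avg$ equal the true $Q^\star$ in world $i$. This holds because $p$ gives zero reward and then follows $\rho$, while $q$ gives reward $\avg$ followed by zero rewards.
\item The identity $V^\star=\max_a Q^\star$ holds in the unregularized setting ($\psi\equiv0$), which is the setting here.
\end{itemize}

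\textbf{Expected obstacle.} The main subtlety is making sure the features are legitimately known, meaning they do not depend on the hidden world, and that the two worlds really have $Q^\star$ equal to $f_1'$ and $f_2'$ respectively. This requires confirming from \cite{jia2024offline}'s Theorem 4.1 that each of their two instances realizes its own $f_i$, rather than only guaranteeing $Q^{\pi_e}\in\calF$.
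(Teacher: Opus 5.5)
Your proposal is correct and is essentially the paper's proof. Both take $\phi(s,a)=(f_1'(s,a),f_2'(s,a))$ and $\xi(s)=(\max_a f_1'(s,a),\max_a f_2'(s,a))$ with $d=2$, choosing $\theta^\star=w^\star=e_j$ for whichever $f_j'$ equals $Q^\star$. The obstacle you flag does not arise, because \pref{def: linear Q* V*} only needs $Q^\star\in\calF'$ in each world, not that world $i$ realizes specifically $f_i'$.
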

\begin{proof}
   In the construction, the learner has a function set  $\calF=\{f_1, f_2\}$ such that $Q^\star\in\calF$. Then 
   \begin{align*}
        \phi(s,a) = \begin{bmatrix}
            f_1(s,a) \\
            f_2(s,a)
        \end{bmatrix},\qquad
        \xi(s) = \begin{bmatrix}
            f_1(s) \\
            f_2(s)
        \end{bmatrix} =  \begin{bmatrix}
            \max_a f_1(s,a) \\
            \max_a f_2(s,a)
        \end{bmatrix}
    \end{align*}
    satisfies \pref{def: linear Q* V*} with $d=2$.  
\end{proof}

\begin{lemma}\label{lem: imply Q beallman} Linear $Q^\star/V^\star$ setting has low $Q$-Bellman rank. 
\end{lemma}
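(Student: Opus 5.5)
We exhibit the maps $X:\Pi\to\mathbb{R}^{d'}$ and $W:\calF\to\mathbb{R}^{d'}$ required by \pref{assum: bellman rank} directly. Here $\calF=\{f_\theta=\phi^\top\theta\}$ is the linear class in the features of \pref{def: linear Q* V*}. The key observation is that the average Bellman error of $f_\theta$ under $d^\pi$ splits into a part linear in $\theta$ and a part that depends on $f_\theta$ only through $V^\star$.

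Fix $\pi$ and $f=f_\theta$, and write $\E^{\pi}$ for $\E_{(s,a)\sim d^\pi}$. Subtract and add the Bellman equation $Q^\star(s,a)=R(s,a)+\E_{s'\sim P(\cdot|s,a)}[V^\star(s')]$. In the unregularized case, the average Bellman error becomes
\begin{align*}
\E^{\pi}\big[f(s,a)-R(s,a)-\E_{s'}[f(s')]\big]
=\E^{\pi}\big[\phi(s,a)\big]^\top(\theta-\theta^\star)
+\E^{\pi}\big[\E_{s'\sim P(\cdot|s,a)}[\xi(s')]\big]^\top w^\star
-\E^{\pi}\big[\E_{s'\sim P(\cdot|s,a)}[f(s')]\big].
\end{align*}
The first two terms are already bilinear.

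The last term is the only nonlinear one in $f$, since $f(s')=\max_a f(s',a)$. We handle it through the layered structure. The distribution of $s'$ in the last term is the next-layer state distribution $d^\pi$ restricted to layers $2,\dots,H$. So that term equals $\E_{s\sim d^\pi,\,s\notin\calS_1}[f(s)]$. Also $\E_{(s,a)\sim d^\pi}[f(s,a)]$ is linear in $\theta$. Hence the error telescopes to
\[
\E^{\pi}[f(s,a)-f(s)]+f(s_1)-J(\pi),
\]
which still contains the nonlinear term $\E^\pi[f(s,a)-f(s)]$.

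The cleaner route is to keep the nonlinear dependence entirely inside $W(f)$. Define
\[
X(\pi)=\Big(\E^{\pi}[\phi(s,a)],\ \E^{\pi}\big[\E_{s'}\xi(s')\big],\ 1\Big)\in\mathbb{R}^{2d+1}.
\]
We then need $\E^\pi[\E_{s'} f(s')]$ to equal $\langle X(\pi),\cdot\rangle$ for some vector depending on $f$. For a general $f$ this holds only if $f(\cdot)$ lies in the span of $\xi$.

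This is the main obstacle. The resolution I would try is to note that the paper's instance uses $\xi(s)=(f_1(s),f_2(s))$, the greedy values of \emph{every} function in $\calF$. So the definition of linear $Q^\star/V^\star$ should be read, as in \citet{du2021bilinear}, with the hypothesis class $\calF=\{(\theta,w)\}$ evaluating $f(s')$ via $\xi(s')^\top w$ and $f(s,a)$ via $\phi(s,a)^\top\theta$. Under that reading, set
\[
W(f)=\big(\theta-\theta^\star,\ w^\star-w,\ 0\big),
\]
and the identity above becomes exactly $\langle X(\pi),W(f)\rangle$ with $d'=2d$.

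For the instance in \pref{lem: linear Q* V*}, I would check directly that each $f_i(s')$ equals $\xi(s')^\top e_i$. This holds by construction, so $W(f_i)=(e_i-\theta^\star,\ w^\star-e_i)$, and the $Q$-Bellman rank is at most $4$.
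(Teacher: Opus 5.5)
Your proposal is correct and follows the paper's argument: both take $f$ to be the pair $(\theta,w)$ with $f(s,a)=\phi(s,a)^\top\theta$ and $f(s')=\xi(s')^\top w$, subtract the Bellman equation for $Q^\star$, and read off $X(\pi)=\big(\E^{\pi}[\phi(s,a)],\E^{\pi}[\E_{s'}\xi(s')]\big)$ with $W(f)$ the parameter difference. Your sign $W(f)=(\theta-\theta^\star,\,w^\star-w)$ is the right one (the paper's $w-w^\star$ is a sign slip), and the telescoping detour and the extra constant coordinate in $X(\pi)$ are unnecessary.
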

\begin{proof}
    In the linear $Q^\star/V^\star$ setting, let $f$ be the function associated with weight $(\theta, w)$. Then 
    \begin{align*}
        &\E_{(s,a)\sim d^\pi}\left[f(s,a) - R(s,a) - \E_{s'\sim P(\cdot|s,a)}[f(s')]\right] \\
        &=\E_{(s,a)\sim d^\pi}\left[f(s,a) - R(s,a) - \E_{s'\sim P(\cdot|s,a)}[f(s')]\right] \\
        &\qquad \quad - \E_{(s,a)\sim d^\pi}\left[Q^\star(s,a) - R(s,a) - \E_{s'\sim P(\cdot|s,a)}[V^\star(s')]\right] \tag{subtracting zero}\\
        &= \E_{(s,a)\sim d^\pi}\left[\phi(s,a)^\top\theta - R(s,a) - \E_{s'\sim P(\cdot|s,a)}[\xi(s')^\top w]\right] \\
        &\qquad \quad - \E_{(s,a)\sim d^\pi}\left[\phi(s,a)^\top \theta^\star - R(s,a) - \E_{s'\sim P(\cdot|s,a)}[\xi(s')^\top w^\star]\right] \\
        &= \E_{(s,a)\sim d^\pi}\left[\phi(s,a)^\top (\theta-\theta^\star) - \E_{s'\sim P(\cdot|s,a)}[\xi(s')^\top (w-w^\star)]\right]\\
        &= \inner{X(\pi), W(f)}
    \end{align*}
    where $X(\pi) = \E_{(s,a)\sim d^\pi}[\phi(s,a), \E_{s'\sim P(\cdot|s,a)}[\xi(s')]]$ and $
        W(f) = [\theta-\theta^\star, w-w^\star]$. 
\end{proof}

\begin{lemma}\label{lem: first claim}
    In the low $Q$-Bellman rank setting, it is impossible to achieve $\E[J(\pi^\star) - J(\hat{\pi})]\leq \epsilon$ with only $\poly(d,H,\epsilon^{-1}, \log|\calF|, C^{\pi^\star})$ samples, even with access to double policy samples (\pref{assum: double sample}). 
\end{lemma}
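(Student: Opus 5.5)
We will use the modified instance of \pref{app: construct hard} (\pref{fig: jia 2}) and show that double policy samples give the learner no extra information there. By \pref{lem: linear Q* V*} and \pref{lem: imply Q beallman}, the instance has $Q$-Bellman rank $d=O(1)$ with $|\calF'|=2$. From the properties listed for the construction, $C^{\pi^\star}=O(H^3)$ (the extra state $s_0$ with $\pi_b(\cdot|s_0)=(\frac12,\frac12)$ costs at most a factor of $2$). Rewards lie in a bounded range after the $-10$ offset, which only affects constants. Hence every quantity in the claimed $\poly(d,H,\epsilon^{-1},\log|\calF|,C^{\pi^\star})$ is $\poly(H)$ at constant accuracy $\epsilon=\frac{1}{60}$. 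It therefore suffices to show that $\Omega(2^H)$ double policy samples are needed.

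The key choice is the policy distribution in \pref{assum: double sample}: we take $\mubar=\delta_{\pi_b}$, a point mass on the admissible behavior policy. Then each double sample is just two independent $(s,a,r,s')$ tuples drawn from $\frac1H d^{\pi_b}=\mu$. So $n$ double samples are distributed exactly as $2n$ standard samples from $\mu$, and any double-sample learner is a standard-data learner that uses $2n$ samples.

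We then invoke the reduction described in \pref{app: construct hard}. Outputting a $\frac1{60}$-optimal policy requires deciding whether $p$ or $q$ is better at $s_0$, since the gap there is $\frac1{30}$. This decision amounts to estimating $\E_{s\sim\rho}[V^\star(s)]$ to accuracy $\frac1{30}$, which distinguishes the two worlds $f_1'$ and $f_2'$. By \cite{jia2024offline}, that requires $\Omega(2^H)$ samples from $\mu$. Making this rigorous requires a standard two-point argument. The learner's output yields a test between the two worlds with constant error. The samples from $s_0$ and from the $q$-chain are identically distributed in both worlds, so the remaining samples are at most $2n$ draws from the original $\mu$. This forces $2n\ge\Omega(2^H)$, which is superpolynomial in $H$ and contradicts the claimed $\poly(H)$ bound.

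The main obstacle is the last step: we must check that the augmentation with $s_0$ and the reward offset preserve the indistinguishability in \cite{jia2024offline}, and that restricting $\mu$ to the subMDP only rescales the sample count by a constant. We also need to confirm that $Q^\star\in\calF'$ at $s_0$, which requires $\max(\E_\rho f_i',\avg)$ to match, and that the low Bellman rank features remain valid after adding $s_0$. We would handle the features by appending the $s_0$ values of $f_1',f_2'$ to $\phi,\xi$ exactly as in \pref{lem: linear Q* V*}. This lemma also shows why policy feature coverage matters: although $C^{\pi^\star}$ is small here, $\Cbar^{\pi^\star}$ must be exponentially large or infinite for this $\mubar$, consistent with \pref{lem: Bellman rank case}.
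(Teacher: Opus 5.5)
Your proposal is correct and follows the paper's own proof. The paper also takes the modified hard instance of \cite{jia2024offline}, observes that its admissible data $\mu=\frac{1}{H}d^{\pi_b}$ trivially satisfies double policy sampling (your choice $\mubar=\delta_{\pi_b}$), notes that $C^{\pi^\star}=O(H^3)$, $d=O(1)$ and $\log|\calF|=\log 2$, and appeals to the $\Omega(2^H)$ lower bound; your extra bookkeeping (the $2n$ sample count and the indistinguishability of the $s_0$ and $q$-chain samples) just spells out details the paper leaves implicit.
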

\begin{proof}
   In the instance constructed in \pref{app: construct hard} (which has low $Q$-Bellman rank by \pref{lem: linear Q* V*} and \pref{lem: imply Q beallman}), the offline distribution is admissible, i.e., $\mu=\frac{1}{H}d^{\pi_b}$ for some fixed policy $\pi_b$. Hence, it satisfies the double policy sample assumption---any two $(s,a,r,s')$ tuples in the dataset are generated by the same policy. Furthermore, $C^{\pi^\star}=O(H^3)$ and $\log|\calF|=\log 2$ in the construction. However, as proven by \cite{jia2024offline}, polynomial sample complexity is impossible. 
\end{proof}

\begin{lemma}\label{lem: second claim}
    In the low $Q$-Bellman rank setting, it is impossible to achieve \mbox{$\E[J(\pi^\star) - J(\hat{\pi})]\leq \epsilon$} with only $\poly(d,H,\epsilon^{-1}, \log|\calF|, \Cbar^{\pi^\star})$ samples if the learner only gets access to single policy samples $(s,a,r,s')$ but not double policy samples (\pref{assum: double sample}). 
\end{lemma}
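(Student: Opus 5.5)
The plan is to reuse the hard instance of \pref{app: construct hard} unchanged, and to exploit the fact that with single policy samples the learner observes only i.i.d.\ tuples $(s,a,r,s')\sim\mu=\frac1H d^{\pi_b}$. Such a dataset carries no information about \emph{which} policy generated each tuple. Hence any $\mubar\in\Delta(\Pi)$ with $\frac1H\E_{\pi\sim\mubar}[d^\pi]=\mu$ induces exactly the same data law. Since \pref{def: policy cover} depends on $\mubar$, we may pick whichever such decomposition makes $\Cbar^{\pi^\star}$ smallest. The instance has low $Q$-Bellman rank with dimension $4$ by \pref{lem: linear Q* V*} and \pref{lem: imply Q beallman}, and $\log|\calF|=\log 2$. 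So it suffices to show $\Cbar^{\pi^\star}=\poly(H)$ for a suitable $\mubar$. The impossibility then follows from \cite{jia2024offline}: finding a $\frac1{60}$-optimal policy at $s_0$ needs $\Omega(2^H)$ samples from $\mu$, exactly as in the proof of \pref{lem: first claim}.

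Step one is to construct $\mubar$. The natural choice draws a deterministic Markov policy by sampling its action independently at every state from $\pi_b(\cdot\mid s)$. In a layered MDP each state is visited at most once per episode, so this mixture reproduces $d^{\pi_b}$ exactly, and the induced $(s,a,r,s')$ law is $\mu$. Because $X(\pi)=\E_{(s,a)\sim d^\pi}[\phi(s,a),\E_{s'}\xi(s')]$ is linear in the occupancy measure, the mean of $X(\pi)$ under $\mubar$ is $X(\pi_b)$. Therefore $\Sigma_\mubar=X(\pi_b)X(\pi_b)^\top+\mathrm{Cov}_\mubar(X(\pi))$.

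Step two, the main obstacle, is to bound $X(\pi^\star)^\top\Sigma_\mubar^\dagger X(\pi^\star)$. We have $\pi^\star\equiv$ action~1, while $\pi_b$ plays action~1 only with probability $H^{-2}$, so $X(\pi^\star)$ is far from $X(\pi_b)$. The hope is that the covariance term supplies the missing directions, since the feature space is only $4$-dimensional. Concretely, I would decompose $X(\pi^\star)=X(\pi_b)+\big(X(\pi^\star)-X(\pi_b)\big)$. I would then lower bound $\mathrm{Var}_\mubar(\langle v,X(\pi)\rangle)$ for unit $v$ in the relevant span by looking at the effect of flipping the action at the single state $s_0$ or at a single layer. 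Each flip has probability about $H^{-2}$ and changes $X$ by a $\poly(H)^{-1}$ amount, giving a variance of order $\poly(H)^{-1}$. Combined with $\|X(\pi^\star)\|=O(H)$ (values lie in $[-11,11]$), this would yield $\Cbar^{\pi^\star}=\poly(H)$.

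If the product-measure $\mubar$ gives too degenerate a covariance, I would instead use a ``switching'' decomposition. There, a policy follows action~1 up to a random layer $h$ and $\pi_b$-sampled actions afterwards, with the mixture weights tuned by backward induction so that the aggregate occupancy still equals $d^{\pi_b}$; alternatively I would move a small mass of $\pi_b$ onto $\pi^\star$ directly, compensating with a policy shifted slightly away from action~1. Any such exact decomposition with weight $\poly(H)^{-1}$ on policies whose features span $X(\pi^\star)$ gives the bound. Since the lower bound is indifferent to the decomposition used, the proof then concludes as in \pref{lem: first claim}.
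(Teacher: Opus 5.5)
There is a genuine gap in Step two, which is where the whole proof lives. Your reduction is correct and is also the paper's: single-policy data reveal only $\mu$, so any $\mubar$ with $\E_{\pi\sim\mubar}[d^\pi]=d^{\pi_b}$ may be used. Your product-over-states $\mubar$ does reproduce $d^{\pi_b}$. For that $\mubar$, however, the variance heuristic is false.

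In the instance of \cite{jia2024offline}, transitions into $X_h,Y_h$ are uniform over groups whose size $m$ must be large for the hardness to hold (states essentially never repeat in the data). Under your $\mubar$ the actions at the $m$ states of a group are independent. So one flip moves $X(\pi)$ by only $O(1/m)$, the fraction of the group playing action~1 concentrates at $H^{-2}$, and the group-level part of $\mathrm{Cov}_{\mubar}(X(\pi))$ is $O(1/m)$. ``Flipping a single layer'' is therefore not a probability-$H^{-2}$ event under your measure; it is exponentially unlikely in $m$.

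The mean term cannot compensate. Let $f\in\calF'$ be the wrong function and $W=W(f)$. Then $\langle X(\pi_b),W\rangle$ is a multiple of the average Bellman error of $f$ under $\mu$. Single samples estimate this at rate $n^{-1/2}$, so in a hard instance it must be negligible; otherwise the empirical Bellman errors of $f_1',f_2'$ would identify the world. Yet in the world where $p$ is optimal, telescoping gives $\langle X(\pi^\star),W\rangle=-\frac{1}{15}$. Since $\Cbar^{\pi^\star}\ge\langle X(\pi^\star),W\rangle^2/\E_{\pi\sim\mubar}\big[\langle X(\pi),W\rangle^2\big]$, you need the Bellman error of $f$ itself to fluctuate at scale $1/\poly(H)$ under $\mubar$. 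The $s_0$ coin does not provide this, since both branches have negligible mean Bellman error. Group-level fluctuations are only $O(m^{-1/2})$, and you give no argument for the few singleton states.

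The missing idea is a decomposition of $\mu$ that puts $1/\poly(H)$ mass on policies whose features reproduce $X(\pi^\star)$. The paper gets this by letting the deterministic policies in $\mubar$ depend only on the layer, so each layer is a single coin. With probability $\frac{1}{2}H^{-4}$ a sampled policy agrees with $\pi^\star$ at $s_0$ and on every state of layers $1,2$, after which only $z_h,u_h,v_h,w_h$ are reachable. From this the paper concludes $\Sigma_{\mubar}\succeq\frac{1}{2H^4}X(\pi^\star)X(\pi^\star)^\top$.

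Your last fallback (mass-shifting) is an even cleaner route and should be the main argument. Since $\mu=\frac{1}{H}d^{\pi_b}$, we have $C^{\pi^\star}=\max_{s,a}d^{\pi^\star}(s,a)/d^{\pi_b}(s,a)$. With $\epsilon=1/C^{\pi^\star}$, the measure $d'=(d^{\pi_b}-\epsilon d^{\pi^\star})/(1-\epsilon)$ is nonnegative. By linearity it satisfies the same flow constraints with the same initial distribution, so $d'=d^{\pi'}$ for the Markov policy $\pi'(a\mid s)\propto d'(s,a)$. Then $\mubar=\epsilon\,\delta_{\pi^\star}+(1-\epsilon)\,\delta_{\pi'}$ induces exactly $\mu$. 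Moreover $\Sigma_{\mubar}\succeq\epsilon\,X(\pi^\star)X(\pi^\star)^\top$ gives $\Cbar^{\pi^\star}\le C^{\pi^\star}=\poly(H)$, using the coverage bound already invoked in \pref{lem: first claim}. That this $\mubar$ depends on the world through $\pi^\star$ is harmless, because the learner sees only $\mu$.

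This route uses nothing beyond admissibility and bounded $C^{\pi^\star}$. It also avoids a step the paper relies on: that policies agreeing with $\pi^\star$ only on layers $1,2$ have $X(\pi)=X(\pi^\star)$. That step needs state--action, not merely state, occupancies to match at later layers, which the $-10$ shift on action~2 makes delicate. Your ``switching'' variant, by contrast, cannot match $\pi_b$ as described. Every policy with switch layer $h\ge 2$ plays action~1 at layer~1, so the aggregate probability of action~1 there exceeds $H^{-2}$.
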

\begin{proof}
    Below, we construct $\mubar$ so that the process  ``sample $ \pi\sim \mubar$ and then sample $(s,a,r,s')\sim d^\pi$'' is equivalent to ``sample $(s,a,r,s')\sim d^{\pi_b}$'' for the $\pi_b$ defined in \pref{app: construct hard}. The latter has been proven to be hard in \citep{jia2024offline}. If we can further show that $\mubar$ makes $\Cbar^{\pi^\star}=X(\pi^\star)^\top \Sigma_{\mubar}^\dagger X(\pi^\star)$ small, then the impossibility result folows from that of \citep{jia2024offline}. 

    Recall that $\pi_b$ is a randomized policy that is equal to $(\frac{1}{2}, \frac{1}{2})$ on state $s_0$, and equal to $\pi_b=(\frac{1}{H^2}, 1-\frac{1}{H^2})$ in all other states. We define $\mubar$ to be a distribution over \emph{deterministic policies} in the following way: 
    \begin{align*}
        \mubar(\delta_{a_0}\circ \delta_{a_1}\circ \delta_{a_2}\circ \cdots \circ \delta_{a_H}) = \frac{1}{2}\prod_{h=1}^H \left(\frac{1}{H^2}\right)^{\ind\{a_h=1\}} \left(1-\frac{1}{H^2}\right)^{\ind\{a_h=2\}}
    \end{align*}
    where $\delta_a$ is the one-hot action distribution concentrating on action $a$,  $a_0$ is the action taken on $s_0$, and $a_h$ where $h\geq 1$ is the action taken on step $h$ in the trajectory. 
    That means, this set of deterministic policies only determine their action based on the step index and ignore the state.  Clearly, sampling $\pi\sim \mubar$ and drawing one trajectory from $\pi$ induces the same state-action distribution as executing~$\pi_b$. 

    Next, we bound $\Cbar^{\pi^\star}=X(\pi^\star)^\top \Sigma_{\mubar}^\dagger X(\pi^\star)$. One observation in the construction of \pref{fig: jia construction} is that if a policy choose action 1 in both layers $h=1$ and $h=2$, then the state-action distribution in the rest of the trajectory is policy-independent. Indeed, if a policy chooses action 1 in layers $h=1$ and $h=2$, then it must lands in states $\{z_h, u_h, v_h, w_h\}$ for $h\geq 3$, on which choosing action 1 and 2 induces the same transition. 

    Therefore, 
    \begin{align*}
    \Sigma_{\mubar}
    &= \E_{\pi\sim \mubar} [X(\pi)X(\pi)^\top] = \E_{\pi\sim \mubar} \begin{bmatrix}
        \E_{(s,a)\sim d^\pi}[\phi(s,a)]  \\
        \E_{(s,a)\sim d^\pi}\E_{s'\sim P(\cdot|s,a)} [\xi(s')]
    \end{bmatrix}\begin{bmatrix}
        \E_{(s,a)\sim d^\pi}[\phi(s,a)]  \\
        \E_{(s,a)\sim d^\pi}\E_{s'\sim P(\cdot|s,a)} [\xi(s')]
    \end{bmatrix}^\top \\
    &\succeq \sum_{\pi} \mubar(\pi)\ind\{\pi_1=1, \pi_2=1\}\begin{bmatrix}
        \E_{(s,a)\sim d^\pi}[\phi(s,a)]  \\
        \E_{(s,a)\sim d^\pi}\E_{s'\sim P(\cdot|s,a)} [\xi(s')]
    \end{bmatrix}\begin{bmatrix}
        \E_{(s,a)\sim d^\pi}[\phi(s,a)]  \\
        \E_{(s,a)\sim d^\pi}\E_{s'\sim P(\cdot|s,a)} [\xi(s')]
    \end{bmatrix}^\top \\
    &\succeq \frac{1}{2}\cdot \frac{1}{H^2}\cdot \frac{1}{H^2}\begin{bmatrix}
        \E_{(s,a)\sim d^{\pi^\star}}[\phi(s,a)]  \\
        \E_{(s,a)\sim d^{\pi^\star}}\E_{s'\sim P(\cdot|s,a)} [\xi(s')]
    \end{bmatrix}\begin{bmatrix}
        \E_{(s,a)\sim d^{\pi^\star}}[\phi(s,a)]  \\
        \E_{(s,a)\sim d^{\pi^\star}}\E_{s'\sim P(\cdot|s,a)} [\xi(s')]
    \end{bmatrix}^\top \\
    &= \frac{1}{2H^4} X(\pi^\star)X(\pi^\star)^\top, 
\end{align*}
implying that $\Cbar^{\pi^\star}=X(\pi^\star)^\top \Sigma_{\mubar}^\dagger X(\pi^\star)$ is polynomial in $H$. However, by the lower bound of \cite{jia2024offline}, polynomial sample complexity is impossible. 
\end{proof}

\section{Conservative $Q$-Learning}\label{app: CQL}
Conservative $Q$-Learning (CQL) is one of the most widely adopted baselines in offline RL \citep{kumar2020conservative}.
We consider the following form of CQL: 
\begin{align*}
   \textstyle \hat{f} = \argmin_{f\in\calF} \E_{(s,a)\sim \calD} [f(s) - f(s, a) ] + \lambda \E_{(s,a)\sim \calD} \left[ \big(f(s,a) - [\hat{\calT}f](s,a)\big)^2\right]   \numberthis \label{eq: CQL algorithm}
\end{align*}
where $[\hat{\calT}f] = \argmin_{g\in\calG}\E_{(s,a, r, s')\sim \calD} \left[(g(s,a) - r - f(s'))^2\right]$ and $\calG$ is as defined in \pref{assum: Bellman complete}. Furthermore, CQL uses regularizer $\psi(p; s) = \alpha\KL(p, \piref(\cdot|s))$ as in \pref{sec: regularized MDP}. We have: 
\begin{theorem}\label{thm: CQL thm}
    Assume $Q^\star$-realizability and Bellman completeness . Also, assume $\mu$ is admissible:  $\sum_{(s,a)\in\calS_h\times \calA}\mu(s,a)P(s'|s,a) = \mu(s')$ for $s'\in\calS_{h+1}$.  \mbox{Then CQL \pref{eq: CQL algorithm} has with probability $1-\delta$,}
    \begin{align*}
       J(\pi^\star) - J(\pi_{\hat{f}})\leq \order\left( \frac{H^4}{\alpha^2} \left(\frac{\lambda H^2\log(|\calF||\calG|/\delta)}{n} + \frac{HC^{\pi^\star}}{\lambda}\right)\right). 
    \end{align*}
\end{theorem}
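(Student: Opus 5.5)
Since $\psi(p;s)=\alpha\KL(p,\piref(\cdot|s))$, the natural route is to bound the regularized gap through the second-order machinery: \pref{lem: SO PDL} with $\pi=\pi_{\hat f}$ and $M=M^\star$, combined with \pref{assum:psi_proper} (Shannon case of \pref{lem:psi-example}, so $C_1^\psi=O(H/\alpha)$, $C_2^\psi=1/\alpha$, and $B=H^2$ by \pref{lem:bound-of-Q*}). This gives $J(\pi^\star)-J(\pi_{\hat f})\lesssim (1+H^3/\alpha)\,C_1^\psi\,\E^{\pi^\star}[\breg_\psi(\pi^\star,\pi_{\hat f};s)]$. As in the proof of \pref{thm:regular-dec} (via \pref{lem:breg}), the last expectation equals $\E^{\pi^\star}[\hat f(s)-\hat f(s,a)+\psi(\pi^\star;s)]$; the prefactor is then roughly $H^4/\alpha^2$ (up to the additive constants, absorbed into $\order$). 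It therefore suffices to show
\[
\E_{(s,a)\sim d^{\pi^\star}}\big[\hat f(s)-\hat f(s,a)+\psi(\pi^\star;s)\big]\lesssim \frac{\lambda H^2\log(|\calF||\calG|/\delta)}{n}+\frac{HC^{\pi^\star}}{\lambda}.
\]

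Step 1 is a comparison with $Q^\star$. By \pref{eq: same calc} applied with $M=M^\star$, $f'=\hat f$, the left side equals $\E^{\pi^\star}[\hat f(s,a)-\calT\hat f(s,a)]\cdot(-1)$ plus $\hat f(s_1)-V^\star(s_1)$. Instead of a pessimism argument on $f(s_1)$, I would use the CQL objective directly. Admissibility of $\mu$ lets me telescope: for any $f$, $\E_\mu[f(s)-f(s,a)]$ equals $\E_\mu[f(s)-\calT f(s,a)]$ plus a Bellman residual term. Moreover $\E_{\mu}[f(s')]$ over next states equals the $\mu$-mass at the next layer. Hence $\E_\mu[f(s)-f(s,a)]$ can be rewritten as a value-at-start term, minus expected reward, plus $\E_\mu[\calT f - f]$, up to the first and last layers. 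Then optimality of $\hat f$ against $Q^\star$ in the empirical objective gives
\[
\E_\mu[\hat f(s)-\hat f(s,a)]+\lambda\E_\mu[(\hat f-\calT\hat f)^2]\le \E_\mu[V^\star(s)-Q^\star(s,a)]+\text{stat},
\]
with stat $\approx\lambda H^2\log(|\calF||\calG|/\delta)/n$. That bound comes from Freedman exactly as in \pref{lem: BC case}, which controls $\hat\calT f$ versus $\calT f$ and empirical versus population averages; linear terms cost only $H\sqrt{\log/n}$, which AM-GM folds into the two target terms.

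Step 2 transfers from $\mu$ to $d^{\pi^\star}$. I plan to write the gap in the soft advantage as a Bellman-error term: $\E^{\pi^\star}[\hat f(s)-\hat f(s,a)+\psi]$ is bounded by $\E^{\pi^\star}[\calT\hat f-\hat f]$ plus a $\mu$-weighted conservatism term, the latter being nonpositive by Step 1. Then, with coverage, $|\E^{\pi^\star}[\hat f-\calT\hat f]|\le\sqrt{HC^{\pi^\star}\E_\mu[(\hat f-\calT\hat f)^2]}\cdot\sqrt H$. AM-GM against the $\lambda\E_\mu[(\hat f-\calT\hat f)^2]$ budget from Step 1 yields the $HC^{\pi^\star}/\lambda$ term.

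The main obstacle is Step 1–2's bookkeeping. The conservatism term $\E_\mu[f(s)-f(s,a)]$ is measured under the behavior action distribution, not under $\pi^\star$, and I must show the difference $\E_\mu[\hat f(s)-\hat f(s,a)]-\E_\mu[V^\star(s)-Q^\star(s,a)]$ dominates the $\pi^\star$ soft-advantage of $\hat f$ minus Bellman residuals. I would first try the telescoping identity above, using admissibility to convert both $\mu$-sums into initial-state values plus Bellman residuals, so that the comparison reduces to $\hat f(s_1)\ge$-type pessimism. If the initial-layer mass is an issue, I would handle it by restricting to the $s_1$ layer where $\mu$ places mass $1/H$, which contributes the extra $H$ factors.
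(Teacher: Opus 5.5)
Your argument is sound, and it takes a more direct route than the paper's. Up to one $H$-factor discussed below, which the paper's own proof also drops, it yields the stated bound.

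\textbf{Comparison with the paper's route.} The paper works in offset form. After \pref{lem: approx pessimi}, it lower-bounds $\Dav^{\pi^\star}(\hat f\|M^\star)$ by $\frac12A^2+\frac{\lambda^2}{8}\big(\E_{\calD}[\err(s,a;\hat f)^2]\big)^2$ minus error terms, where $A=\E^{\pi^\star}[\hat f(s)-\hat f(s,a)+\psi(\pi^\star;s)]$; this step needs a sign-case squaring inequality. It then splits $J(\pi^\star)-J(\pi_{\hat f})$ into two pieces:
\begin{itemize}
\item $J(\pi^\star)-J(\pi_{\hat f})-\frac{\gamma}{2}A^2\le\frac{1}{2\gamma}\gapcomp(\hat f;\calF_\conf)^2$, controlled by \pref{eq:ER-regular};
\item $\gamma\Dav^{\pi^\star}(\hat f\|M^\star)-\frac{\gamma\lambda^2}{8}\big(\E_{\calD}[\err(s,a;\hat f)^2]\big)^2$, controlled by coverage and AM--GM.
\end{itemize}
Finally it optimizes $\gamma$.

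You use the ratio form instead. \pref{lem: SO PDL} together with item 2 of \pref{assum:psi_proper} is exactly the bound $J(\pi^\star)-J(\pi_{\hat f})\le 3C_1^\psi(1+H^3C_2^\psi)\,A$ that underlies \pref{eq:ER-regular}, specialized to $M=M^\star$. You then bound $A$ linearly with a single AM--GM against the penalty.

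The ingredients are the same in both: the ER bound, admissibility telescoping, coverage transfer, and the quadratic penalty used as a budget. Optimizing $\gamma$ in the paper just reconstructs your product. Your version is shorter, avoids squaring and $\gamma$, and does not appeal to $\gapcomp(\cdot;\calF_\conf)$ for an algorithm that builds no confidence set. The paper's version buys an explicit display of CQL as an instance of its offset decision--estimation decomposition.

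\textbf{Bookkeeping.} First, the $\mu$-side term is not nonpositive. Step 1 gives
\[
\mu(s_1)\big(\hat f(s_1)-V^\star(s_1)\big)\le\E_\mu[\hat f-\calT\hat f]-c\lambda Z+\mathrm{stat},\qquad Z=\E_\mu[(\hat f-\calT\hat f)^2].
\]
You must keep $-c\lambda Z$ to absorb both $\E_\mu[\hat f-\calT\hat f]\le\sqrt{Z}$ and the $\pi^\star$-side residual. This is where a $1/\lambda$ term arises; it is absorbed since $C^{\pi^\star}\ge1$.

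Second, your $H$-factors are the right ones. Admissibility forces $\mu(s_1)=1/H$, and $(\E_{d^{\pi^\star}}[g])^2\le H^2C^{\pi^\star}\E_\mu[g^2]$. The paper instead rearranges the telescoped identity as though $\mu(s_1)=1$, and uses $HC^{\pi^\star}$ in its Jensen step. Dividing by $\mu(s_1)$ turns your budget into $cH\lambda Z$. That cancels the extra $H$ from Cauchy--Schwarz and leaves $HC^{\pi^\star}/\lambda$, as stated. But it also multiplies the statistical term. In this argument the only control on $\hat f(s_1)-V^\star(s_1)$ enters with weight $\mu(s_1)$, so restricting to the $s_1$ layer does not avoid the factor. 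A careful execution of either proof therefore gives $H\lambda\iota/n$ in place of $\lambda\iota/n$, where $\iota=H^2\log(|\calF||\calG|/\delta)$.
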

CQL does not maintain a confidence set, but uses a penalty (second term in \pref{eq: CQL algorithm}) to ensure that $\hat{f}$ conforms with the Bellman optimality equation. The first term in \pref{eq: CQL algorithm} implicitly encourages value pessimism, which we elaborate below. As the second term in \pref{eq: CQL algorithm} forces $\E_{(s,a)\sim \mu}[f(s,a)-R(s,a) - \E_{s'\sim P(\cdot|s,a)}[f(s')]]\approx 0$, we have 
$$
    \E_{(s,a)\sim \mu}[f(s)-\E_{s'\sim P(\cdot|s,a)}[f(s')]] \approx  \E_{(s,a)\sim \mu}[R(s,a) + f(s) - f(s,a)]. 
$$
The left-hand side is the difference of expected value of $f$ between consecutive layers, while the right-hand side is the reward plus $\E_{(s,a)\sim \mu}[f(s)-f(s,a)]$ that reflects how much more $f$ values out-of-sample actions than in-sample actions. When this is large, the expected value of $f$ grows more rapidly when propagating over layers. To ensure pessimism (i.e., small $f(s_1)$), \pref{eq: CQL algorithm} adds $\E_{(s,a)\sim \mu}[f(s)-f(s,a)]$ as part of the loss. This pessimistic behavior is formalized in \pref{lem: approx pessimi}, and other parts of the proof are similar to those in \pref{sec: regularized MDP}. Before we prove \pref{thm: CQL thm}, we first introduce \pref{lem: hatT acc} and \pref{lem: approx pessimi} to bound the error of estimation and pessimistic decision.

\begin{lemma}\label{lem: hatT acc}
    With probability at least $1-\delta$, for any $f\in\calF$, 
    \begin{align*}
        \E_{(s,a)\sim \calD}\left[\left((\hatcalT f)(s,a)- (\calT f)(s,a)\right)^2\right] \leq \order\left(\frac{H^2\log(|\calF||\calG|/\delta)}{n}\right). 
    \end{align*}
\end{lemma}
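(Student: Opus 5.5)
The plan is the standard least-squares regression argument under Bellman completeness, run uniformly over $f\in\calF$ with a union bound over $\calF\times\calG$. Fix $f\in\calF$. By \pref{assum: Bellman complete}, $\calT f\in\calG$, and $\E[r+f(s')\mid s,a]=(\calT f)(s,a)$, so $\calT f$ is the Bayes regression target for responses $y=r+f(s')$. Recall $\hatcalT f=\argmin_{g\in\calG}\E_\calD[(g(s,a)-r-f(s'))^2]$. As in the proof of \pref{lem: BC case}, all functions take values in $[0,H]$, so every quantity below is bounded by $O(H^2)$.

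For each pair $(f,g)$, define the excess-loss variable
\[
Z_{f,g}(s,a,r,s')=\big(g(s,a)-r-f(s')\big)^2-\big((\calT f)(s,a)-r-f(s')\big)^2 .
\]
Expanding the square gives
\[
Z_{f,g}=\big(g-\calT f\big)^2+2\big(g-\calT f\big)\big(\calT f-r-f(s')\big).
\]
The conditional mean of $Z_{f,g}$ given $(s,a)$ is $(g-\calT f)^2(s,a)$, and its conditional variance is at most $O(H^2)\,(g-\calT f)^2(s,a)$. Applying Freedman's inequality, as in \pref{lem: BC case}, with a union bound over the $|\calF||\calG|$ pairs, we get with probability $1-\delta$, simultaneously for all $f,g$:
\[
\Big|\E_\calD[Z_{f,g}]-\E_\calD\big[(g-\calT f)^2\big]\Big|\le H\sqrt{\tfrac{\iota}{n}\,\E_\calD\big[(g-\calT f)^2\big]}+\tfrac{H^2\iota}{n},
\qquad \iota=\log(|\calF||\calG|/\delta).
\]
The martingale is indexed by sample order, so the $(s,a)$ are fixed along the filtration and the centering is the empirical mean over $\calD$ of the conditional means. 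This is exactly the empirical quantity appearing in the lemma.

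Next, take $g=\hatcalT f$. Since $\calT f\in\calG$ and $\hatcalT f$ minimizes the empirical loss over $\calG$, we have $\E_\calD[Z_{f,\hatcalT f}]\le 0$. Writing $x=\E_\calD[(\hatcalT f-\calT f)^2]$ and combining with the concentration bound yields
\[
x\le H\sqrt{\iota x/n}+H^2\iota/n .
\]
By AM-GM, or by solving this quadratic in $\sqrt{x}$, this gives $x\le O(H^2\iota/n)$, which is the claim.

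The main point requiring care is the uniformity over $f$. The random minimizer $\hatcalT f$ depends on the data, so the concentration must hold simultaneously over all $g\in\calG$ rather than for a fixed $g$. The union bound over $\calF\times\calG$ handles this and is what produces the $\log(|\calF||\calG|)$ factor. The other point to check is the variance bound that makes the fast $1/n$ rate possible. It holds because the noise term $\calT f-r-f(s')$ is bounded by $O(H)$, so the variance of $Z_{f,g}$ is at most $O(H^2)$ times its mean.
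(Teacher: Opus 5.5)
Your proposal is correct and matches the paper's proof: it uses $\calT f\in\calG$ and the empirical optimality of $\hatcalT f$ to make the excess empirical loss nonpositive. It then splits that loss into $\E_\calD[(\hatcalT f-\calT f)^2]$ plus a zero-mean cross term, bounds the cross term by Freedman with a union bound over $\calF\times\calG$, and solves the quadratic inequality in $\sqrt{X}$. The paper applies Freedman to the cross term directly rather than to $Z_{f,g}$, which is the same argument.
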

\begin{proof}
   By the definition of $\hatcalT f$, we have 
   \begin{align*}
       0 &\geq \E_{(s,a, r, s')\sim \calD}\left[\left( (\hatcalT f)(s,a) - r  - f(s') \right)^2 - \left( (\calT f)(s,a) - r  - f(s') \right)^2 \right] \\
       &= \E_{(s,a,r,s')\sim \calD}\left[ \left((\hatcalT f)(s,a) - (\calT f)(s,a)\right)\left( (\hatcalT f)(s,a) + (\calT f)(s,a) -2r - 2f(s')\right) \right] \\
       &= \E_{(s,a)\sim \calD}\left[ \left((\hatcalT f)(s,a) - (\calT f)(s,a)\right)^2 \right]\\
       &\qquad + \E_{(s,a,r,s')\sim \calD} \left[\left((\hatcalT f)(s,a) - (\calT f)(s,a)\right)\left( 2(\calT f)(s,a) -2r - 2f(s')\right) \right]
   \end{align*}
   Note that $\E[r + f(s')|s,a] = R(s,a) + \E_{s'\sim P(\cdot|s,a)}[f(s')] = (\calT f)(s,a)$. By Freedman's inequality, with probability at least $1-\delta$, 
   \begin{align*}
       &\left|\E_{(s,a,r,s')\sim \calD} \left[\left((\hatcalT f)(s,a) - (\calT f)(s,a)\right)\left( 2(\calT f)(s,a) -2r - 2f(s')\right) \right]\right|  \\
       &\lesssim H\sqrt{\frac{\log(|\calF||\calG|/\delta)}{n}\E_{(s,a)\sim \calD}\left[ \left((\hatcalT f)(s,a) - (\calT f)(s,a)\right)^2 \right]} + \frac{H^2\log(|\calF||\calG|/\delta)}{n}
   \end{align*}
   Combining inequalities above and denoting $X = \E_{(s,a)\sim \calD}\left[ \left((\hatcalT f)(s,a) - (\calT f)(s,a)\right)^2 \right]$, we get 
   \begin{align*}
       X\lesssim H\sqrt{X\cdot \frac{\log(|\calF||\calG|/\delta)}{n}} + \frac{H^2\log(|\calF||\calG|/\delta)}{n}, 
   \end{align*}
   implying that $X \lesssim \frac{H^2\log(|\calF||\calG|/\delta)}{n}$. 
\end{proof}

\begin{lemma}\label{lem: approx pessimi}
With probability at least $1-\delta$, 
    \begin{align*}
    \hat{f}(s_1) \leq J(\pi^\star) +  \order\left(\sqrt{\frac{\iota}{n}} + \frac{\lambda \iota}{n} + \frac{1}{\lambda}\right) - \E_{(s,a)\sim \calD}\left[\frac{\lambda}{2}\err(s,a;\hat{f})^2\right]   
\end{align*}
where $\err(s,a;f) := f(s,a) - (\hatcalT f)(s,a)$ and $\iota = H^2\log(|\calF||\calG|/\delta)$
\end{lemma}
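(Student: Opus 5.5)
The plan is to compare the CQL objective at $\hat f$ with its value at the realizable comparator $Q^\star$. Write $L(f)=\E_{(s,a)\sim\calD}[f(s)-f(s,a)]+\lambda\E_{(s,a)\sim\calD}[\err(s,a;f)^2]$. Since $Q^\star\in\calF$, optimality of $\hat f$ in \pref{eq: CQL algorithm} gives $L(\hat f)\le L(Q^\star)$. Because $\calT Q^\star=Q^\star$, we have $\err(\cdot;Q^\star)=(\calT Q^\star)-(\hatcalT Q^\star)$, and \pref{lem: hatT acc} bounds its empirical second moment by $\order(\iota/n)$. Hence
\begin{align*}
\E_{\calD}[\hat f(s)-\hat f(s,a)] \le \E_{\calD}[Q^\star(s)-Q^\star(s,a)] + \order(\lambda\iota/n) - \lambda\E_{\calD}[\err(s,a;\hat f)^2].
\end{align*}

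The second step converts the ``gap'' term $\E[f(s)-f(s,a)]$ into the initial value $f(s_1)$ by a telescoping identity that uses admissibility. Admissibility says $\mu$ restricted to layer $h+1$ is the push-forward of $\mu$ on layer $h$. So each layer carries the same mass $1/H$, and $\mu(s_1)=1/H$. For any $f$ (with $f(s')=0$ past layer $H$),
\begin{align*}
\E_{(s,a)\sim\mu}\big[f(s)-\E_{s'\sim P(\cdot|s,a)}f(s')\big]=\tfrac1H f(s_1).
\end{align*}
Adding and subtracting $f(s,a)$ and $R(s,a)$, and using $\calT f(s,a)=R(s,a)+\E_{s'\sim P(\cdot|s,a)}f(s')$, gives
\begin{align*}
\tfrac1H f(s_1)=\E_\mu[f(s)-f(s,a)]+\E_\mu[f(s,a)-\calT f(s,a)]+\E_\mu[R].
\end{align*}
For $f=Q^\star$ the middle term vanishes and $Q^\star(s_1)=J(\pi^\star)$; here $f(s)$ denotes the regularized maximum, which is consistent with the definition of $V^\star$. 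Subtracting the two identities yields
\begin{align*}
\tfrac1H\big(\hat f(s_1)-J(\pi^\star)\big)=\E_\mu[\hat f(s)-\hat f(s,a)]-\E_\mu[Q^\star(s)-Q^\star(s,a)]+\E_\mu[\hat f-\calT\hat f].
\end{align*}
The $1/H$ normalization is absorbed into the $\order(\cdot)$ and the convention for $\iota$, or equivalently into rescaling $\lambda$. I would handle this explicitly, since it is the only place the constants could go wrong.

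Third, pass from $\mu$ to $\calD$. Hoeffding's inequality with a union bound over $\calF$ replaces each $\E_\mu[\cdot]$ of the bounded gap terms by $\E_\calD[\cdot]$ at cost $\order(\sqrt{\iota/n})$, and the same holds for $\E_\mu[f-\calT f]$. Then I write $\hat f-\calT\hat f=\err(\cdot;\hat f)+(\hatcalT\hat f-\calT\hat f)$. By Cauchy–Schwarz and \pref{lem: hatT acc}, the second piece contributes $\order(\sqrt{\iota/n})$. The first piece satisfies
\begin{align*}
\E_\calD[\err]\le\sqrt{\E_\calD[\err^2]}\le\tfrac\lambda2\E_\calD[\err^2]+\tfrac1{2\lambda}.
\end{align*}
Combining this with the first step, the $-\lambda\E_\calD[\err^2]$ there absorbs $\tfrac\lambda2\E_\calD[\err^2]$, leaving $-\tfrac\lambda2\E_\calD[\err(s,a;\hat f)^2]$ plus $\order(\sqrt{\iota/n}+\lambda\iota/n+1/\lambda)$, which is the claimed bound.

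The main obstacle I expect is the telescoping step. It requires the exact layerwise mass and push-forward structure of admissible $\mu$, consistent treatment of the regularized value $f(s)$ inside both $\calT$ and the CQL gap term, and careful bookkeeping of the resulting $H$ factor against the stated form of the bound. The concentration steps are routine once the class is finite and bounded in $[0,H]$.
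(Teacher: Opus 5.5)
Your proposal is correct and follows essentially the same route as the paper. You compare the CQL objective at $\hat f$ with $Q^\star$ (using \pref{lem: hatT acc} for $\err(s,a;Q^\star)$), pass between $\calD$ and $\mu$ via Hoeffding plus Cauchy--Schwarz on $\hatcalT f-\calT f$, telescope using admissibility, and absorb the linear $\err(s,a;\hat f)$ term by AM--GM; the only difference is that the paper first rewrites the objective as $\E_{\calD}[f(s)-(\hatcalT f)(s,a)]+\E_{\calD}[-\err(s,a;f)+\lambda\err(s,a;f)^2]$ before comparing.

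The $1/H$ factor you flag is genuine, and the paper's ``rearranges to'' step silently drops it too. Strictly, the bound holds with an extra factor $H$ multiplying the $\order(\cdot)$ term. The $-\frac{\lambda}{2}\E_{\calD}[\err(s,a;\hat f)^2]$ term is unharmed, since it only gets multiplied by $H\ge 1$. Rescaling $\lambda$ cannot remove the extra $H$, because the $\sqrt{\iota/n}$ term does not involve $\lambda$.
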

\begin{proof}
In the proof we fix a failure probability $\delta$ and denote $\iota = H^2\log(|\calF||\calG|/\delta)$. 
The objective of the algorithm can be written as 
\begin{align*}
    &\E_{(s,a)\sim \calD} [f(s) - f(s,a)] + \lambda \E_{(s,a)\sim \calD} \left[ (f(s,a) - (\hatcalT f)(s,a))^2 \right] \\
    &= \E_{(s,a)\sim \calD}\left[f(s) - (\hatcalT f)(s,a)\right] - \E_{(s,a)\sim \calD}\left[f(s,a) - (\hatcalT f)(s,a)\right] + \lambda \E_{(s,a)\sim \calD} \left[ (f(s,a) - (\hatcalT f)(s,a))^2 \right] \\
    &= \E_{(s,a)\sim \calD}\left[f(s) - (\hatcalT f)(s,a)\right] + \E_{(s,a)\sim \calD}\left[-\err(s,a; f) + \lambda \err(s,a; f)^2\right]
\end{align*}
where $\err(s,a;f) := f(s,a) - (\hatcalT f)(s,a)$. Therefore, by the optimality of $\hat{f}$, 
\begin{align}
    &\E_{(s,a)\sim \calD} \left[\hat{f}(s) - (\hatcalT\hat{f})(s,a)\right] + \E_{(s,a)\sim \calD}\left[-\err(s,a; \hat{f}) + \lambda \err(s,a; \hat{f})^2\right] \nonumber\\
    & \leq \E_{(s,a)\sim \calD} \left[V^\star(s) - (\hatcalT Q^\star)(s,a)\right] + \E_{(s,a)\sim \calD}\left[-\err(s,a; Q^\star) + \lambda \err(s,a; Q^\star)^2\right] \nonumber \\
    & \leq \E_{(s,a)\sim \calD} \left[V^\star(s) - (\hatcalT Q^\star)(s,a)\right] + \order\left(\sqrt{\frac{\iota}{n}} + \frac{\lambda \iota}{n}\right) 
    \label{eq: optimality bound}
\end{align}
where in the last inequality we use that 
\begin{align*}
   \E_{(s,a)\sim \calD}\left[ \err(s,a; Q^\star)^2\right] 
   &= \E_{(s,a)\sim \calD}\left[ \left(Q^\star(s,a) - (\hatcalT Q^\star)(s,a)\right)^2\right] \\
   &= \E_{(s,a)\sim \calD}\left[ \left((\calT Q^\star)(s,a) - (\hatcalT Q^\star)(s,a)\right)^2\right] \lesssim \frac{\iota}{n}.   \tag{by \pref{lem: hatT acc}}
\end{align*}
To further bound other terms in \pref{eq: optimality bound}, note that by \pref{lem: hatT acc} and Hoeffding's inequality, we have for all $f$,  
\begin{align}
    &\left|\E_{(s,a)\sim \calD}\Big[f(s) - (\hatcalT f)(s,a)\Big] - \E_{(s,a)\sim \mu}\Big[f(s) - (\calT f)(s,a)\Big]\right|  \nonumber \\
    &\leq \left| \E_{(s,a)\sim \calD}\Big[ (\hatcalT f)(s,a) - (\calT f)(s,a)\Big]  \right| + \left|\E_{(s,a)\sim \calD}\Big[f(s) - (\calT f)(s,a)\Big] - \E_{(s,a)\sim \mu}\Big[f(s) - (\calT f)(s,a)\Big]\right| \nonumber \\
    &\lesssim \sqrt{\frac{\iota}{n}}. \label{eq: concentrate} 
\end{align}
Using this in \pref{eq: optimality bound} and recalling $(\calT f)(s,a)=R(s,a) + \E_{s'\sim P(\cdot|s,a)}[f(s')]$ yields 
\begin{align*}
   &\E_{(s,a)\sim \mu} \left[\hat{f}(s) - R(s,a) - \E_{s'\sim P(\cdot|s,a)}[\hat{f}(s')]\right] + \E_{(s,a)\sim \calD}\left[-\err(s,a; \hat{f}) + \lambda \err(s,a; \hat{f})^2\right] \nonumber\\
   &\quad \leq \E_{(s,a)\sim \mu} \left[V^\star(s) - R(s,a) - \E_{s'\sim P(\cdot|s,a)}[V^\star(s')]\right] + \order\left(\sqrt{\frac{\iota}{n}} + \frac{\lambda \iota}{n}\right) 
\end{align*}
which rearranges to 
\begin{align*}
    \hat{f}(s_1) + \E_{(s,a)\sim \calD}\left[-\err(s,a; \hat{f}) + \lambda \err(s,a; \hat{f})^2\right] \leq V^\star(s_1) +  \order\left(\sqrt{\frac{\iota}{n}} + \frac{\lambda \iota}{n}\right) 
\end{align*}
using the assumption that $\mu$ is admissible. 
Finally, using AM-GM to bound 
\begin{align*}
    \err(s,a; \hat{f}) - \lambda \err(s,a; \hat{f})^2 
    &= \err(s,a; \hat{f}) - \frac{\lambda}{2} \err(s,a; \hat{f})^2 - \frac{\lambda}{2} \err(s,a; \hat{f})^2 \\
    &\leq \frac{1}{2\lambda}  - \frac{\lambda}{2} \err(s,a; \hat{f})^2
\end{align*}
finishes the proof.

\end{proof}

\begin{proof}[Proof of \pref{thm: CQL thm}]
First, observe that 
\begin{align*}
    &\E^{\pi^\star, M^\star}\left[\hat{f}(s,a) - R(s,a) - \E_{s'\sim P(\cdot|s,a)}[\hat{f}(s')]\right] \\
    &= \E^{\pi^\star, M^\star}\big[\hat{f}(s,a) -\psi(\pi^\star;s) - \hat{f}(s)\big] + \hat{f}(s_1) - J(\pi^\star) \tag{same calculation as \pref{eq: same calc} and \pref{eq: temptemp}} \\
    &\leq \E^{\pi^\star, M^\star}\big[\hat{f}(s,a) -\psi(\pi^\star;s) - \hat{f}(s)\big] - \E_{(s,a)\sim \calD}\left[\frac{\lambda}{2}\err(s,a;\hat{f})^2\right] + \order\left(\sqrt{\frac{\iota}{n}} + \frac{\lambda \iota}{n} + \frac{1}{\lambda}\right)  \tag{by \pref{lem: approx pessimi}}
\end{align*}
where $\iota=H^2\log(|\calF||\calG|/\delta)$. This implies 
\begin{align*}
   &\Dav^{\pi^\star}(\hat{f}\|M^\star)\\ 
   &= \left(\E^{\pi^\star, M^\star}\left[\hat{f}(s,a) - R(s,a) - \E_{s'\sim P(\cdot|s,a)}[\hat{f}(s')]\right] \right)^2 \\
   &\geq \frac{1}{2}\left(\E^{\pi^\star, M^\star}\big[\hat{f}(s,a) -\psi(\pi^\star;s) - \hat{f}(s)\big]\right)^2 + \frac{\lambda^2}{8}\left(\E_{(s,a)\sim \calD}\left[\err(s,a;\hat{f})^2\right] \right)^2 - \order\left(\frac{\iota}{n} + \frac{\lambda^2\iota^2}{n^2} +\frac{1}{\lambda^2}\right)
\end{align*}
where we use that $(X+Y+Z)^2 \geq \frac{1}{2}(X+Y)^2 - [Z]_+^2 \geq \frac{1}{2}X^2 + \frac{1}{2}Y^2 - [Z]_+^2$ if $X<0$, $Y<0$. 
Thus, 
    \begin{align*}
        &J(\pi^\star) - J(\pi_{\hat{f}}) \\
        &= J(\pi^\star) - J(\pi_{\hat{f}}) - \gamma\Dav^{\pi^\star}(\hat{f}\|M^\star) + \gamma \Dav^{\pi^\star}(\hat{f}\|M^\star)  \tag{for any $\gamma$} \\
        &\leq \underbrace{J(\pi^\star) - J(\pi_{\hat{f}}) - \frac{\gamma}{2}\left(\E^{\pi^\star, M^\star}\big[\hat{f}(s,a) - \psi(\pi^\star; s) - \hat{f}(s)\big]\right)^2}_{\term_1} +  \order\left(\gamma \left(\frac{\iota}{n}  + \frac{\lambda^2\iota^2}{n^2} +\frac{1}{\lambda^2}\right)\right) \\
        &\qquad \underbrace{- \frac{\gamma\lambda^2}{8}\left(\E_{(s,a)\sim \calD}\left[\err(s,a;\hat{f})^2\right] \right)^2 + \gamma \Dav^{\pi^\star}(\hat{f}\|M^\star)}_{\term_2}. 
    \end{align*}
    $\term_1$ is bounded as before in regularized MDP
    \begin{align*}
        \term_1 
        &\leq \frac{1}{2\gamma} \gapcomp(\hat{f};\calF_\conf)^2 \tag{by AM-GM and the definition of $\gapcomp$}\\
        &\leq \frac{1}{2\gamma}\left(3\left(1+\frac{4H}{\alpha}\right)\left(1+H^3\cdot \frac{1}{\alpha}\right)\right)^2   \tag{by Equation \pref{eq:ER-regular}}\\
        &\lesssim \frac{H^8}{\gamma\alpha^4}. 
    \end{align*}
    Below, we bound $\term_2$. By definition, 
    \begin{align*}
       \term_2 
       &= - \frac{\gamma\lambda^2}{8}\left(\E_{(s,a)\sim \calD}\left[\err(s,a;\hat{f})^2\right] \right)^2 + \gamma \Dav^{\pi^\star}(\hat{f}\|M^\star) \\
       &= - \frac{\gamma\lambda^2}{8}\left(\E_{(s,a)\sim \calD}\left[(\hat{f}(s,a) - (\hatcalT \hat{f})(s,a))^2\right] \right)^2 + \gamma \left(\E_{(s,a)\sim d^{\pi^\star}}\left[\hat{f}(s,a) - (\calT \hat{f})(s,a)\right]\right)^2  \numberthis \label{eq: term2 label}
    \end{align*}
    By \pref{lem: hatT acc} and concentration inequality (\pref{lem:freedman}),   
    \begin{align*}
        &\E_{(s,a)\sim \calD}\left[(\hat{f}(s,a) - (\hatcalT \hat{f})(s,a))^2\right] \\
        &\geq \frac{1}{2}\E_{(s,a)\sim \calD}\left[(\hat{f}(s,a) - (\calT \hat{f})(s,a))^2\right] - \order\left( \frac{\iota}{n} \right) \tag{by $X^2\geq \frac{1}{2}Y^2 - (X-Y)^2$}\\
        &\geq \frac{1}{4}\E_{(s,a)\sim \mu}\left[(\hat{f}(s,a) - (\calT \hat{f})(s,a))^2\right] - \order\left( \frac{\iota}{n} \right). \numberthis \label{eq: temmpp1}
    \end{align*}
    By Jensen's inequality and coverage definition, 
\begin{align*}
    \left(\E_{(s,a)\sim d^{\pi^\star}}\left[\hat{f}(s,a) - (\calT \hat{f})(s,a)\right]\right)^2 \leq HC^{\pi^\star}\E_{(s,a)\sim \mu}\left[\left(\hat{f}(s,a) - (\calT \hat{f})(s,a)\right)^2\right].  \numberthis \label{eq: temmpp2}
\end{align*}
Using \pref{eq: temmpp1} and \pref{eq: temmpp2} in \pref{eq: term2 label} and denoting $Z = \E_{(s,a)\sim \mu} \big[\big(\hat{f}(s,a) - (\calT \hat{f})(s,a)\big)^2\big]$, we get 
\begin{align*}
   \term_2 &\leq 
        - \frac{\gamma\lambda^2}{8\cdot4^2\cdot 2}Z^2 + \order\left(\frac{\gamma \lambda^2 \iota^2}{n^2} \right) + \gamma HC^{\pi^\star} Z \tag{by $(X-Y)_+^2\geq \frac{1}{2}X^2 - Y^2$ for $X,Y>0$}\\
        &\leq \order\left( \frac{\gamma(HC^{\pi^\star})^2}{\lambda^2}  + \frac{\gamma \lambda^2 \iota^2}{n^2}\right). 
    \end{align*}
    Collecting terms, we get 
    \begin{align*}
        J(\pi^\star) - J(\pi_{\hat{f}})
        &\lesssim \order\left(\gamma \left(\frac{\iota}{n}  + \frac{\lambda^2\iota^2}{n^2} +\frac{1}{\lambda^2}\right) + \frac{H^8}{\gamma\alpha^4} + \frac{\gamma(HC^{\pi^\star})^2}{\lambda^2}\right) \\
        &= \order\left( \frac{H^4}{\alpha^2} \left(\sqrt{\frac{\iota}{n}}+\frac{\lambda \iota}{n} + \frac{HC^{\pi^\star}}{\lambda}\right)\right)   \tag{choosing optimal $\gamma$} \\
        &= \order\left( \frac{H^4}{\alpha^2} \left(\frac{\lambda \iota}{n} + \frac{HC^{\pi^\star}}{\lambda}\right)\right). \tag{AM-GM and $1\leq C^{\pi^\star}$}  
    \end{align*}
\end{proof}

\section{Concentration Inequalities and Performance Difference Lemma}
\begin{lemma}[Lemma A.3 in \cite{foster2021statistical}]
Let $(X_t)_{t \le T}$ be a sequence of random variables adapted to a filtration $(\mathcal{F}_t)_{t \le T}$. If $0 \le X_t \le R$ almost surely, then with probability at least $1 - \delta$,
\[
\sum_{t=1}^T X_t \le \frac{3}{2} \sum_{t=1}^T \mathbb{E}_{t-1}[X_t] + 4R \log(2\delta^{-1}),
\]
and
\[
\sum_{t=1}^T \mathbb{E}_{t-1}[X_t] \le 2 \sum_{t=1}^T X_t + 8R \log(2\delta^{-1}).
\] 
\label{lem:freedman}
\end{lemma}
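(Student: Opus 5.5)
We would prove both inequalities with the standard exponential-supermartingale (Chernoff/Freedman) argument. The boundedness $0\le X_t\le R$ is used through the elementary fact $X_t^2\le R X_t$, which turns second moments into first moments. Each direction gets its own supermartingale and one application of Markov's inequality at confidence level $\delta/2$; the stated constants then follow from specific choices of the exponent parameter $\lambda$.

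\textbf{Upper tail.} Fix $\lambda\in(0,1/R]$, so that $\lambda X_t\in[0,1]$.
\begin{itemize}[leftmargin=15pt]
\item Use $e^x\le 1+x+x^2$ on $[0,1]$ together with $X_t^2\le RX_t$ to get
\[
\mathbb{E}_{t-1}[e^{\lambda X_t}]\le 1+(\lambda+\lambda^2R)\,\mathbb{E}_{t-1}[X_t]\le \exp\big((\lambda+\lambda^2R)\,\mathbb{E}_{t-1}[X_t]\big).
\]
\item It follows that $M_T=\exp\big(\lambda\sum_t X_t-(\lambda+\lambda^2R)\sum_t\mathbb{E}_{t-1}[X_t]\big)$ is a nonnegative supermartingale with $M_0=1$, hence $\mathbb{E}[M_T]\le 1$.
\item Markov's inequality gives $\Pr(M_T\ge 2/\delta)\le\delta/2$. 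Off this event,
\[
\sum_t X_t\le(1+\lambda R)\sum_t\mathbb{E}_{t-1}[X_t]+\lambda^{-1}\log(2/\delta).
\]
\item Taking $\lambda=1/(2R)$ gives the factor $\tfrac32$ and an additive term $2R\log(2/\delta)\le 4R\log(2\delta^{-1})$.
\end{itemize}

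\textbf{Lower tail.}
\begin{itemize}[leftmargin=15pt]
\item Use $e^{-x}\le 1-x+x^2/2$ for $x\ge0$, again with $X_t^2\le RX_t$, to get
\[
\mathbb{E}_{t-1}[e^{-\lambda X_t}]\le\exp\big(-(\lambda-\lambda^2R/2)\,\mathbb{E}_{t-1}[X_t]\big).
\]
\item The same supermartingale-plus-Markov argument then yields, with probability at least $1-\delta/2$,
\[
(\lambda-\lambda^2R/2)\sum_t\mathbb{E}_{t-1}[X_t]\le\lambda\sum_tX_t+\log(2/\delta).
\]
\item Choosing $\lambda=1/R$ gives $\sum_t\mathbb{E}_{t-1}[X_t]\le 2\sum_tX_t+2R\log(2/\delta)$, which is within the claimed $8R\log(2\delta^{-1})$.
\end{itemize}

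If the statement is read as asserting each inequality separately with probability $1-\delta$, the $\delta/2$ split only makes the constants slacker. A union bound also gives both inequalities simultaneously with probability $1-\delta$.

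The only delicate point is checking the scalar inequalities on the right ranges: $e^x\le1+x+x^2$ needs $x\le1$, which is why we require $\lambda\le1/R$. Beyond that, the rest is routine bookkeeping of constants, which here come out strictly better than those stated.
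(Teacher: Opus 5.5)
Your proof is correct. The union bound over the two $\delta/2$ events gives both inequalities simultaneously, as the statement intends. You also get $2R\log(2/\delta)$ in each inequality, which is better than the stated $4R\log(2\delta^{-1})$ and $8R\log(2\delta^{-1})$.

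The paper offers no proof of its own to compare against; it simply imports the lemma from \cite{foster2021statistical}. The usual derivation treats it as a corollary of Freedman's inequality. One applies Freedman to the centered martingale differences $\pm(X_t-\E_{t-1}[X_t])$ and bounds the conditional variance by $\E_{t-1}[X_t^2]\le R\,\E_{t-1}[X_t]$.

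Your argument uses the same exponential-supermartingale mechanism that proves Freedman's inequality, but runs it directly on the uncentered, nonnegative $X_t$. This makes the proof self-contained and lets you use one-sided scalar bounds tailored to $X_t\ge 0$. In particular, $e^{-x}\le 1-x+x^2/2$ holds for every $x\ge 0$, so your lower tail needs no step-size restriction. Freedman used as a black box, by contrast, requires $\eta\le 1/R$ in both directions. The Freedman route's only advantage is brevity once that inequality is available.
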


\begin{lemma}[Performance Difference Lemma]
For any two policies $\pi_1, \pi_2$, we have
\begin{align*}
   V^{\pi_1}(s_1) - V^{\pi_2}(s_1) = \E_{s\sim d^{\pi_1}}\left[\sum_{a \in \calA}\left(\pi_1(a|s) - \pi_2(a|s)\right)Q^{\pi_2}(s,a)\right],
\end{align*}
and 
\begin{align*}
   V^{\pi_1}(s_1) - V^{\pi_2}(s_1) = \E_{s\sim d^{\pi_2}}\left[\sum_{a \in \calA}\left(\pi_1(a|s) - \pi_2(a|s)\right)Q^{\pi_1}(s,a)\right].
\end{align*}
\label{lem:pdl}
\end{lemma}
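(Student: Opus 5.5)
The plan is to prove the two identities by the standard telescoping argument over the layered state space, applied to the unregularized reward (this is the form used in the comparison with \cite{uehara2023offline}; with $\psi\not\equiv 0$ the same argument yields the extra terms $\psi(\pi_2;s)-\psi(\pi_1;s)$, which I would track separately if needed). Throughout, I write $g(s,\pi)=\sum_a \pi(a|s)g(s,a)$ and set $V^\pi(s)=0$ for $s\in\calS_{H+1}$.

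The first identity follows from a one-step recursion. For $s\in\calS_h$, the Bellman equations for $\pi_2$ give
\begin{align*}
V^{\pi_1}(s)-V^{\pi_2}(s) = \big(Q^{\pi_2}(s,\pi_1)-Q^{\pi_2}(s,\pi_2)\big) + \E_{a\sim\pi_1(\cdot|s),\,s'\sim P(\cdot|s,a)}\big[V^{\pi_1}(s')-V^{\pi_2}(s')\big].
\end{align*}
To see this, expand $V^{\pi_1}(s)=R(s,\pi_1)+\E[V^{\pi_1}(s')]$ and add and subtract $\E_{s'\sim P(\cdot|s,\pi_1)}[V^{\pi_2}(s')]$. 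The first bracket is exactly $\sum_a(\pi_1(a|s)-\pi_2(a|s))Q^{\pi_2}(s,a)$. I then take the expectation over $s\sim d^{\pi_1}$ restricted to $\calS_h$. Since $\sum_{s\in\calS_h}d^{\pi_1}(s)\pi_1(a|s)P(s'|s,a)=d^{\pi_1}(s')$ for $s'\in\calS_{h+1}$, the second term becomes the layer-$(h+1)$ average of the same difference. Unrolling from $h=1$, where $d^{\pi_1}(s_1)=1$, to $h=H$ telescopes to the sum over all layers, which gives the first formula.

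The second identity follows by the symmetric argument. Swapping the roles of the two policies in the first identity gives $V^{\pi_2}(s_1)-V^{\pi_1}(s_1)=\E_{s\sim d^{\pi_2}}[\sum_a(\pi_2(a|s)-\pi_1(a|s))Q^{\pi_1}(s,a)]$, and negating both sides yields the claim.

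I expect no real obstacle here. The only care needed is the bookkeeping of the occupancy-measure flow between adjacent layers and the terminal convention $V=0$ beyond layer $H$; both are routine given the layered structure.
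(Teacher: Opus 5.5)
Your proof is correct. The paper states this lemma without proof as a standard fact, and your argument is the standard one: add and subtract $\E_{s'\sim P(\cdot|s,\pi_1)}[V^{\pi_2}(s')]$ at each layer, then telescope under the occupancy flow, which is the same recursion the paper runs in the proof of \pref{lem: SO PDL}. You then get the second identity by swapping the roles of the two policies.

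Your caveat about regularization is also right. With the paper's regularized $V^\pi$, each layer picks up an extra term $\psi(\pi_2;s)-\psi(\pi_1;s)$, so the lemma should be read with $\psi\equiv 0$, which is how the paper uses it everywhere. One small slip: in the flow identity the sum should run over $a$ as well as over $s\in\calS_h$.
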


\section{Auxiliary Lemmas for Regularized MDPs and Proof of \pref{lem:psi-example}}
\label{app:aux-lemma}
\begin{lemma}
   Let $p, p'\in\Delta_\calA$ and $q\in\mathbb{R}^A$. Suppose that $\eta >0$ is such that $\eta q(a)\leq 1$ for all $a$. Then
    \begin{align*}
        \inner{p'-p,q} \leq \frac{1}{\eta} \KL(p',p) + \eta \sum_a  p(a)q(a)^2.
    \end{align*}
\label{lem:KL-stability}
\end{lemma}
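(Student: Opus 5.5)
This is a standard KL-stability inequality, and the plan is to prove it via the Donsker--Varadhan (Gibbs) variational formula combined with the elementary bound $e^{x}\le 1+x+x^2$ for $x\le 1$. First, if $p'$ is not absolutely continuous with respect to $p$, then $\KL(p',p)=\infty$ and the claim is trivial. So assume $p'\ll p$; all sums can then be restricted to the support of $p$.

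The first step is the variational inequality. For any function $g$ on $\calA$,
\begin{align*}
\inner{p',g} \le \KL(p',p) + \log \sum_a p(a) e^{g(a)}.
\end{align*}
This follows from nonnegativity of $\KL(p',\tilde p)$ for the tilted distribution $\tilde p(a)\propto p(a)e^{g(a)}$. Apply it with $g=\eta q$ and divide by $\eta$:
\begin{align*}
\inner{p',q} \le \frac{1}{\eta}\KL(p',p) + \frac{1}{\eta}\log \sum_a p(a)e^{\eta q(a)}.
\end{align*}

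The second step bounds the log-partition term. Since $\eta q(a)\le 1$, we have $e^{\eta q(a)}\le 1+\eta q(a)+\eta^2 q(a)^2$; this holds for all real $x\le 1$, because $e^x-1-x\le x^2$ on that range. Summing against $p$ gives $\sum_a p(a)e^{\eta q(a)}\le 1+\eta\inner{p,q}+\eta^2\sum_a p(a)q(a)^2$. Then $\log(1+y)\le y$ yields
\begin{align*}
\frac{1}{\eta}\log\sum_a p(a)e^{\eta q(a)} \le \inner{p,q}+\eta\sum_a p(a)q(a)^2 .
\end{align*}
Subtracting $\inner{p,q}$ from both sides of the combined inequality gives exactly the claim.

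The only step needing care is the scalar inequality $e^x\le 1+x+x^2$ for $x\le1$. For $x\le 0$ it follows from $e^x\le 1+x+x^2/2$, which holds since the remainder in Taylor's theorem has third derivative term $e^{\xi}x^3/6\le 0$. For $0\le x\le 1$, compare the series: $e^x-1-x=\sum_{k\ge2}x^k/k!\le x^2\sum_{k\ge2}1/k! = x^2(e-2)\le x^2$. No other obstacles are expected; note that $q$ may be negative and unbounded below, which is why the one-sided condition $\eta q\le 1$ suffices.
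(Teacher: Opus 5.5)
Your proof is correct and follows the paper's argument. The Gibbs/Donsker--Varadhan inequality you derive is the Fenchel-conjugate identity for $\KL$ that the paper invokes, and you then use the same two steps, $e^x\le 1+x+x^2$ for $x\le 1$ and $\log(1+y)\le y$. Your explicit check of the scalar inequality and of the case where $\KL(p',p)=\infty$ supplies details the paper leaves implicit.
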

\begin{proof}
    If suffices to show that
   \begin{align*}
       \max_{p'\in\Delta_A}\left\{\inner{p',q} - \frac{1}{\eta} \KL(p',p) \right\}\leq  \sum_a p(a)q(a) + \eta \sum_a p(a)q(a)^2.
   \end{align*}
   The left-hand side is the Fenchel conjugate of $\KL$ and has a closed form $ \frac{1}{\eta} \log\left( \sum_{a} p(a) \exp(\eta q(a))\right)$.
   Using the condition $\eta q(a)\leq 1$, we bound this by
   \begin{align*}
       &\frac{1}{\eta} \log\left( \sum_{a} p(a) \left(1 + \eta q(a) + \eta^2 q(a)^2\right)\right)   \tag{$e^x\leq 1+x+x^2$ for $x\leq 1$}\\
       &= \frac{1}{\eta} \log\left(1 + \eta \sum_a p(a)q(a) + \eta^2 \sum_a p(a)q(a)^2 \right) \\
       &\leq \sum_a p(a)q(a) + \eta \sum_a p(a)q(a)^2. \tag{$\log(1+x) \leq x$}
   \end{align*}
\end{proof}

\begin{lemma}
     Let $\Delta_{\calA}$ be the simplex over $\calA$, $\psi: \Delta_{\calA}\to \mathbb{R}$ be a convex function and let $q\in\mathbb{R}^{\calA}$. Define $G(p) = \inner{p,q} - \psi(p)$ for $p\in\Delta_{\calA}$.
   Let $p^\star = \argmax_{p\in\Delta_A} G(p)$.
   If $p^\star(a) > 0$ for all $a$, then for any $p\in\Delta_A$,   \begin{align*}
      G(p^\star) - G(p) = \breg_\psi(p,p^\star)
   \end{align*}
   where $\breg_\psi(x,y) = \psi(x) - \psi(y) - \inner{\nabla \psi(y), x-y}$ is the Bregman divergence w.r.t. $\psi$.
\label{lem:breg}
\end{lemma}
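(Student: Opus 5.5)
The plan is to use first-order optimality (KKT) conditions for the strictly interior maximizer $p^\star$, and then observe that the Bregman divergence absorbs exactly the linear terms. Because $p^\star(a)>0$ for all $a$, the only active constraint at $p^\star$ is the equality $\sum_a p(a)=1$. So I will write the stationarity condition of $G$ on the affine hull of the simplex:
\begin{align*}
q - \nabla\psi(p^\star) = c\,\mathbf{1}
\end{align*}
for some scalar $c$. Here I assume $\psi$ is differentiable at $p^\star$, which is implicit since $\breg_\psi(\cdot,p^\star)$ is used. If $\psi$ is only given on $\Delta_\calA$, I read $\nabla\psi$ as any gradient of a differentiable extension; only its component in the tangent directions matters below.

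To justify stationarity, I take any direction $v$ with $\sum_a v(a)=0$. Since $p^\star$ is interior, $p^\star+tv\in\Delta_\calA$ for all small $|t|$, so $t\mapsto G(p^\star+tv)$ has a local maximum at $t=0$. Its derivative there vanishes, giving $\inner{q-\nabla\psi(p^\star),v}=0$ for every such $v$. Hence $q-\nabla\psi(p^\star)$ is orthogonal to the tangent space, that is, parallel to $\mathbf{1}$.

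Next I expand the difference directly:
\begin{align*}
G(p^\star)-G(p) = \inner{p^\star-p,q} + \psi(p)-\psi(p^\star).
\end{align*}
Since $p-p^\star$ sums to zero, stationarity gives $\inner{p^\star-p,q}=\inner{p^\star-p,\nabla\psi(p^\star)}$. Substituting this in yields $\psi(p)-\psi(p^\star)-\inner{\nabla\psi(p^\star),p-p^\star}=\breg_\psi(p,p^\star)$, which is the claim.

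The only delicate step is the stationarity argument: it needs interiority, which is exactly the hypothesis $p^\star(a)>0$, together with differentiability of $\psi$ at $p^\star$. Convexity of $\psi$ is not needed for the identity itself; it only guarantees that the maximizer is characterized this way and that the divergence is nonnegative.
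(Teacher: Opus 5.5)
Your proposal is correct and follows essentially the same route as the paper: you establish $q-\nabla\psi(p^\star)=c\,\mathbf{1}$ at the interior maximizer, then note that $\langle p^\star-p,\, q-\nabla\psi(p^\star)\rangle=0$ because $p-p^\star$ sums to zero, which leaves exactly $\breg_\psi(p,p^\star)$. The only difference is how stationarity is obtained: the paper uses the full KKT system, where complementary slackness removes the nonnegativity multipliers since $p^\star(a)>0$, whereas you take directional derivatives along tangent directions, which is slightly more elementary and, as you note, does not need convexity for the identity itself.
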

\begin{proof}
We maximize the concave objective $G(p)=\langle p,q\rangle-\psi(p)$ over the simplex $\Delta_A$ such that $\sum_{a\in \calA}p(a)=1$ and $p(a)\ge 0,\, \forall a \in \calA$. Introduce  $\lambda\in\mathbb{R}$ and $\mu=(\mu_a)_{a\in A}\in\mathbb{R}_+^A$ as Lagrange multipliers. The Lagrangian is
\begin{align*}
\mathcal{L}(p,\lambda,\mu)
= \langle p,q\rangle-\psi(p) + \lambda(1-\mathbf{1}^\top p) + \sum_{a\in A}\mu_a\,p(a),
\end{align*}
Since $G$ is concave and the feasible set is convex, the KKT conditions are necessary and sufficient for optimality.
In particular, at an optimal solution $(p^\star,\lambda^\star,\mu^\star)$ we have:
\begin{align}
&\nabla_p \mathcal{L}(p^\star,\lambda^\star,\mu^\star) \label{eq:stationary}
= q-\nabla\psi(p^\star) - \lambda^\star \mathbf{1} + \mu^\star = 0
\\& \mu_a^\star\,p^\star(a)=0, \quad \forall a \in \calA \label{eq:slack}
\\& p^\star \in \Delta_{\calA}, \mu^\star \ge 0
\end{align}
With the assumption $p^\star(a)>0$ for all $a$, and Equation \pref{eq:slack}, we have
\begin{align*}
    \mu_a^\star=0 \quad \forall a \in \calA
\end{align*}
Plugging into Equation \pref{eq:stationary} yields
\begin{equation}
q-\nabla\psi(p^\star)=\lambda^\star \mathbf{1}.
\label{eq:q_1}
\end{equation}
For any $p \in \Delta_{\calA}$, we have
\begin{align*}
G(p^\star)-G(p)
&= \langle p^\star,q\rangle-\psi(p^\star) - \left(\langle p,q\rangle-\psi(p)\right) \\
&= \langle p^\star-p,q\rangle - \psi(p^\star) + \psi(p)
\\&= \left(\psi(p)-\psi(p^\star)-\langle \nabla\psi(p^\star),\,p-p^\star\rangle\right)
   + \langle p^\star-p,\,q-\nabla\psi(p^\star)\rangle 
\\&= \breg_\psi(p,p^\star) + \langle p^\star-p,\,q-\nabla\psi(p^\star)\rangle
\\&= \breg_\psi(p,p^\star) + \langle p^\star-p,\,\lambda^\star \mathbf{1}\rangle \tag{Equation \pref{eq:q_1}}
\\&= \breg_\psi(p,p^\star) \tag{$p^\star,p\in\Delta_{\calA}$}
\end{align*}
\end{proof}

\begin{definition}[Legendre Functions (Section 26.4 of \cite{lattimore2020bandit})]
    Let $f$ be a convex function, and let $C = \ri(\dom f) \neq \emptyset$, where $\dom f$ is the domain of $f$ and $\ri$ is relative interior. We call $f$ \emph{Legendre} if
\begin{enumerate}[label=(\roman*)]
    \item $f$ is differentiable and strictly convex on $C$.
    \item $\|\nabla f(x_n)\|\to\infty$ for any sequence $(x_n)_n$ with $x_n\in C$ for all $n$ and $\lim_{n \rightarrow \infty}x_n\to x\in \partial C$, where $\partial C$ is the subgradient of $C$.
\end{enumerate}
\label{def:legendre}
\end{definition}

\begin{lemma}
For any $s$, $\psi(p;s) =  \alpha \breg_{\df}(p, \pi_{\rm ref}(\cdot|s))$ is a Legendre function of $p$ when $\df(x) =  \sum_{a \in \calA}x(a)\log(x(a))$ (negative entropy), or $\df(x) = -\sum_{a \in \calA}\log\left(x(a)\right)$ (log barrier), or $\df(x) =   \frac{\alpha}{1-q}\left(1-\sum_{a \in \calA} x(a)^q\right)$ (Tsallis entropy) with $q \in (0,1)$.
\label{lem:legendre-example}
\end{lemma}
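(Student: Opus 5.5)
The plan is to reduce the claim to the Legendre property of $\df$ itself, and then to check the two conditions of \pref{def:legendre} coordinate-wise for each of the three choices.

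\textbf{Step 1: reduction to $\df$.} Fix $s$ and write $\pi_{\rm ref}=\pi_{\rm ref}(\cdot|s)$. I assume $\pi_{\rm ref}$ lies in the relative interior of $\Delta(\calA)$, so that $\nabla\df(\pi_{\rm ref})$ is defined, and that $\alpha>0$; for $\alpha=0$ the function is identically zero and cannot be Legendre. By definition,
\[
\psi(p;s)=\alpha\df(p)-\alpha\df(\pi_{\rm ref})-\alpha\inner{\nabla\df(\pi_{\rm ref}),\,p-\pi_{\rm ref}}.
\]
This is $\alpha\df$ plus an affine function of $p$. Adding an affine function and multiplying by $\alpha>0$ leave four things unchanged: the domain, differentiability, and strict convexity, and they change the gradient only by the fixed vector $-\alpha\nabla\df(\pi_{\rm ref})$. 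Consequently, if $\|\nabla\df(x_n)\|\to\infty$, then $\|\nabla\psi(x_n;s)\|\to\infty$ as well. It therefore suffices to show that each $\df$ is Legendre on $C=\ri(\Delta(\calA))$, i.e., the open simplex. The constant $\alpha$ written inside the Tsallis $\df$ in the lemma statement is positive and harmless for the same reason.

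\textbf{Step 2: differentiability and strict convexity.} Each $\df$ is a separable sum $\sum_a h(x(a))$ with $h$ smooth on $(0,\infty)$:
\begin{itemize}
\item for the negative entropy, $h(x)=x\log x$ and $h''(x)=1/x$;
\item for the log-barrier, $h(x)=-\log x$ and $h''(x)=1/x^2$;
\item for the Tsallis entropy, $h(x)=-x^q/(1-q)$ plus a constant, and $h''(x)=q\,x^{q-2}$.
\end{itemize}
In every case $h''>0$ on $(0,\infty)$. Hence the Hessian of $\df$ is a positive-definite diagonal matrix on $C$, so $\df$ is differentiable and strictly convex there.

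\textbf{Step 3: gradient blow-up.} Take $x_n\in C$ with $x_n\to x\in\partial C$. Then $x(a)=0$ for some $a$. The derivatives are
\[
h'(t)=\log t+1,\qquad h'(t)=-1/t,\qquad h'(t)=-\tfrac{q}{1-q}\,t^{q-1},
\]
and each tends to $-\infty$ as $t\downarrow 0$, since $q-1<0$ in the Tsallis case. So $|\partial_a\df(x_n)|\to\infty$, and hence $\|\nabla\df(x_n)\|\to\infty$.

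\textbf{Main obstacle.} The one subtle point is that the domain lives in the affine hull of the simplex, so one might worry that the relevant gradient is only defined modulo $\mathbf 1$. To handle this I would also show that the projected gradient, with the component along $\mathbf 1$ removed, diverges. Since $\sum_b x(b)=1$, some coordinate $b$ satisfies $x_n(b)\ge 1/|\calA|$ for all large $n$, so $\partial_b\df(x_n)$ stays bounded. The difference $\partial_b\df(x_n)-\partial_a\df(x_n)$ then diverges, so the blow-up survives the projection. Combining with Step 1 yields the lemma for all three regularizers.
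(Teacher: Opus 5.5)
Your proof is correct and follows essentially the same route as the paper. The paper computes $\nabla\psi$ and the diagonal Hessian directly for each of the three regularizers on $\ri(\Delta_{\calA})$, and notes that one gradient coordinate diverges as $p_n$ approaches the boundary; your reduction to $\df$ (using that $\psi$ is $\alpha\df$ plus an affine function) handles the three cases uniformly, and your extra check that the blow-up survives projection onto $\{v:\mathbf 1^\top v=0\}$ fills a point the paper skips, which is what is actually needed when the Legendre property is used on the affine hull of the simplex in \pref{lem:legendre-unique}.
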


\begin{proof}
When $\df$ is negative entropy, then on $\ri(\Delta_{\calA})$, for any $s$, $\left[\nabla\psi(p;s)\right]_a=1+\log p(a) - \log\pi_{\rm ref}(a|s)$ and
$\nabla^2\psi(p;s)=\mathrm{diag}(1/p(a))\succ 0$, so $\psi$ is differentiable and strictly convex on $\ri(\Delta_{\calA})$.
If $p_n\to\partial\Delta_{\calA}$, some coordinate $p_n(a_0)\downarrow 0$ and
$|1+\log p_n(a_0)|\to\infty$, hence $\|\nabla\psi(p_n)\|\to\infty$. Thus, $\psi$ is Legendre from \pref{def:legendre}.

When $\df$ is log barrier, then on $\ri(\Delta_{\calA})$, $[\nabla\psi(p;s)]_a= - \frac{1}{p(a)} + \frac{1}{\pi_{\rm ref}(a|s)}$ and
$\nabla^2\psi(p;s)=\mathrm{diag}(1/p(a)^2)\succ0$, so $\psi$ is differentiable and strictly convex.
If $p_n\to\partial\Delta_{\calA}$ then some $p_n(a_0)\downarrow 0$ and $|1/p_n(a_0)|\to\infty$,
hence $\|\nabla\psi(p_n)\|\to\infty$. Thus, $\psi$ is Legendre from \pref{def:legendre}.

When $\df$ is Tsallis entropy with $q \in (0,1)$, then on $\ri(\Delta_{\calA})$, we have
\[
[\nabla\psi(p;s)_a]=-\frac{\alpha q}{1-q}\,p(a)^{q-1} + \frac{\alpha q}{1-q}\,\pi_{\rm ref}(a)^{q-1},
\qquad
\nabla^2\psi(p;s)=\alpha q\,\mathrm{diag}\left(p(a)^{q-2}\right)\succ 0,
\]
so $\psi$ is differentiable and strictly convex.
If $p_n\to\partial\Delta_{\calA}$ then some $p_n(a_0)\downarrow 0$ and since $q-1<0$,
$|p_n(a_0)^{q-1}|\to\infty$, hence $\|\nabla\psi(p_n)\|\to\infty$.
Thus, $\psi$ is Legendre from \pref{def:legendre}.
\end{proof}

\begin{lemma}
Define $G(p) = \inner{p,q} - \psi(p)$ where $p,q \in \Delta_{\calA}$ and $\calA$ is an arbitrary finite set. Let
$p^\star = \argmax_{p\in\Delta_A} G(p)$. If $\psi$ is a Legendre mirror map, then $p^\star(a) > 0, \forall a \in \calA$.
\label{lem:legendre-unique}
\end{lemma}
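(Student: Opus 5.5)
We argue by contradiction, using the boundary-blowup property of Legendre functions (\pref{def:legendre}) together with the concavity of $G$ on the simplex. Since $\psi$ is strictly convex on $\ri(\Delta_\calA)$ and continuous on $\Delta_\calA$ (which is compact), $G$ attains its maximum and the maximizer $p^\star$ is unique. Suppose, for contradiction, that $p^\star(a_0)=0$ for some $a_0$. We will show that moving a small amount of mass from the interior toward $p^\star$ strictly increases $G$.

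Fix the uniform distribution $u\in\ri(\Delta_\calA)$ and set $p_t = p^\star + t(u-p^\star)$ for $t\in(0,1]$. Every $p_t$ lies in $\ri(\Delta_\calA)$, so $\psi$ is differentiable there. The function $\phi(t)=G(p_t)$ is concave on $[0,1]$ and differentiable on $(0,1]$, with
\begin{align*}
\phi'(t)=\inner{u-p^\star,\, q-\nabla\psi(p_t)}.
\end{align*}
The term $\inner{u-p^\star,q}$ is bounded. The crux is to show $\inner{u-p^\star,\nabla\psi(p_t)}\to-\infty$ as $t\downarrow 0$. This would give $\phi'(t)>0$ for small $t$, hence $\phi(t)>\phi(0)=G(p^\star)$, contradicting optimality.

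This divergence is the main obstacle. The abstract Legendre condition only says $\|\nabla\psi(p_t)\|\to\infty$; it does not by itself control the directional component. I would handle it with the standard convex-analysis fact that $t\mapsto \psi(p_t)$ is convex on $[0,1]$, so its derivative $\inner{\nabla\psi(p_t),u-p^\star}$ is nondecreasing in $t$. Suppose this derivative stayed bounded below as $t\downarrow0$. Then $\psi$ would have a finite one-sided directional derivative at the boundary point $p^\star$. This contradicts essential smoothness, which for a Legendre function is equivalent to $\lim_{t\downarrow0}\inner{\nabla\psi(p^\star+t(x-p^\star)),x-p^\star}=-\infty$ for boundary points $p^\star$ and interior points $x$ (Rockafellar, Lemma 26.2). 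I plan to cite this directly.

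For the concrete regularizers used in the paper, the blowup can also be checked by hand, which serves as a sanity check:
\begin{itemize}
\item For negative entropy, the $a_0$ coordinate of the inner product contains $u(a_0)\log p_t(a_0)\to-\infty$, while all other coordinates remain bounded.
\item Log-barrier and Tsallis entropy behave the same way, via $-1/p_t(a_0)$ and $-p_t(a_0)^{q-1}$ respectively.
\end{itemize}
With the divergence established, the contradiction completes the proof, giving $p^\star(a)>0$ for all $a$.
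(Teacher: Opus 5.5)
Your argument is correct and is essentially the paper's. The paper rewrites the problem as minimizing the Legendre function $\psi-\langle\cdot,q\rangle$ and cites Corollary 26.8 of Lattimore--Szepesv\'ari (minimizers of a Legendre function lie in the relative interior of its domain); you instead unpack that corollary one level, citing the steepness characterization (Rockafellar, Lemma 26.2) and running the one-dimensional concavity argument along $p_t$ yourself.

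One inaccuracy does not affect the argument. A Legendre $\psi$ need not be finite or continuous on the closed simplex: the log-barrier Bregman divergence is $+\infty$ on the boundary. Your existence/uniqueness remark is not needed, though, since the lemma presupposes $p^\star$, and if $\psi(p^\star)=+\infty$ the contradiction is immediate.
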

\begin{proof}
Since $|\calA|<\infty$ and $\psi$ is Legendre on $C=\ri(\Delta_{\calA})$.
The maximizer of $G(p)=\langle p,q\rangle-\psi(p)$ over $\Delta_{\calA}$ is equivalently the minimizer of $
f(p)=\psi(p)-\langle p,q\rangle$ over $\Delta_{\calA}$. Since adding a linear function preserves differentiability, strict convexity on $C$, and the blow-up condition
$\|\nabla f(p_n)\|=\|\nabla\psi(p_n)-q\|\to\infty$ as $p_n\to \partial \Delta_{\calA}$, the function $f$ is also Legendre on $C$ from the definition in \pref{def:legendre}.
By Corollary 26.8 in \cite{lattimore2020bandit} (applied on the affine hull of $\Delta_{\calA}$ with $\mathrm{int}$ interpreted as $\ri$),
any minimizer $p^\star$ of $f$ over $\Delta_{\calA}$ must lie in $\ri(\Delta_{\calA})$.
Therefore $p^\star(a)>0$ for all $a\in\calA$.
\end{proof}

\begin{lemma}[Boundness of $Q^\star$]
If $\psi(p;s) = \alpha \breg_{\df}(p, \pi_{\rm ref}(\cdot|s))$ for any convex function $\df$, then $0 \le Q^\star(s,a) \le H, \forall s,a$. Moreover, for any $f \in \calF$ and any $s,a$, we have $Q^\star(s,a) - Q^{\pi_f}(s,a) \le H^2$.

\label{lem:bound-of-Q*}
\end{lemma}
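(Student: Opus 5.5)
We prove the first claim by backward induction over layers $h=H,H-1,\dots,1$, showing that $0\le Q^\star(s,a)\le H-h+1$ for $s\in\calS_h$, together with $0\le V^\star(s)\le H-h+1$.

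\emph{Base case.} At the last layer $Q^\star(s,a)=R(s,a)\in[0,1]$.

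\emph{Inductive step.} Suppose $0\le Q^\star(s,\cdot)\le H-h+1$ at some $s\in\calS_h$. For the upper bound on $V^\star(s)$, we use the nonnegativity of the Bregman divergence of a convex $\df$. This gives $\psi(p;s)\ge 0$, so
\[
V^\star(s)=\max_{p}\bigl\{\E_{a\sim p}[Q^\star(s,a)]-\psi(p;s)\bigr\}\le \max_a Q^\star(s,a).
\]
For the lower bound, we plug in $p=\piref(\cdot\mid s)$. Since $\breg_{\df}(\piref,\piref)=0$, this yields
\[
V^\star(s)\ge \E_{a\sim\piref}[Q^\star(s,a)]\ge 0.
\]
Hence $V^\star(s)\in[0,H-h+1]$. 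The Bellman equation $Q^\star=R+\E[V^\star(s')]$ then propagates the bounds one layer up. The only subtle point is the lower bound, and it is exactly what the reference-policy comparator settles.

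For the second claim, we fix $f$ and the greedy policy $\pi=\pi_f$. We write
\[
Q^\star(s,a)-Q^{\pi}(s,a)=\E_{s'\sim P(\cdot\mid s,a)}\bigl[V^\star(s')-V^{\pi}(s')\bigr],
\]
so it suffices to bound $V^\star-V^\pi$. Since $Q^\star\le H$, this reduces to lower bounding $V^\pi(s')$, whose only possibly negative contributions come from the penalties $\psi(\pi;s)$. The crude route is the following. We have $V^\pi(s')\ge -\sum_{h'}\E\bigl[\psi(\pi;s_{h'})\bigr]$. The greedy policy $\pi_f$ satisfies
\[
\E_{\pi_f}[f(s,\cdot)]-\psi(\pi_f;s)\ge \E_{\piref}[f(s,\cdot)],
\]
and hence
\[
\psi(\pi_f;s)\le \E_{\pi_f}[f]-\E_{\piref}[f]\le \operatorname{range}(f)\le H,
\]
using the convention from \pref{app:estimation-error} that $f\in[0,H]$. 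Summing over at most $H$ layers gives $V^\pi\ge -H^2$ in the worst case. Combining this with $V^\star\le H$ would give $H+H^2$ rather than $H^2$.

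To sharpen the bound to $H^2$, we combine the per-layer terms instead of adding the two crude bounds. Per layer, the reward is at most $1$ and the penalty is at most $H-1$ when $f$ is restricted to $[0,H-h+1]$. Telescoping these per-layer losses gives
\[
V^\star(s)-V^\pi(s)\le \sum_{h'\ge h}\bigl(1+\psi(\pi;s_{h'})\bigr)\le H\cdot H.
\]
The main obstacle is getting exactly the constant $H^2$. If the per-layer accounting does not close at $H^2$, we fall back to stating $O(H^2)$, which is all that is used downstream in the proof of \pref{thm:regular-dec}.
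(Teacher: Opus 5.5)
Your proof follows the paper's. The first claim is the same backward induction: nonnegativity of the Bregman penalty gives the upper bound, and the comparator $\pi_{\rm ref}$ gives the lower bound. For the second claim, the paper likewise bounds $\psi(\pi_f;s)\le \E_{\pi_f}[f(s,\cdot)]-\E_{\pi_{\rm ref}}[f(s,\cdot)]\le H$ and accumulates these penalties over layers.

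The only weak point is the constant, and you created that difficulty yourself. In your decomposition $Q^\star(s,a)-Q^{\pi_f}(s,a)=\E_{s'}[V^\star(s')-V^{\pi_f}(s')]$, the state $s'$ lies in $\calS_{h+1}$ with $h+1\ge 2$. The layer-indexed bounds therefore give $V^\star(s')\le H-h$ and $V^{\pi_f}(s')\ge -H(H-h)$, since there are at most $H-h$ penalties, each at most $H$, and rewards are nonnegative. Hence $Q^\star(s,a)-Q^{\pi_f}(s,a)\le (H-h)(H+1)\le H^2-1$.

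Your ``crude'' $H+H^2$ came from using $V^\star\le H$ and $H$ penalty layers at a state with at most $H-1$ layers left. The paper does the same accounting at the level of $Q$. It uses $Q^{\pi_f}(s,a)\ge -H(H-h)$, since $Q^{\pi_f}$ at layer $h$ carries no layer-$h$ penalty, and $Q^\star(s,a)\le H-h+1$, so the gap is at most $(H-h+1)+H(H-h)\le H^2$.

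So your sharpening is unnecessary, and it has two problems of its own. It uses a stronger convention, $f(s,\cdot)\in[0,H-h+1]$ on $\calS_h$, than the paper's $f\in[0,H]$. Its claim that every per-layer penalty is at most $H-1$ also fails at layer $1$, where the range of $f$ can be $H$; it only goes through because layer-$1$ penalties never enter $Q^\star-Q^{\pi_f}$. No fallback to $O(H^2)$ is needed.
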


\begin{proof}
We have $\psi(p;s) \ge 0$ and the minimizer is achieved with $p = \pi_{\rm ref}(\cdot|s)$. We can show that for any $s \in \calS_h$,
\begin{align}
0\ \le\ V^\star(s)\ \le\ H-h+1,
\qquad
0\ \le\ Q^\star(s,a)\ \le\ H-h+1.
\label{eq:Q*-bounds}
\end{align}
We will also prove this by induction. For $s \in \calS_{H}$, we have $V^\star(s) = 0$ and $Q^\star(s,a)=R(s,a)\in[0,1]$.
If $0\le V^\star(s')\le H-h$ for $s' \in \calS_{h+1}$, then for any $s \in \calS_h$, we have 
\[
0\le  Q^\star(s,a)=R(s,a)+\E_{s' \sim P(\cdot|s,a)}[V^\star(s')]\le 1+(H-h)=H-h+1.
\]
Moreover, 
\[ V^\star(s)=\max_{p \in \Delta_{\calA}}\{\E_{a \sim p}[ Q^\star(s,a)]-\psi(p;s)\}
\le \max_{p \in \Delta_{\calA}} \E_{a \sim p}[Q^\star(s,a)] \le H-h+1,
\]
and $V^\star(s)\ge 0$ because
\begin{align*}
    V^\star(s) \ge \E_{a \sim \pi_{\rm ref}(\cdot|s)}[ Q^\star(s,a)] - \psi(\pi_{\rm ref}(\cdot|s);s) =  \E_{a \sim \pi_{\rm ref}(\cdot|s)}[Q^\star(s,a)] \ge 0
\end{align*}
For any $f \in \calF$, we have
\begin{align*}
\E_{a\sim\pi_f(\cdot|s)}[f(s,a)]-\psi(\pi_f(\cdot|s);s) \ge \E_{a\sim \pi_{\rm ref}(\cdot|s)}[f(s,a)]-\psi(\pi_{\rm ref}(\cdot|s);s) =\E_{a\sim \pi_{\rm ref}(\cdot|s)}[f(s,a)]
\end{align*}
Thus, for any $s$, we have  
\begin{align*}
    \psi(\pi_{f}(\cdot|s);s) \le \E_{a\sim\pi_f(\cdot|s)}[f(s,a)] - \E_{a\sim \pi_{\rm ref}(\cdot|s)}[f(s,a)] \le H
\end{align*}
where we use $f \in [0,H]$ from the bound of $Q^\star$ in \pref{eq:Q*-bounds}. $f \in \calF$ that is not in $[0,H]$ can be removed because it is not $Q^\star$. We can show that
\begin{align}
V^{\pi_f}(s)\ \ge\ -H(H-h+1),
\qquad
Q^{\pi_f}(s,a)\ \ge\ -H(H-h).
\label{eq:bar-lower-pif}
\end{align}
For $h=H$, since $Q^{\pi_f}(s,a)=R(s,a)\ge 0$, the second inequality holds.
Moreover,
\[
V^{\pi_f}(s)=\E_{a\sim\pi_f(\cdot|s)}[R(s,a)]-\psi(\pi_f(\cdot|s);s)\ge 0-H=-(H-H+1)H,
\]
proving the base case. Assume \eqref{eq:bar-lower-pif} holds for stage $h+1$. Then for any $s'\in\calS_{h+1}$,
\[
V^{\pi_f}(s')\ge -(H-(h+1)+1)H=-(H-h)H.
\]
Hence for any $s\in\calS_h,a\in\calA$,
\[
Q^{\pi_f}(s,a)
=R(s,a)+\E_{s' \sim P(\cdot|s,a)}[ V^{\pi_f}(s')]
\ge 0-(H-h)H,
\]
which proves the second inequality at stage $h$.
Finally,
\[
V^{\pi_f}(s)
=\E_{a\sim\pi_f(\cdot|s)}[Q^{\pi_f}(s,a)]-\psi(\pi_f(\cdot|s);s)
\ge -(H-h)H - H = -(H-h+1)H,
\]
proving the first inequality at stage $h$. This completes the induction. Thus,  for any $s \in \calS_h$ and any $a \in \calA$,  using the upper bound in \pref{eq:Q*-bounds} and lower bound in \pref{eq:bar-lower-pif}, we have
\begin{align*}
    Q^\star(s,a) - Q^{\pi_f}(s,a) \le H - h + 1 + H(H-h) \le H^2
\end{align*}

\end{proof}

\begin{lemma}[Solution for Log-barrier Regularizer]
\label{lem:lb-ratio}
For any $\alpha >0$ and any $f \in \calF$, if
\begin{equation*}
\label{eq:lb-greedy-log-barrier}
\pi_f(\cdot
|s) \in \argmax_{p\in\Delta_{\calA}}
\Big\{\sum_{a \in \calA} p(a) f(s,a) - \alpha\breg_{\df}(p, \pi_{\rm ref}(\cdot|s))\Big\},
\end{equation*}
where $\df(p) = -\sum_{a\in\calA}\log p(a)$. Then we have
\begin{align*}
    \frac{\alpha}{\alpha + 2H}\ \le\ \frac{\pi_{f_1}(a|s)}{\pi_{f_2}(a|s)}\ \le\ \frac{\alpha + 2H}{\alpha}, \quad \forall f_1, f_2 \in \calF.
\end{align*}

\end{lemma}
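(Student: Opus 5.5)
We will write down the first-order optimality conditions for $\pi_f(\cdot|s)$ explicitly and compare them across $f_1,f_2$. By \pref{lem:legendre-example} the log-barrier regularizer $\psi(p;s)=\alpha\breg_{\df}(p,\piref(\cdot|s))$ is Legendre. By \pref{lem:legendre-unique} the maximizer $\pi_f(\cdot|s)$ therefore lies in the relative interior of $\Delta_\calA$. Applying the KKT argument from the proof of \pref{lem:breg}, there is a scalar $\lambda_f$ such that
\[
f(s,a) - \alpha\Big(\frac{1}{\piref(a|s)} - \frac{1}{\pi_f(a|s)}\Big) = \lambda_f \quad \forall a .
\]
Here we used $[\nabla\psi(p;s)]_a = \alpha\big(-\tfrac{1}{p(a)}+\tfrac{1}{\piref(a|s)}\big)$. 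Solving for the policy gives
\[
\frac{1}{\pi_f(a|s)} = \frac{1}{\piref(a|s)} + \frac{\lambda_f - f(s,a)}{\alpha},
\]
where $\lambda_f$ is pinned down by normalization.

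The key step is to control $\lambda_f - f(s,a)$. We use that $f\in[0,H]$, which holds after discarding functions outside this range, as justified by \pref{lem:bound-of-Q*}. Normalization forces some action $a_1$ with $\pi_f(a_1|s)\ge \piref(a_1|s)$, which gives $\lambda_f\le f(s,a_1)\le H$. Symmetrically, some action $a_2$ has $\pi_f(a_2|s)\le\piref(a_2|s)$, which gives $\lambda_f\ge f(s,a_2)\ge 0$. Hence $c_f(a):=(\lambda_f-f(s,a))/\alpha$ lies in $[-H/\alpha,\,H/\alpha]$ for every $a$, and for two functions $|c_{f_1}(a)-c_{f_2}(a)|\le 2H/\alpha$.

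It remains to turn this into the ratio bound. Write $x=1/\piref(a|s)\ge 1$. Then
\[
\frac{\pi_{f_1}(a|s)}{\pi_{f_2}(a|s)} = \frac{x + c_{f_2}(a)}{x + c_{f_1}(a)},
\]
and both $x+c_{f_i}(a)$ are positive because the policies are interior. The main obstacle is that the denominator $x+c_{f_1}(a)$ can be close to $0$ if we only use $x\ge 1$. A bound of the form $1+2H/\alpha$ needs $x+c_{f_1}(a)\ge \alpha/\alpha$-scale, or more precisely a sharper lower bound on $x+c_{f_1}(a)$.

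To get this lower bound, we first try a different normalization. Note that $1/\pi_f(a|s)\ge 1$ always, so $x+c_{f_1}(a)\ge 1$. Writing
\[
\frac{x+c_{f_2}}{x+c_{f_1}} = 1 + \frac{c_{f_2}-c_{f_1}}{x+c_{f_1}} \le 1+\frac{2H}{\alpha}
\]
then yields the upper bound. Exchanging $f_1$ and $f_2$ gives the reciprocal lower bound $\alpha/(\alpha+2H)$. This argument uses only $1/\pi\ge1$ and $|c_{f_1}-c_{f_2}|\le 2H/\alpha$, so it should go through cleanly. The only delicate point is justifying the bound $\lambda_f\in[0,H]$ from normalization, as done above.
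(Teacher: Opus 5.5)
Your proof is correct and is essentially the paper's argument. It uses KKT stationarity at the interior maximizer, then places the multiplier in an interval of length $H$: your $\lambda_f$ is the negative of the paper's, and your pigeonhole observation (some action has $\pi_f(a|s)\ge\piref(a|s)$, another has $\pi_f(a|s)\le\piref(a|s)$) is a slightly cleaner substitute for the paper's monotonicity-in-$\lambda$ argument on the normalization equation. It then uses the identity $\pi_{f_1}(a|s)/\pi_{f_2}(a|s)=1+\pi_{f_1}(a|s)\,(c_{f_2}(a)-c_{f_1}(a))$ together with $\pi_{f_1}(a|s)\le 1$, and the lower bound follows by exchanging $f_1$ and $f_2$, as in the paper.
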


\begin{proof}
From \pref{lem:legendre-example} and \pref{lem:legendre-unique}, we known $\pi_f(\cdot|s)$ is unique for any $s$ with $\pi_f(a|s)>0$ for all $a$. since the maximizer is interior, only the equality constraint $\sum_a p(a)=1$ is active. Consider the Lagrangian
\[
\mathcal{L}_{f,s}(p,\lambda)=\sum_a p(a)f(s,a) - \alpha \breg_{\df}(p, \pi_{\rm ref}(\cdot|s)) + \lambda_f(s)\left(\sum_a p(a)-1\right).
\]
By KKT condition, the stationarity condition at $p=\pi_f(\cdot|s)$ gives, for every $a$,
\[
\frac{\partial \mathcal{L}_{f,s}}{\partial p(a)}
=f(s,a) + \frac{\alpha}{\pi_f(a|s)} - \frac{\alpha}{\pi_{\rm ref}(a|s)} + \lambda_f(s)=0,
\]
hence
\[
\pi_f(a|s)=\frac{\pi_{\rm ref}(a|s)}{1 - \frac{\pi_{\rm  ref}(a|s)}{\alpha}\left(\lambda_{f}(s) + f(s,a)\right)}.
\]
We have $\pi_f(a|s)>0$ and normalization implies
\begin{align}
    \sum_{a \in \calA}\pi_f(a|s) = \sum_{a \in \calA}\frac{\pi_{\rm ref}(a|s)}{1 - \frac{\pi_{\rm  ref}(a|s)}{\alpha}\left(\lambda_{f}(s) + f(s,a)\right)} = 1
\label{eq:log-normal}
\end{align}

Define function $g_{f}(\lambda;s,a) = 1 - \frac{\pi_{\rm  ref}(a|s)}{\alpha}\left(\lambda + f(s,a)\right)$.  For any $f \in \calF$ and any $s,a$, we have $g_f(0;s,a) =  1 - \frac{\pi_{\rm  ref}(a|s)}{\alpha}\left(f(s,a)\right)  \le 1$ and $g_f(-H;s,a) =  1 - \frac{\pi_{\rm  ref}(a|s)}{\alpha}\left(-H + f(s,a)\right) \ge 1$
where we use $f \in [0,H]$. Since $g_{f}(\lambda; s,a)$ is decreasing for $\lambda$ and $\sum_{a \in \calA}\pi_{\rm ref}(a|s)=1$, from \pref{eq:log-normal}, we have $\lambda_f(s) \in [-H, 0]$. Thus, $\lambda_f(s) + f(s,a) \in [-H, H]$, and we have
\begin{align*}
    \frac{\pi_{f_1}(a|s)}{\pi_{f_2}(a|s)} &= \pi_{f_1}(a|s) \left(\frac{1}{\pi_{f_2}(a|s)} - \frac{1}{\pi_{f_1}(a|s)}\right) + 1 
    \\&=  \frac{\pi_{f_1}(a|s)}{\alpha} \left(\lambda_{f_1}(s) + f_1(s,a) -\left(\lambda_{f_2}(s) + f_2(s,a)\right)\right) + 1 
    \\&\le 1 + \frac{2H}{\alpha}.
\end{align*}

\end{proof}

\begin{lemma}[Solution for Tsallis-entropy Regularizer]
\label{lem:tsallis-ratio}
For any $\alpha >0$ and any $f \in \calF$, if
\begin{equation*}
\label{eq:lb-greedy-tsallis}
\pi_f(\cdot
|s) \in \argmax_{p\in\Delta_{\calA}}
\Big\{\sum_{a \in \calA} p(a) f(s,a) - \alpha\breg_{\df}(p, \pi_{\rm ref}(\cdot|s))\Big\},
\end{equation*}
where $\df(p) = \frac{1}{1-q}\left(1-\sum_{a\in\calA}p(a)^q\right)$. Then we have
\begin{align*}
     \left(1 + \frac{2H(1-q)}{\alpha q}\right)^{\frac{1}{q-1}}\ \le\ \frac{\pi_{f_1}(a|s)}{\pi_{f_2}(a|s)}\ \le \left(1 + \frac{2H(1-q)}{\alpha q}\right)^{\frac{1}{1-q}}, \quad \forall f_1, f_2 \in \calF.
\end{align*}

\end{lemma}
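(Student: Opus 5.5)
The plan mirrors the proof of \pref{lem:lb-ratio}: write down the interior KKT conditions for $\pi_f(\cdot|s)$, show the Lagrange multiplier lies in $[-H,0]$, and compare the two stationarity equations. The one change is that we work with the quantity $p(a)^{q-1}$ rather than $1/p(a)$.

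\emph{Step 1 (stationarity).} By \pref{lem:legendre-example} and \pref{lem:legendre-unique}, $\pi_f(\cdot|s)$ is unique and lies in the interior of the simplex. Hence only the constraint $\sum_a p(a)=1$ is active, with multiplier $\lambda_f(s)$. Since $\nabla\df(p)_a=-\frac{q}{1-q}p(a)^{q-1}$, stationarity reads $f(s,a)+\frac{\alpha q}{1-q}\big(\pi_f(a|s)^{q-1}-\pi_{\rm ref}(a|s)^{q-1}\big)+\lambda_f(s)=0$. Writing $c=\frac{1-q}{\alpha q}$, this gives
\[
\pi_f(a|s)^{q-1}=\pi_{\rm ref}(a|s)^{q-1}-c\big(\lambda_f(s)+f(s,a)\big).
\]

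\emph{Step 2 (localizing the multiplier).} As a function of $\lambda$, the right-hand side is decreasing. Because $q-1<0$, the induced $p_\lambda(a)=\big(\pi_{\rm ref}(a|s)^{q-1}-c(\lambda+f(s,a))\big)^{1/(q-1)}$ is therefore increasing in $\lambda$. Using $f\in[0,H]$, we compare the endpoints:
\begin{itemize}
\item at $\lambda=-H$ we have $\lambda+f\le 0$, so $p_\lambda(a)\le\pi_{\rm ref}(a|s)$ for every $a$;
\item at $\lambda=0$ we have $\lambda+f\ge 0$, so $p_\lambda(a)\ge\pi_{\rm ref}(a|s)$ for every $a$.
\end{itemize}
Hence $\sum_a p_\lambda(a)\le 1$ at $\lambda=-H$ and $\ge 1$ at $\lambda=0$. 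Normalization then forces $\lambda_f(s)\in[-H,0]$, and so $x_f(a):=\lambda_f(s)+f(s,a)\in[-H,H]$.

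\emph{Step 3 (ratio).} For $f_1,f_2\in\calF$, write $r=\pi_{\rm ref}(a|s)^{q-1}$. Then
\[
\frac{\pi_{f_1}(a|s)^{q-1}}{\pi_{f_2}(a|s)^{q-1}}=1+\frac{c\,(x_{f_2}(a)-x_{f_1}(a))}{r-c\,x_{f_2}(a)}.
\]
The denominator equals $\pi_{f_2}(a|s)^{q-1}\ge 1$, since $\pi_{f_2}(a|s)\le 1$ and $q-1<0$. The numerator is at most $2Hc$ in absolute value. This ratio is therefore at most $1+2Hc$, and by symmetry in $f_1,f_2$ it is at least $(1+2Hc)^{-1}$. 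Raising to the power $\frac{1}{q-1}<0$ reverses the inequalities and yields
\[
(1+2Hc)^{\frac{1}{q-1}}\le \frac{\pi_{f_1}(a|s)}{\pi_{f_2}(a|s)}\le (1+2Hc)^{\frac{1}{1-q}},
\]
which is the claimed bound.

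The main obstacle is Step 2, in particular getting the monotonicity directions right given that $q-1<0$. The lower bound $\pi^{q-1}\ge 1$ on the denominator in Step 3 is the key trick that keeps the bound free of any dependence on $\pi_{\rm ref}$.
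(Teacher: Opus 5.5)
Your proof is correct and takes essentially the same route as the paper. Interior KKT stationarity gives $\pi_f(a|s)^{q-1}=\pi_{\rm ref}(a|s)^{q-1}-\frac{1-q}{\alpha q}\bigl(\lambda_f(s)+f(s,a)\bigr)$. Monotonicity in $\lambda$ together with $f\in[0,H]$ then places $\lambda_f(s)\in[-H,0]$. Finally, you bound the ratio of $(q-1)$-th powers using $\pi_{f_2}(a|s)^{q-1}\ge 1$ and raise it to the power $\frac{1}{q-1}<0$. The only cosmetic difference is that you derive both sides explicitly via symmetry, whereas the paper writes out only the lower bound.
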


\begin{proof}
From \pref{lem:legendre-example} and \pref{lem:legendre-unique}, we known $\pi_f(\cdot|s)$ is unique for any $s$ with $\pi_f(a|s)>0$ for all $a$. Since the maximizer is interior, only the equality constraint $\sum_{a \in \calA} p(a)=1$ is active. The Lagrangian is
\[
\mathcal{L}_{f,s}(p,\lambda)
=\sum_{a\in\calA}p(a)f(s,a) - \alpha \text{Breg}_{\df}(p, \pi_{\rm ref}(\cdot|s)) + \lambda_f(s)\left(\sum_{a\in\calA}p(a)-1\right).
\]
By KKT condition, the stationarity condition at $p=\pi_f(\cdot|s)$, for every $a$,
\[
\frac{\partial \mathcal{L}_{f,s}}{\partial p(a)}
=f(s,a) + \frac{\alpha}{1-q}\cdot q\,\pi_f(a|s)^{q-1} - \frac{\alpha}{1-q}\cdot q\,\pi_{\rm ref}(a|s)^{q-1} + \lambda_f(s)=0.
\]
This implies
\[
\pi_f(a|s) = \pi_{\rm ref}(a|s)\left(1 - \frac{1-q}{\alpha q}\cdot\frac{\lambda_f(s) + f(s,a)}{\pi_{\rm ref}(a|s)^{q-1}}\right)^{\frac{1}{q-1}}
\]
We have $\pi_f(a|s) > 0$ and normalization ensures
\begin{align}
    \sum_{a \in \calA}\pi_f(a|s) = \sum_{a \in \calA} \pi_{\rm ref}(a|s)\left(1 - \frac{1-q}{\alpha q}\cdot\frac{\lambda_f(s) + f(s,a)}{\pi_{\rm ref}(a|s)^{q-1}}\right)^{\frac{1}{q-1}} = 1
\label{eq:tsallis-normal}
\end{align}
Define function $g_f(\lambda; s,a) = \left(1 - \frac{1-q}{\alpha q}\frac{\lambda + f(s,a)}{\pi_{\rm ref}(a|s)^{q-1}}\right)^{\frac{1}{q-1}}$.  We have  $g_f(0; s,a) \ge 1$ and $g_f(-H; s,a) \le 1$ where we use $f \in [0,H]$, $q \in (0,1)$. Since $g_f(\lambda; s,a)$ is increasing in $\lambda$, from $\sum_{a \in \calA}\pi_{\rm ref}(a|s) = 1$ and \pref{eq:tsallis-normal}, we have $\lambda_{f}(s) \in [-H, 0]$. Thus, $\lambda_{f}(s) + f(s,a) \in [-H, H]$. Thus
\begin{align*}
    \frac{\pi_{f_1}(a|s)}{\pi_{f_2}(a|s)} &= \left(\frac{1}{\pi_{f_2}(a|s)^{q-1}}\left(\pi_{f_1}(a|s)^{q-1} - \pi_{f_2}(a|s)^{q-1}\right) + 1\right)^{\frac{1}{q-1}}
    \\&= \left(\frac{1}{\pi_{f_2}(a|s)^{q-1}}\cdot \frac{1-q}{\alpha q} \cdot \left(f_2(s,a) + \lambda_{f_2}(s,a) - \left(f_1(s,a) + \lambda_{f_1}(s,a)\right)\right) + 1\right)^{\frac{1}{q-1}}
    \\&\ge \left(1 + \frac{2H(1-q)}{\alpha q}\right)^{\frac{1}{q-1}}.
\end{align*}

\end{proof}

\begin{proof}[Proof of \pref{lem:psi-example}]
From \pref{lem:legendre-example}, $\psi(p;s) = \alpha \breg_{\df}(p, \pi_{\rm ref}(\cdot|s))$ is Legendre when $\df$ is Shannon entropy, log-barrier or Tsallis entropy with $q \in (0,1)$.  We will prove the concrete
$C_\psi^1$ and $C_\psi^2$ factors for these three regularizers now.

\paragraph{When $\df$ is (negative) Shannon entropy. }
Since $\df$ is entropy, $\psi(p;s) = \alpha \KL\left(p, \pi_{\rm ref}(\cdot|s)\right)$. For any $x,y \in \Delta_{\calA}$ and $s \in \calS$, $\breg_{\psi}\left(x, y;s\right) = \alpha\KL\left(x\|y\right)$ and define $r(a)=\frac{y(a)}{x(a)}$ for all $a\in\calA$. Define the nonnegative convex functions $
\phi(t) = t\log t-t+1, \xi(t) = t-1-\log t$ for $t>0$.  We have $\KL(y\|x)= \sum_a y(a)\log\frac{y(a)}{x(a)}=\E_{a\sim x}\big[r(a)\log r(a)\big] = \E_{x}\big[\phi(r)\big]$ and 
$\KL(x\|y)= \E_x[-\log r]=\E_x[-\log r+r-1]=\E_x[\xi(r)]$.   From \pref{lem:KL-function-mono}, we have
\begin{equation}
\label{eq:phi-psi-pointwise}
\phi(t)\le \xi(t)\quad \text{for }t\in(0,1],
\qquad
\phi(t)\le \bigl(1+\log t\bigr)\xi(t)\quad \text{for }t\ge 1.
\end{equation}
Assume there exists $R\ge 1$ such that $R^{-1}\le r(a)\le R, \forall a\in\calA$. , we have
\[ \frac{\phi(r(a))}{\xi(r(a))}
\le
\sup_{t\in[R^{-1},R]}\frac{\phi(t)}{\xi(t)}
\le 1+\log R
\quad\text{by }\pref{eq:phi-psi-pointwise}.
\]
Multiplying by $\xi(r(a))\ge 0$ and taking $\E_{a\sim x}$ gives
\begin{align}
\KL(y\|x)=\E_x[\phi(r)]\le (1+\log R)\E_x[\xi(r)]=(1+\log R)\KL(x\|y).
\label{eq:KL-trans}
\end{align}
Now we calculate the exact value of $R$. For any $f_1, f_2 \in \calF$, we have
\[
\pi_{f_1}(a|s) = \frac{\pi_{\rm ref}(a|s)e^{\frac{1}{\alpha}f_1(s,a)}}{\sum_{a' \in \calA} \pi_{\rm ref}(a'|s)e^{\frac{1}{\alpha}f_1(s, a')}},
\qquad
\pi_{f_2}(a|s) = \frac{\pi_{\rm ref}(a|s)e^{\frac{1}{\alpha}f_2(s,a)}}{\sum_{a' \in \calA} \pi_{\rm ref}(a'|s)e^{\frac{1}{\alpha}f_2(s, a')}},
\]
with $|f_1(s, a)|\le H$ and $|f_2(s, a)|\le H$ for all $s, a$ from \pref{lem:bound-of-Q*}. Let $Z_1 =\sum_{a' \in \calA} \pi_{\rm ref}(a'|s)e^{\frac{1}{\alpha}f_1(s, a')}$ and $Z_2 = \sum_{a' \in \calA} \pi_{\rm ref}(a'|s)e^{\frac{1}{\alpha}f_2(s, a')}$.
We have $e^{-\frac{H}{\alpha}}\le Z_1\le e^{\frac{H}{\alpha}}$, and similarly
$e^{-\frac{H}{\alpha}}\le Z_2\le e^{\frac{H}{\alpha}}$. Hence $e^{-\frac{2H}{\alpha}}\le Z_1/Z_2\le e^{\frac{2H}{\alpha}}$.
Moreover $|f_1(s,a)-f_2(s,a)|\le 2H$, so $e^{-\frac{2H}{\alpha}}\le e^{\frac{1}{\alpha}\left(f_1(s,a)-f_2(s,a)\right)}\le e^{\frac{2H}{\alpha}}$.
This implies
\[ \frac{\pi_{f_1}(a|s)}{\pi_{f_2}(a|s)}=\frac{Z_{f_2}}{Z_{f_1}}\,e^{\frac{1}{\alpha}\left(f_1(s,a)-f_2(s,a)\right)}\in[e^{\frac{-4H}{\alpha}},e^{\frac{4H}{\alpha}}],
\]
for all $s, a$. Thus, $R = e^{\frac{4H}{\alpha}}$ and from \pref{eq:KL-trans}, we have
\begin{align*}
\breg_{\psi}(\pi_{f_1}, \pi_{f_2}; s) &= \alpha\KL\left(\pi_{f_1}(\cdot|s)\| \pi_{f_2}(\cdot|s)\right)
\\ &\ge \frac{\alpha}{1 + \log(e^{4H/\alpha})}\KL\left(\pi_{f_2}(\cdot|s)\| \pi_{f_1}(\cdot|s)\right)
\\&= \frac{\alpha}{\alpha +4H}\breg_{\psi}(\pi_{f_2}, \pi_{f_1}; s). 
\end{align*}
Thus, we have $C^{\psi}_1 = 1 + \frac{4H}{\alpha}$ and $C^{\psi}_2= \frac{1}{\alpha}$ because $\breg_{\psi}\left(x, y;s\right) = \alpha\KL\left(x\|y\right)$ for any $s$.

\paragraph{When $\df$ is log-barrier. }  We first compute the value of $C_{\psi}^1$. To start, define $$g(t) = \frac{t - 1 - \log(t)}{\frac{1}{t} - 1 + \log(t)}. $$ 
\pref{lem:log-barrier-mono} shows that $g(t)$ is increasing on $t > 0$. Thus, for any $R > 1$ the minimum of $g(t)$ is attained at $t=1/R$ on $[1/R,R]$,  giving
\begin{align}
    g(t) \ge \min_{t \in [R^{-1}, R]} g(t) =  \frac{\log R + \frac{1}{R}-1}{R-1-\log R} > 0, \,\, \forall t \in [R^{-1}, R].
\label{eq:g-log}
\end{align}
For any $s$ and any $f_1, f_2 \in \calF$, let $r(a|s) = \frac{\pi_{f_1}(a|s)}{\pi_{f_2}(a|s)}$, from \pref{lem:lb-ratio}, we have $r(a|s) \in [R_{\text{log}}^{-1}, R_{\text{log}}]$ where $R_{\text{log}} = \frac{\alpha + 2H}{\alpha}$. When $\psi(p;s) = \alpha \breg_{\df}(p, \pi_{\rm ref}(\cdot|s))$ and $\df$ is log-berrier, we have
\begin{align*}
    \breg_{\psi}\left(\pi_{f_1}, \pi_{f_2}; s\right) &= \alpha \sum_{a \in \calA}\left(r(a|s) - 1 - \log\left(r(a|s)\right)\right) 
    \\&= \alpha \sum_{a \in \calA}g(r(a|s))\left(\frac{1}{r(a|s)} - 1 + \log\left(r(a|s)\right)\right)
    \\&\ge c_{\text{log}}\alpha \sum_{a \in \calA}\left(\frac{1}{r(a|s)} - 1 + \log\left(r(a|s)\right)\right)
    \\&= c_{\text{log}}  \breg_{\psi}\left(\pi_{f_2}, \pi_{f_1}; s\right)
\end{align*}
where by \pref{eq:g-log} we define $c_{\text{log}} =  \frac{\log\left(R_{\text{log}}\right) + \frac{1}{R_{\text{log}}}-1}{R_{\text{log}}-1-\log\left(R_{\text{log}}\right)} \ge \frac{1}{R_{\text{log}}}$. Thus, $C^{\psi}_1 = \frac{\alpha + 2H}{\alpha}$.

From \pref{lem:breg-log-kl}, we have $C_2^{\psi} = \frac{2}{\alpha}$.

\paragraph{When $\df$ is (negative) Tsallis entropy with $q\in(0,1)$.} We first compute the value of $C_{\psi}^1$.
Define the scalar function (for $t>0$)
\[
\varphi_q(t)=\frac{1+q(t-1)-t^q}{1-q}\ \ge\ 0,
\qquad
g(t)=\frac{\varphi_q(t)}{t^q\,\varphi_q(1/t)}.
\]
From \pref{lem:tsallis-mono}, we have
\[
g(t)
\ \ge\
\begin{cases}
t^{q-2}, & t\ge 1,\\
t^{2-q}, & t\le 1.
\end{cases}
\]
Therefore, for any $R>1$ and any $t\in[R^{-1},R]$,
\begin{align}
    g(t)\ \ge\ \min_{t\in[R^{-1},R]} g(t)\ \ge\ R^{-(2-q)}\;>\;0, \qquad \forall t\in[R^{-1},R].
\label{eq:g-tsallis}
\end{align}
For any $s$ and any $f_1,f_2\in\calF$, let $r(a|s)=\frac{\pi_{f_1}(a|s)}{\pi_{f_2}(a|s)}$.
From  \pref{lem:tsallis-ratio},
we have $r(a|s)\in[R_{\text{tsa}}^{-1},R_{\text{tsa}}]$, where
$R_{\text{tsa}}  = \left(1 + \frac{2H(1-q)}{\alpha q}\right)^{\frac{1}{1-q}}$. When $\psi(p;s) = \alpha \breg_{\df}(p, \pi_{\rm ref}(\cdot|s))$ and $\df$ is Tsallis entropy, we have
\[
\breg_{\psi}(p_1, p_2; s)
=\alpha\sum_{a\in\calA} p_2(a)^q\,\varphi_q\!\left(\frac{p_1(a)}{p_2(a)}\right),\qquad p_1, p_2\in\ri(\Delta_{\calA}), \forall s \in \calS
\]
Hence,
\begin{align*}
\breg_{\psi}\!\left(\pi_{f_1}, \pi_{f_2}; s\right)
&=\alpha\sum_{a\in\calA}\pi_{f_2}(a|s)^q\,\varphi_q\!\left(r(a|s)\right)\\
&\ge \alpha\,c_{\text{tsa}}\sum_{a\in\calA}\pi_{f_2}(a|s)^q\,r(a|s)^q\,\varphi_q\!\left(1/r(a|s)\right)\\
&=c_{\text{tsa}}\ \breg_{\psi}\!\left(\pi_{f_2}, \pi_{f_1} ; s\right),
\end{align*}
where we used \pref{eq:g-tsallis} with $R=R_{\text{tsa}}$ and $c_{\text{tsa}}:=R_{\text{tsa}}^{-(2-q)}$.
Therefore, $C^{\psi}_1 = \left(1 + \frac{2H(1-q)}{\alpha q}\right)^{\frac{2-q}{1-q}}$.

Now we calculate $C_{\psi}^2$. We first define $\psi_{\mathrm{ent}}(p) = \sum_{a \in \calA} p(a)\log(p(a))$, we have $\Breg_{\psi_{\mathrm{ent}}}(p_1, p_2) = \KL\left(p_1\|p_2\right)$ for any $p_1, p_2 \in \Delta_{\calA}$. For $a\in\ri(\Delta_{\calA})$, we have the Hessians
\[
\nabla^2\psi(p)=\alpha q\,\mathrm{diag}\big(p(a)^{q-2}\big),
\qquad
\nabla^2\psi_{\mathrm{ent}}(p)=\mathrm{diag}\big(1/p(a)\big).
\]
Since $q\in(0,1)$ and $p(a)\in(0,1]$, we have $2-q>1$ and thus $p(a)^{q-2} \ge\ \frac{1}{p(a)}, \forall a\in\calA$.
Therefore, $\nabla^2\psi(p)\ \succeq\ \alpha q\;\nabla^2\psi_{\mathrm{ent}}(p), \forall p\in\ri(\Delta_{\calA})$. Now use the integral representation of Bregman divergence for twice-differentiable convex functions:
for any $p_1, p_2\in\ri(\Delta_{\calA})$,
\begin{equation}
\label{eq:breg-integral}
\Breg_{\psi}(p_1\|p_2)
=\int_0^1 (1-t)\,(p_1-p_2)^\top \nabla^2\psi\big(p_2+t(p_1-p_2)\big)\,(p_1-p_2)\,dt,
\end{equation}
and the same identity holds for $\psi_{\mathrm{ent}}$.
Since $\ri(\Delta_{\calA})$ is convex, the segment $p_1+t(p_1-p_2)\in\ri(\Delta_{\calA})$ for all $t\in[0,1]$. Thus, 
\begin{align*}
\Breg_{\psi}(p_1\|p_2)
\ &\ge\ 
\alpha q\int_0^1 (1-t)\,(p_1-p_2)^\top \nabla^2\psi_{\mathrm{ent}}\big(p_2+t(p_1-p_2)\big)\,(p_1-p_2)\,dt
\\&=\alpha q\;\Breg_{\psi_{\mathrm{ent}}}(p_1\|p_2) = \alpha q \KL\left(p_1\|p_2\right).
\end{align*}
This implies $C^{\psi}_2= \frac{1}{\alpha q}$.
\end{proof}

\begin{lemma}
Define $
\phi(t) = t\log t-t+1$, $\xi(t) = t-1-\log t$ for $t>0$, we have
\begin{equation*}
\phi(t)\le \xi(t)\quad \text{for } t\in(0,1],
\qquad
\phi(t)\le \bigl(1+\log t\bigr)\xi(t)\quad \text{for }t\ge 1.
\end{equation*}
\label{lem:KL-function-mono}
\end{lemma}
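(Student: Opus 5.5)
The plan is to reduce each inequality to the nonnegativity of a single explicit function of one variable. Each such function vanishes at $t=1$, so it suffices to control the sign of its derivative on the relevant side of $1$. No earlier result of the paper is needed; the only auxiliary fact is $\log t\le t-1$, i.e.\ $\xi(t)\ge 0$ for all $t>0$.

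For the first claim on $(0,1]$, I would set
\[
h(t)=\xi(t)-\phi(t)=2(t-1)-\log t-t\log t .
\]
Then $h(1)=0$, and
\[
h'(t)=1-\tfrac{1}{t}-\log t,\qquad h'(1)=0,\qquad h''(t)=\tfrac{1}{t^2}-\tfrac{1}{t}=\tfrac{1-t}{t^2}\ge 0 \text{ on } (0,1].
\]
Hence $h'$ is nondecreasing on $(0,1]$, so $h'(t)\le h'(1)=0$ there. This makes $h$ nonincreasing on $(0,1]$, so $h(t)\ge h(1)=0$, which is exactly $\phi\le\xi$ on $(0,1]$.

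For the second claim on $[1,\infty)$, I would set $k(t)=(1+\log t)\,\xi(t)-\phi(t)$ and expand. The $t\log t$ terms cancel, leaving
\[
k(t)=2t-2-2\log t-\log^2 t .
\]
Then $k(1)=0$ and
\[
k'(t)=2-\tfrac{2}{t}-\tfrac{2\log t}{t}=\tfrac{2}{t}\bigl(t-1-\log t\bigr)=\tfrac{2}{t}\,\xi(t)\ge 0 .
\]
So $k$ is nondecreasing on $[1,\infty)$, which gives $k(t)\ge k(1)=0$ for $t\ge 1$, the desired inequality.

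The only step that needs care is the algebraic expansion of $k$: the cancellation of the $t\log t$ terms is what produces the clean factorization $k'(t)=\tfrac{2}{t}\xi(t)$, and from there the sign follows immediately. Everything else is routine one-variable calculus.
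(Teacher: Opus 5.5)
Your proof is correct and follows essentially the same route as the paper, which forms the same two difference functions and shows they are nonnegative by derivative arguments anchored at $t=1$. The only difference is that the paper substitutes $u=\pm\log t$ and uses a second-derivative (convexity) argument in both cases, whereas you work directly in $t$ and, for $t\ge 1$, need only one derivative thanks to the factorization $k'(t)=\tfrac{2}{t}\,\xi(t)$.
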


\begin{proof}
$\phi(t)$ and $\xi(t)$ are both nonnegative and convex functions. For $t\ge 1$, define $g(t):=(1+\log t)\xi(t)-\phi(t)=2t-(\log t)^2-2\log t-2$.
Let $u=\log t\ge 0$, so $t=e^u$ and
$g(u)=2e^u-u^2-2u-2$.
Then $g''(u)=2e^u-2\ge 0$ for $u\ge 0$, hence $g'(u)$ is increasing.
Moreover $g'(0)=2-0-2=0$, so $g'(u)\ge 0$ for $u\ge 0$, implying $g(u)\ge g(0)=0$.
Thus $\phi(t)\le (1+\log t)\xi(t)$ for $t\ge 1$. For $t\in(0,1]$, define
$h(t):=\xi(t)-\phi(t)=2t-(t+1)\log t-2$.
Let $u=-\log t\ge 0$, so $t=e^{-u}$ and
$h(u)=2e^{-u}+(e^{-u}+1)u-2$.
Then $h''(u)=u e^{-u}\ge 0$, hence $h'(u)$ is increasing, and $h'(0)=0$,
so $h(u)\ge h(0)=0$, proving $\phi(t)\le \xi(t)$ for $t\in(0,1]$.
This establishes the results.
\end{proof}

\begin{lemma}
Function $g(t) = \frac{t - 1 - \log(t)}{\frac{1}{t} - 1 + \log(t)}$ is increasing on $t > 0$.
\label{lem:log-barrier-mono}
\end{lemma}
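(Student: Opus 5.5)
We need to show $g(t)=\frac{t-1-\log t}{1/t-1+\log t}$ is increasing on $t>0$. At $t=1$ both the numerator and the denominator vanish to second order, so the claim is understood with $g(1)$ defined by its limit.

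Write $N(t)=t-1-\log t$ and $D(t)=1/t-1+\log t$; both are nonnegative and vanish only at $t=1$. First compute $N'(t)=1-1/t=(t-1)/t$ and $D'(t)=-1/t^2+1/t=(t-1)/t^2$. The quotient rule gives
\begin{align*}
g'(t)=\frac{N'D-ND'}{D^2}=\frac{(t-1)}{t^2D^2}\bigl(tD-N\bigr).
\end{align*}
Next simplify $tD-N=1-t+t\log t-t+1+\log t=2-2t+(t+1)\log t$. Define $k(t)=(t+1)\log t-2(t-1)$, so that $g'(t)=\frac{(t-1)k(t)}{t^2D(t)^2}$. It therefore suffices to show that $k(t)$ has the same sign as $t-1$.

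This is the classical inequality $\log t\ge \frac{2(t-1)}{t+1}$ for $t\ge1$, with the reverse for $t\le1$. To prove it, compute $k(1)=0$, $k'(t)=\log t+1+1/t-2=\log t+1/t-1$, and $k'(1)=0$. Then $k''(t)=1/t-1/t^2=(t-1)/t^2$. So $k'$ decreases on $(0,1]$ and increases on $[1,\infty)$ with minimum $0$; hence $k'\ge0$, and $k$ is nondecreasing with $k(1)=0$. Thus $k(t)\le0$ for $t<1$ and $k(t)\ge0$ for $t>1$, giving $(t-1)k(t)\ge0$ and $g'(t)\ge0$ for $t\ne1$.

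The main obstacle is the removable singularity at $t=1$. Taylor expansion gives $N\approx (t-1)^2/2$ and $D\approx (t-1)^2/2$, so $g(1)=1$ and $g$ is continuous there. Since $g$ is continuous on $(0,\infty)$ and nondecreasing on each side of $1$, it is nondecreasing on $(0,\infty)$. Strictness also holds, because $k$ is strictly monotone ($k'>0$ away from $1$), so $(t-1)k(t)>0$ for $t\neq1$.
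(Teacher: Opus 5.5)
Your proof is correct. It follows the same basic strategy as the paper: differentiate the quotient, then sign the numerator with a second-derivative argument. The difference is in how you organize the numerator, and it is worth noting.

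The paper substitutes $t=e^u$ and shows $N(u)=(u-2)e^u-(u+2)e^{-u}+4\ge 0$ only for $u\ge 0$. It then transfers monotonicity to $(0,1]$ through the reflection identity $g(1/t)=1/g(t)$. In fact $N(u)=(1-e^{-u})\,k(e^u)$ with your $k$. So by factoring out $(t-1)$ explicitly, you reduce everything to the classical fact that $(t+1)\log t-2(t-1)$ has the sign of $t-1$. That handles $t<1$ and $t>1$ at once, with no symmetry argument.

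Your version also fills in details the paper leaves implicit. You treat the removable singularity at $t=1$ by setting $g(1)=1$ and gluing the two monotone pieces by continuity, and you obtain strict monotonicity. The paper's reflection trick, for its part, is what lets it check only one side of $t=1$.
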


\begin{proof}
We write $t=e^{u}$ with $u\in \mathbb{R}$, and
\[
A(u):=\varphi(e^{u})=e^{u}-1-u,\qquad
B(u):=\varphi(e^{-u})=e^{-u}-1+u.
\]
Then $g(e^{u})=A(u)/B(u)$ and note $B(u)=A(-u)>0$ for $u\neq 0$.
Differentiate with respect to $u$:
\[
\frac{d}{du}\left(\frac{A(u)}{B(u)}\right)=\frac{A'(u)B(u)-A(u)B'(u)}{B(u)^2}.
\]
Define 
\[
N(u) = A'(u)B(u)-A(u)B'(u)=(u-2)e^{u}-(u+2)e^{-u}+4.
\]
We now show $N(u)\ge 0$ for all $u\ge 0$.
Indeed,
\[
N(0)=0,\qquad
N'(u)=(u-1)e^{u}+(u+1)e^{-u},\qquad
N''(u)=u(e^{u}+e^{-u})\ge 0 \ \ \forall u\ge 0.
\]
Thus $N'$ is increasing on $[0,\infty)$ and $N'(0)=0$, so $N'(u)\ge 0$ for all $u\ge 0$.
Therefore $N$ is increasing on $[0,\infty)$ and $N(u)\ge N(0)=0$ for all $u\ge 0$.
This implies $\frac{d}{du}\left(\frac{A(u)}{B(u)}\right)\ge 0$ for $u\ge 0$. Thus,
$g(t)$ is increasing for $t=e^{u}\ge 1$.
By the identity $g(1/t)=1/g(t)$, it follows that $g$ is increasing on $(0,1]$.
\end{proof}

\begin{lemma}
Define the scalar function (for $t>0$)
\[
\varphi_q(t)=\frac{1+q(t-1)-t^q}{1-q}\ \ge\ 0,
\qquad
g(t)=\frac{\varphi_q(t)}{t^q\,\varphi_q(1/t)}.
\]
We have 
\[g(t)
\ \ge\
\begin{cases}
t^{q-2}, & t\ge 1,\\
t^{2-q}, & t\le 1.
\end{cases}\]
\label{lem:tsallis-mono}
\end{lemma}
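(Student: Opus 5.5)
The plan is to write both the numerator and the denominator of $g$ as Bregman divergences of one scalar convex function, in opposite orders. Both then become integrals against the same weight $s^{q-2}$, and monotonicity of $s\mapsto s^{q-2}$ finishes the comparison. At $t=1$ both sides are $0/0$, so the claim is read with the convention $\frac{0}{0}\triangleq 0$ used earlier, or as a limit. Below I assume $t\neq 1$.

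\textbf{Step 1: the numerator.} Let $F(x)=-\frac{x^q}{1-q}$ on $x>0$, so that $F''(x)=q\,x^{q-2}>0$. A direct computation gives $\varphi_q(1)=\varphi_q'(1)=0$ and $\varphi_q''(t)=q\,t^{q-2}$. Hence $\varphi_q(t)=\breg_F(t,1)$, and Taylor's formula with integral remainder yields
\[
\varphi_q(t)=q\int_1^t (t-s)\,s^{q-2}\,ds .
\]
For $t<1$ this reads $q\int_t^1 (s-t)s^{q-2}ds$.

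\textbf{Step 2: the denominator.} Expanding gives
\[
t^q\varphi_q(1/t)=\frac{(1-q)t^q+q\,t^{q-1}-1}{1-q}.
\]
I will check this against $\breg_F(1,t)=F(1)-F(t)-F'(t)(1-t)$ by expanding both sides. The integral form of the reversed Bregman divergence then gives
\[
t^q\varphi_q(1/t)=q\int_1^t (s-1)\,s^{q-2}\,ds
\]
for $t\ge 1$, and $q\int_t^1(1-s)s^{q-2}ds$ for $t<1$. This identification is the only place where an algebra slip could occur, so it is the step I would verify most carefully.

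\textbf{Step 3: comparison.} The kernels $(t-s)$ and $(s-1)$ both integrate to $\frac{(t-1)^2}{2}$ over the interval between $1$ and $t$. The weight $s^{q-2}$ is decreasing, since $q-2<0$, and it lies between its values at the two endpoints $1$ and $t$.

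For $t\ge 1$, every $s\in[1,t]$ satisfies $t^{q-2}\le s^{q-2}\le 1$. This gives
\[
\varphi_q(t)\ge q\,t^{q-2}\tfrac{(t-1)^2}{2}
\quad\text{and}\quad
t^q\varphi_q(1/t)\le q\,\tfrac{(t-1)^2}{2},
\]
so $g(t)\ge t^{q-2}$.

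For $t<1$, every $s\in[t,1]$ satisfies $1\le s^{q-2}\le t^{q-2}$. This gives
\[
\varphi_q(t)\ge q\,\tfrac{(1-t)^2}{2}
\quad\text{and}\quad
t^q\varphi_q(1/t)\le q\,t^{q-2}\tfrac{(1-t)^2}{2},
\]
so $g(t)\ge t^{2-q}$.

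Nonnegativity of $\varphi_q$ follows from the same integral representation, which completes the argument.
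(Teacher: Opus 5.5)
Your argument is correct and is essentially the paper's. The paper writes $\varphi_q(t)=\frac q2\,\xi^{q-2}(t-1)^2$ via the Lagrange form of Taylor's theorem, sandwiches $\xi^{q-2}$ between its values at $1$ and $t$, and applies the same sandwich at $1/t$ for the denominator; this gives exactly your two estimates, and your identity $t^q\varphi_q(1/t)=\breg_F(1,t)$ with its integral form is just an equivalent route to the denominator bound.

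One small correction concerns $t=1$. The convention $\frac00\triangleq 0$ would make the claim false there, since it gives $g(1)=0<1=1^{q-2}$. That point should instead be read as the limit $g(t)\to 1$, or simply excluded. Excluding it costs nothing in the application, where both sides of $\varphi_q(r)\ge c\,r^q\varphi_q(1/r)$ vanish at $r=1$.
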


\begin{proof}
We first lower bound $g(t)$ on $[R^{-1},R]$.
By Taylor's theorem applied to $t^q$ around $t=1$, there exists $\xi$ between $1$ and $t$ such that
\[
t^q = 1+q(t-1) + \frac{q(q-1)}{2}\,\xi^{q-2}(t-1)^2
\quad\Longrightarrow\quad
\varphi_q(t)=\frac{q}{2}\,\xi^{q-2}(t-1)^2 .
\]
Since $q-2<0$, $x^{q-2}$ is decreasing; hence for any $t>0$,
\[
\frac{q}{2}\max\{t,1\}^{\,q-2}(t-1)^2
\ \le\ \varphi_q(t)\ \le\
\frac{q}{2}\min\{t,1\}^{\,q-2}(t-1)^2 .
\]
Using these two-sided bounds for $\varphi_q(t)$ and $\varphi_q(1/t)$, one obtains
\[
g(t)=\frac{\varphi_q(t)}{t^q\varphi_q(1/t)}
\ \ge\
\begin{cases}
t^{q-2}, & t\ge 1,\\
t^{2-q}, & t\le 1.
\end{cases}
\]
\end{proof}

\begin{lemma}
If $\df(p) = -\sum_{a \in \calA}\log(p(a))$, then $\breg_{\df}\left(p_1, p_2\right) \ge \frac{1}{2}\KL\left(p_1\|p_2\right)$.
\label{lem:breg-log-kl}
\end{lemma}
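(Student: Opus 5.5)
Since $\df(p)=-\sum_a\log p(a)$ is separable, both sides reduce to per-coordinate sums in the ratio $t_a = p_1(a)/p_2(a)$. The Bregman divergence is
\begin{align*}
\breg_{\df}(p_1,p_2)=\sum_a\Big(\frac{p_1(a)}{p_2(a)}-1-\log\frac{p_1(a)}{p_2(a)}\Big)=\sum_a \xi(t_a),
\end{align*}
with $\xi(t)=t-1-\log t$ as in \pref{lem:KL-function-mono}. The KL divergence is
\begin{align*}
\KL(p_1\|p_2)=\sum_a p_2(a)\,t_a\log t_a=\sum_a p_2(a)\,\phi(t_a),
\end{align*}
with $\phi(t)=t\log t-t+1$. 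Here we use $\sum_a p_2(a)(t_a-1)=0$ to insert the linear terms. The problem therefore reduces to comparing $\sum_a\xi(t_a)$ against $\tfrac12\sum_a p_2(a)\phi(t_a)$.

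The natural first step is a pointwise bound $\phi(t)\le 2\xi(t)$, which would give $\KL\le 2\sum_a p_2(a)\xi(t_a)\le 2\sum_a \xi(t_a)$ using $p_2(a)\le1$ and $\xi\ge0$. For $t\le1$, \pref{lem:KL-function-mono} already gives $\phi\le\xi$.

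The obstacle is $t\ge1$. There $\phi(t)\sim t\log t$ grows faster than $\xi(t)\sim t$, so the pointwise bound fails. Instead I would use the weight $p_2(a)$. Since $p_1(a)\le 1$, we have $t_a\le 1/p_2(a)$, i.e.\ $p_2(a)\le 1/t_a$. So it suffices to show
\begin{align*}
\frac{\phi(t)}{t}\le 2\,\xi(t)\qquad\text{for } t\ge1.
\end{align*}
Equivalently, with $g(t)=2t-2-2\log t-\log t+1-1/t$, we need $g(t)\ge0$ on $[1,\infty)$.

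To check this, note $g(1)=0$ and $g'(t)=2-3/t+1/t^2=(2t-1)(t-1)/t^2\ge0$ for $t\ge1$. Hence $g\ge0$ on $[1,\infty)$, and in fact even the constant $1$ works in place of $2$ asymptotically.

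Combining the two ranges per coordinate: if $t_a\le1$, then $p_2(a)\phi(t_a)\le\phi(t_a)\le\xi(t_a)$. If $t_a\ge1$, then $p_2(a)\phi(t_a)\le\phi(t_a)/t_a\le2\xi(t_a)$. Summing over $a$ gives $\KL(p_1\|p_2)\le 2\breg_{\df}(p_1,p_2)$, which is the claim.

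The main care point is handling $t\ge1$ through $p_2(a)\le 1/t_a$ rather than through a pointwise comparison. Points on the boundary, where some $p_2(a)=0$ makes both sides infinite, are handled by convention or by restricting to the interior of the simplex, as in the applications.
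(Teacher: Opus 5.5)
Your proof is correct, but it reaches the bound by a different route than the paper.

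\textbf{The paper's route.} It lower-bounds each summand using $t-1-\log t\ge \frac{(t-1)^2}{2\max\{t,1\}}$. Substituting $t=p_1(a)/p_2(a)$ gives $\sum_a \frac{(p_1(a)-p_2(a))^2}{2p_2(a)\max\{p_1(a),p_2(a)\}}$. Dropping $\max\{p_1(a),p_2(a)\}\le 1$ yields $\frac12\chi^2(p_1\|p_2)$, and the proof finishes with $\chi^2\ge\KL$.

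\textbf{Your route.} You rewrite $\KL$ itself as $\sum_a p_2(a)\phi(t_a)$ and compare integrands directly against $\xi(t_a)$, never passing through $\chi^2$.

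\textbf{Comparison.} The slack exploited is the same in both: $\max\{p_1(a),p_2(a)\}\le 1$. In your version it appears as $p_2(a)\le 1$ when $t_a\le 1$ and $p_2(a)\le 1/t_a$ when $t_a\ge 1$. The paper's route gives the intermediate bound $\breg_{\df}(p_1,p_2)\ge\frac12\chi^2(p_1\|p_2)$ as a by-product. Yours reuses \pref{lem:KL-function-mono} and, pushed slightly further, gives a better constant.

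Your parenthetical undersells this. The function $t-2\log t-1/t$ vanishes at $t=1$ and has derivative $(1-1/t)^2\ge 0$, so $\phi(t)/t\le\xi(t)$ holds on all of $[1,\infty)$, not just asymptotically. Combined with $\phi\le\xi$ on $(0,1]$, your coordinatewise argument gives $\breg_{\df}(p_1,p_2)\ge\KL(p_1\|p_2)$ with constant $1$ instead of $\frac12$. That would improve the log-barrier constant $C^\psi_2$ in \pref{lem:psi-example} from $2/\alpha$ to $1/\alpha$.
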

\begin{proof}
 Since $t-1-\log t=\int_1^t\frac{u-1}{u}\,du$,
if $t\ge 1$, then $\frac{u-1}{u}\ge \frac{u-1}{t}$ on $u\in[1,t]$, hence
$t-1-\log t\ge \int_1^t \frac{u-1}{t}\,du=\frac{(t-1)^2}{2t}$.
If $t\le 1$, then $t-1-\log t=\int_t^1(\frac1u-1)\,du\ge \int_t^1(1-u)\,du=\frac{(1-t)^2}{2}$. Thus, for any $t >0$, $t-1-\log t \ \ge\ \frac{(t-1)^2}{2\max\{t,1\}}$. Set $t = \frac{p_1(a)}{p_2(a)}$, we have
\begin{align*}
    \breg_{\df}\left(p_1, p_2\right) &= \sum_{a \in \calA}\left(\frac{p_1(a)}{p_2(a)} - 1 - \log\left(\frac{p_1(a)}{p_2(a)} \right)\right) 
    \\&\ge \frac 1 2\sum_a \frac{(p_1(a)-p_2(a))^2}{p_2(a)\max\{p_1(a),p_2(a)\}}
\\&\ge \frac 1 2\sum_a \frac{(p_1(a)-p_2(a))^2}{p_2(a)}=\frac 1 2 \chi^2(p_1\|p_2) \ge \frac 1 2\KL(p_1\|p_2).
\end{align*}
\end{proof}

\end{document}